\renewcommand*{\backref}[1]{\ifx#1\relax \else Page #1 \fi}
\renewcommand*{\backrefalt}[4]{%
  \ifcase #1 \footnotesize{(not cited)}%
  \or        \footnotesize{(cited on p.~#2)}%
  \else      \footnotesize{(cited on pp.~#2)}%
  \fi
}
\let\originalleft\left 
\let\originalright\right
\renewcommand{\left}{\mathopen{}\mathclose\bgroup\originalleft}
\renewcommand{\right}{\aftergroup\egroup\originalright}
\newcolumntype{M}[1]{>{\centering\arraybackslash}m{#1}} 
\newcommand{\N}{\mathbb{N}}
\newcommand{\Z}{\mathbb{Z}}
\newcommand{\R}{\mathbb{R}}
\newcommand{\T}{\mathbb{T}}
\newcommand{\C}{\mathbb{C}}
\newcommand{\cA}{\mathcal{A}}
\newcommand{\cL}{\mathcal{L}}
\newcommand{\cF}{\mathcal{F}}
\newcommand{\cH}{\mathcal{H}}
\newcommand{\cX}{\mathcal{X}}
\newcommand{\cB}{\mathcal{B}}
\newcommand{\cY}{\mathcal{Y}}
\newcommand{\cC}{\mathcal{C}}
\newcommand{\cS}{\mathcal{S}}
\newcommand{\cG}{\mathcal{G}}
\newcommand{\cR}{\mathcal{R}}
\newcommand{\cE}{\mathcal{E}}
\newcommand{\cD}{\mathcal{D}}
\newcommand{\cI}{\mathcal{I}}
\newcommand{\cK}{\mathcal{K}}
\newcommand{\cU}{\mathcal{U}}
\newcommand{\cV}{\mathcal{V}}
\newcommand{\cQ}{\mathcal{Q}}
\newcommand{\mcU}{\mathcal{U}}
\newcommand{\mcD}{\mathcal{D}}
\newcommand{\mcQ}{\mathcal{Q}}
\newcommand{\mcS}{\mathcal{S}}
\newcommand{\mcY}{\mathcal{Y}}
\newcommand{\sL}{\mathscr{L}}
\newcommand{\sR}{\mathscr{R}}
\newcommand{\sG}{\mathscr{G}}
\newcommand{\sD}{\mathscr{D}}
\newcommand{\sW}{\mathscr{W}}
\newcommand{\sfL}{\mathsf{L}}
\newcommand{\sfE}{\mathsf{E}}
\newcommand{\Ld}{L^{\dagger}} 
\newcommand{\ld}{l^{\dagger}} 
\newcommand{\fd}{f^{\dagger}}
\newcommand{\qd}{q^{\dagger}} 
\DeclarePairedDelimiterX{\iptemp}[2]{\langle}{\rangle}{#1, #2}
\newcommand{\ip}{\iptemp}
\DeclarePairedDelimiterX{\normtemp}[1]{\lVert}{\rVert}{#1}
\newcommand{\norm}{\normtemp}
\DeclarePairedDelimiterX{\abstemp}[1]{\lvert}{\rvert}{#1}
\newcommand{\abs}{\abstemp}
\DeclarePairedDelimiterX{\trtemp}[1]{(}{)}{#1}
\newcommand{\tr}{\operatorname{tr}\trtemp}
\DeclarePairedDelimiterX{\SEtemp}[2]{(}{)}{#1, #2}
\newcommand{\SE}{\operatorname{\mathsf{SE}}\SEtemp}
\DeclarePairedDelimiterX{\SGtemp}[1]{(}{)}{#1}
\newcommand{\SG}{\operatorname{\mathsf{SG}}\SGtemp}
\DeclareRobustCommand{\rchi}{{\mathpalette\irchi\relax}}
\newcommand{\irchi}[2]{\raisebox{\depth}{$#1\chi$}} 
\newcommand{\defeq}{\coloneqq} 
\newcommand{\eqdef}{\eqqcolon} 
\newcommand{\condbar}{\, \vert \,}
\newcommand{\lcondbar}{\, \big\vert \,}
\DeclarePairedDelimiterX{\floor}[1]{\lfloor}{\rfloor}{#1} 
\DeclarePairedDelimiterX{\ceil}[1]{\lceil}{\rceil}{#1} 
\newcommand{\tp}{\top} 
\DeclareMathOperator{\Id}{Id} 
\newcommand{\HS}{\mathrm{HS}} 
\DeclareMathOperator{\image}{Im} 
\renewcommand{\P}{\operatorname{\mathbb{P}}} 
\newcommand{\E}{\operatorname{\mathbb{E}}} 
\newcommand{\comp}{\textsf{c}} 
\newcommand{\Cov}{\operatorname{Cov}} 
\newcommand{\one}{\mathbbm{1}} 
\newcommand{\diid}{\stackrel{\mathrm{i.i.d.}}{\sim}} 
\newcommand{\normal}{\mathcal{N}} 
\newcommand{\iunit}{\mathrm{i}}
\newcommand{\diff}{d}
\newcommand{\dd}[1]{\,\diff{#1}}
\newcommand{\lap}{\Delta} 
\newcommand{\onebm}{\bm{1}} 
\newcommand{\al}{\alpha}
\newcommand{\ep}{\varepsilon}
\newcommand{\eps}{\epsilon}
\newcommand{\p}{\partial}
\newcommand{\Td}{\mathbb{T}^d}
\def\qfa{\quad\text{for all}\quad}
\def\qas{\quad\text{as}\quad}
\def\qa{\quad\text{and}\quad}
\def\qw{\quad\text{where}\quad}
\def\qin{\quad\text{in}\quad}
\def\qon{\quad\text{on}\quad}
\newcommand{\sfit}[1]{\textsf{\small{#1}}} 
\newcommand{\set}[2]{{\left\{ #1 \,\middle|\, #2 \right\}}}
\newcommand{\slot}{{\,\cdot\,}}
\setlist[enumerate]{leftmargin=.5in}
\setlist[itemize]{leftmargin=.5in}
\newcommand{\myitem}[1]{%
\item[#1]\protected@edef\@currentlabel{#1}%
}
\newtheorem{theorem}{Theorem}[section]
\newtheorem{lemma}[theorem]{Lemma}
\newtheorem{proposition}[theorem]{Proposition}
\theoremstyle{definition}
\newtheorem{definition}[theorem]{Definition}
\newtheorem{remark}[theorem]{Remark}
\newtheorem{assumption}[theorem]{Assumption}
\title[Operator learning for parameter-to-observable maps]
{An operator learning perspective on parameter-to-observable maps} 
\author[Daniel Zhengyu Huang, Nicholas H. Nelsen, and Margaret Trautner]{}
\subjclass{Primary: 68T07, 62G20; Secondary: 65J15.}
\keywords{operator learning,
neural operator,
quantity of interest,
universal approximation,
functional linear regression,
sample complexity,
Bayesian nonparametrics}
\thanks{$^*$Corresponding author: Nicholas H. Nelsen (\texttt{nnelsen@caltech.edu})}
\begin{document}
\maketitle

\centerline{\scshape
Daniel Zhengyu Huang,$^{{\href{mailto:huangdz@bicmr.pku.edu.cn}{\textrm{\Letter}}}\,1}$
Nicholas H. Nelsen,$^{{\href{mailto:nnelsen@caltech.edu}{\textrm{\Letter}}}\,*2}$
and Margaret Trautner$^{{\href{mailto:trautner@caltech.edu}{\textrm{\Letter}}}\,2}$
}

\medskip

{\footnotesize
 \centerline{$^1$Beijing International Center for Mathematical Research,
Peking University, China}
} 

\medskip

{\footnotesize
 \centerline{$^2$Computing and Mathematical Sciences, California Institute of Technology, USA}
}

\bigskip


\begin{abstract} 
Computationally efficient surrogates for parametrized physical models play a crucial role in science and engineering. Operator learning provides data-driven surrogates that map between function spaces. However, instead of full-field measurements, often the available data are only finite-dimensional parametrizations of model inputs or finite observables of model outputs. Building on Fourier Neural Operators, this paper introduces the Fourier Neural Mappings (FNMs) framework that is able to accommodate such finite-dimensional vector inputs or outputs. The paper develops universal approximation theorems for the method. Moreover, in many applications the underlying parameter-to-observable (PtO) map is defined implicitly through an infinite-dimensional operator, such as the solution operator of a partial differential equation. A natural question is whether it is more data-efficient to learn the PtO map end-to-end or first learn the solution operator and subsequently compute the observable from the full-field solution. A theoretical analysis of Bayesian nonparametric regression of linear functionals, which is of independent interest, suggests that the end-to-end approach can actually have worse sample complexity. Extending beyond the theory, numerical results for the FNM approximation of three nonlinear PtO maps demonstrate the benefits of the operator learning perspective that this paper adopts.
\end{abstract}


\section{Introduction}
Operator learning has emerged as a methodology that enables the machine learning of maps between spaces of functions. Many surrogate modeling tasks in areas such as uncertainty quantification, inverse problems, and design optimization involve a map between function spaces, such as the solution operator of a partial differential equation (PDE). However, the primary quantities of interest (QoI) in these tasks are usually just a finite number of design parameters or output observables. This may be because full-field data, such as initial conditions, boundary conditions, and solutions of PDEs, are not accessible from measurements or are too expensive to acquire. The prevailing approach then involves emulating the parameter-to-observable (PtO) map instead of the underlying solution map between function spaces. Yet, it is natural to wonder if the success of operator learning in the function-to-function setting can be brought to bear in this more realistic setting where inputs or outputs may necessarily be finite-dimensional vectors.
To this end, the present paper introduces Fourier Neural Mappings (FNMs) as a way to extend operator learning architectures such as the Fourier Neural Operator (FNO) to finite-dimensional input and output spaces in a manner compatible with the underlying operator between infinite-dimensional spaces. The admissible types of FNM models considered in this work are visualized in Figure~\ref{fig:ee_vs_ff}.

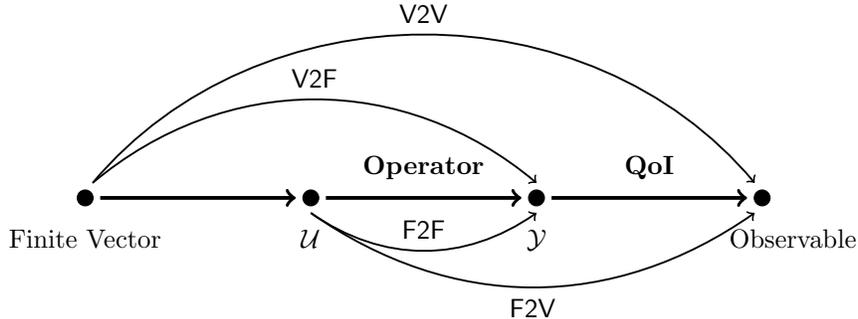
\begin{figure}[tb]
    \centering
    \begin{tikzpicture}
        \foreach \x/\label in {-3/Finite Vector, 0/$\mathcal{U}$, 3/$\mathcal{Y}$, 6/$\quad\quad$ Observable}
            \filldraw (\x, 0) circle (3pt) node[below=3mm] {\label};
            
        \draw[->,line width=0.5mm] (-2.8, 0) -- node[above=1mm] {} (-0.2, 0);
        \draw[->,line width=0.5mm] (0.2, 0) -- node[above=1mm] {\textbf{Operator}} (2.8, 0);
        \draw[->, line width=0.5mm] (3.2, 0) -- node[above=1mm] {\textbf{QoI}} (5.8, 0);
          
        \draw[->, line width=0.25mm] (-2.9, 0.2) to[bend left=50] node[above=0.25mm] {$\mathsf{V2V}$} (5.9, 0.2);
        \draw[->, line width=0.25mm] (-2.9, 0.2) to[bend left=40] node[above=0.25mm] {$\mathsf{V2F}$} (3, 0.2);
        \draw[->, line width=0.25mm] (0, -0.2)to[bend left=-35] node[below=0.25mm] {$\mathsf{F2V}$} (5.9, -0.2);
        \draw[->, line width=0.25mm] (0, -0.2) to[bend left=-35] node[above=0.25mm] {$\mathsf{F2F}$} (3, -0.2);
    \end{tikzpicture}
    \caption{Illustration of the factorization of an underlying PtO map into a QoI and an operator between function spaces. Also shown are the four variants of input and output representations considered in this work. Here, $\mathcal{U}$ is an input function space and $\mathcal{Y}$ is an intermediate function space.}
    \label{fig:ee_vs_ff}
\end{figure}

Nevertheless, it is possible to accommodate finite-dimensional inputs or outputs through other means. For instance, one could lift a finite-dimensional input vector to a function by expanding in predetermined basis functions, apply traditional operator learning architectures to the full-field function space data, and then directly compute a known finite-dimensional QoI from the output function.
In contrast, the end-to-end FNM approach in this work is fully data-driven and operates directly on finite-dimensional vector data without the need for pre- and postprocessing.
A natural question is whether one of these two approaches achieves better accuracy than the other when the goal is to predict certain QoIs. In the present paper, we address this important question both from a theoretical and a numerical perspective. Indeed, it has been empirically observed in various nonlinear problems ranging from electronic structure calculations~\cite{teh2021machine} to metamaterial design~\cite{bastek2023inverse} that data-driven methods that predict the full-field response of a system are superior to end-to-end approaches for the same downstream tasks or QoIs. The analysis in this paper takes the first steps toward a rigorous theoretical understanding of these empirical findings.

Throughout the paper, we refer to learning a function-valued map as \emph{full-field learning}. Given such a learned map, various known QoIs may be directly computed from the output of the map. On the other hand, we refer to the direct estimation of the map from an input to the observed QoI as \emph{end-to-end learning}. This terminology distinguishes between output spaces. When either the input or output is a finite vector and the other is infinite-dimensional, we label the learning approach as ``vector-to-function'' (V2F) or ``function-to-vector" (F2V), respectively, to avoid ambiguity. The abbreviations V2V and F2F for ``vector-to-vector'' and ``function-to-function'' are analogous.

\subsection{Contributions}\label{ssec:contr}
In this paper, we make the following contributions.
\begin{enumerate}[label=(C\arabic*),topsep=1.67ex,itemsep=0.5ex,partopsep=1ex,parsep=1ex]
    \item \label{item:contr_fnm} We introduce FNMs as a function space architecture that is able to accommodate finite-dimensional vector inputs, outputs, or both.
    \item \label{item:contr_ua} We prove universal approximation theorems for FNMs.
    \item \label{item:contr_linear} We establish convergence rates for Bayesian nonparametric regression of linear functionals under smoothness misspecification; as a byproduct of this analysis, we prove that full-field learning of linear functionals that are factorized into the composition of a linear QoI and a linear operator enjoys better sample complexity than end-to-end estimators in certain regimes.
    \item \label{item:contr_numerics} We perform numerical experiments with FNMs in three different examples---an advection--diffusion equation, flow over an airfoil, and an elliptic homogenization problem---that show empirical evidence that the theoretical linear insight from Contribution~\ref{item:contr_linear} remains valid for nonlinear maps.
\end{enumerate}

Next, we provide an overview of related work in the literature in Subsection \ref{ssec:lit}. Subsection \ref{ssec:notation} contains relevant notation, and Subsection \ref{ssec:outline} gives an outline of the remainder of the paper. 

\subsection{Related work}\label{ssec:lit}
Several works have established neural operators as a viable tool for scientific machine learning. The general neural operator formalism is described in \cite{kovachki2023neural} and contains several subclasses including DeepONet \cite{lu2021learning}, graph neural operator \cite{li2020neural,li2020multipole}, and FNO \cite{li2021fourier}. These architectures allow for function data evaluated at different grid points or resolutions to all be used with the same model. In particular, the FNO is primarily parametrized in Fourier space. It exploits the fact that the Fourier 
basis spans $L^2$ and uses the efficient Fast Fourier Transform (FFT) algorithm for computations. The idea of parametrizing operators in Fourier space is explored in earlier works as well \cite{nelsen2021random,patel2021physics}. The FNO has been shown to be applicable both to domains other than the torus and to nonuniform meshes \cite{li2022fourier,lingsch2023vandermonde}. These neural operators have been used in various areas of application, including climate modeling \cite{kurth2023fourcastnet}, fracture mechanics \cite{goswami2022physics}, and catheter design \cite{zhou2024ai}. In several of these applications, 
neural operators have been implemented with finite-dimensional vector inputs or outputs by using constant functions as substitutes for finite vectors, which is theoretically justified by statements of universal approximation \cite{bhattacharya2023learning}, or by using other hand-designed maps. However, learning a constant function as a representation for a constant is unnatural and computationally wasteful; it is desirable to substitute a more suitable architecture. The present paper develops FNMs that extend neural operators to this important setting while retaining desirable universal approximation properties.

The theory of scientific machine learning falls broadly into three tiers. In the first tier, 
universal approximation 
results~\cite{chen1995universal,cybenko1989approximation,hornik1991approximation,weinan2019barron} use classical approximation theory to guarantee that the architecture is capable of representing maps from within a class of interest to any desired accuracy. 
Some of the proofs of these results contain constructive arguments, but the corresponding architectures are usually not as empirically effective as those that solely come with existence results. Examples of constructive arguments for operator approximation are contained in \cite{herrmann2022constructive}, which constructs ReLU neural networks, and \cite{boulle2023elliptic}, which uses randomized numerical linear algebra to sketch Green's functions for linear elliptic PDEs. Each of these works also comes equipped with convergence rates with respect to model size and data size, respectively; these rates form the second tier of operator learning theory. Many papers in this tier prove bounds on the required model size, i.e., parameter complexity \cite{deng2022approximation,herrmann2022neural,kovachki2021universal,lanthaler2023operator,lanthaler2022nonlinear,lanthaler2023curse,lu2021deep,shen2019deep}. Some are able to obtain sample complexity bounds, although most results are restricted to linear or kernelized settings~\cite{caponnetto2007optimal,de2023convergence,jin2022minimax,lanthaler2023error,li2023towards,liu2024deep,mollenhauer2022learning,schafer2021sparse,stepaniants2023learning}.
The third tier of theory describes the likelihood of actually obtaining an accurate approximation through optimization. While some results
along these lines exist for linear models, linear maps, and constructive operators \cite{lanthaler2023error,laurent2018deep,nelsen2024operator}, they are absent for the class of neural operators optimized through variants of stochastic gradient descent (SGD). This is the class that has proven empirically most effective in applications thus far and is the class used in this work.

To provide theoretical intuition for nonlinear settings, this paper establishes convergence rates in the tractable setting of learning a linear functional---the PtO map---from noisy data. The setting of functional linear regression has a long history in statistics \cite{cai2006prediction,cardot2007clt,reimherr2015functional,yuan2010reproducing}. A zoo of different estimators exist, e.g., those based on reproducing kernel Hilbert space (RKHS) methods, principal component analysis, and wavelets. We target the frequentist convergence properties of a linear Gaussian posterior estimator that arises from reinterpreting the regression task as a Bayesian inverse problem. The closest work to ours is \cite{lian2016posterior}. There, the authors derive posterior contraction rates for Bayesian functional linear regression with Gaussian priors. However, their main results are only sharp if the prior is correctly specified to match the regularity of the true linear functional. Our work goes beyond this by proving sharp high-probability error bounds for out-of-distribution prediction error under very general smoothness misspecification. However, to do so we require that the prior and data covariance operators commute, which is a limitation of our theory. Another relevant work is \cite{cai2012minimax}, in which the authors make minimal assumptions on the prior and data covariances but still make the well-specified assumption that the truth belongs to the RKHS of the prior. They also require a data-dependent scaling of the prior. Similarly, \cite{pati2015optimal} obtains convergence guarantees for Bayesian nonparametric regression of functions with Gaussian process priors in the misspecified setting. However, their work only considers a squared exponential covariance structure and requires a careful rescaling of the prior to deal with the smoothness misspecification. Our work holds for the larger class of misspecified Mat\'ern-like Gaussian priors and delivers rates without rescaling the prior distribution.

Beyond linear functionals, recent work proposes and analyzes a kernelized deep learning method for  nonlinear functionals~\cite{shi2024nonlinear}. The idea of such neural functionals, a subclass of the FNMs proposed in this work, is not new. One appearance is in the context of a function space discriminator for generative adversarial networks~\cite{rahman2022generative}. However, that work uses only a single bounded linear functional that is appended to the output of a FNO and is parametrized by a standard neural network. This is a special case of our FNMs for F2V maps. Another paper that shares similar ideas to ours is \cite{zhang2023improving}. There, the authors also formulate a V2V neural network approach that maps through a latent 1D function space. However, their encoder and decoder maps are prescribed by hand-picked basis functions, while for FNMs the encoder and decoder maps are learned from data.

In this paper, three potential applications are highlighted. The first application is an advection--diffusion model where the input is a velocity field and the output is the state at a fixed future time. This problem is considered a benchmark for scientific machine learning \cite{takamoto2022pdebench}. Some theoretical approximation rates for it have been developed for DeepONet in the F2F setting \cite{deng2022approximation}. The second application centers on the compressible flow over an airfoil, i.e., a wing cross section. This experiment is explored for FNO in \cite{li2022fourier} and used as a shape optimization example in the F2F setting for DeepONet in \cite{shukla2023deep} and for reduced basis networks in \cite{o2021adaptive}. Several other related works devise V2V-based neural network approaches and novel training strategies for this aerodynamics problem \cite{lye2021multi,lye2020deep,lye2021iterative,mishra2021enhancing}.
The third application involves learning the homogenized elasticity coefficient for a multiscale elliptic PDE. This example is explored in detail for FNO in \cite{bhattacharya2023learning} and for other constitutive laws in \cite{liu2022learning}. For the Darcy flow---or scalar coefficient---setting of this equation, other work adopts the F2F setting to efficiently compute QoIs \cite{you2022learning}. For each of these applications, we compare the generalization error performance of all four F2F, F2V, V2F, and V2V variants of FNMs as well as standard fully-connected neural networks.

\subsection{Notation}\label{ssec:notation}
The set of continuous linear operators from a Banach space $\cU$ to a Banach space $\cV$ is denoted by $\cL(\cU;\cV)$, and if $\cU=\cV$, then we write $\cL(\cV)$. For a separable Hilbert space $(H,\ip{\cdot}{\cdot},\norm{\slot})$, the outer product operator $a \otimes b\in\cL(H)$ is defined by $(a\otimes b)c\defeq \ip{b}{c}a$ for any elements $a$, $b$, and $c$ of $H$. Let $(\Omega,\cF,\P)$ be a probability space that is sufficiently rich to support all random variables that appear in this paper. All expectations are interpreted as Bochner integrals. For a probability measure $\Pi$ supported on $H$ with two finite moments, we denote by $\Cov(\Pi)=\E^{u\sim\Pi}[(u-\E u)\otimes (u-\E u)]\in\cL(H)$ its covariance operator. Independent and identically distributed (i.i.d.) random variables $X_1, X_2,\ldots, X_n$ from $\Pi$ are denoted by $\{X_i\}_{i=1}^n\sim\Pi^{\otimes n}$. For two random variables $X$ and $Z$, the conditional expectation notation $\E^{Z\condbar X}[\slot]$ denotes integration with respect to the law of $Z\condbar X$. For $N\in\N$, we write $[N]\defeq \{1,2,\ldots, N\}$. For two nonnegative real sequences $ \{a_n\}_{n\in\N} $ and $ \{b_n\}_{n\in\N} $, we write 
$ a_n\lesssim b_n $ if there exists \( c>0 \) such that $ a_n \leq cb_n $ for all \( n\in\N \) and $a_n\simeq b_n$ if both $a_n\lesssim b_n$ and $b_n\lesssim a_n$. To denote asymptotic equivalence, we write $a_n\asymp b_n$ as $n\to\infty$ if there exist $c\geq 1$ and $n_0\in\N$ such that $c^{-1}b_n\leq a_n\leq c b_n$ for all $n\geq n_0$. The Sobolev-like sequence Hilbert spaces $\cH^s = \cH^s(\N,\R)$ are defined for $s\in\R$ by $\cH^s \defeq \{v\colon \N\to\R\,|\, \sum_{j=1}^{\infty} j^{2s}\abs{v_j}^2 < \infty\}$. We write $\Td$ for the $d$-dimensional unit torus $[0,1]_{\mathrm{per}}^d$.

\subsection{Outline}\label{ssec:outline}
The remainder of this article is organized as follows. We define the architecture of FNMs as a slight adjustment of FNOs in Section~\ref{sec:nm} (Contribution~\ref{item:contr_fnm}) and confirm that FNMs retain desirable properties of FNOs such as universal approximation in Section~\ref{sec:ua} (Contribution~\ref{item:contr_ua}). In Section~\ref{sec:linear}, we analyze end-to-end and full-field learning of linear functionals to establish a theoretical foundation that underlies the data volume requirements of the two approaches (Contribution~\ref{item:contr_linear}); simulations also support the linear theory. Going beyond the theory, Section~\ref{sec:numerics} provides numerical experiments that compare end-to-end and full-field learning with FNMs with both finite- and infinite-dimensional input space representations for predicting QoIs in several nonlinear PDE problems (Contribution~\ref{item:contr_numerics}). Concluding remarks are given in Section~\ref{sec:conclusion}. Appendix~\ref{appx:thm_full} contains additional theorems related to Section~\ref{sec:linear}. All proofs are provided in Appendices~\ref{appx:proofs_ua}~and~\ref{appx:proofs_linear}.

\section{Neural mappings for finite-dimensional vector data}\label{sec:nm}
In this section, we recall the FNO architecture (Subsection~\ref{sec:nm_no}) and describe modifications of it to form FNMs (Subsection~\ref{sec:nm_nm}). 

\subsection{A review of neural operators}\label{sec:nm_no}
Let $\mcU = \mcU(\mcD;\R^{d_u})$ and $\mcY = \mcY(\mcD;\R^{d_y})$ be Banach function spaces over Euclidean domain $\mcD\subset \R^d$. Finite-dimensional fully-connected neural networks are repeated compositions of affine mappings alternating with pointwise nonlinearities. To extend this framework to the infinite-dimensional function space setting, depth $T$ neural operators from $\mcU$ to $\mcY$ take the form 
\begin{equation}\label{eqn:no}
	\Psi^{(\mathrm{NO})}(u) \defeq \bigl(\mathcal{Q}\circ \mathscr{L}_{T}\circ \mathscr{L}_{T-1}\circ \cdots \circ \mathscr{L}_{1}\circ \mathcal{S}\bigr)(u) \qfa u\in\cU\,,
\end{equation}
where $\mcS$ is a pointwise-defined local lifting operator, $\mcQ$ is a pointwise-defined local projection operator, and for each $t \in [T]$, the layer $\sL_t \colon \cB_{t-1} \to \cB_t$ is a nonlinear map between appropriate Banach function spaces $\cB_{t-1}(\cD; \R^{d_{t-1}}) \subset L^2(\cD;\R^{d_{t-1}})$ and $\cB_t(\cD; \R^{d_t}) \subset L^2(\cD;\R^{d_t})$. The map $\sL_t $ is the composition of a local (and usually nonlinear) operator with a nonlocal affine kernel integral operator \cite{kovachki2023neural}.

The FNO is a specific instance of the class of neural operators~\eqref{eqn:no}. Let $\cD=\Td$. Then for FNO, the form of the layer $\sL_{t}\colon \cB_{t-1}(\T^d; \R^{d_{t-1}}) \to \cB_t(\T^d; \R^{d_{t}})$ is given by $v\mapsto \mathscr{L}_{t}(v)$, where for any $x\in\T^d$, it holds that
\begin{equation}\label{eqn:fno_layer_nonlinear}
    \bigl(\mathscr{L}_{t}(v)\bigr)(x) = \sigma_t\bigl(W_t v(x) + (\cK_t v)(x) + b_t(x)\bigr)\,.
\end{equation}
In \eqref{eqn:fno_layer_nonlinear}, $W_t \in \R^{d_{{t}}\times d_{t-1}}$ is a weight matrix, $b_t\colon \Td \to\R^{d_{t}}$ is a bias function, and $\cK_t$ is a convolution operator given, for $v\colon \Td \to\R^{d_{t-1}}$ and any $x\in\Td$, by the expression
\begin{equation}\label{eqn:fno_Kt}
    (\cK_tv)(x) = \left\{\sum_{k \in \Z^d}\left(\sum_{j=1}^{d_{t-1}} (P^{(k)}_t)_{\ell j}\ip{\psi_k}{v_j}_{L^2(\mathbb{T}^d;\mathbb{C})} \right)\psi_k(x)\right\}_{\ell\in[d_{{t}}]} \in\R^{d_{t}}\,.
\end{equation}
In the preceding display, the $\psi_k = e^{2\pi \iunit \ip{k}{\slot}_{\R^d}}$ are the complex Fourier basis elements of $L^2(\Td;\C)$ and $P^{(k)}_t \in \mathbb{C}^{d_{{t}} \times d_{t-1}}$ are the learnable parameters of the integral operator $\cK_t$ for each $k\in\Z^d$. The functions $\sigma_t\colon \R\to\R$ are nonlinear activations that act pointwise when applied to vectors.
Additional details of more general versions and computational implementations of the FNO may be found in \cite{kovachki2021universal,kovachki2023neural,li2022fourier}.

Though the internal FNO layers $\{\sL_{t}\}$ in \eqref{eqn:fno_layer_nonlinear} and \eqref{eqn:fno_Kt} are defined on the periodic domain $\Td$, it is possible to apply the FNO to other $d$-dimensional domains $\cD\neq \T^d$. Define Banach spaces $\cB_{\mathrm{in}}(\cD;\R^{d_0})$ and $\cB_{\mathrm{out}}(\cD;\R^{d_T})$ and introduce an operator $\cE\colon \cB_{\mathrm{in}} \to \cB_0$. Then, replace $\cS$ in \eqref{eqn:no} with $\cE\circ \cS$. Similarly, let $\cR\colon \cB_{T} \to \cB_{\mathrm{out}}$ be an operator that maps back to functions on the desired domain $\mcD$ and replace $\cQ$ in \eqref{eqn:no} with $\cQ\circ\cR$.
These modifications to the lifting and projecting components yield the final FNO architecture
\begin{equation}\label{eqn:fno}
\Psi^{(\mathrm{FNO})} = \cQ \circ \cR \circ \mathscr{L}_{T}\circ \mathscr{L}_{T-1}\circ \cdots \circ \mathscr{L}_{1}\circ \cE \circ \cS\,.
\end{equation}
In practice, the map $\cE$ is usually represented by zero padding the input domain and $\cR$ by restricting to the output domain of interest.

\subsection{The neural mappings framework}\label{sec:nm_nm}
The neural operator architecture described in Section~\ref{sec:nm_no} only accepts inputs, outputs, and intermediate states that are elements of function spaces. Finite-dimensional vector inputs, outputs, and states are not directly compatible with neural operators. We propose \emph{neural mappings}, which lift this restriction through two fundamental building blocks. The first, linear functional layers, map from function space to finite dimensions. The second, linear decoder layers, map from finite dimensions to function space. We combine these two building blocks with standard iterative neural operator layers to form several classes of nonlinear and function space consistent architectures.

Instating the neural operator notation from Section~\ref{sec:nm_no}, we define a \emph{linear functional layer} $\sG\colon \cB_{T-1}\to \R^{d_{T}}$ and a \emph{linear decoder layer} $ \sD\colon \R^{d_0} \to \cB_1$ to be maps of the form
\begin{align}\label{eqn:sG-sD}
\begin{split}
    h&\mapsto \sG h\defeq \int_\cD \kappa(x)h(x) \dd{x}\,,\qw \kappa\colon\cD\to \R^{d_{T}\times d_{T-1}}\,, \qa\\
    z&\mapsto\sD z\defeq \kappa(\cdot)z\,,\qw \kappa\colon\cD\to \R^{d_1\times d_0}\,,
\end{split}
\end{align}
respectively. The linear functional layer $\sG$ takes a vector-valued function $h$ and integrates it against a fixed matrix-valued function $\kappa$ to produce a finite vector output. In duality to $\sG$, the linear decoder layer $\sD$ takes as input a finite vector $z$ and multiplies it by a fixed matrix-valued function $\kappa$ to produce an output function. The functions $\kappa$ are the sole learnable parameters of these two layers.

Although $\sG$ and $\sD$ may be incorporated into general neural operators~\eqref{eqn:no}, we will specialize our method to the FNO. In anticipation of this periodic setting, we view $\sG$ as a Fourier linear functional layer by replacing $\cD$ in \eqref{eqn:sG-sD} by the torus $\Td$ and using Fourier series to expand $\sG$ as
\begin{equation}\label{eqn:fnf_kernel_fourier_integral}
	 h\mapsto \sG h = \left\{\sum_{k\in\Z^d} \left(\sum_{j=1}^{d_{T-1}}P^{(k)}_{\ell j}\ip{\psi_k}{h_j}_{L^2(\Td;\C)}\right)\right\}_{\ell\in [d_T]}\in\R^{d_T}\, ,
\end{equation}
where we recall that $\{\psi_k\}$ is the Fourier basis of $L^2(\Td;\C)$. In \eqref{eqn:fnf_kernel_fourier_integral}, the entries of the matrices $\{P^{(k)}\}\subset \C^{d_T\times d_{T-1}}$ correspond to the Fourier coefficients of the function $\kappa$ in \eqref{eqn:sG-sD}. 
The calculation leading to the convergent series formula~\eqref{eqn:fnf_kernel_fourier_integral} uses Parseval's theorem to equate the $L^2$~\eqref{eqn:sG-sD} and $\ell^2$~\eqref{eqn:fnf_kernel_fourier_integral} inner products.
Similar calculations show that, on the torus $\Td$, the map $\sD$ takes the form 
\begin{equation}\label{eqn:fnd_kernel}
 	z\mapsto \sD z = \left\{\sum_{k\in\Z^d} \bigl(P^{(k)} z\bigr)_j \psi_k\right\}_{j\in [d_1]}, \qw P^{(k)} \in \C^{d_1 \times d_0}\,.
\end{equation}
Just like for the FNO kernel integral layers \eqref{eqn:fno_Kt}, the expressions \eqref{eqn:fnf_kernel_fourier_integral} and \eqref{eqn:fnd_kernel} are efficiently implemented and learned in Fourier space. 

We are now in a position to define the general FNMs architecture.
    
\begin{definition}[Fourier Neural Mappings]\label{def:FNM}
    Let $Q\colon \R^{d_T} \to \R^{d_y}$ and $S\colon \R^{d_u} \to \R^{d_0}$ be finite-dimensional maps. For $\{\sL_t\}$ defined as in \eqref{eqn:fno} and $\sG$ and $\sD$ defined as in \eqref{eqn:fnf_kernel_fourier_integral} and \eqref{eqn:fnd_kernel}, let
    \begin{equation}\label{eqn:FNM}
    \Psi^{(\text{FNM})} \defeq Q \circ \sG \circ \sL_{T-1} \circ \dots \circ \sL_2 \circ \sD \circ S\,.
    \end{equation}
    be the base level map. The \emph{Fourier Neural Mappings} architecture is comprised of the following four main models that are obtained by modifying the base map:
    \begin{itemize}[topsep=1.67ex,itemsep=0.5ex,partopsep=1ex,parsep=1ex,leftmargin=9.5ex]
        \myitem{(M-V2V)} \label{enum:V2V} \sfit{vector-to-vector (V2V)}: $\Psi^{(\text{FNM})}$ in \eqref{eqn:FNM} as written, thus mapping finite vector inputs to finite vector outputs;
        
        \myitem{(M-V2F)} \label{enum:V2F}  \sfit{vector-to-function (V2F)}: $\Psi^{(\text{FNM})}$ with operator $\sG$ in \eqref{eqn:FNM} replaced by $\cR\circ \sL_T$, where $\cR$ and $\sL_T$ are as in \eqref{eqn:fno} and \eqref{eqn:fno_layer_nonlinear}, respectively, and $Q$ in \eqref{eqn:FNM} now viewed as a pointwise-defined operator acting on vector-valued functions;
        
        \myitem{(M-F2V)} \label{enum:F2V} \sfit{function-to-vector (F2V)}: $\Psi^{(\text{FNM})}$ with operator $\sD$ in \eqref{eqn:FNM} replaced by $\sL_1\circ\cE$, where $\sL_1$ and $\cE$ are as in \eqref{eqn:fno_layer_nonlinear} and \eqref{eqn:fno}, respectively, and $S$ in \eqref{eqn:FNM} now viewed as a pointwise-defined operator acting on vector-valued functions;
        
        \myitem{(M-F2F)} \label{enum:F2F} \sfit{function-to-function (F2F)}: $\Psi^{(\text{FNM})}$ with modifications \ref{enum:V2F} and \ref{enum:F2V}, thus the resulting architecture is the standard FNO $\Psi^{(\text{FNO})}$~\eqref{eqn:fno}.\footnote{Notice that yet another function-to-function FNM architecture is possible by exchanging the roles of $\sG$ and $\sD$ in \eqref{eqn:FNM}; this is a nonlinear Fourier neural autoencoder.}
    \end{itemize}
\end{definition}
When the \ref{enum:F2V} FNM is of primary interest, we sometimes call this architecture \emph{Fourier Neural Functionals}. Similarly, we may also call the \ref{enum:V2F} FNM a \emph{Fourier Neural Decoder}.

\section{Universal approximation theory for Fourier Neural Mappings}\label{sec:ua}
In this section, we establish universal approximation theorems for FNMs; this is a confirmation that the architectures maintain this desirable property of neural operators. The results are stated for the cases of the F2V and V2F architectures; the case of V2V trivially follows. Similar results also hold for general neural mappings by invoking the appropriate universal approximation theorems for general neural operators from \cite[Section 9.3]{kovachki2021universal} and for the topology induced by Lebesgue--Bochner norms, i.e., average error with respect to a probability measure supported on the input space. For more details regarding these extensions, see \cite[Theorems~11--14, Section 9.3, pp. 55--57]{kovachki2023neural} and \cite[pp. 12--14 and Theorem 18]{kovachki2021universal}. Our proofs, which are collected in Appendix~\ref{appx:proofs_ua}, use arguments based on constant functions that are similar to those used to prove universal approximation theorems at the level of operators.

The approximation theory in this section relies on the following assumption.
\begin{assumption}[activation function]\label{ass:activation}
All nonlinear layers $\{\sL_{t}\}_{t=1}^{T}$ from \eqref{eqn:fno_layer_nonlinear} have the same non-polynomial and globally Lipschitz activation function $\sigma\in C^{\infty}(\R;\R)$.
\end{assumption}

We note that in practice, the final Fourier layer activation function is often set to be the identity. Moreover, the bias functions $b_t$ in $\sL_t$ are typically chosen to be constant functions. The universal approximation theory does not distinguish these differences. 
Additionally, to align with the existing theory developed in \cite{kovachki2021universal}, our existence proofs rely on a reduction to the setting that
\begin{enumerate}[label=(\textit{\roman*})]
    \item the channel dimension $d_t$ is constant across all layers, say $d_t=d_v\in\N$ for all $t\in[T]$, and\label{item:channel_const_dim}

    \item the maps $S$ and $Q$ in \eqref{eqn:FNM} are linear and act pointwise on functions.\label{item:lift_project_affine}
\end{enumerate}
These conditions are certainly special cases of nonconstant channel dimension and nonlinear lifting and projection maps, respectively. Hence, the forthcoming universality properties still hold for more sophisticated architectures that deviate from conditions (\textit{i}) and (\textit{ii}), such as those used in Section~\ref{sec:numerics} in this paper.

Our first result delivers a universal approximation result for Fourier Neural Functionals, i.e., the F2V setting. Appendix~\ref{appx:proofs_ua_fnm} contains the proof.

\begin{restatable}[universal approximation: function-to-vector mappings]{theorem}{thmuafnf}\label{thm:ua_fnf}
    Let $s\geq 0$, $\mcD\subset \R^d$ be an open Lipschitz domain such that $\overline{\mcD}\subset (0,1)^d$, and $\cU=H^s(\mcD;\R^{d_u})$. Let $\Psi^\dagger\colon \cU\to\R^{d_y}$ be a continuous mapping. Let $K\subset \cU$ be compact in $\cU$. Under Assumption~\ref{ass:activation}, for any $\ep>0$, there exist Fourier Neural Functionals $\Psi\colon \cU\to\R^{d_y}$ of the form \eqref{eqn:FNM} with modification~\ref{enum:F2V} such that
    \begin{align}\label{eqn:ua_fnf_ep}
    \sup_{u\in K}\norm[\big]{{\Psi}^\dagger(u)-\Psi(u)}_{\R^{d_y}}<\ep\,.
    \end{align}
\end{restatable}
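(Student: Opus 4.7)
The plan is to reduce the F2V problem to the already-established universal approximation theorem for Fourier Neural Operators (FNOs) cited in the excerpt by ``lifting'' $\Psi^\dagger$ to a constant-valued function-to-function map, approximating that map with a standard FNO, and then ``collapsing'' the function output back to $\R^{d_y}$ by integration over $\Td$.

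First I would introduce the auxiliary function-to-function target $\tilde\Psi^\dagger\colon \cU\to C(\Td;\R^{d_y})$ defined by $\tilde\Psi^\dagger(u)\defeq \Psi^\dagger(u)\,\one_{\Td}$, i.e., the constant function on $\Td$ with value $\Psi^\dagger(u)$. Since the map $c\mapsto c\,\one_{\Td}$ is an isometric embedding of $\R^{d_y}$ into $(C(\Td;\R^{d_y}),\norm{\slot}_\infty)$, continuity of $\Psi^\dagger$ on $K$ transfers to continuity of $\tilde\Psi^\dagger$, and $K$ remains compact in $\cU$. I would then invoke the FNO universal approximation theorem (cf.\ \cite{kovachki2021universal,kovachki2023neural}) to obtain, for any prescribed tolerance, an FNO $\tilde\Psi$ of the form \eqref{eqn:fno} satisfying $\sup_{u\in K}\norm{\tilde\Psi^\dagger(u)-\tilde\Psi(u)}_{C(\Td;\R^{d_y})}<\ep$. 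By reductions \textit{(i)} and \textit{(ii)} flagged in the excerpt, I may assume $\tilde\Psi$ has constant channel width and pointwise-linear lifting and projection, so that $\tilde\Psi = \tilde Q\circ \sL_{T-1}\circ\cdots\circ \sL_1\circ\cE\circ\tilde S$; no restriction operator $\cR$ appears because the output domain is already $\Td$.

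To promote $\tilde\Psi$ to a valid F2V FNM, I would append the linear functional layer $\sG\colon h\mapsto \int_{\Td} h(x)\dd x$, which is an instance of \eqref{eqn:sG-sD} with constant identity-matrix kernel. Because $\tilde Q$ is linear and pointwise, it commutes with integration, so setting $Q\defeq \tilde Q$, $S\defeq \tilde S$, and
\[
\Psi \defeq Q\circ \sG\circ \sL_{T-1}\circ\cdots\circ \sL_1\circ \cE\circ S
\]
realizes exactly the structure \ref{enum:F2V} of Definition~\ref{def:FNM}. The error bound is then immediate from Jensen's inequality:
\[
\norm[\big]{\Psi^\dagger(u)-\Psi(u)}_{\R^{d_y}} = \norm[\Big]{\int_{\Td}\bigl[\tilde\Psi^\dagger(u)(x)-\tilde\Psi(u)(x)\bigr]\dd x}_{\R^{d_y}} \leq \norm[\big]{\tilde\Psi^\dagger(u)-\tilde\Psi(u)}_{C(\Td;\R^{d_y})} <\ep\,,
\]
where the equality uses that $\tilde\Psi^\dagger(u)$ is constant with value $\Psi^\dagger(u)$ and $\int_{\Td}\dd x=1$.

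The main obstacle is the bookkeeping needed to match the hypotheses of the cited FNO universal approximation theorem to the function spaces at hand, namely continuous maps from $H^s(\cD;\R^{d_u})$ (with $\overline{\cD}\subset(0,1)^d$) into $C(\Td;\R^{d_y})$, uniformly on a compact set of inputs. The mismatch between $\cD$ and $\Td$ is handled by the FNO's own extension operator $\cE$ (e.g., zero padding), while the output side is benign since $\tilde\Psi^\dagger$ takes values in the finite-dimensional subspace of constant functions; once the appropriate formulation of the FNO UA is identified, the rest of the argument---in particular verifying that the appended integration layer and the pointwise-linear projection can be recombined into the F2V template---is routine.
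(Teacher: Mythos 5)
Your proposal follows essentially the same argument as the paper's proof: promote $\Psi^\dagger$ to a constant-function-valued operator $u\mapsto \Psi^\dagger(u)\onebm$, invoke the FNO universal approximation theorem uniformly on $K$, and recover the vector output via the averaging functional $h\mapsto\int_{\Td}h\,\dd x$, with the error controlled by Jensen; the subsequent re-bracketing of the pointwise-linear projection $\tilde Q$ into either $Q\circ\sG$ (your version) or the matrix-kernel $\sG=\overline{P}\circ\tilde{\cQ}$ with $Q=\Id$ (the paper's version) is an equivalent bookkeeping choice. The one wrinkle is that you carry out the intermediate approximation in $C(\Td;\R^{d_y})$, whereas the FNO universal approximation results you cite are stated for $H^{s'}(\Td;\R^{d_y})$ targets; the paper sidesteps this by taking the auxiliary codomain to be $L^2(\Td;\R^{d_y})$ and using Cauchy--Schwarz in place of the sup-norm bound, which is the immediate fix (alternatively, choose $s'>d/2$ and invoke the Sobolev embedding).
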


The approximation theorem for the Fourier Neural Decoder, i.e., the V2F case, is analogous.
\begin{restatable}[universal approximation: vector-to-function mappings]{theorem}{thmuafnd}\label{thm:ua_fnd}
    Let $t\geq 0$, $\mcD\subset \R^d$ be an open Lipschitz domain such that $\overline{\mcD}\subset (0,1)^d$, and $\cY=H^t(\mcD;\R^{d_y})$. Let $\Psi^\dagger\colon \R^{d_u}\to\cY$ be a continuous mapping. Let $\mathcal{Z}\subset \R^{d_u}$ be compact. Under Assumption~\ref{ass:activation}, for any $\ep>0$, there exists a Fourier Neural Decoder $\Psi\colon \R^{d_u}\to\cY$ of the form \eqref{eqn:FNM} with modification \ref{enum:V2F} such that
    \begin{align}\label{eqn:ua_fnd_ep}
    \sup_{z\in\mathcal{Z}}\norm[\big]{{\Psi}^\dagger(z)-\Psi(z)}_{\cY}<\ep\,.
    \end{align}
\end{restatable}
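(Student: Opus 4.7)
My plan is to reduce Theorem~\ref{thm:ua_fnd} to the F2F universal approximation theorem for FNOs from \cite{kovachki2021universal} by lifting finite vectors to constant vector-valued functions. I would fix an auxiliary smoothness $s>d/2$ and define $\iota\colon \R^{d_u}\to H^s(\cD;\R^{d_u})$ by $\iota(z)\defeq z\cdot \one_{\cD}$. Since $\iota$ is continuous and $\mathcal{Z}\subset \R^{d_u}$ is compact, the image $K\defeq \iota(\mathcal{Z})$ is compact in $H^s(\cD;\R^{d_u})$, and the auxiliary map $\tilde{\Psi}^\dagger\colon K\to \cY$ defined by $\tilde{\Psi}^\dagger(z\cdot \one_{\cD})\defeq \Psi^\dagger(z)$ is continuous because $\iota|_{\mathcal Z}$ is a homeomorphism onto $K$. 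If desirable, $\tilde{\Psi}^\dagger$ can be extended continuously to all of $H^s(\cD;\R^{d_u})$ via a standard Dugundji extension, but this is not strictly needed.

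Next, I would invoke the F2F universal approximation theorem for FNOs applied to $\tilde{\Psi}^\dagger$ on $K$. Under Assumption~\ref{ass:activation} and the simplifying reductions~(\textit{i})--(\textit{ii}), this produces an FNO $\Psi^{(\mathrm{FNO})}$ of the form \eqref{eqn:fno} with $T$ hidden layers $\{\sL_t\}_{t=1}^T$, pointwise linear lifting $\cS(u)(x)=Au(x)$ for some $A\in\R^{d_v\times d_u}$, pointwise linear projection $Q$, and the standard periodic extension $\cE$ (zero-padding from $\cD$ to $\Td$) and restriction $\cR$, satisfying
\begin{equation*}
    \sup_{z\in\mathcal{Z}}\norm[\big]{\Psi^\dagger(z)-\Psi^{(\mathrm{FNO})}(z\cdot \one_{\cD})}_\cY <\ep.
\end{equation*}

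The last step is to realize $z\mapsto \Psi^{(\mathrm{FNO})}(z\cdot\one_{\cD})$ \emph{exactly} as a V2F FNM of depth $T+1$. Direct computation on the constant input $z\cdot \one_{\cD}$ yields $\cS(z\cdot \one_{\cD})\equiv Az$ and $\cE(\cS(z\cdot \one_{\cD}))(x)=\chi_\cD(x)\,Az$ on $\Td$. Accordingly, I set $S(z)\defeq Az$ and choose the decoder $\sD\colon \R^{d_v}\to L^2(\Td;\R^{d_v})$ to correspond to the kernel $\kappa(x)=\chi_\cD(x)\,I_{d_v}$ in \eqref{eqn:fnd_kernel}, i.e., Fourier coefficients $P^{(k)}=\widehat{\chi_\cD}(k)\,I_{d_v}$. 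Then $\sD\circ S\equiv \cE\circ\cS\circ\iota$ identically, and taking $\sL_{t+1}^{\mathrm{FNM}}\defeq \sL_t^{\mathrm{FNO}}$ for $t\in[T]$ together with the same $\cR$ and $Q$ gives $\Psi^{(\mathrm{FNM})}(z)=\Psi^{(\mathrm{FNO})}(z\cdot \one_{\cD})$ for every $z\in\R^{d_u}$. Combining with the previous display yields \eqref{eqn:ua_fnd_ep}.

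The main obstacle is this exact identification: the target $\chi_\cD\cdot Az$ is discontinuous on $\Td$, so reproducing it requires the full infinite Fourier series in \eqref{eqn:fnd_kernel} (the coefficients decay only like $\abs{k}^{-1}$). This is permitted by the formal series used in the FNO/FNM formalism, and because the equality $\sD\circ S=\cE\circ\cS\circ\iota$ holds in $L^2(\Td;\R^{d_v})$, the nonlocal convolutions in the later layers introduce no further error. If one insisted on restricting $\sD$ to finitely many Fourier modes, an extra triangle-inequality argument would split $\ep$ into two halves, using continuity of $Q\circ\cR\circ\sL_T\circ\cdots\circ\sL_1$ between appropriate Sobolev spaces to convert a small truncation error in $\cE\circ\cS\circ\iota$ into a small error in $\cY$.
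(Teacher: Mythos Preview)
Your proof is correct, but it takes a different route from the paper's and is somewhat less economical. The paper lifts $z\in\R^{d_u}$ directly to the \emph{constant function on the torus} via $\sfL\colon z\mapsto z\onebm$, with $\onebm\equiv 1$ on $\T^d$, and then applies the periodic-input variant of the FNO universal approximation theorem (the paper's Lemma~\ref{lem:ua_fno_periodic_input}, with $\cE=\Id$). Because the lifted input already lives on $\T^d$, no zero-padding is needed, and the resulting decoder layer $\sD=\tilde{\cS}\circ\sfL$ has the trivial kernel $\kappa(x)=\onebm(x)\tilde S$, i.e., only the single Fourier mode $k=0$ is nonzero. By contrast, you lift to $z\cdot\one_{\cD}$ on the non-periodic domain $\cD$ and invoke the full F2F theorem with the zero-padding extension $\cE$, which forces your decoder kernel to be $\chi_{\cD}(x)I_{d_v}$---a discontinuous function on $\T^d$ requiring the full infinite Fourier series in~\eqref{eqn:fnd_kernel}. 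You correctly note that this is permitted by the formalism (the equality $\sD\circ S=\cE\circ\cS\circ\iota$ holds in $L^2(\T^d;\R^{d_v})$), so there is no error; it is simply a more involved realization. One minor quibble: the Dugundji extension is not merely ``if desirable''---the FNO universal approximation theorems as stated (both in the paper and in~\cite{kovachki2021universal}) take as hypothesis a continuous operator defined on all of the input space, so the extension step is formally required before invoking them.
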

The proof may also be found in Appendix~\ref{appx:proofs_ua_fnm}. While perhaps not surprising, the results in Theorems~\ref{thm:ua_fnf}~and~\ref{thm:ua_fnd} nonetheless show that the proposed FNM architectures are sensible for the tasks of approximating continuous function-to-vector or vector-to-function mappings.

\section{Statistical theory for regression of linear function-to-scalar maps}\label{sec:linear}
The previous two sections propose and justify a general nonlinear framework for approximating PtO maps with finite-dimensional input or output spaces. Although universality properties of the proposed architectures are established, the efficiency of statistically estimating the underlying PtO map from a finite dataset remains to be addressed. Such sample complexity results are crucial to understand the expected performance of learning algorithms in scenarios where experimental or computational resources for data generation are limited. However, it is an open challenge to develop such a theory in the general \emph{nonlinear} setting previously considered.

To still shed some light on this issue, we provide a detailed theoretical analysis of learning a \emph{linear} PtO map $f$ that admits a factorization into a linear functional $q$, the QoI, composed with a self-adjoint linear operator $L$, the forward map. 
To set the stage, we adopt the following setup in the remainder of Section~\ref{sec:linear}. Let $(H,\ip{\cdot}{\cdot}, \norm{\slot})$ be an infinite-dimensional real separable Hilbert space. We view the PtO map $f=q\circ L\colon H\to\R$ as a linear functional on $H$. Hence, the \emph{input space is always infinite-dimensional} in this section. Let $\nu$ be the data-generating Borel probability measure on the input space $H$. We work in the setting that
\begin{align}\label{eqn:linear_data_center_and_cov}
    \E^{u\sim\nu} u = 0\qa \Sigma\defeq \Cov(\nu)=\sum_{j=1}^\infty\sigma_j\varphi_j\otimes \varphi_j
\end{align}
for some orthonormal basis $\{\varphi_j\}_{j\in\N}$ of $H$ and eigenvalue sequence $\{\sigma_j\}_{j\in\N}\subset \R_{\geq 0}$, and that $N$ i.i.d. input data samples $\{u_n\}_{n=1}^N\sim\nu^{\otimes N}$ are available.

We consider two different supervised learning approaches, visualized in Figure~\ref{fig:ee_vs_ff} as F2V and F2F, that correspond to the given labeled output training data $\{y_n\}_{n=1}^N$ being either noise-perturbed versions of 
\begin{itemize}[topsep=1.67ex,itemsep=1ex,partopsep=1ex,parsep=1ex]
    \myitem{(EE)}\label{item:ee} \sfit{(end-to-end learning)} \emph{the entire PtO map $f$ applied to the input data}, or
    \myitem{(FF)}\label{item:ff} \sfit{(full-field learning)} \emph{the forward map $L$ applied to the input data}. \emph{Moreover, in this latter case the linear functional $q$ is assumed to be known}.
\end{itemize}
In both cases, the primary goal is to estimate $f$ given input--output data pairs. Notice that in the \ref{item:ee} approach, the responses are scalar-valued, while for the \ref{item:ff} approach the responses are function-valued.\footnote{Although the Hilbert space $H$ is general and not necessarily comprised of functions, we still refer to its elements as ``functions'' to avoid confusion caused by attempts to distinguish between finite-dimensional vectors and infinite-dimensional vectors.}
In some cases, the problem itself specifies the approach that can be used. For example, in physical experiments, it is often impossible to experimentally acquire the full output of $L$. Instead, only a finite number of possibly indirect measurements are available, which represent the QoI $q$ in the \ref{item:ee} framework. Even if the data generation procedure is algorithmic,  the cost of generating full-field data may be much higher than simply measuring finite-dimensional QoIs. Nevertheless, for a large class of forward maps $L$ and \emph{continuous} QoIs $q$, we demonstrate that \ref{item:ff} is more data-efficient than \ref{item:ee} in this linear setup; see insights~\ref{item:insight_rough} and \ref{item:insight_smooth} and Figure~\ref{fig:rates_plot} for more details.

To this end, Subsection~\ref{sec:linear_ee} 
sets up the framework for statistical error analysis of learning general linear functionals on $H$ with a particular Bayesian posterior estimator; this is the end-to-end setting. Subsection~\ref{sec:linear_ff} then continues the setup for the setting of 
factorized linear functionals in the full-field learning setting. Subsection~\ref{sec:linear_main} contains the main results. Here, two theorems provide convergence rates for the end-to-end and full-field settings, respectively. In particular, the end-to-end result may be of independent interest to the functional data analysis and Bayesian nonparametric statistics communities. The theorems are followed by a discussion and a corollary that directly compares the data efficiency of end-to-end versus full-field learning of factorized linear functionals. In Subsection~\ref{sec:linear_numeric}, synthetic data experiments empirically confirm the theoretical results.

\subsection{End-to-end learning}\label{sec:linear_ee}
Consider a general linear functional $f\colon H\to\R$. Working in a nonparametric functional regression framework, we adopt a linear Gaussian posterior mean estimator that is obtained by conditioning a Gaussian process prior on the training dataset under the access model~\ref{item:ee}. Our primary concern is the development of large sample convergence rates for the average squared prediction error of the estimator with respect to some centered test distribution $\nu'$. We allow $\nu'$ to be different from the input training distribution $\nu$~\eqref{eqn:linear_data_center_and_cov}; that is, our error bounds hold \emph{out-of-distribution} or under \emph{covariate shift}. We describe our statistical model and Bayesian inference approach in Subsection~\ref{sec:linear_ee_setup}. In Subsection~\ref{sec:linear_ee_ass}, we list and interpret the main assumptions that underlie the theory.

\subsubsection{Setup and estimator}\label{sec:linear_ee_setup}
We adopt a Bayesian inverse problems perspective on the linear functional regression task. To simplify the analysis, suppose that $f\in H^*$ is continuous on $H$.\footnote{It is possible to handle \emph{unbounded} linear functionals $f$ using the weighted Hilbert--Schmidt approach from \cite[Section 2.2]{de2023convergence} or the framework of measurable linear functionals from \cite[Section 3 and 5]{knapik2011bayesian}, possibly at the expense of stronger assumptions on the data and prior covariances.}
As a mild abuse of notation, we use the same symbol $f$ for both the linear functional itself and its Riesz representer. We thus view $f$ as an element of $H$ in all that follows. 

Next, for noise level $\gamma>0$ and for $n\in[N]$, we consider the statistical model
\begin{equation}\label{eqn:model_data_qoi}
    y_n=\ip{f}{u_n} + \gamma\xi_n\,, \qw u_n\diid \nu \qa \xi_n\diid \normal(0,1)\,.
\end{equation}
We get to observe $y_n$ and $u_n$, but not the noise $\xi_n$. Concatenate the data and noise as $U=\{u_n\}_{n=1}^N$, $Y=\{y_n\}_{n=1}^N$, and $\Xi = \{\xi_n\}_{n=1}^N$. This allows \eqref{eqn:model_data_qoi} to be recast as the linear inverse problem of finding $f$ from inputs $U$ and outputs
\begin{equation}\label{eqn:model_data_qoi_cat}
    Y=S_Nf+\gamma \Xi\,,
\end{equation}
where $S_N\colon H\to \R^N$ is the (random) sampling operator $h\mapsto \{\ip{h}{u_n}\}_{n=1}^N$. 

Proceeding with the Bayesian approach, we endow $f$ with a Gaussian prior distribution $f\sim \normal(0,\Lambda)$. Here $\Lambda\in\cL(H)$ is a trace-class covariance operator on $H$. The advantage of working with the preceding linear Gaussian model is that the posterior distribution for $f$ is also a Gaussian measure supported on $H$ with closed form expressions for its mean and covariance. To this end, let
\begin{align}\label{eqn:sigmahat}
    \widehat{\Sigma}\defeq \frac{S_N^*S_N^{\phantom{*}}}{N}=\frac{1}{N}\sum_{n=1}^Nu_n\otimes u_n
\end{align}
denote the empirical covariance operator of the centered input distribution $\nu$~\eqref{eqn:linear_data_center_and_cov}. Additionally, suppose that $U$, $\Xi$, and $f$ are independent as random variables.
Although the randomness of the operator $S_N$ is a slight deviation from the usual Bayesian setting, application of \cite[Proposition 3.1, pp. 2630--2631]{knapik2011bayesian} and \cite[Theorems 32, 13, and 37]{dashti2017} still imply that the posterior distribution obtained by conditioning the prior $f$ on the training dataset $(U,Y)$ is given by
\begin{equation}\label{eqn:posterior_ee}
    f\condbar (U,Y)\sim\normal(\bar{f}^{(N)}, \Lambda^{(N)}),\qw \bar{f}^{(N)}= A_N Y
\end{equation}
and the operator $A_N\colon \R^N\to H$ and posterior covariance $\Lambda^{(N)}\in\cL(H)$ satisfy
\begin{equation}\label{eqn:AN_and_postcov}
A_N\defeq \gamma^{-2} \Lambda^{(N)} S_N^*
\qa \Lambda^{(N)} = \frac{\gamma^2}{N}\Lambda^{1/2}\biggl(\Lambda^{1/2}\widehat{\Sigma}\Lambda^{1/2} + \frac{\gamma^2}{N}\Id_H\biggr)^{-1}\Lambda^{1/2}\,.
\end{equation}

We take a frequentist consistency perspective by assuming there exists a fixed ground truth $\fd\in H$ that generates the data $Y$ in \eqref{eqn:model_data_qoi_cat}. Abusing notation by using the same symbol for $Y$, this means that we observe the output response data\footnote{The actual noise process $\Xi$ in the observed data \eqref{eqn:model_data_qoi_cat_dagger} need not match the assumed Gaussian likelihood model implied by \eqref{eqn:model_data_qoi}. Indeed, the main results in Subsection~\ref{sec:linear_main} for the posterior mean remain valid for any centered square integrable random vector with isotropic covariance.}
\begin{equation}\label{eqn:model_data_qoi_cat_dagger}
    Y=S_N\fd+\gamma \Xi\,.
\end{equation}
Our estimator of $\fd$ is then taken to be the posterior mean $\bar{f}^{(N)}=A_NY$ from \eqref{eqn:posterior_ee} with $Y$ as in \eqref{eqn:model_data_qoi_cat_dagger}. However, the particular estimator $\bar{f}^{(N)}$ is chosen to simplify the exposition. As explained in Remark~\ref{rmk:contraction}, the analysis remains valid if the full posterior distribution $\normal(\bar{f}^{(N)}, \Lambda^{(N)})$ from \eqref{eqn:posterior_ee} is used to estimate $\fd$ instead of just its mean. Posterior contraction rates also follow as a consequence.

Instead of measuring the quality of our estimate of $\fd$ in the $H$-norm, we consider a weaker weighted norm induced by the average squared prediction error of the estimator. This weighted norm is more common in statistical learning than in statistical inverse problems; it includes the $H$-norm as a special case whenever $\Cov(\nu')=\Id_H$. To this end, let $\nu'$ be a centered Borel probability measure supported on a sufficiently large space containing $H$. We are interested in the out-of-distribution test squared error of $\bar{f}^{(N)}$ with respect to the test distribution $\nu'$, which is given by
\begin{equation}\label{eqn:test_error_weighted}
    \E^{u'\sim\nu'}\abs[\big]{\ip{\fd}{u'}-\ip{\bar{f}^{(N)}}{u'}}^2 = \norm[\big]{\Cov(\nu')^{1/2}(\fd-\bar{f}^{(N)})}^2\,.
\end{equation}
The derivation of this identity is explained in Appendix~\ref{appx:proofs_linear_ee}. Equation~\eqref{eqn:test_error_weighted} is equivalent to the squared $L^2_{\nu'}(H;\R)$ Bochner norm error between the functionals.

Finally, we explain how our posterior mean estimator relates to standard regularized least squares minimizers from machine learning.
\begin{remark}[equivalence to regularized empirical risk minimization]
    Under the setting described in this section, the posterior mean is equivalent to a generalized-Tikhonov regularized least squares estimator. See \cite[Section 2.3]{de2023convergence} and \cite[pp. 3631--2632]{knapik2011bayesian} for more details and relations to RKHS methods. This connects the Bayesian approach taken here back to traditional supervised learning frameworks.
\end{remark}

\subsubsection{Assumptions}\label{sec:linear_ee_ass}
In the end-to-end setting, we work under three primary assumptions that involve the data, the prior, and the truth.
The first assumption concerns the covariance operators that characterize the end-to-end learning framework~\ref{item:ee}.
\begin{assumption}[end-to-end learning: data and prior]\label{ass:data_end_to_end}
    Instate the setup and hypotheses developed in Subsection~\ref{sec:linear_ee_setup}. The following hold true.
    \begin{enumerate}[label=(A\arabic*)]
        \item\label{item:ass_ee_diag} \sfit{(simultaneous diagonalization)} The covariance operator $\Sigma=\Cov(\nu)$ of centered input training data distribution $\nu$ satisfies \eqref{eqn:linear_data_center_and_cov}. The covariance operators $\Sigma'\defeq\Cov(\nu')$ of the centered input test data distribution $\nu'$ and $\Lambda$ of the centered Gaussian prior are diagonalized in the $H$-orthonormal eigenbasis $\{\varphi_j\}_{j\in\N}$ of $\Sigma$ and have the representations
        \begin{align}\label{eqn:linear_ass_diag}
            \Sigma'=\sum_{j=1}^\infty \sigma_j'\varphi_j\otimes\varphi_j\qa
            \Lambda=\sum_{j=1}^\infty \lambda_j\varphi_j\otimes\varphi_j
        \end{align}
        for some sequences $\{\sigma_j'\}_{j\in\N}\subset \R_{\geq 0}$ and $\{\lambda_j\}_{j\in\N}\subset \R_{\geq 0}$.
        \item\label{item:ass_ee_data} \sfit{(data decay)} For some $\alpha > \frac{1}{2}$ and $\al'\geq 0$, the eigenvalues of $\Sigma$ and $\Sigma'$ satisfy
        \begin{align}\label{eqn:linear_ass_data_eig}
            \sigma_j \asymp j^{-2\alpha} \qas j\to\infty \qa \sigma'_j\lesssim j^{-2\alpha'} \qas j\to\infty \,.
        \end{align}
        \item\label{item:ass_ee_prior} \sfit{(prior decay)} For some $p>\frac{1}{2}$, the eigenvalues of $\Lambda$ satisfy       \begin{align}\label{eqn:linear_ass_prior_eig}
            \lambda_j \asymp j^{-2p} \qas j\to\infty\,.
        \end{align}
    \end{enumerate}
\end{assumption}

The power law eigenvalue decay conditions \ref{item:ass_ee_data} and \ref{item:ass_ee_prior} are standard in learning theory and help to facilitate explicit algebraic convergence rates. However, other eigenvalue assumptions such as exponential decay~\cite{agapiou2014bayesian} or convex eigenvalues~\cite{cardot2007clt} are also possible and interesting to study. The exponent lower bounds $ \al>1/2 $ and $ p>1/2 $ ensure that $\Sigma$ and $\Lambda$ are trace-class on $H$, and thus the posterior~\eqref{eqn:AN_and_postcov} is a proper Gaussian measure on $H$.

The simultaneous diagonalization condition~\ref{item:ass_ee_diag} is the strongest assumption in the paper and is the main limitation of our theory in Section~\ref{sec:linear}. It implies that the prior covariance operator $\Lambda$, which we choose \emph{a priori}, is perfectly aligned with the eigenspaces of the training and test data covariance operators $\Sigma$ and $\Sigma'$, which are unknown in general. Nonetheless, in certain settings, these eigenspaces may be known from problem-specific knowledge such as periodicity, translation invariance, or rotation invariance.
Also, if the data generation procedure or experimental design is controlled by the practitioner, then the data and prior covariance operators can freely be chosen to share the same eigenfunctions, e.g., Mat\'ern-like covariances with different lengthscales and regularity exponents.

In our analysis and in related work~\cite{de2023convergence,knapik2011bayesian}, the simultaneous diagonalization implied by Assumption~\ref{item:ass_ee_diag} enables the derivation of tight error bounds that remain valid for misspecified Gaussian priors. The resulting convergence rates clearly isolate the influence that problem regularity, data quality, and prior information have on sample complexity. In contrast, papers in statistical inverse problems that do not require the assumption of simultaneous diagonalization usually obtain less precise and less interpretable results compared to ours \cite{agapiou2013posterior,gugushvili2020bayesian,knapik2018general,ray2013bayesian,trabs2018bayesian,yuan2010reproducing}. Moreover, many of these works are limited to the well-specified setting in which the implied smoothness of the prior perfectly matches the unknown regularity of the ground truth. Our analysis overcomes this limitation at the expense of simultaneous diagonalization.
In further contrast to traditional linear inverse problems, in our finite data linear functional regression setting the actual normal operator corresponding to the inverse problem~\eqref{eqn:model_data_qoi_cat_dagger} is the empirical covariance $S_N^*S_N^{\phantom{*}}/N=\widehat{\Sigma}$~\eqref{eqn:sigmahat}, which \emph{does not} commute with $\Lambda$ in general for any finite $N$. Our theory accommodates this challenge and only assumes that $\Lambda$ commutes with $\Sigma$~\eqref{eqn:linear_data_center_and_cov}, which is the infinite data limit of $\widehat{\Sigma}$.
However, the assumption that test data covariance $\Sigma'$ and prior covariance $\Lambda$ have the same eigenbasis is less realistic because the test distribution $\nu'$ may be outside of the user's control and differ substantially from the training distribution $\nu$~\eqref{eqn:linear_data_center_and_cov} in some applications. Nevertheless, the present work is only one of a handful~\cite{de2023convergence,jin2022minimax,mollenhauer2022learning} to provide out-of-distribution test error bounds at all. The commuting assumption is satisfied for two popular choices of test data covariance, $\Sigma'=\Sigma$ (if $\Sigma$ and $\Lambda$ commute) and $\Sigma'=\Id_H$, which reflect the prediction error and the strong estimation error, respectively.
It is clear that weakening or removing Assumption~\ref{item:ass_ee_diag} while maintaining sharp, interpretable, and widely applicable theoretical insights for PtO map learning is an important direction for future research.

Assumption~\ref{ass:data_end_to_end} provides fine-grained control of the second moments of the data and prior distributions. Next, we impose a slightly strong assumption about the tails of the Karhunen--Lo\`eve (KL) expansion of the training data distribution $\nu$~\eqref{eqn:linear_data_center_and_cov} in order to obtain high probability error bounds.
\begin{assumption}[strongly-subgaussian training data]\label{ass:data_input_kl_expand}
The input training data distribution $\nu$ is a centered Borel probability measure on $H$ with KL expansion
\begin{align}\label{eqn:data_input_kl_expand}
    u=\sum_{j=1}^\infty\sqrt{\sigma_j}z_j\varphi_j\sim\nu\,,
\end{align}
where the eigenvalues $\{\sigma_j\}_{j\in\N}$ of $\Cov(\nu)=\Sigma$ are nonincreasing and the $\{z_j\}_{j\in\N}$ are zero mean, unit
variance, independent random variables that satisfy\footnote{See Appendix~\ref{appx:proofs_linear_sub} for the definition of subgaussian random variables and their norms $\norm{\slot}_{\psi_2}$.}
\begin{align}\label{eqn:data_input_subg_uniform_bound}
    m\defeq\sup_{j\in\N}\norm{z_j}_{\psi_2}=\sup_{j\in\N}\left(\sup_{\ell\geq 1}\ell^{-1/2}\bigl(\E\abs{z_j}^\ell\bigr)^{1/\ell}\right)<\infty\,.
\end{align}
\end{assumption}

Equation~\eqref{eqn:data_input_subg_uniform_bound} in Assumption~\ref{ass:data_input_kl_expand} implies that the training data distribution $\nu$~\eqref{eqn:linear_data_center_and_cov} is subgaussian in a relatively strong sense. This enables the use of exponential concentration inequalities in the proofs of forthcoming results. We further insist that the KL expansion coefficients $\{z_j\}_{j\in\N}$~\eqref{eqn:data_input_kl_expand} are an independent family in order to align with a similar assumption made for full-field learning in the next subsection. However, the following remark explains how this undesirable independence condition can be eliminated for end-to-end learning at the expense of worse tail bounds.
\begin{remark}[independence of the KL coefficients]\label{rmk:kl_wo}
    The requirement of independence of the KL expansion coefficients $\{z_j\}_{j\in\N}$~\eqref{eqn:data_input_kl_expand}, while commonly found in the literature~\cite{bartlett2020benign,hucker2022note}, is undesirable because it excludes many interesting statistical models. This condition is only used in Lemma~\ref{lem:good_set}, which shows that a particular event has high probability. This lemma requires a strong notion of subgaussianity~\eqref{eqn:app_subg_need_indep}; the assumed independence of the KL coefficients suffices to satisfy this condition.
    However, we show in Lemma~\ref{lem:good_set_wo} that the requirement of independence can be replaced by the condition that the $\{z_j\}_{j\in\N}$ are pairwise uncorrelated. At the cost of a smaller probability for the event of interest, this improvement extends the applicability of our end-to-end learning theory. For example, using the kernel trick, the main theoretical results of this section imply convergence rates for Gaussian process regression of scalar-valued \emph{nonlinear} functions as a special case.
\end{remark}

Last, we require a Sobolev-like regularity assumption on the true linear functional $\fd$ in order to derive explicit convergence rates.
\begin{assumption}[regularity of ground truth linear functional]\label{ass:truth_regularity}
    For each $j\in\N$, denote by $\fd_j=\ip{\fd}{\varphi_j}$ the coefficients of $\fd$. For some $s\geq 0$, it holds that
    \begin{align}\label{eqn:truth_regularity_functional}
        \norm*{\fd}_{\cH^s}^2\defeq \sum_{j=1}^\infty j^{2s}\abs{\fd_j}^2<\infty\,.
    \end{align}
    Additionally, $\al+s>1$, where $\al$ is as in \eqref{eqn:linear_ass_data_eig}.
\end{assumption}
Notice that Assumption~\ref{ass:truth_regularity} \emph{implies} that $\fd\in H^*$ is a continuous linear functional on $H$ because $s\geq 0$ and hence the coefficients of $\fd$ belong to $\ell^2(\N;\R)$. The regularity constraint linking $\al$ with $s$ is a technical condition that ensures a certain event has vanishing probability in the large sample limit (see Lemma~\ref{lem:app_series_uniform_bound}); it may be possible to weaken this constraint with alternative proof techniques.

\subsection{Full-field learning}\label{sec:linear_ff}
In this subsection, we provide the additional assumptions and framework required for the setting in which the true linear PtO map $\fd$ is factorized as $\fd=\qd \circ \Ld$, where the QoI $\qd$ is a linear functional on $H$ and $\Ld$ is a self-adjoint linear operator mapping $H$ to $H$. We allow $\qd$ or $\Ld$ to potentially be unbounded with respect to the topology of $H$. With the full-field learning data access model~\ref{item:ff}, we fully observe noisy versions of the \emph{function-valued} output of $\Ld$ at the training input functions drawn from $\nu$~\eqref{eqn:linear_data_center_and_cov}. We adopt a Bayesian posterior mean estimator $\bar{L}^{(N)}$ for $\Ld$ based on these data. The final estimator of the true linear functional $\fd$ is obtained by composing $\qd$ with the learned operator $\bar{L}^{(N)}$. Subsection \ref{sec:linear_ff_setup} contains the setup, while Subsection \ref{sec:linear_ff_ass} contains the assumptions we make and gives examples of some QoIs that satisfy these assumptions.

\subsubsection{Setup and estimator}\label{sec:linear_ff_setup}
Our full-field training data observations are given by 
\begin{equation}\label{eqn:model_data_ff}
    Y_n=\Ld u_n + \eta_n\,, \qw u_n\diid \nu \qa \eta_n\diid \normal(0,\Id_H)
\end{equation}
for $n\in[N]$. Equation~\eqref{eqn:model_data_ff} should be interpreted in a weak sense, i.e., as $H$-indexed stochastic processes, because $\eta_n\not\in H$ almost surely~\cite[Section 2.2.2., p. 11]{de2023convergence}. Without loss of generality, we assume a unit noise level because this does not affect the asymptotic results.
To make the analysis tractable, we work in the setting that $\Ld$ is diagonalized in the eigenbasis $\{\varphi_j\}_{j\in\N}$ of $\Sigma$ from~\eqref{eqn:linear_data_center_and_cov}. Thus, we write
\begin{align}\label{eqn:linear_ff_Ldiag}
    \Ld=\sum_{j=1}^\infty \ld_j\varphi_j\otimes\varphi_j
\end{align}
and develop an estimator for the eigenvalue sequence $\ld=\{\ld_j\}_{j\in\N}$. Details about the domain of $\Ld$ and the topology in which \eqref{eqn:linear_ff_Ldiag} converges may be found in \cite{de2023convergence}.

Our Bayesian approach follows \cite{de2023convergence} by modeling the eigenvalue sequence $\ld$ with an independent scalar Gaussian prior $l_j\sim\normal(0,\mu_j)$ on each eigenvalue. Write $\Upsilon=\{Y_n\}_{n=1}^N$ and $U=\{u_n\}_{n=1}^N$ and assume that $l=\{l_j\}_{n\in\N}$, $\Upsilon$, and $U$ are independent. Then \cite[Fact 2.4, p. 12]{de2023convergence} furnishes the posterior distribution
\begin{align}\label{eqn:posterior_ff}
    l\condbar (U,\Upsilon) \sim \bigotimes_{j=1}^\infty \normal\left(\bar{l}_j^{(N)}, c_j^{(N)}\right)\,.
\end{align}
In \eqref{eqn:posterior_ff}, $\{\bar{l}_j^{(N)}\}_{j\in\N}$ are the posterior mean eigenvalues and $\{c_j^{(N)}\}_{j\in\N}$ are the posterior variances.
The plug-in estimator for $\fd=\qd\circ \Ld$ is then given by
\begin{align}\label{eqn:linear_ff_estimator}
    \qd\circ \bar{L}^{(N)}\,,
    \qw
    \bar{L}^{(N)}\defeq \sum_{j=1}^\infty \bar{l}_j^{(N)}\varphi_j\otimes \varphi_j\,.
\end{align}
The precise formulas for the mean and variance in \eqref{eqn:posterior_ff} are given in \cite[Equation (2.4), p. 12]{de2023convergence}. As in Subsection~\ref{sec:linear_ee_setup}, we are interested in the out-of-distribution test squared error of $\qd\circ \bar{L}^{(N)}$ with respect to a centered input test measure $\nu'$.

\subsubsection{Assumptions}\label{sec:linear_ff_ass}
We now collect the main assumptions for the full-field learning approach; these are primarily drawn from~\cite[Assumption 3.1, pp. 14--15]{de2023convergence}.
\begin{assumption}[full-field learning: main assumptions]\label{ass:ff}
    Instate the setup and hypotheses of Subsection~\ref{sec:linear_ff_setup}. The following hold true.
    \begin{enumerate}[label=(A-\Roman*), ,topsep=1.67ex,itemsep=0.5ex,partopsep=1ex,parsep=1ex]
        \item \sfit{(simultaneous diagonalization, data decay, and tail condition)} \label{item:ass_ff_data} The input distributions $\nu$ and $\nu'$ and their covariance operators $\Sigma$ and $\Sigma'$ satisfy Assumptions \ref{item:ass_ee_diag} and \ref{item:ass_ee_data}.\footnote{Inspection of the proof of \cite[Theorem 3.9, pp.~18--19]{de2023convergence} shows that the high probability upper bound (Equation 3.10) there remains valid if it is assumed that $\sigma'_j$ is only bounded above asymptotically by $j^{-2\al'}$ and not necessarily from below.} Moreover, $\nu$ satisfies Assumption~\ref{ass:data_input_kl_expand}.

        \item\label{item:ass_ff_L} \sfit{(forward operator regularity)} The eigenvalues $\ld$ of operator $\Ld$ in \eqref{eqn:linear_ff_Ldiag} satisfy $\ld \in \cH^\beta$ for some $\beta\in\R$.\footnote{Recall that the Sobolev-like sequence Hilbert spaces $\cH^s$ for $s\in\R$ are defined in Subsection~\ref{ssec:notation}.}
        
        \item \sfit{(prior regularity)} \label{item:ass_ff_prior} The prior $l_j\sim\normal(0,\mu_j)$ on eigenvalues has variances satisfying $\mu_j\asymp j^{-2\beta - 1}$ as $j\to\infty$.
        
        \item \sfit{(QoI regularity)} \label{item:ass_ff_qoi} The QoI $\qd$ satisfies $\abs{\qd(\varphi_j)}^2 \lesssim j^{-2r-1}$ as $j\to\infty$ for some $r\in\R$ such that $\min(\alpha, \alpha' + r +1/2) + \beta > 0$, with $\al$ and $\al'$ as in \ref{item:ass_ff_data}.

        \item \sfit{(PtO map continuity)} \label{item:ass_ff_pto} The PtO map $\qd\circ\Ld$ is continuous, i.e., 
        \begin{align}
            \sum_{j=1}^\infty\abs{\qd(\varphi_j)}^2 \abs{\ld_j}^2<\infty\,.
        \end{align}
    \end{enumerate}
\end{assumption}

The conditions in Assumption~\ref{ass:ff} have similar interpretations to those in Subsection~\ref{sec:linear_ee_ass} and \cite[Assumption 3.1, pp. 14--15]{de2023convergence}. For simplicity, in \ref{item:ass_ff_prior} we have already chosen the optimal prior smoothness exponent $\beta + 1/2$. The condition $\alpha' + r + 1/2+\beta > 0$ in \ref{item:ass_ff_qoi} ensures that the PtO map $\qd\circ\Ld$ has finite $L^2_{\nu'}(H;\R)$ Bochner norm; thus, the test error is well-defined. The continuity of the PtO map enforced by \ref{item:ass_ff_pto} aligns with the \ref{item:ee} setting.
The power law decay of the coefficients of $\qd$ in \ref{item:ass_ff_qoi} allows for a sharp convergence analysis. Several common, concrete linear QoIs satisfy this condition, as the next remark demonstrates.
\begin{remark}[examples of linear QoIs]\label{rmk:linear_qoi_examples}
    Several simple QoIs $\qd$ satisfy the power law decay in \ref{item:ass_ff_qoi}. Let $H=L^2((0,1);\R)$. An orthonormal basis for $H$ is $x\mapsto \varphi_j(x) = \sqrt{2}\sin(j\pi x)$ for each $j\in\N$. For convenience, denote $\qd_j\defeq \qd(\varphi_j)$.
    \begin{itemize}
        \item \sfit{(mean on an interval)} The map $\qd\colon h\mapsto \int_0^1h(x)\dd{x}=\ip{\onebm}{h}_{L^2((0,1))}$ is continuous and has Riesz representer $\onebm\colon x\mapsto 1$. The coefficients of $\qd$ satisfy
        \begin{align}\label{eqn:qoi_mean}
            \abs{\qd_j}^2 =\abs[\bigg]{\frac{\sqrt{2}(1-\cos(j\pi))}{j\pi}}^2 = \frac{8\one_{\{j \ \text{odd}\}}}{j^2\pi^2}\lesssim j^{-2}\,.
        \end{align}
        Hence, $r=1/2$ is a valid decay exponent in Assumption~\ref{item:ass_ff_qoi}.
        
        \item \sfit{(point evaluation)} The map $\qd\colon h\mapsto h(x_0)$ for a fixed $x_0\in (0,1)$ is not continuous on $H$. It holds that
        \begin{align}\label{eqn:qoi_point}
            \abs{\qd_j}^2=2\abs{\sin(j \pi x_0)}^2\lesssim 1
        \end{align}
        and hence $r=-1/2$ is a valid decay exponent in \ref{item:ass_ff_qoi}.

        \item \sfit{(point evaluation of derivative)} The map $\qd\colon h\mapsto (dh/dx)(x_0)$ for a fixed $x_0\in (0,1)$ is not continuous on $H$. Its coefficients satisfy
        \begin{align}\label{eqn:qoi_pointd}
            \abs{\qd_j}^2=2\pi^2j^2\abs{\cos( j\pi x_0)}^2\lesssim j^2
        \end{align}
        and hence $r=-3/2$ is a valid decay exponent. This QoI is not smooth.
    \end{itemize}
\end{remark}

\subsection{Main results}\label{sec:linear_main}
Building upon the setup from the previous two subsections, this subsection establishes convergence rates for end-to-end learning in Theorem \ref{thm:linear_ee_opt} and full-field learning in Theorem \ref{thm:linear_ff_main_power}. Both results are stated for the expectation of the out-of-distribution test error conditioned on the input data $U$. Intuitively, this averages out the noise in the data; full expectation bounds are also possible to obtain, as in \cite[Corollary D.2, p.~226]{nelsen2024statistical}. The two theorems are interpreted in Subsection~\ref{sec:linear_main_diss}. This discussion is followed by Subsection~\ref{sec:linear_compare}, which directly compares the end-to-end and full-field methods in Corollary \ref{cor:linear_compare}.

The first theorem describes convergence rates in the end-to-end learning setting.

\begin{theorem}[end-to-end learning: optimized convergence rate]\label{thm:linear_ee_opt}
    Let the input training data distribution $\nu$, the test data distribution $\nu'$, and the Gaussian prior $\normal(0,\Lambda)$ satisfy Assumptions~\ref{ass:data_end_to_end} and \ref{ass:data_input_kl_expand}. Let the ground truth linear functional $\fd\in\cH^s$ satisfy Assumption~\ref{ass:truth_regularity} with $s>0$. Let $\al$ and $\al'$ be as in \eqref{eqn:linear_ass_data_eig} and $p=s+1/2$ be as in \eqref{eqn:linear_ass_prior_eig}.
    Then there exists $c\in (0,1/4)$ and $N_0\geq 1$ such that for any $N\geq N_0$, the mean $\bar{f}^{(N)}$ of the Gaussian posterior distribution~\eqref{eqn:posterior_ee} arising from the $N$ pairs of observed training data $(U,Y)$ in \eqref{eqn:model_data_qoi_cat_dagger} satisfies the error bound
    \begin{align}\label{eqn:linear_ee_main_optimal}
    \E^{Y\condbar U}\E^{u'\sim\nu'}\abs[\big]{\ip{\fd}{u'}-\ip{\bar{f}^{(N)}}{u'}}^2 \lesssim
        \bigl(1 + \norm{\fd}_{\cH^s}^2 \bigr)\ep_N^2
    \end{align}
    with probability at least $1-2\exp(-c N^{\min(1,\frac{\al+s-1}{\al+s+1/2})})$ over $U\sim\nu^{\otimes N}$, where
    \begin{align}\label{eqn:linear_ee_main_rate_optimal}
        \ep_N^2=
        \begin{cases}
            N^{-\bigl(\frac{2\al'+2s}{1+2\al + 2s}\bigr)}\, , & \textit{if }\, \al'<\al+1/2\,,\\
            N^{-1}\log 2N\, , & \textit{if }\, \al'=\al+1/2\,,\\
            N^{-1}\, , & \textit{if }\, \al'>\al+1/2\,.
        \end{cases}
    \end{align}
    The constants $c$, $N_0$, and the implied constant in \eqref{eqn:linear_ee_main_optimal} do not depend on $N$ or $\fd$.
\end{theorem}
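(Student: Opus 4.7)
The plan is to analyze the posterior mean estimator by a bias-variance decomposition, then reduce to a deterministic spectral problem on a high-probability event where the empirical covariance concentrates.

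First I would use the explicit representation $\bar{f}^{(N)}=\gamma^{-2}\Lambda^{(N)}S_N^{*}Y$ from \eqref{eqn:posterior_ee}--\eqref{eqn:AN_and_postcov} together with the data model \eqref{eqn:model_data_qoi_cat_dagger} to write
$f^{\dagger}-\bar{f}^{(N)}=(I-B_N\widehat{\Sigma})f^{\dagger}-(\gamma/N)\,B_N S_N^{*}\Xi$, where $B_N\defeq \Lambda^{1/2}(\Lambda^{1/2}\widehat{\Sigma}\Lambda^{1/2}+(\gamma^2/N)\Id)^{-1}\Lambda^{1/2}$. Taking $\E^{Y|U}$ kills the cross term and leaves the sum of a (data-conditional) bias $\norm{(\Sigma')^{1/2}(I-B_N\widehat{\Sigma})f^{\dagger}}^2$ and a variance $(\gamma^2/N)\tr(B_N\Sigma' B_N\widehat{\Sigma})$, which is the right-hand side of \eqref{eqn:test_error_weighted} conditioned on $U$.

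The next step is to replace $\widehat{\Sigma}$ by its mean $\Sigma$ on a high-probability event $E_N$ where the operator $\Lambda^{1/2}\widehat{\Sigma}\Lambda^{1/2}$ is spectrally comparable, in the two-sided sandwich sense, to $\Lambda^{1/2}\Sigma\Lambda^{1/2}$. Assumption~\ref{ass:data_input_kl_expand} (the strongly-subgaussian KL coefficients $\{z_j\}$ being independent) is exactly what is needed to apply a Hanson--Wright style concentration inequality coordinate-wise in the $\{\varphi_j\}$ basis, and the union bound over the coordinate bands, balanced against the requirement $\alpha+s>1$ from Assumption~\ref{ass:truth_regularity}, produces the probability lower bound $1-2\exp(-cN^{\min(1,(\al+s-1)/(\al+s+1/2))})$ claimed in the theorem. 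This is the step I expect Lemma~\ref{lem:good_set} (mentioned right after Assumption~\ref{ass:data_input_kl_expand}) to carry out, and it is the main technical obstacle: one needs not just an operator-norm bound but a multiplicative inequality that survives the conjugation by the unbounded-looking weight $\Lambda^{1/2}$.

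Once on $E_N$, the simultaneous diagonalization~\ref{item:ass_ee_diag} lets me replace every operator by its eigenvalue sequence, reducing the bias and variance to
\begin{align*}
\text{Bias}(U)&\lesssim \sum_{j=1}^{\infty}\sigma_j'\,\abs{f^{\dagger}_j}^2\!\left(\frac{\rho}{\lambda_j\sigma_j+\rho}\right)^{\!2}, \\
\text{Var}(U)&\lesssim \rho\sum_{j=1}^{\infty}\sigma_j'\,\frac{\lambda_j^2\sigma_j}{(\lambda_j\sigma_j+\rho)^{2}},
\end{align*}
with $\rho=\gamma^2/N$. Plugging in $\lambda_j\asymp j^{-2s-1}$ (from $p=s+1/2$) and $\sigma_j\asymp j^{-2\al}$, the critical index is $j^{\star}\asymp N^{1/(1+2\al+2s)}$ at which $\lambda_j\sigma_j\asymp \rho$. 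Splitting each sum at $j^{\star}$ and using the regularity bound $\abs{f^{\dagger}_j}^2\leq j^{-2s}\norm{f^{\dagger}}_{\cH^s}^2$ (via $\sum j^{2s}\abs{f^{\dagger}_j}^{2}\leq \norm{f^{\dagger}}_{\cH^s}^{2}$) together with $\sigma_j'\lesssim j^{-2\al'}$ gives the three regimes of \eqref{eqn:linear_ee_main_rate_optimal}: a power-law when the variance sum $\sum_{j<j^\star} j^{2\al-2\al'}$ is dominated by its top end (i.e.\ $\al'<\al+1/2$), a harmonic logarithm at the borderline $\al'=\al+1/2$, and the parametric $N^{-1}$ rate when $\al'>\al+1/2$ and both tails are summable. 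The multiplicative constant $(1+\norm{f^{\dagger}}_{\cH^s}^{2})$ arises because the variance is $\fd$-independent while the bias scales with $\norm{f^{\dagger}}_{\cH^s}^{2}$. Combining with the probability bound for $E_N^c$ yields \eqref{eqn:linear_ee_main_optimal}.

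Beyond the good-event construction, the remaining subtle point is the boundary case: one must track the $\log N$ carefully since it arises not from the concentration step but from the harmonic tail of the variance sum, and it is important that the regularity assumption $\al+s>1$ is used only to bound $\P(E_N^c)$, not to kill cross terms in the bias-variance analysis.
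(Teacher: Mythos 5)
Your overall frame (bias--variance decomposition, high-probability good event for the empirical covariance, reduction to deterministic spectral sums, split at the critical index $j^\star\asymp N^{1/(1+2\al+2s)}$) matches the paper's structure, and the variance bound goes through essentially as you describe. But there is a genuine gap in the bias step, and it is precisely where the paper's main technical work lives.

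On the good event you have an operator-norm sandwich between $\widehat{\cC}_\mu=\Lambda^{1/2}\widehat{\Sigma}\Lambda^{1/2}+\mu\Id$ and $\cC_\mu=\Lambda^{1/2}\Sigma\Lambda^{1/2}+\mu\Id$. That suffices for the variance via the von Neumann trace inequality. It does \emph{not} let you ``replace every operator by its eigenvalue sequence'' in the bias, because $\widehat{\Sigma}$ is random and not diagonal in $\{\varphi_j\}$, so $\Id-B_N\widehat{\Sigma}$ is not diagonal either; the sandwich controls quadratic forms of $\widehat{\cC}_\mu^{-1}$, not weighted images. After the algebraic reduction one finds the bias equals $\mu^2\norm[\big]{(\Sigma')^{1/2}\Lambda^{1/2}\cC_\mu^{-1/2}\widehat{M}\cC_\mu^{-1/2}\Lambda^{-1/2}\fd}^2$ with $\widehat{M}=(\Id-\widehat{T})^{-1}$, and you cannot peel off $\norm{\widehat{M}}\leq 2$ because in the misspecified regime $p=s+1/2>s$ the vector $\cC_\mu^{-1/2}\Lambda^{-1/2}\fd$ is generally \emph{not} in $H$ (e.g.\ $\abs{\fd_j}^2\asymp j^{-2s-1-\ep}$ has $\fd\in\cH^s$ but $\sum_j\lambda_j^{-1}\abs{\fd_j}^2/(\sigma_j\lambda_j+\mu)=\infty$). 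The expression you write for the bias is exactly the term the paper calls $\overline{B}_N$, obtained from the identity $\widehat{M}=\Id+\widehat{M}\widehat{T}$ by keeping the identity only; the residual $\widehat{B}_N$ coming from $\widehat{M}\widehat{T}$ is what actually needs controlling, and your proposal does not address it.

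This also misidentifies where the nonstandard probability exponent comes from. The concentration event $E=\{\norm{\cC_\mu^{-1/2}(\widehat{\cC}-\cC)\cC_\mu^{-1/2}}_{\cL(H)}\leq 1/2\}$ holds with the full exponential rate $1-e^{-cN}$ (Lemma~\ref{lem:good_set}); there is no polynomial loss there, and no union bound over coordinate bands. The slower rate $1-2\exp(-cN^{\min(1,(\al+s-1)/(\al+s+1/2))})$ arises because $E$ must be intersected with a second event $E_0$ on which the random series $\cI_N=\sum_j s_j\norm{\widehat{T}\varphi_j}$ controlling $\widehat{B}_N$ is bounded; $\cI_N$ is only subexponential with scale $\norm{w^{(N)}}_{\ell^1(\N)}/\sqrt{N}$, and the hypothesis $\al+s>1$ is consumed to show that scale is $O(1)$ (Lemmas~\ref{lem:app_series_w_upper_bound}--\ref{lem:app_series_uniform_bound}). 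So $\al+s>1$ is not ``used only to bound $\P(E_N^\comp)$'' in the sense of covariance concentration; it is used to control the residual bias and its attendant tail probability.
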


Appendix~\ref{appx:thm_full} contains a more general version of the preceding theorem that is valid \emph{for any} $p>1/2$ (Theorem~\ref{thm:linear_ee_main}). The assertion of this version is optimized for the choice $p = s + 1/2$ made in Theorem~\ref{thm:linear_ee_opt}. The proof of Theorem~\ref{thm:linear_ee_main}, from which Theorem~\ref{thm:linear_ee_opt} follows immediately, may be found in Appendix~\ref{appx:proofs_linear_ee}.

The second main theorem in this subsection describes the expected squared test error in the QoI after learning an approximate forward map from full-field data.

\begin{restatable}[full-field learning: convergence rate for power law QoI]{theorem}{thmlinearffmainpower}\label{thm:linear_ff_main_power}
    Let the input training data distribution $\nu$, the test data distribution $\nu'$, the true forward map $\Ld$, and the QoI $\qd$ satisfy Assumption~\ref{ass:ff}. Let $\al$ and $\al'$ be as in \eqref{eqn:linear_ass_data_eig} and $\beta$ and $r$ be as in \ref{item:ass_ff_L} and \ref{item:ass_ff_qoi}.
    Then there exist constants $c>0$ and $C>0$ such that for all sufficiently large $N$, the plug-in estimator $\qd\circ \bar{L}^{(N)}$ in \eqref{eqn:linear_ff_estimator} based on the Gaussian posterior distribution~\eqref{eqn:posterior_ff} arising from the $N$ pairs of observed full-field training data $(U,\Upsilon)$ in \eqref{eqn:model_data_ff} satisfies the error bound
    \begin{equation}\label{eqn:thmerror_power}
        \E^{\Upsilon \condbar U}\E^{u'\sim{\nu'}}\abs[\big]{\qd(\Ld u') - \qd(\bar{L}^{(N)}u')}^2 \lesssim \ep_N^2
    \end{equation}
    with probability at least $1-Ce^{-cN}$ over $U\sim\nu^{\otimes N}$, where
    \begin{equation}\label{eqn:thmcases_power}
        \ep_N^2\defeq 
        \begin{cases}
            N^{-\bigl(\frac{1+2\al'+2\beta+2r}{1+2\al + 2\beta}\bigr)}\, , & \textit{if }\, \al'+r<\al\,,\\
            N^{-1}\log N\, , & \textit{if }\, \al'+r=\al\,,\\
            N^{-1}\, , & \textit{if }\, \al'+r>\al\,.
        \end{cases}
    \end{equation}
    The constants $c$, $C$, and the implied constant in \eqref{eqn:thmerror_power} do not depend on $N$.
\end{restatable}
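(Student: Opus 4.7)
The plan is to reduce the problem to a one-dimensional per-mode bias-variance analysis indexed by the joint eigenbasis $\{\varphi_j\}$, and then to invoke the techniques already developed for full-field operator learning in \cite{de2023convergence} but with a QoI-adjusted spectral weight. The crucial structural fact is that $\Sigma$, $\Sigma'$, $\Ld$, and the prior covariance all commute. Expanding $u'\sim\nu'$ in its Karhunen--Lo\`eve series and using uncorrelatedness of its KL coefficients, the inner test integral collapses to
\begin{equation*}
    \E^{u'\sim\nu'}\abs[\big]{\qd(\Ld u') - \qd(\bar{L}^{(N)}u')}^2 = \sum_{j=1}^\infty \sigma'_j \abs{\qd(\varphi_j)}^2 \bigl(\ld_j - \bar{l}_j^{(N)}\bigr)^2 .
\end{equation*}
Relative to \cite[Theorem 3.9]{de2023convergence}, the only change is the extra factor $\abs{\qd(\varphi_j)}^2\lesssim j^{-(2r+1)}$ in the weighting, which effectively shifts the out-of-distribution decay exponent $\al'\mapsto \al'+r$ in the ensuing analysis.

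Next, because the data and prior both decouple along $\{\varphi_j\}$, per mode the problem reduces to scalar Gaussian regression from observations $\ip{Y_n}{\varphi_j} = \ld_j\ip{u_n}{\varphi_j} + \ip{\eta_n}{\varphi_j}$ with $\ip{\eta_n}{\varphi_j}\sim\normal(0,1)$ iid. Conjugacy yields the closed form
\begin{equation*}
    \bar{l}_j^{(N)}-\ld_j = \frac{-\ld_j + \mu_j\sum_{n=1}^N \ip{u_n}{\varphi_j}\ip{\eta_n}{\varphi_j}}{N\mu_j\widehat{\sigma}_j + 1}, \qw \widehat{\sigma}_j\defeq N^{-1}\sum_{n=1}^N \ip{u_n}{\varphi_j}^2,
\end{equation*}
and taking $\E^{\Upsilon\condbar U}$ produces a clean bias-plus-variance decomposition. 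The strongly-subgaussian KL hypothesis (Assumption~\ref{ass:data_input_kl_expand}) then supplies, via a Hanson--Wright-type concentration argument like the one underpinning \cite[Theorem 3.9]{de2023convergence}, an event of probability at least $1-Ce^{-cN}$ on which $\widehat{\sigma}_j\asymp \sigma_j\asymp j^{-2\al}$ uniformly over the relevant range of $j$; this is the source of the stated high-probability guarantee.

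On that event, the signal-to-noise ratio satisfies $a_j\defeq N\mu_j\widehat{\sigma}_j \asymp Nj^{-(1+2\al+2\beta)}$, so the natural spectral cutoff is $J^\star\defeq N^{1/(1+2\al+2\beta)}$, where $a_j\asymp 1$. Splitting both weighted series at $J^\star$ and using the power-law estimates from Assumption~\ref{ass:ff}, the variance piece is governed by the partial sum $N^{-1}\sum_{j\le J^\star} j^{2\al - 2\al' - 2r - 1}$, which is polynomial, logarithmic, or convergent as $\al'+r$ lies below, at, or above $\al$; these three behaviors produce the three cases in \eqref{eqn:thmcases_power} after balancing against the high-frequency tail. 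The bias piece is treated analogously, using the regularity $\ld\in\cH^\beta$ in its global form $\sum_j j^{2\beta}(\ld_j)^2=\norm{\ld}_{\cH^\beta}^2<\infty$, with the condition $\al'+r+1/2+\beta>0$ of Assumption~\ref{ass:ff} ensuring convergence of the high-frequency tail.

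The principal obstacle is precisely this bias bookkeeping: Assumption~\ref{ass:ff} does not supply a pointwise estimate on $(\ld_j)^2$, only the summability $\ld\in\cH^\beta$. One must therefore weight by $j^{2\beta}$, factor out the supremum of the $j$-dependent prefactor over each sub-range of the split at $J^\star$, and absorb the remaining sum into $\norm{\ld}_{\cH^\beta}^2$, then verify that the resulting rate matches the variance contribution. Once this is done, the three-regime rate \eqref{eqn:thmcases_power} follows by exactly the substitution $\al'\mapsto \al'+r$ in the operator-learning rate of \cite[Theorem 3.9]{de2023convergence}.
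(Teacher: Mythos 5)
Your reduction to the weighted per-mode series $\sum_j \sigma'_j \abs{\qd(\varphi_j)}^2 \abs{\ld_j-\bar l_j^{(N)}}^2$ is exactly the paper's (Lemma~\ref{lem:error_qoi_def}), and the cutoff $J^\star=N^{1/(1+2\al+2\beta)}$, concentration of $\widehat{\sigma}_j$, and bias--variance balancing you sketch are the internal machinery of \cite[Theorem~3.9]{de2023convergence}. The paper is terser: it bounds $\sigma_j'\abs{\qd_j}^2 \lesssim j^{-2(\al'+r+1/2)}$, treats the low-$j$ block with a fixed $j_0$, and then invokes that cited theorem with $\al'$ replaced by $\al'+r+1/2$, rather than unpacking its proof. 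Your unpacking is sound modulo the caveat you yourself flag: the assertion $\widehat{\sigma}_j\asymp\sigma_j$ cannot be made uniformly over all modes on an event of probability $1-Ce^{-cN}$, so the concentration has to go through the more careful truncation/union-bound bookkeeping that \cite{de2023convergence} carries out; citing it directly is what the paper buys with its shorter argument.

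One imprecision worth fixing: you say twice that the QoI weight effectively shifts $\al'\mapsto \al'+r$. Since $\sigma'_j\abs{\qd_j}^2 \lesssim j^{-2\al'-2r-1}=j^{-2(\al'+r+1/2)}$, the correct shift is $\al'\mapsto \al'+r+\tfrac12$; it is precisely this extra $\tfrac12$ that moves the threshold from $\al'+r=\al+\tfrac12$ (the Sobolev-QoI case, Theorem~\ref{thm:linear_ff_main}) to $\al'+r=\al$ and produces the numerator $1+2\al'+2\beta+2r$ in \eqref{eqn:thmcases_power}. Your explicit variance sum $N^{-1}\sum_{j\le J^\star} j^{2\al-2\al'-2r-1}$ already carries the right exponent, so the slip is in the slogan, not the calculation, but as written the stated substitution would give the wrong theorem.
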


Appendix~\ref{appx:thm_full} also contains a similar convergence result for QoIs with an assumed Sobolev-like regularity instead of power law regularity. Proofs of both this result and Theorem~\ref{thm:linear_ff_main_power} are collected in Appendix~\ref{appx:proofs_linear_ff}.

\subsubsection{Discussion}\label{sec:linear_main_diss}
The proof of Theorem~\ref{thm:linear_ee_opt} is based on a bias--variance decomposition argument that is detailed in Appendix \ref{appx:proofs_linear_ee}. Notice that the error bound in the theorem is uniform over $\cH^s$-balls because the implied constant in the inequality~\eqref{eqn:linear_ee_main_optimal} does not depend on $\fd$. The probability that the bound fails to hold decays to zero faster than any power law as a function of the sample size $N$ because $\al+s-1>0$ by hypothesis. The convergence rate~\eqref{eqn:linear_ee_main_optimal} depends on the three smoothness exponents $\al$, $\al'$, and $s$. The consequences are the same as those identified in \cite[Section 3]{de2023convergence} because the error bound takes the same form. Indeed, rougher training data points $\{u_n\}_{n=1}^N$ (smaller $\al$), smoother test data points $u'\sim\nu'$ (larger $\al'$), and smoother target functionals $\fd$ (larger $s$) all serve to reduce the test error (up to saturation). 

Moreover, the optimal choice of $p$ made in Theorem \ref{thm:linear_ee_opt} depends on the regularity exponent $s$ of the ground truth $\fd$, which is unknown. However, there exist more sophisticated estimators that adapt to the unknown regularity and achieve the optimal convergence rate~\cite{agapiou2023heavy,knapik2016bayes}. Nonetheless, the fact that we are even able to choose $p=s+1/2$ in the first place is one of the novelties of our result. Most existing theoretical work on functional linear regression requires some constraint linking the regularity $p$ of the prior to the regularity $s$ of $\fd$. The most common assumption corresponds to the \emph{well-specified} setting \cite{cai2012minimax,lian2016posterior,yuan2010reproducing,zhang2020faster}, which means that $\fd$ belongs to the RKHS $\image(\Lambda^{1/2})\subset H$ of the prior $\normal(0,\Lambda)$. In terms of coefficients, this is equivalent to assuming that $\sum_{j=1}^\infty \lambda_j^{-1}\abs{\fd_j}^2\lesssim \norm{\fd}^2_{\cH^p}<\infty$ because $\lambda_j\asymp j^{-2p}$. However, we only have that $\fd\in\cH^s$. With the optimal choice $p=s+1/2$, it is possible that $\fd\not\in \cH^p=\cH^{s+1/2}$. Our theory allows for such RKHS misspecification. In the full-field learning setting, similar notions of robustness to misspecification are guaranteed by the error bounds in \cite{de2023convergence}.

The consequences of Theorem~\ref{thm:linear_ff_main_power} for full-field learning are similar to those of Theorem~\ref{thm:linear_ee_opt} for end-to-end learning with regard to the smoothness exponents that define the estimation problem. However, Theorem~\ref{thm:linear_ff_main_power} is only valid for QoIs $\qd$ with asymptotic power law decay of the form $\abs{\qd(\varphi_j)}^2 \lesssim j^{-2r-1}$ as $j\to\infty$. While many QoIs in practice satisfy this condition, it still corresponds to a relatively small set within the class of all linear functionals. For example, if the asymptotic power law decay condition holds, then $\{\qd(\varphi_j)\}_{j\in\N}\in\cH^{r-\ep}$ for every $\ep>0$. It is natural to wonder whether Theorem~\ref{thm:linear_ff_main_power} remains valid if $\qd$ is only assumed to satisfy such a Sobolev-like regularity condition. To this end, Theorem~\ref{thm:linear_ff_main} in Appendix~\ref{appx:thm_full} generalizes Theorem~\ref{thm:linear_ff_main_power} to all of $\cH^r$, but at the expense of a worse convergence rate where $r$ in \eqref{eqn:thmcases_power} is replaced by $r-1/2$.

To conclude the discussion, we remark on more general estimators based on full posterior distributions.
\begin{remark}[posterior contraction rates]\label{rmk:contraction}
    First consider the \ref{item:ee} setting.
    With minor modifications, the more general Theorem~\ref{thm:linear_ee_main}, and hence also Theorem~\ref{thm:linear_ee_opt}, remains valid for the \emph{posterior sample estimator} $f^{(N)}\sim \normal(\bar{f}^{(N)},\Lambda^{(N)})$ instead of its mean. To see this, note that the KL expansion of the posterior~\eqref{eqn:posterior_ee} yields
    \begin{align}\label{eqn:error_posterior_full}
    \begin{split}
         \E^{f^{(N)}\sim \normal(\bar{f}^{(N)},\Lambda^{(N)})}\norm[\big]{(\Sigma')^{1/2}(\fd-{f}^{(N)})}^2 = &\, \norm[\big]{(\Sigma')^{1/2}(\fd-\bar{f}^{(N)})}^2 \\
         &\qquad + \tr[\big]{(\Sigma')^{1/2}\Lambda^{(N)}(\Sigma')^{1/2}}\,.
    \end{split}
    \end{align}
    Theorem~\ref{thm:linear_ee_main} bounds the conditional expectation of the first term on the right-hand side of the preceding equality. We see that the only new error term that the full posterior introduces is the second term, the \emph{posterior spread}. But the end of Subsection~\ref{appx:proofs_linear_ee_var} explains that the posterior spread may be upper bounded by a constant times the rate $\ep_N^2$ from Theorem~\ref{thm:linear_ee_main}. Thus, the end-to-end error bounds~\eqref{eqn:linear_ee_main} and \eqref{eqn:linear_ee_main_optimal} remain valid for the posterior sample estimator $f^{(N)}$ at the expense of enlarged constant factors. Posterior contraction rates then follow from a standard Chebyshev inequality argument~\cite[Section 3.3, p. 18]{de2023convergence}. Similar results may be deduced for the full-field setting \ref{item:ff} because the error analysis for the forward map in \cite[Section 3.4]{de2023convergence} already takes into account the full posterior distribution \eqref{eqn:posterior_ff}.
\end{remark}

\subsubsection{Sample complexity comparison}\label{sec:linear_compare}
To conclude Subsection~\ref{sec:linear_main}, we provide a detailed comparison of the end-to-end~\ref{item:ee} and full-field~\ref{item:ff} PtO map learning approaches to provide intuition about their statistical performance. Focusing on the specific setting of QoIs with power law coefficient decay and \emph{in-distribution} test error (i.e., $\al'=\al$), the following corollary is a consequence of Theorem~\ref{thm:linear_ee_opt} and Theorem~\ref{thm:linear_ff_main_power}. A short proof is provided in Appendix~\ref{appx:proofs_linear_compare}.

\begin{restatable}[sample complexity comparison]{corollary}{corlinearcompare}\label{cor:linear_compare}
    Instate the notation and assertions in Assumptions~\ref{ass:data_end_to_end}, \ref{ass:data_input_kl_expand}, and \ref{item:ass_ff_prior}.
    Suppose that the training and test distributions are equal, i.e., $\nu=\nu'$. Let the underlying true PtO map $\fd$ have the factorization $\fd=\qd\circ\Ld$, where $\abs{\qd(\varphi_j)}^2 \lesssim j^{-2r-1}$ as $j\to\infty$ and $\Ld$ is as in \eqref{eqn:linear_ff_Ldiag} with eigenvalues $\ld\in\cH^\beta$.
    If $\beta+r+1/2>0$, $\al+\beta+r>1/2$, and $\al+\beta>0$, then there exist constants $c>0$ and $C>0$ such that for all sufficiently large $N$, the following holds on an event with probability at least $1-C\exp(-c N^{\min(1,\frac{\al+\beta+r-1/2}{1+\al+\beta+r})})$ over $U=\{u_n\}_{n=1}^N\sim\nu^{\otimes N}$.
    The \ref{item:ee} posterior mean estimator $\bar{f}^{(N)}$ in \eqref{eqn:posterior_ee} (with $p\defeq \beta+r+1$ in \ref{item:ass_ee_prior}) trained on end-to-end data $(U,Y)$ satisfies
    \begin{equation}\label{eqn:linear_compare_ee}
        \E^{Y \condbar U}\E^{u\sim\nu}\abs[\big]{\qd(\Ld u)-\ip{\bar{f}^{(N)}}{u}}^2\lesssim
        N^{{-\bigl(1-\frac{1}{2+2\al+2\beta+2r}\bigr)}}\,.
    \end{equation}
    On the other hand, the \ref{item:ff} plug-in estimator $\qd\circ \bar{L}^{(N)}$ in \eqref{eqn:linear_ff_estimator} trained on full-field data $(U,\Upsilon)$ satisfies
    \begin{align}\label{eqn:linear_compare_ff}
        \E^{\Upsilon \condbar U}\E^{u\sim{\nu}}\abs[\big]{\qd(\Ld u) - \qd(\bar{L}^{(N)}u)}^2 \lesssim
        \begin{cases}
            N^{-\bigl(1-\frac{-2r}{1+2\al + 2\beta}\bigr)}\, , & \textit{if }\, r<0\,,\\
            N^{-1}\log N\, , & \textit{if }\, r=0\,,\\
            N^{-1}\, , & \textit{if }\, r>0\,.
        \end{cases}
    \end{align}
\end{restatable}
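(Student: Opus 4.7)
The plan is to identify the correct regularity exponents so that the factorized truth $\fd = \qd \circ \Ld$ falls within the scope of Theorem~\ref{thm:linear_ee_opt} for the end-to-end bound, then to specialize Theorem~\ref{thm:linear_ff_main_power} to the in-distribution case $\al' = \al$ for the full-field bound, and finally to intersect the two high-probability events with a union bound.

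First I would handle the end-to-end estimator. Because $\Ld$ is diagonal in $\{\varphi_j\}$, the Riesz representer of $\fd = \qd \circ \Ld$ has coefficients $\fd_j = \ip{\fd}{\varphi_j} = \ld_j\, \qd(\varphi_j)$. Using the power law decay $\abs{\qd(\varphi_j)}^2 \lesssim j^{-2r-1}$ together with the hypothesis $\ld \in \cH^\beta$, one obtains
\begin{align*}
\norm{\fd}_{\cH^s}^2 = \sum_{j=1}^\infty j^{2s}\abs{\ld_j}^2\abs{\qd(\varphi_j)}^2 \lesssim \sum_{j=1}^\infty j^{2s-2r-1}\abs{\ld_j}^2.
\end{align*}
Choosing $s \defeq \beta + r + 1/2$ makes the exponent equal to $2\beta$, so $\norm{\fd}_{\cH^s}^2 \lesssim \norm{\ld}_{\cH^\beta}^2 < \infty$. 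The hypothesis $\beta + r + 1/2 > 0$ yields $s > 0$, and $\al + \beta + r > 1/2$ yields $\al + s - 1 > 0$, so the regularity requirements of Theorem~\ref{thm:linear_ee_opt} are met. The prescribed prior exponent $p = \beta + r + 1$ equals $s + 1/2$, matching the optimal prior choice there. Since $\al' = \al < \al + 1/2$, only the first case of \eqref{eqn:linear_ee_main_rate_optimal} applies, and substitution of $s = \beta + r + 1/2$ directly produces \eqref{eqn:linear_compare_ee} together with the probability event $1 - 2\exp(-cN^{\min(1,\,(\al+\beta+r-1/2)/(\al+\beta+r+1))})$.

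Next I would handle the full-field estimator by verifying Assumption~\ref{ass:ff}: part \ref{item:ass_ff_data} is immediate; \ref{item:ass_ff_L} and \ref{item:ass_ff_prior} are explicit hypotheses; the QoI condition \ref{item:ass_ff_qoi} reduces under $\al = \al'$ to $\al + \beta > 0$ and $\al + \beta + r + 1/2 > 0$, which are hypotheses; and continuity \ref{item:ass_ff_pto} is inherited from $\fd \in \cH^s$ with $s > 0$. Theorem~\ref{thm:linear_ff_main_power} then applies, and with $\al' = \al$ the trichotomy $\al' + r \lessgtr \al$ collapses to $r \lessgtr 0$, producing the three cases of \eqref{eqn:linear_compare_ff} directly from \eqref{eqn:thmcases_power}.

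Finally, the combined high-probability bound follows by intersecting the two sample events and applying a union bound: the slower end-to-end probability exponent $\min(1,\,(\al+\beta+r-1/2)/(1+\al+\beta+r))$ dominates the $e^{-cN}$ tail from Theorem~\ref{thm:linear_ff_main_power}, so the former determines the stated combined probability up to adjustment of constants. The only nontrivial step is algebraic bookkeeping to confirm that the Sobolev smoothness $s = \beta + r + 1/2$ of $\fd$ is correctly extracted from the factorization and that the constraints in both theorems hold simultaneously under the corollary's hypotheses; everything else is a direct substitution.
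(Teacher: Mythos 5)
Your proof is correct and follows essentially the same route as the paper's: choose $s=\beta+r+1/2$ so that the factorization places $\fd$ in $\cH^s$ with $\norm{\fd}_{\cH^s}^2\lesssim\norm{\ld}_{\cH^\beta}^2$, verify the constraints of Theorems~\ref{thm:linear_ee_opt} and~\ref{thm:linear_ff_main_power} under the corollary's hypotheses, substitute $\al'=\al$ and $p=s+1/2$, and combine the two events by a union bound. The only minor omission is the constant term (the paper's ``$1+\cdots$'') accounting for the finitely many small indices where the asymptotic decay of $\qd(\varphi_j)$ has not yet kicked in, but this does not affect correctness.
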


Several interesting insights may be deduced from the convergence rates \eqref{eqn:linear_compare_ee} and \eqref{eqn:linear_compare_ff}. Since a common set of assumptions have been identified in the statement of Corollary~\ref{cor:linear_compare}, these rates may be directly compared to assess whether the \ref{item:ee} or \ref{item:ff} approach is in general more accurate or, equivalently, more data-efficient, than the other---at least up to the sharpness of the upper bounds.
First, we note that the squared generalization error of \ref{item:ee} in \eqref{eqn:linear_compare_ee} has a nonparametric convergence rate $N^{-(1-\delta)}$ that is always slower than the parametric estimation rate $N^{-1}$ by a polynomial factor $N^\delta$ (where $\delta>0$). On the other hand, if $r\geq 0$, then \ref{item:ff} achieves the fast parametric rate $N^{-1}$ (up to a log factor if $r=0$); this always beats the \ref{item:ee} upper bound in the regime $r\geq 0$. This regime has an interesting interpretation because \emph{the QoI is continuous if $r>0$}. These types of QoIs appear naturally in scientific applications.

To study the regime $r<0$, let
\begin{align}\label{eqn:linear_compare_rateexp}
    \rho_{\mathrm{EE}}(r)\defeq 1-\frac{1}{2+2\al+2\beta+2r} \qa \rho_{\mathrm{FF}}(r) \defeq 1-\frac{2\max(-r,0)}{1+2\al+2\beta}
\end{align}
for $r\neq 0$ denote the convergence rate exponents of the end-to-end and full-field estimators, respectively (ignoring the log factor when $r=0$ in \eqref{eqn:linear_compare_ff}). A larger exponent implies faster convergence and better sample complexity. A simple algebraic factorization shows that $\rho_{\mathrm{EE}}(r)=\rho_{\mathrm{FF}}(r)=\rho$ at the points
\begin{align*}
   (r_0,\rho_0)=\left(-\frac{1+2\al+2\beta}{2}, 0\right) \qa (r_1,\rho_1)=\left(-\frac{1}{2}, \frac{2\al+2\beta}{1+2\al+2\beta}\right)\,.
\end{align*}
By the concavity of $r\mapsto \rho_{\mathrm{EE}}(r)$ in the range $[r_0, r_1]$ and affine structure of $r\mapsto \rho_{\mathrm{FF}}(r)$, we deduce the following two insights from the standpoint of upper bounds:
\begin{enumerate}[label=(I\arabic*),topsep=1.67ex,itemsep=0.5ex,partopsep=1ex,parsep=1ex]
    \item\label{item:insight_rough} \sfit{(\ref{item:ee} is better for rough QoIs)} \hspace{0.25em} $\rho_{\mathrm{EE}}(r) > \rho_{\mathrm{FF}}(r)$ for $r_0< r < -1/2$ and
    \item\label{item:insight_smooth} \sfit{(\ref{item:ff} is better for smooth QoIs)} $\rho_{\mathrm{EE}}(r) < \rho_{\mathrm{FF}}(r)$ for $r>-1/2$.
\end{enumerate}

\begin{figure}[tb]
    \begin{center}
        \includegraphics[width=0.7\textwidth]{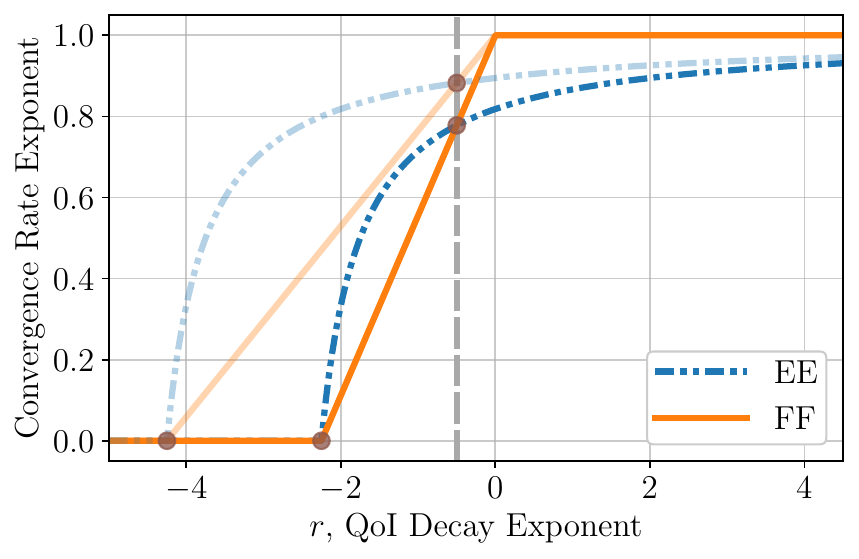}
        \caption{\ref{item:ee} vs. \ref{item:ff} convergence rate exponents \eqref{eqn:linear_compare_rateexp} as a function of QoI regularity exponent $r$. Larger exponents imply faster convergence rates. As the curves gets lighter, $\al+\beta$, an indicator of the smoothness of the problem, increases. The vertical dashed line corresponds to $r=-1/2$, which is the transition point where \ref{item:ee} and \ref{item:ff} have the same rate and the onset of power law decay for the QoI coefficients begins.}
        \label{fig:rates_plot}
    \end{center}
\end{figure}

The inequalities in \ref{item:insight_rough} and \ref{item:insight_smooth} \emph{suggest} that the \ref{item:ff} approach is advantageous when the QoI is smooth and the \ref{item:ee} approach is advantageous when the QoI is rough. Numerical evidence in the next subsection strongly supports this assertion. However, \emph{definitive} claims would require lower bounds to precisely separate the two methods. An example of a rough QoI is $\qd \colon h\mapsto (dh/dx)(x_0)$, which returns a point evaluation of the first derivative of a univariate function; see the last item in Remark~\ref{rmk:linear_qoi_examples}. Direct application of such a rough QoI to a function is an ill-posed operation, e.g., amplifies perturbations in the function. This may partially explain why \ref{item:ee} is preferable in this case, as the \ref{item:ee} estimator does not require the evaluation of $\qd$ while \ref{item:ff} does. We plot the functions $\rho_{\mathrm{EE}}$ and $\rho_{\mathrm{FF}}$ in Figure~\ref{fig:rates_plot}, which visualizes the main insights \ref{item:insight_rough} and \ref{item:insight_smooth} from the preceding discussion.

\subsection{Simulation study}\label{sec:linear_numeric}
To validate the theoretical results in Subsection~\ref{sec:linear_main}, we numerically implement the linear \ref{item:ee} and \ref{item:ff} estimators on a synthetic data benchmark problem.
Specializing to the setting of Corollary~\ref{cor:linear_compare}, we take $H=L^2((0,1);\R)$ with orthonormal basis $\{\varphi_j\}_{j\in\N}$ defined by $\varphi_j(x)=\sqrt{2}\sin(j\pi x)$ for each $x\in(0,1)$ and $j\in\N$.
The forward map $\Ld$ is defined by its diagonalization \eqref{eqn:linear_ff_Ldiag} with $\ld_j\defeq (j\pi)^{-2}$ for each $j\in\N$. In physical space, this corresponds to the solution operator of the one-dimensional Poisson differential equation equipped with Dirichlet boundary conditions. It follows that $\ld\in\cH^\beta(\N;\R)$ for any $\beta<\beta^\star\defeq 3/2$. We fix $\Ld$ in what follows. Next, we vary the ground truth QoI map $\qd$ across the three examples in Remark~\ref{rmk:linear_qoi_examples}---a point evaluation of a derivative, a point evaluation, and an integral. These QoIs satisfy the decay condition $\abs{\qd(\varphi_j)}^2\lesssim j^{-2r-1}$ for $r=-3/2$, $r=-1/2$, and $r=1/2$, respectively. The resulting true factorized linear functional $\fd\defeq \qd\circ\Ld$ has coordinates $\fd_j\defeq \fd(\varphi_j)=\qd(\varphi_j)\ld_j$ for each $j\in\N$.

The data and prior covariance operators are diagonalized in the basis $\{\varphi_j\}_{j\in\N}$ and are characterized by their eigenvalues, which for each $j\in\N$ belong to the set
\begin{align}\label{eqn:eig_set}
    \set{\tau^{2\vartheta-1}(\pi^2j^2 + \tau^2)^{-\vartheta}}{\tau\in \R_{> 0}\qa \vartheta\in\R}\,.
\end{align}
This spectrum corresponds to a Mat\'ern-like covariance structure. For the eigenvalues $\{\sigma_j\}_{j\in\N}$ of the data covariance operator $\Sigma$~\eqref{eqn:linear_data_center_and_cov}, we fix $\tau=15$ and $\vartheta=0.75$ in \eqref{eqn:eig_set}. This implies that the asymptotic data decay assumption~\ref{item:ass_ee_data} is satisfied with $\alpha=\vartheta=0.75$; the numerical results are qualitatively the same when $\al$ equals $2$ or $4.5$. The hypotheses $\beta+r+1/2>0$, $\al+\beta + r > 1/2$, and $\al+\beta>0$ of Corollary~\ref{cor:linear_compare} are always satisfied given the above values of $\al=0.75$, $\beta<3/2$, and $r\in\{-3/2, -1/2, 1/2\}$.
Next, we choose the data distribution $\nu$ to be a Gaussian measure $\normal(0,\Sigma)$, which has a KL expansion~\eqref{eqn:data_input_kl_expand} as in Assumption~\ref{ass:data_input_kl_expand} with i.i.d. coefficients $z_j\sim\normal(0,1)$ for each $j$.
For the Gaussian prior distributions, in \eqref{eqn:eig_set} we fix $\tau=2$ and take $\vartheta=\beta+r+1$ for the \ref{item:ee} estimator prior~\eqref{eqn:linear_ass_diag} and $\vartheta=\beta+1/2$ for the \ref{item:ff} estimator prior~\ref{item:ass_ff_prior}, as required by the hypotheses of Corollary~\ref{cor:linear_compare}.
Finally, we take the scalar standard deviation parameter for the Gaussian white noise processes that corrupt the observed outputs to be $\gamma=10^{-3}$ for both the \ref{item:ee} and \ref{item:ff} training datasets.

Recalling the prior covariance operator $\Lambda$~\eqref{eqn:linear_ass_diag}, the input training data $\{u_n\}_{n=1}^N\sim\nu^{\otimes N}$, and the sampling operator $S_N$ from Subsection~\ref{sec:linear_ee_setup}, we efficiently implement the \ref{item:ee} posterior mean estimator~\eqref{eqn:posterior_ee} in dual form via the explicit formula
\begin{align}\label{eqn:ee_kernel_formula}
    \bar{f}^{(N)} = \Lambda S_N^*\bigl(S_N^{\phantom{*}}\Lambda S_N^* + \gamma^2\Id_{\R^N}\bigr)^{-1} Y\,.
\end{align}
The preceding display follows from \eqref{eqn:posterior_ee}, \eqref{eqn:AN_and_postcov}, and an infinite-dimensional Woodbury matrix identity~\cite[Theorem 1]{ogawa1988operator}. Working in coordinates, we obtain for each $j\in\N$ the convenient sequence space representation
\begin{align}\label{eqn:ee_kernel_seq}
    \bar{f}^{(N)}_j\defeq \ip{\bar{f}^{(N)}}{\varphi_j}_{L^2} = \lambda_j\sum_{n=1}^N u_{jn}V_n^{(N)}\,,
\end{align}
where $\{\lambda_j\}_{j\in\N}$ are the eigenvalues of $\Lambda$, $u_{jn}\defeq \ip{u_n}{\varphi_j}_{L^2}$, and $V^{(N)}\defeq (S_N^{\phantom{*}}\Lambda S_N^* + \gamma^2\Id_{\R^N})^{-1} Y\in\R^N$. The Gram matrix $S_N^{\phantom{*}}\Lambda S_N^*\in\R^{N\times N}$ has entries $\sum_{j\in\N}\lambda_j u_{jn}u_{jn'}$ for each $n\in[N]$ and $n'\in[N]$. We discretize these entries by spectral truncation---which replaces sums over the countable set $\N$ by sums over the finite set $\{j\in\N \colon j\leq J\}$---and we also retain only the first $J$ modes in \eqref{eqn:ee_kernel_seq} for fixed $J=4096$. 
The \ref{item:ff} estimator $\tilde{f}^{(N)}\defeq \qd \circ \bar{L}^{(N)}$~\eqref{eqn:linear_ff_estimator} is defined by its coordinates $\tilde{f}^{(N)}_j\defeq\tilde{f}^{(N)}(\varphi_j)=\qd(\varphi_j)\bar{l}_j^{(N)}$, where $\bar{l}_j^{(N)}$ is as in \eqref{eqn:posterior_ff}. We also discretize this estimator by spectral truncation with $J=4096$ resolved Fourier modes. The results for both the \ref{item:ee} and \ref{item:ff} estimators are nearly identical if the resolution $J$ is increased to $8192$ or higher.

\begin{figure}[tb]
    \centering
    \begin{subfigure}[]{0.325\textwidth}
        \centering
        \includegraphics[width=\textwidth]{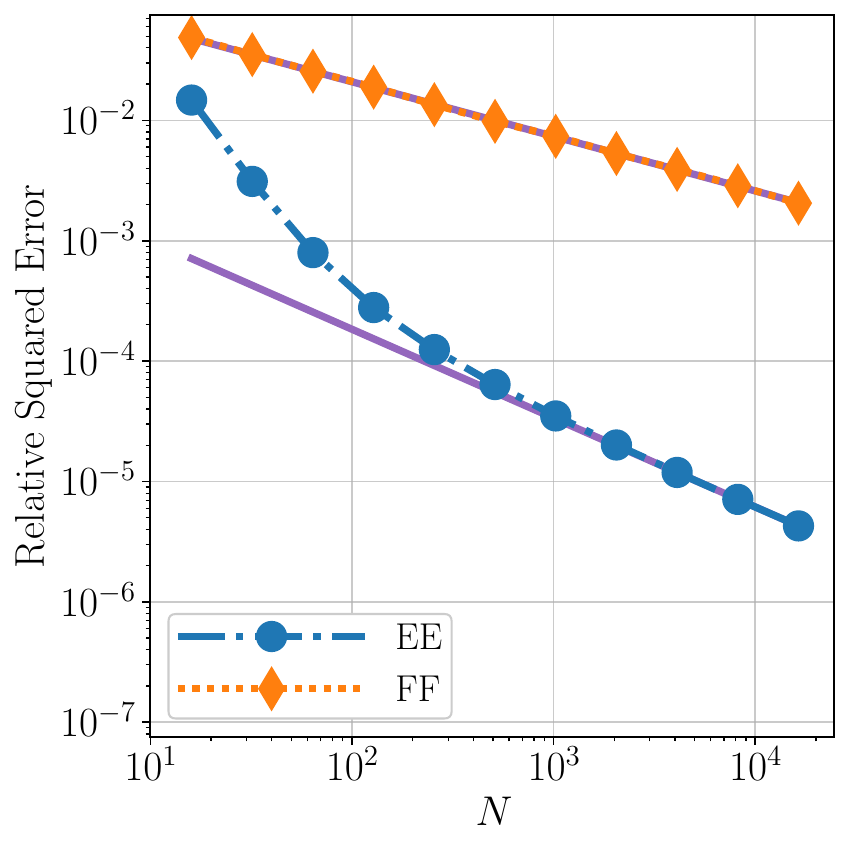}
        \caption{$r=-3/2$}
        \label{subfig:rate_compare_al0_qoi0}
    \end{subfigure}
    \hfill%
    \begin{subfigure}[]{0.325\textwidth}
        \centering
        \includegraphics[width=\textwidth]{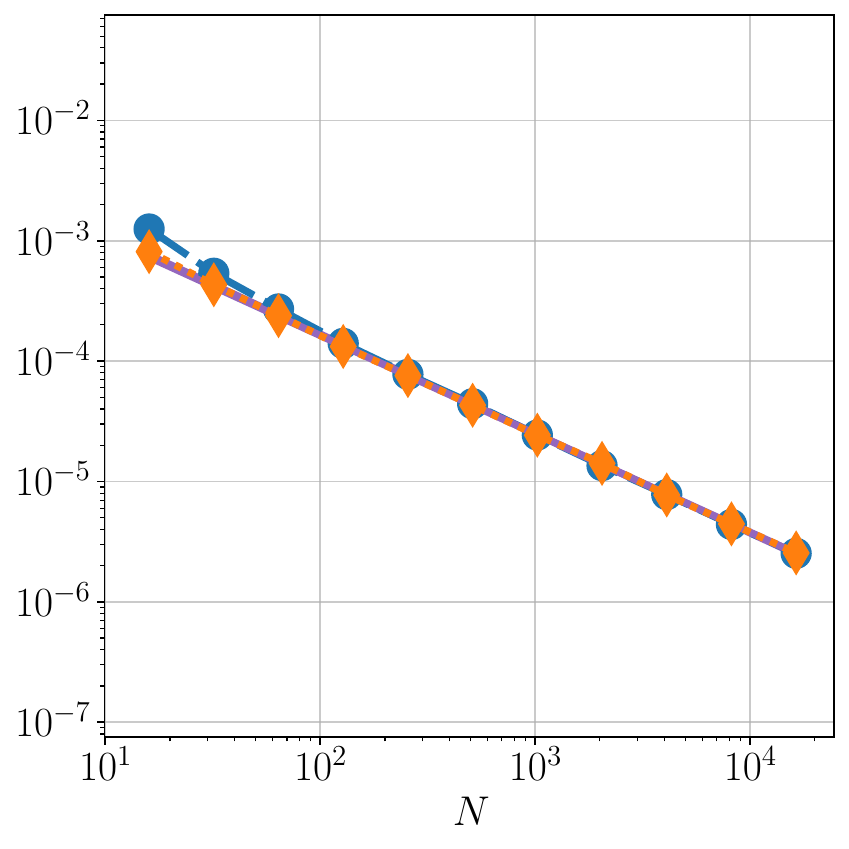}
        \caption{$r=-1/2$}
        \label{subfig:rate_compare_al0_qoi1}
    \end{subfigure}
    \hfill%
    \begin{subfigure}[]{0.325\textwidth}
        \centering
        \includegraphics[width=\textwidth]{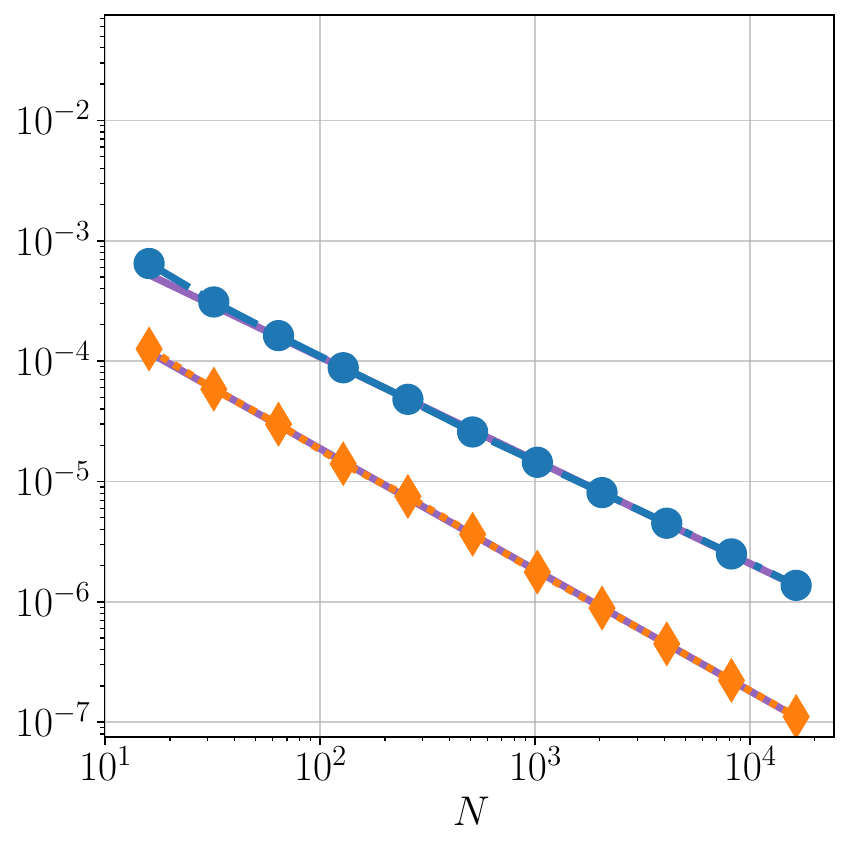}
        \caption{$r=1/2$}
        \label{subfig:rate_compare_al0_qoi2}
    \end{subfigure}
    \caption{Empirical sample complexity of the Bayesian \ref{item:ee} and \ref{item:ff} estimators for linear PtO maps based on a Poisson problem. The solid purple lines are best linear fits to the broken curves with markers, which correspond to numerically computed squared errors. In all three figures, the experimentally observed convergence rates are nearly perfect matches to those from the theoretical upper bounds in Corollary~\ref{cor:linear_compare} (see Table~\ref{tab:rates_linear}).}
    \label{fig:rate_compare_al0}
\end{figure}

In Figure~\ref{fig:rate_compare_al0}, we plot the expected relative squared $L^2_\nu(H;\R)$ Bochner norm error as a function of the sample size $N$. Specifically, the vertical axis represents
\begin{align}\label{eqn:rel_er_linear}
    \frac{\E\Bigl[\E^{u\sim\nu}\abs{\ip{\fd}{u}_{L^2} - \ip{\widehat{f}}{u}_{L^2}}^2\Bigr]}
    {\E^{u\sim\nu}\abs{\ip{\fd}{u}_{L^2}}^2} = \frac{\E\Bigl[\sum_{j=1}^\infty \sigma_j\abs{\fd_j - \widehat{f}_j}^2\Bigr]}{\sum_{k=1}^\infty \sigma_k\abs{\fd_k}^2}\,,
\end{align}
where $\widehat{f}$ is either the \ref{item:ee} or \ref{item:ff} estimator, $\widehat{f}_j\defeq \ip{\widehat{f}}{\varphi_j}_{L^2}$, and the outer expectation is with respect to the realizations of the $N$ training data pairs. For each $N\in\{2^4,\ldots, 2^{14}\}$, we approximate the outer expectation in the preceding display by averaging over $1000$ i.i.d. realizations of the training data; hence, $1000$ estimators are trained for each $N$. The error bounds in Corollary~\ref{cor:linear_compare} remain valid for the full outer expectation in \eqref{eqn:rel_er_linear} by \cite[Corollary D.2, p.~226]{nelsen2024statistical}. The infinite series in \eqref{eqn:rel_er_linear} are discretized by finite sums over the first $J=4096$ summands. The solid purple lines in Figure~\ref{fig:rate_compare_al0} are linear least squares fits to the logarithm of the computed squared errors. We then take the slopes of these lines as the experimental convergence rate exponents, which are recorded in parentheses in Table~\ref{tab:rates_linear}.

\begin{table}[tb]
    \centering
    \caption{Sample complexity comparison experiment. The entries of the table record the theoretical vs. experimental (in parentheses) convergence rate exponents $ \rho $ in $ O(N^{-\rho}) $ of the relative expected squared $ L_{\nu}^{2}(H;\R) $ in-distribution error~\eqref{eqn:rel_er_linear}. The theoretical rates are taken from the upper bound in Corollary~\ref{cor:linear_compare}. Here $\al = 0.75$.
    }
    \label{tab:rates_linear}
    \renewcommand{\arraystretch}{1.2}
    \begin{tabular}{l@{\hspace{10.25mm}}ccc}
        \toprule
        \multicolumn{1}{l}{} &
        \multicolumn{1}{c}{$r=-3/2$} &
        \multicolumn{1}{c}{$r=-1/2$} &
        \multicolumn{1}{c}{$r=1/2$} \\
        \midrule
        $\mathrm{EE}$ &  0.714 (0.738) & 0.818 (0.815) & 0.867 (0.864)\\
        $\mathrm{FF}$ &  0.455 (0.455) & 0.818 (0.818) & 1.000 (1.003)\\
        \bottomrule
    \end{tabular}
\end{table}

Table~\ref{tab:rates_linear} strongly suggests that for the specific linear functional learning problem under consideration, the convergence rates from the upper bounds in Corollary~\ref{cor:linear_compare} are sharp. Indeed, the empirical numerical results are in near perfect agreement with our theoretical predictions for all three QoI maps of varying smoothness. In particular, the \ref{item:ee} estimator indeed converges faster than the \ref{item:ff} one for the rough derivative point evaluation QoI~\eqref{eqn:qoi_pointd}. For the point evaluation QoI~\eqref{eqn:qoi_point}, the rates are identical, as predicted. The \ref{item:ff} estimator converges at the fast $N^{-1}$ rate for the mean QoI~\eqref{eqn:qoi_mean}, which is smooth, while the \ref{item:ee} estimator converges strictly slower. Going beyond the asymptotic theory, Figure~\ref{fig:rate_compare_al0} also shows that for fixed $N$, the estimator with a faster convergence rate also has a smaller expected relative squared error in the regime of sample sizes considered, i.e., better constants.

\section{Numerical experiments}\label{sec:numerics}
We now perform numerical experiments with the proposed FNM architectures. These experiments have two main purposes. The first is to numerically implement and compare the various FNM models on several PtO maps of practical interest; the second is to qualitatively validate the theory developed in the paper for such maps. We focus on nontrivial nonlinear problems with finite-dimensional observables that define the QoI maps. Although our linear theory from Section~\ref{sec:linear} does not apply to such nonlinear problems, we still observe qualitative validation of the main implications of the linear analysis. That is, for smooth enough QoIs, full-field learning is at least as data-efficient as end-to-end learning. Unlike the theory, however, our numerical results distinguish the two approaches only by constant factors and not by the actual convergence rates.

The continuum FNM architectures from Section~\ref{sec:nm} are implemented numerically by replacing all forward and inverse Fourier series calculations with their Discrete Fourier Transform counterparts. This enables fast summation of the series \eqref{eqn:fno_Kt}, \eqref{eqn:fnf_kernel_fourier_integral}, and \eqref{eqn:fnd_kernel} with the FFT. The inner products in these formulas are also computed with the FFT. In particular, the FFT performs Fourier space operations in the set $\{k\in \Z^d \colon \norm{k}_{\ell^\infty([d];\Z)}\leq K\}$ rather than over all $k \in \Z^d$.\footnote{In all numerical experiments to follow, $d=1$ or $d=2$.} In this case, we say that the FNM architecture has $K$ modes. This is analogous to the mode truncation used in standard FNO layers (see, e.g.,~\cite{li2021fourier}). Additionally, since we work with real vector-valued functions, conjugate symmetry of the Fourier coefficients may be exploited to write the Fourier linear functional~\eqref{eqn:fnf_kernel_fourier_integral} and decoder~\eqref{eqn:fnd_kernel} layers only in terms of the real part of the coefficients appearing in the summands. We also make a minor modification to the F2V and V2V FNMs. Since the discrete implementation of $\sG$ in \eqref{eqn:fnf_kernel_fourier_integral} requires discarding the higher frequencies in the input function, we define an auxiliary map $\sW\colon h\mapsto \int_{\Td} \mathsf{NN}(h(x))\dd{x}$ that makes use of all frequencies. Here $\mathsf{NN}(\slot)$ is a one hidden layer fully-connected neural network (NN). Then we replace $\sG$ in Definition~\ref{def:FNM} by the concatenated operator $(\sG, \sW)^{\tp}$.

Given a dataset of input--output pairs \( \{( u_n, \tilde{y}_n)\}_{n=1}^N \), we train a FNM $\Psi_\theta$ taking one of the forms given in Definition~\ref{def:FNM} (with the modifications from the preceding discussion) in a supervised manner by minimizing the average relative error
\begin{equation}\label{eqn:fnm_loss}
	\frac{1}{N} \sum_{n=1}^N \frac{\norm{\tilde{y}_n - \Psi_{\theta}(u_n)}}{\norm{\tilde{y}_n}}
\end{equation}
or the average absolute squared error
\begin{equation}
    \label{eqn:fnm_sq_abs_loss}
    \frac{1}{N} \sum_{n=1}^N \norm{\tilde{y}_n - \Psi_{\theta}(u_n)}^2
\end{equation}
over the FNM's tunable parameters \( \theta \) using mini-batch SGD with the ADAM optimizer. The choice of the loss function is dependent on the underlying problem. Moreover, the norm in the preceding displays are inferred from the space that the $\tilde{y}_n$ takes values in (i.e., finite-dimensional vector or infinite-dimensional function output spaces). To avoid numerical instability in our actual computations, we add $10^{-6}$ to the denominator of the ratio in~\eqref{eqn:fnm_loss}.

\begin{remark}[data discretization error]
In addition to the discretization error introduced by the discrete and implementable realizations of the continuum FNM architectures~\cite{lanthaler2024discretization}, there is another source of discretization error due to our choice of data generation procedure. Specifically, the training and test data in this section are generated from numerical solvers that discretely approximate an underlying continuum operator at a fixed resolution. The weights of the resulting trained FNMs have a complicated dependence on this discretization error. Although simpler operator learning architectures are stable to such errors~\cite[Example 3.9, pp.~5--6]{lanthaler2023error}, no such results exist yet for neural operators. Furthermore, in line with most of the literature, the empirical convergence results that we numerically report in this section are for the test error with respect to the discretized operator or PtO map. Thus, there is an implicit assumption that this discretized operator is sufficiently resolved so that the computable but discrete test error is an accurate surrogate for the true but inaccessible test error with respect to the continuum operator. Alternative data acquisition strategies may mitigate these effects to some extent~\cite{hasani2024generating}.
\end{remark}

The numerical experiments are organized as follows. In Subsection~\ref{sec:numerics_ad}, we extract the first four polynomial moments from the solution of a velocity-parametrized 2D advection--diffusion equation. Next, Subsection~\ref{sec:numerics_airfoil} considers the flow over an airfoil modeled by the steady compressible Euler equation. The PtO map sends the shape of the airfoil to the resultant drag and lift force vector. Last, we study an elliptic homogenization problem parametrized by material microstructure in Subsection~\ref{sec:numerics_elliptic}. Here, the QoI returns the effective tensor of the material.

\subsection{Moments of an advection--diffusion model}\label{sec:numerics_ad}
Our first model problem concerns a canonical advection--diffusion PDE in two spatial dimensions. This equation often arises in the environmental sciences and is useful for modeling the spread of passive tracers (e.g., contaminants, pollutants, aerosols), especially when the driving velocity field is coupled to another PDE such as the Navier--Stokes equation. Our setup is as follows. Let $\cD=(0,1)^2$ be the spatial domain and $\mathsf{n}$ denote the unit inward normal vector to $\cD$. For a prescribed time-independent velocity field $v\colon \cD\to\R^2$, the state $\phi\colon\cD\times \R_{>0}\to\R$ solves
\begin{align}\label{eqn:advection_diffusion}
\begin{alignedat}{2}
\partial_t \phi + \nabla \cdot (v\phi) - 0.05\lap \phi &= g \qin && \cD\times\R_{>0}\,,\\
\mathsf{n}\cdot \nabla \phi &= 0 \qon && \partial\cD\times \R_{>0}\,,\\
\phi &= 0 \qon && \cD \times \{0\}\,.
\end{alignedat}
\end{align}

The time-independent source term $g$ is a smoothed impulse located at $x_0\defeq (0.2, 0.5)^\tp$ and is defined for $x\in\cD$ by
\begin{align*}
    g(x)\defeq \frac{5}{2\pi(50)^{-2}}\exp\left(-\frac{\norm{x-x_0}^2_{\R^2}}{2(50)^{-2}}\right)\,.
\end{align*}

We associate our input parameter with the velocity field $v$ appearing in~\eqref{eqn:advection_diffusion}. Our parametrization takes the form
\begin{align}\label{eqn:ad_vel}
    v=(u, 0)^\tp\,,\qw u(x_1,x_2)=3 + \sum_{j=1}^{d_{\mathrm{KL}}} \sqrt{\tau_j}z_je_j(x_1)
\end{align}
for all $x=(x_1,x_2)\in\cD$. Note that $u$ is constant in the vertical $x_2$ direction. The eigenvalues $\{\tau_j\}_{j\in\N}$ and eigenfunctions $\{e_j\}_{j\in\N}$ correspond to the Mercer decomposition of a kernel obtained by restricting a Mat\'ern covariance function over $\R$ to $(0,1)\subset \R$. The covariance function has smoothness exponent $1.5$ and lengthscale $0.25$~\cite{rasmussen2006gaussian}. We choose
\begin{align*}
    z_j\diid  \mathsf{Uniform}([-1,1]) \qfa j\in[d_{\mathrm{KL}}]\,.
\end{align*}
Thus, up to normalization constants, the velocity field \eqref{eqn:ad_vel} is the (truncated) KL expansion of a subgaussian stochastic process. We take the input to either be the full $x_1$-velocity field $u\colon \cD\to\R$ or the i.i.d. realizations $z\defeq (z_1,\ldots,z_{d_{\mathrm{KL}}})^\tp$ of the random variables that affinely parametrize $u$.

\begin{figure}[tb]
    \setlength\tabcolsep{3pt} 
    \centering
    \begin{tabular}{@{} r M{0.18\linewidth} M{0.18\linewidth} @{}}
    & Velocity Input & State Output\\
    \begin{subfigure}{0.23\linewidth} \caption{$d_{\mathrm{KL}}=2$}\label{subfig:vis_ad_2} \end{subfigure} 
      & \includegraphics[width=\hsize]{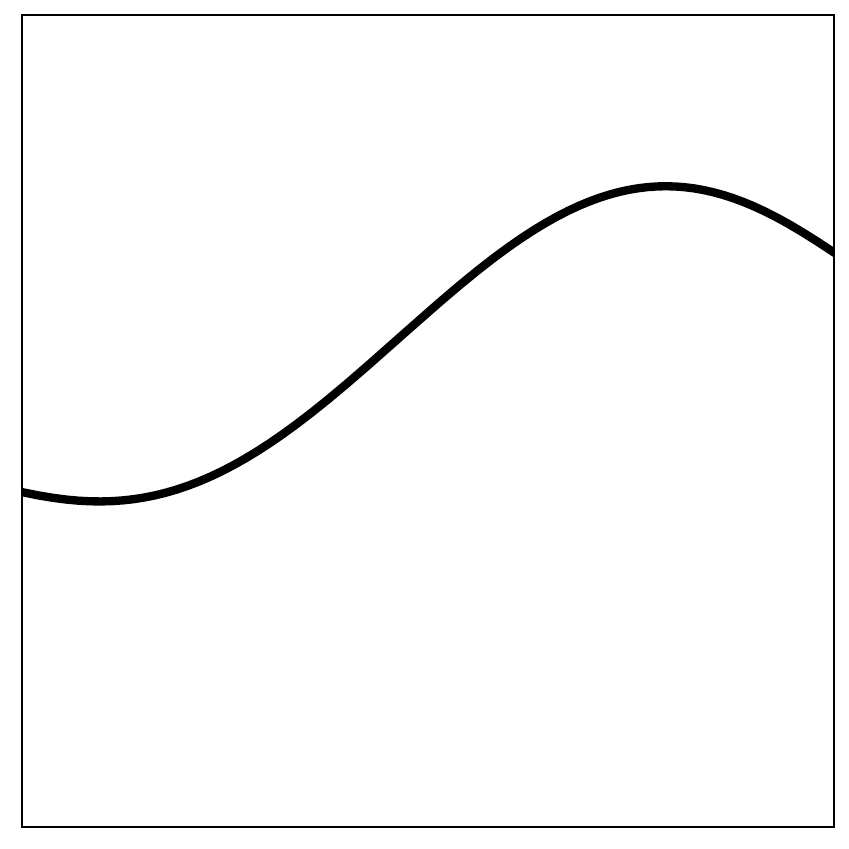}
      & \includegraphics[width=\hsize]{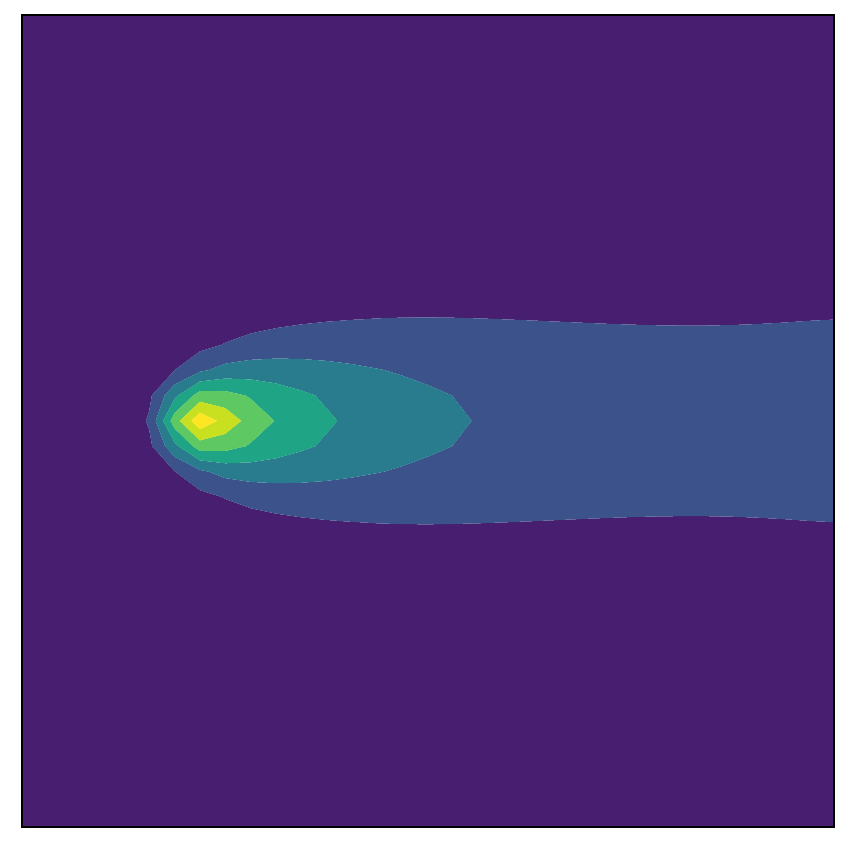}\\ \addlinespace
    \begin{subfigure}{0.21\linewidth} \caption{$d_{\mathrm{KL}}=20$}\label{subfig:vis_ad_20} \end{subfigure} 
      & \includegraphics[width=\hsize]{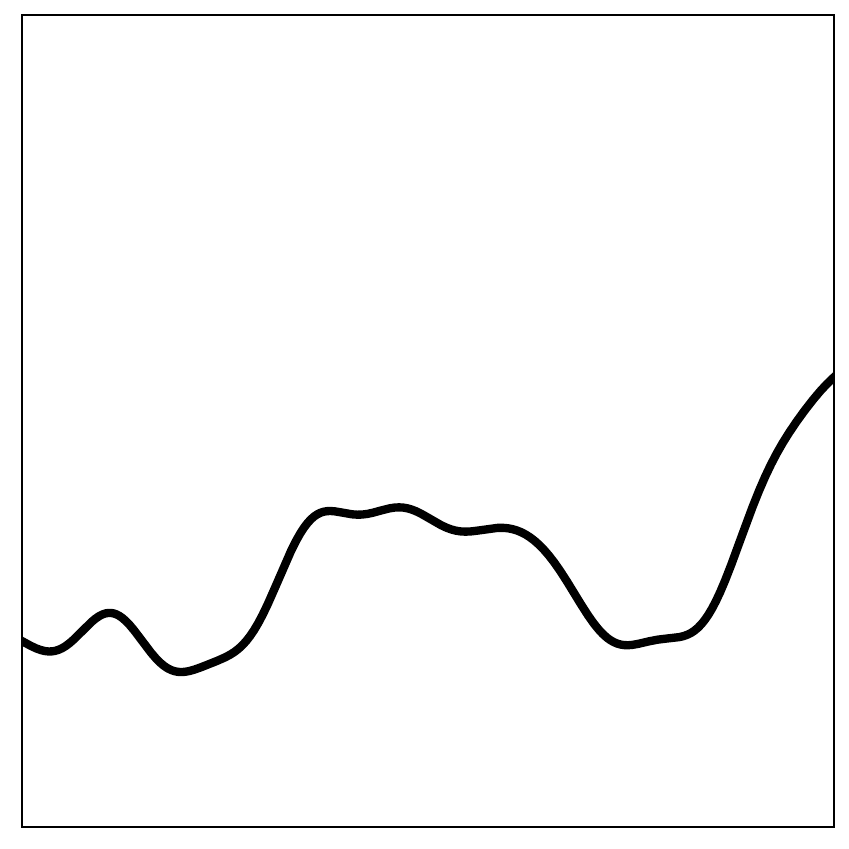}    
      & \includegraphics[width=\hsize]{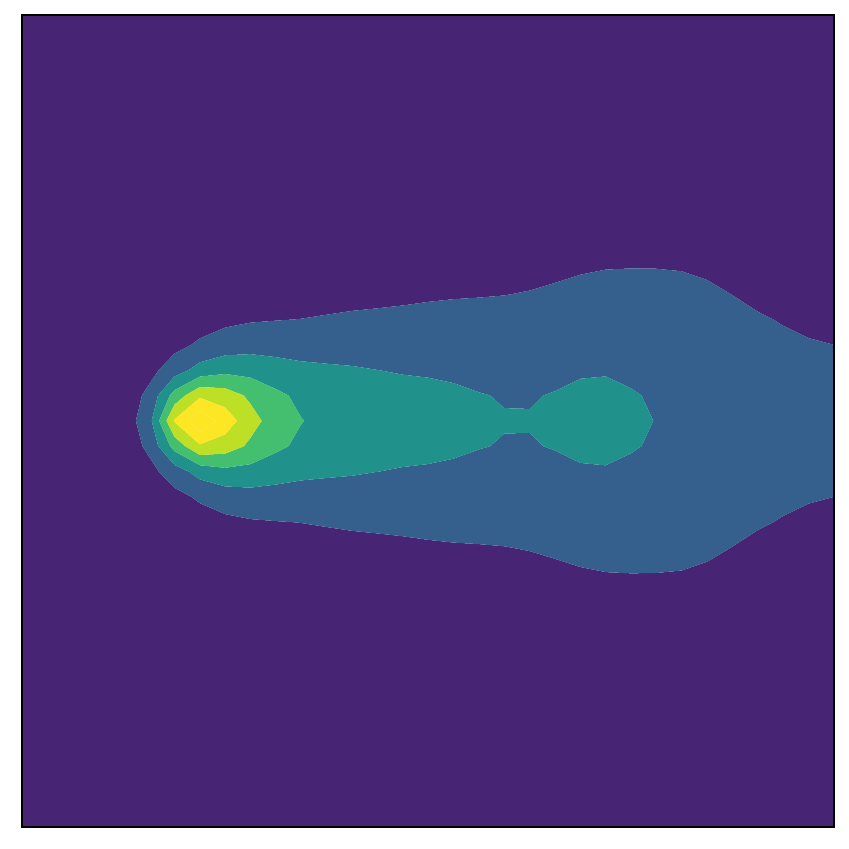}\\ \addlinespace
    \begin{subfigure}{0.18\linewidth} \caption{$d_{\mathrm{KL}}=1000$}\label{subfig:vis_ad_1000} \end{subfigure} 
      & \includegraphics[width=\hsize]{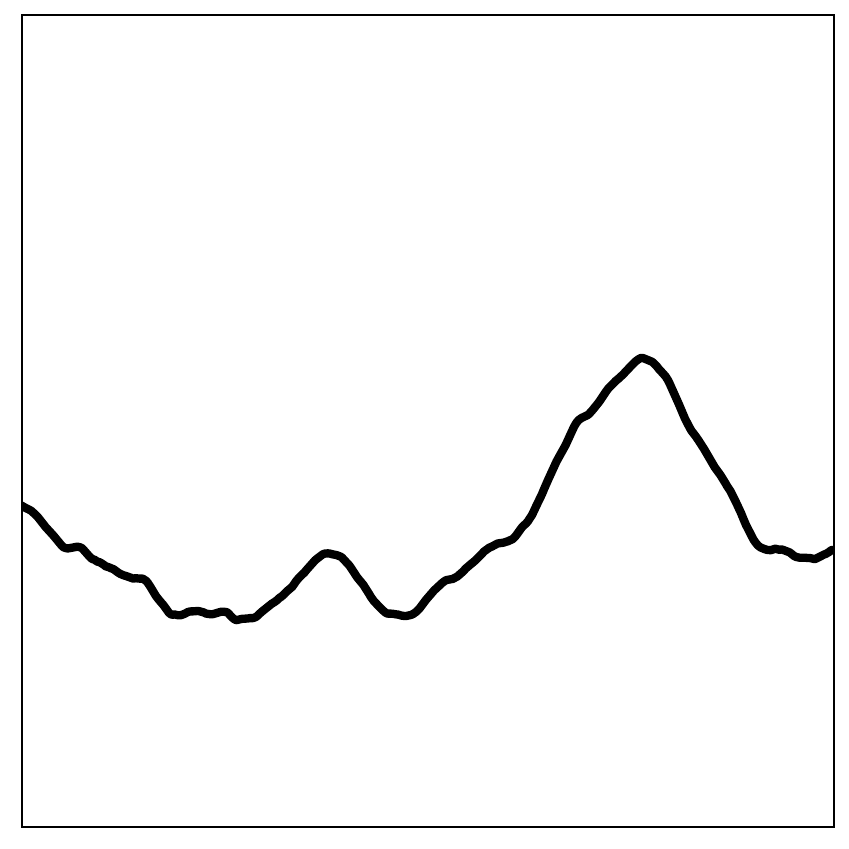} 
      & \includegraphics[width=\hsize]{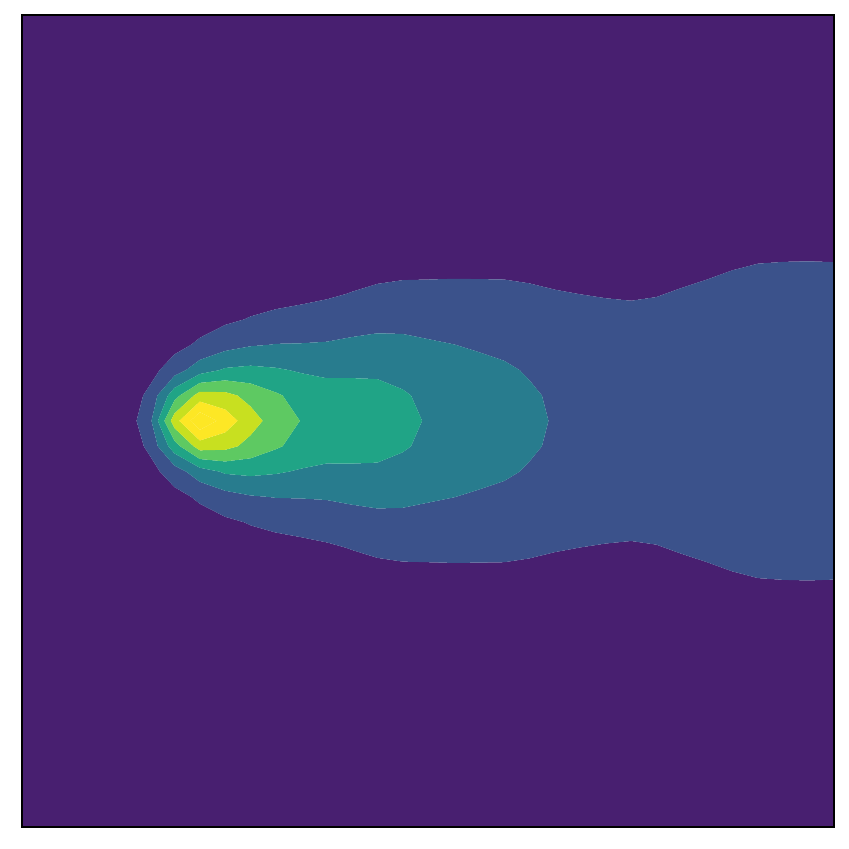}
    \end{tabular}
    \caption{Visualization of the velocity-to-state map for the advection--diffusion model. Rows denote the dimension of the KL expansion of the velocity profile and columns display representative input and output fields.}
    \label{fig:vis_ad}
\end{figure}

Define the \emph{nonlinear} QoI map $\qd \colon L^4(\cD;\R)\to\R^4$ as follows. First, for any $h\in L^2(\cD;\R)$, let
\begin{align}\label{eqn:ad_mean_var}
    \bar{m}(h)\defeq \int_{\cD} h(x)\dd{x}\qa \bar{s}(h)\defeq \left(\int_{\cD} \abs{h(x) - \bar{m}(h)}^2\dd{x}\right)^{1/2}
\end{align}
denote the mean and variance of the pushforward of the uniform distribution on $\cD=(0,1)^2$ under $h$, respectively. Then $\qd=(\qd_1,\qd_2,\qd_3,\qd_4)^\tp$ is given by
\begin{align}\label{eqn:ad_qoi}
    h\mapsto \qd(h)&\defeq 
        \begin{pmatrix}
           \bar{m}(h) \\[1.5mm]
           \bar{s}(h) \\[1.5mm]
           \bar{s}(h)^{-3}\int_{\cD}\bigl(h(x)-\bar{m}(h)\bigr)^3\dd{x} \\[1.5mm]
           -3 + \bar{s}(h)^{-4}\int_{\cD}\abs[\big]{h(x)-\bar{m}(h)}^4\dd{x}
         \end{pmatrix}
         \,.
\end{align}
Hence, $\qd_1$ is the mean, $\qd_2$ the standard deviation, $\qd_3$ the skewness, and $\qd_4$ the excess kurtosis. Our goal is to build FNM surrogates for the PtO map that sends the input representation (either the full velocity field or its finite number of i.i.d. coefficients) to the QoI values of the state $\phi$ at final time $t=3/4$ (see Figure~\ref{fig:vis_ad}). Therefore, we train FNMs to approximate each of the following ground truth maps:
\begin{align*}
\Psi^\dagger_{\mathrm{F2F}} \colon u&\mapsto \phi\big|_{t=3/4}\,,\\
\Psi^\dagger_{\mathrm{F2V}} \colon u&\mapsto \qd\Bigl(\phi\big|_{t=3/4}\Bigr)\,,\\
\Psi^\dagger_{\mathrm{V2F}} \colon z&\mapsto \phi\big|_{t=3/4}\,, \qa\\
\Psi^\dagger_{\mathrm{V2V}} \colon z&\mapsto \qd\Bigl(\phi\big|_{t=3/4}\Bigr)\,.
\end{align*}

The training data is obtained by solving \eqref{eqn:advection_diffusion} with a second-order Lagrange finite element method on a mesh of size $32\times 32$ and Euler time step $0.01$. For each $d_{\mathrm{KL}}\in\{2,20,1000\}$, we generate $10^4$ i.i.d. data pairs for training, $1500$ pairs for computing the test error (which is \eqref{eqn:fnm_loss} over the $1500$ test pairs instead of over the $N$ training pairs), and $500$ pairs for validation.
All FNM models with 2D spatial input or output functions use $12$ modes per dimension and a channel width of $32$. For the V2V-FNM, we use a 1D latent function space with $12$ modes and channel width of $96$.
We compare all FNM models to a standard fully-connected NN with $3$ layers and constant hidden width $2048$.
These architecture settings were selected based on a hyperparameter search over the validation dataset for $d_{\mathrm{KL}}=1000$ that mimics the parameter complexity experiments in \cite{bhattacharya2023learning,lanthaler2023nonlocal}. The models are trained on the relative loss~\eqref{eqn:fnm_loss} for $ 500$ epochs in $L^2$ output space norm for functions and Euclidean norm for vectors. The optimizer settings include a minibatch size of $20$, weight decay of $10^{-4}$, and an initial learning rate of $10^{-3}$ which is halved every $100$ epochs. We train $5$ i.i.d. realizations of the models for various values of $N$ and $d_{\mathrm{KL}}$ and report the results in Figure~\ref{fig:data_ad}.

\begin{figure}[tb]
        \centering
        \begin{subfigure}[]{0.3375\textwidth}
                \centering
                \includegraphics[width=\textwidth]{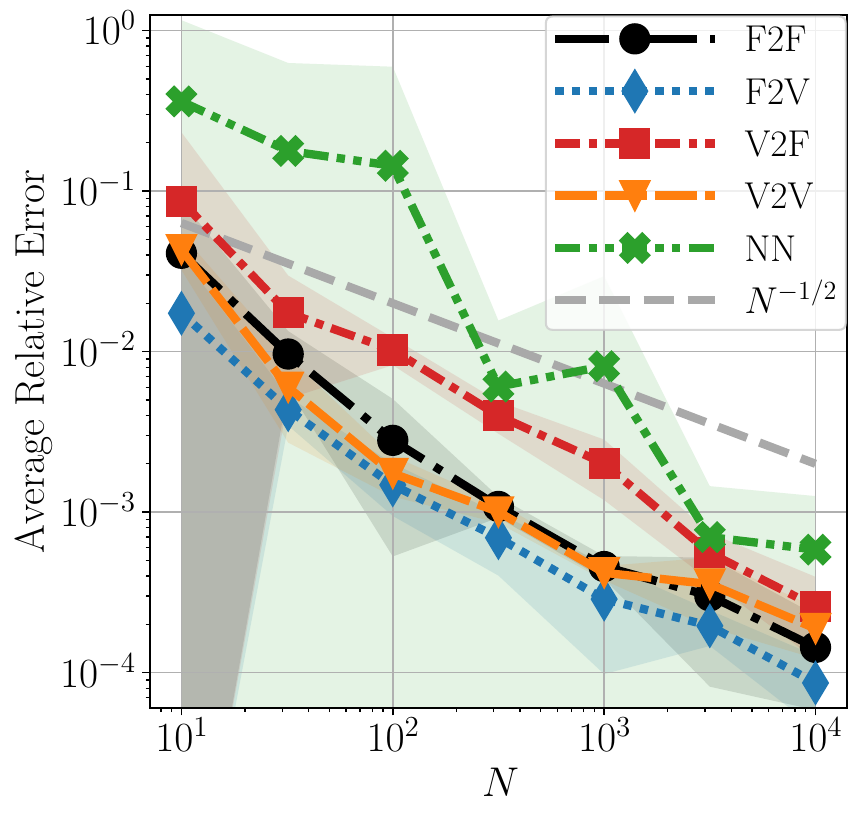}
                \caption{$d_{\mathrm{KL}}=2$}
                \label{subfig:data_ad_d2}
        \end{subfigure}
        \hfill%
        \begin{subfigure}[]{0.32\textwidth}
                \centering
                \includegraphics[width=\textwidth]{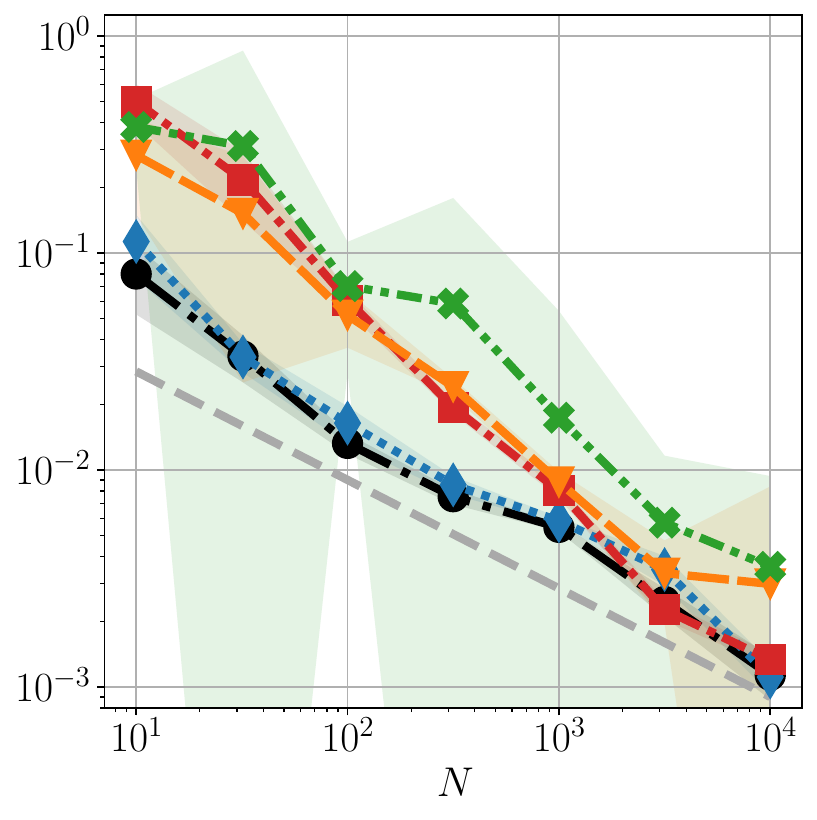}
                \caption{$d_{\mathrm{KL}}=20$}
                \label{subfig:data_ad_d20}
        \end{subfigure}
        \hfill%
        \begin{subfigure}[]{0.32\textwidth}
                \centering
                \includegraphics[width=\textwidth]{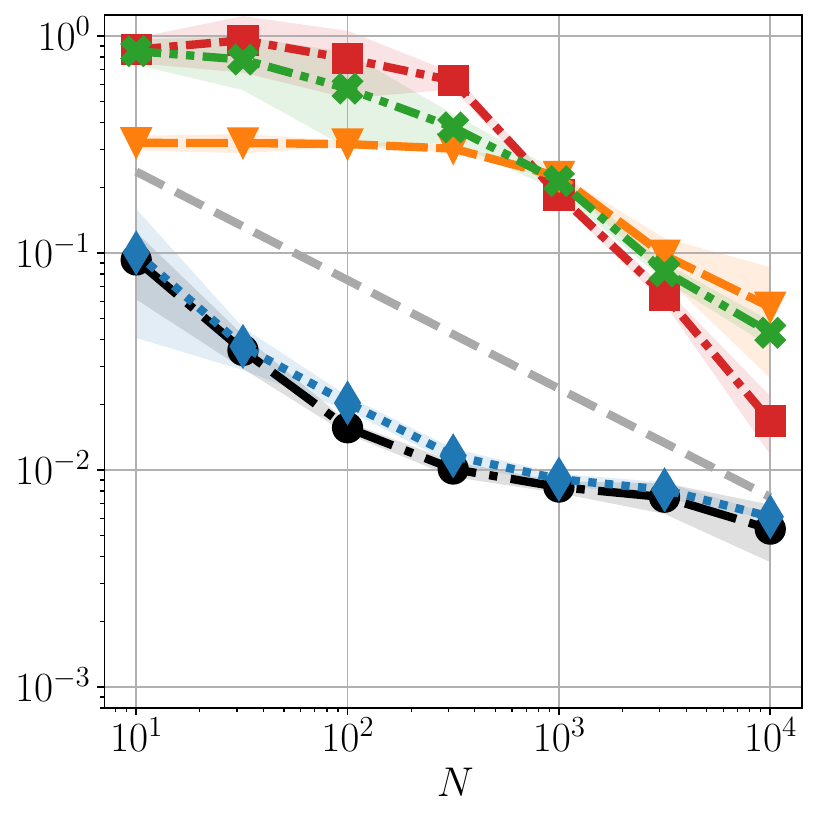}
                \caption{$d_{\mathrm{KL}}=1000$}
                \label{subfig:data_ad_d1000}
        \end{subfigure}
        \caption{Empirical sample complexity of FNM and NN architectures for the advection--diffusion PtO map (note that Figure~\ref{subfig:data_ad_d2} has a different vertical axis range). The shaded regions denote two standard deviations away from the mean of the test error over $5$ realizations of the random training dataset indices, batch indices during SGD, and model parameter initializations.}
        \label{fig:data_ad}
\end{figure}

Figure~\ref{fig:data_ad} reveals several interesting trends. In general, training models to emulate the advection--diffusion PtO map with finite-dimensional vectors as input is more difficult than adopting function space input variants of the problem. The difficulty is further exacerbated as the dimension of the input vector (here, $d_{\mathrm{KL}}$) increases. We hypothesis that this gap in performance would reduce if the vector input models received the weighted KL coefficients $\{\sqrt{\tau_j}z_j\}$ as input instead of the i.i.d. sequence $\{z_j\}$. This way the model would have access to decay information and hence an ordering of the coefficients. The standard finite-dimensional NN performs poorly across all KL expansion dimensions. The training of the NN is also quite erratic, as evidenced by the large green shaded regions indicating large variance over multiple training runs. The output space seems to play less of a role than the input space. Indeed, the F2F and F2V FNMs with function space inputs generally achieve the lowest test error regardless of $N$ and $d_{\mathrm{KL}}$.
The full-field F2F method slightly out performs the end-to-end F2V method by a small constant factor (except for when $d_{\mathrm{KL}}=2$). Since $\qd$ is a smoothing QoI due to its integral definition, this observation aligns with the theoretical insights from Subsection~\ref{sec:linear_compare}.
The fast convergence of some of the FNM models, especially for the low-dimensional cases $d_{\mathrm{KL}}=2$ and $d_{\mathrm{KL}}=20$, could potentially be explained by the lack of noise in the data, the smoothness of the QoIs, and the nonconvexity of the training procedure. When $d_{\mathrm{KL}}=1000$, the problem is essentially infinite-dimensional. The function space input FNMs (F2F and F2V) exhibit a nonparametric decay of test error as expected.

\subsection{Aerodynamic force exerted on an airfoil}\label{sec:numerics_airfoil}
Consider the following steady compressible Euler equation applied to an airfoil problem (see Figure~\ref{fig:NACA-Mesh}), as introduced in \cite{li2022fourier}:
\begin{equation}\label{eqn:euler}
\begin{aligned}
 \nabla \cdot (\rho v) = 0\,,
\\
 \nabla \cdot (\rho vv^\tp + p \Id_{\R^2}) = 0\,, 
\\
 \nabla \cdot  \bigl( (E + p) v \bigr) = 0\,.
\end{aligned}
\end{equation}
Here $\rho$ is the fluid density, $v$ is the velocity vector, $p$ is the pressure, and $E$ is the total energy. Equation~\eqref{eqn:euler} is equipped with the following far-field boundary conditions: 
$\rho_{\infty} = 1$, $p_{\infty}  = 1$,  $M_{\infty} = 0.8$, and $\mathsf{AoA} = 0$,
where $M_{\infty}$ is the Mach number and $\mathsf{AoA}$ is the angle of attack. This setup indicates that the flow condition is in the transonic regime. Additionally, the no-penetration condition $v \cdot  \mathsf{n} = 0$ is imposed at the airfoil, where $ \mathsf{n} $ represents the inward-pointing normal vector to the airfoil. Additional mathematical details about the setup may be found in \cite{lye2021multi,lye2020deep,lye2021iterative,mishra2021enhancing}.

\begin{figure}[bt]
\begin{center}
  \includegraphics[width=0.95\textwidth]{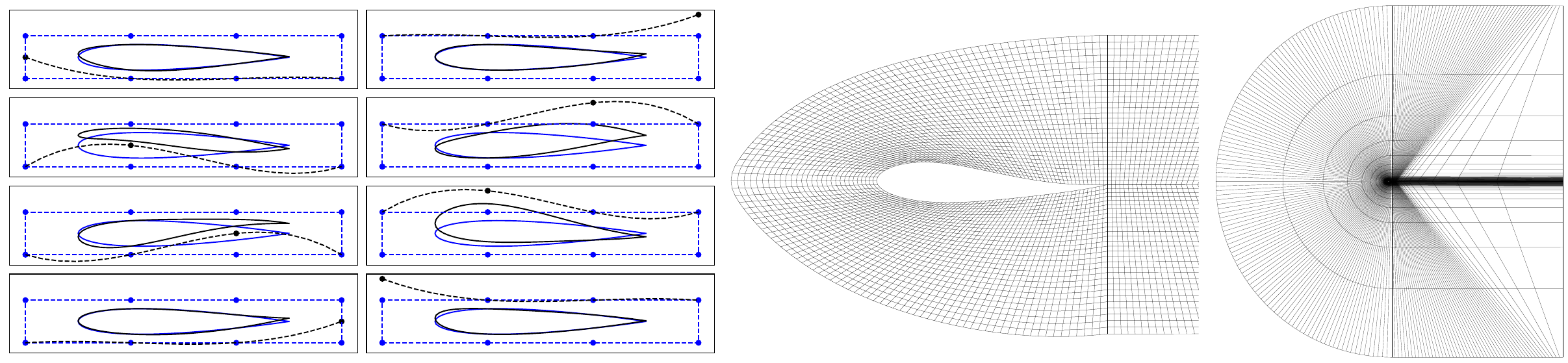}
  \caption{Flow over an airfoil. From left to right: visualization of the cubic design element and different airfoil configurations, guided by the displacement field of the control nodes;
  a close-up view of the $C$-grid surrounding the airfoil;
  the physical domain discretized by the $C$-grid.
  }\label{fig:NACA-Mesh}
  \end{center}
\end{figure}

In this context, we are interested in solving the aforementioned 2D Euler equation to predict the drag and lift performance of different airfoil shapes. Building fast yet accurate surrogates for this task facilities aerodynamic shape optimization~\cite{o2021adaptive,shukla2023deep} for various design goals, such as maximizing the lift to drag ratio~\cite{li2022fourier}. The drag and lift QoIs, which only depend on the pressure on the airfoil, are given by the force vector
\begin{align}\label{eqn:qoi_airfoil}
     (\mathsf{Drag},\,\mathsf{Lift})^\tp = \oint_{\cA} p \mathsf{n} \dd{s} \in \R^2\,.
\end{align}
Here $\cA$ denotes the closed curve defined by the union of the upper and lower surfaces of the airfoil. Different airfoil shapes are generated following the design element approach~\cite{farin2014curves} (Figure~\ref{fig:NACA-Mesh}).
The initial NACA-0012 shape is embedded into a ``cubic'' design element featuring $8$ control nodes, and the initial shape is morphed into a different one following the displacement field of the control nodes of the design element. The displacements of control nodes are restricted to the vertical direction only. Consequently, the intrinsic dimension of the input is $7$, as displacing all nodes in the vertical direction by a constant value does not change the shape of the airfoil.

To generate the training data, we used the traditional second-order finite volume method with the implicit backward Euler time integrator. The process begins by generating a new airfoil shape. Subsequently, a $C$-grid mesh~\cite{steger1980generation} consisting of $221 \times 51$ quadrilateral elements is created around the airfoil with adaptation near the airfoil. In total, we generated $2000$ training data and $400$ test data with the vertical displacements of each control node being sampled from a uniform distribution $\textsf{Uniform}([-0.05, 0.05])$.

\begin{figure}[tb]
\begin{center}
  \includegraphics[width=0.95\textwidth]{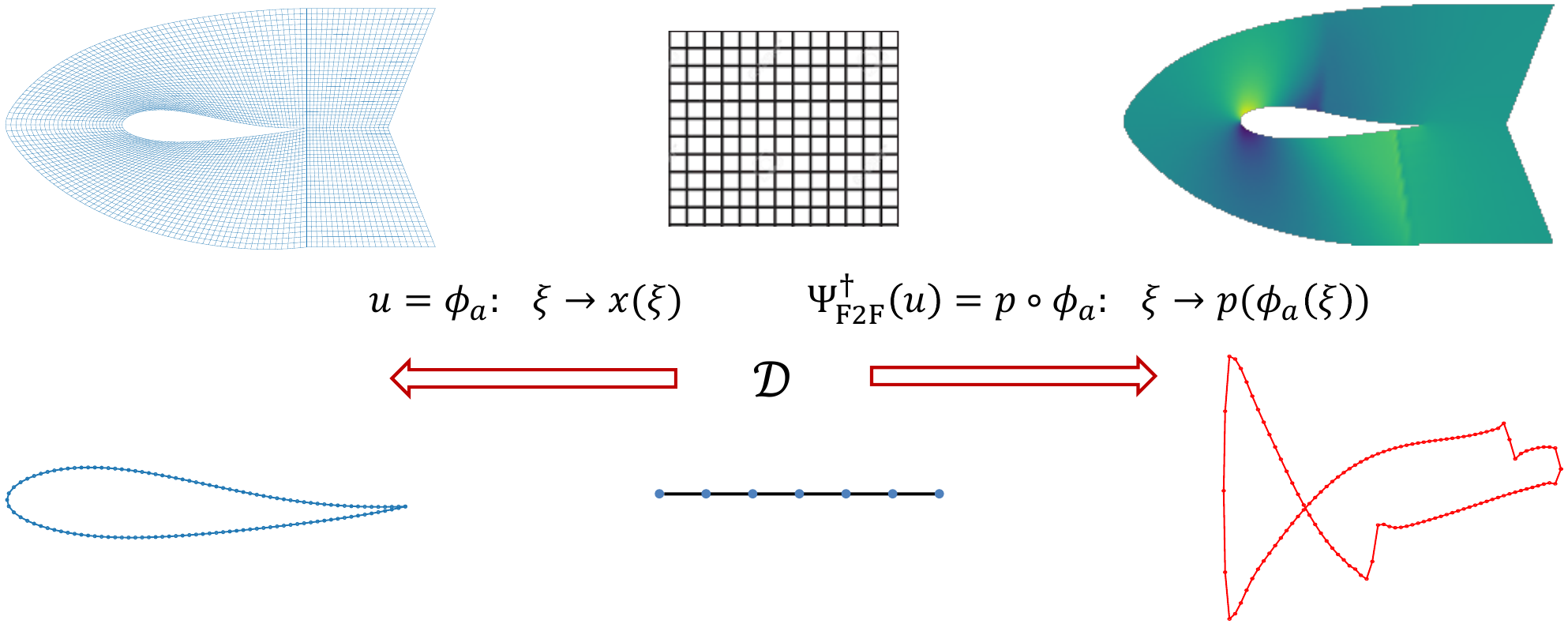}
  \caption{Flow over an airfoil. The 1D (bottom) and 2D (top) latent spaces are illustrated at the center; the input functions $\phi_a$ encoding the irregular physical domains, are shown on the left; and the output functions $p\circ\phi_a$ representing the pressure field on the irregular physical domains, are depicted on the right.}\label{fig:NACA-Operator}
  \end{center}
\end{figure}

Next, we will define the operator learning problem (see Figure~\ref{fig:NACA-Operator}). In the 2D setting, we aim to learn the entire pressure field. Let $\cD_a$ represent the irregular physical domain parametrized by $a$, indicating the shape of the airfoil. The domain $\cD_a$ is discretized by a structured $C$-grid~\cite{steger1980generation}. We introduce a latent space $\cD = [0,1]^2$ and the deformation map $\phi_a\colon \xi \rightarrow x(\xi)$ between  $\cD$ and $\cD_a$. Here the deformation map has an analytical format and maps the uniform grid in $\cD$ to the $C$-grid in $\cD_a$.
Subsequently, we formulate the operator learning problem in the latent space as
\begin{equation}
\label{eq:airfoil-operator-learning}
    \Psi^\dagger_{\mathrm{F2F}}\colon  \phi_a \to p\circ \phi_a\,. 
\end{equation} 
In this equation, the deformation map $\phi_a$ is a function defined in $\cD$, and $p\circ \phi_a$ represents the pressure function defined in $\cD$. 
As  mentioned previously, both lift and drag depend solely on the pressure distribution over the airfoil. 
Hence, we can alternatively formulate the learning problem in the 1D setting by focusing solely on learning the pressure distribution over the airfoil. We construct a one-dimensional latent space $\cD = [0,1]$ and also denote the deformation map as $\phi_a\colon \xi \rightarrow x(\xi)$ mapping from $\cD$ to the shape of the airfoil. The corresponding operator learning problem in this 1D setting has the same form as~\eqref{eq:airfoil-operator-learning}. The ground truth maps $\Psi^\dagger_{\mathrm{F2V}}$, $\Psi^\dagger_{\mathrm{V2F}}$, and $\Psi^\dagger_{\mathrm{V2V}}$ are defined similarly, mapping either the deformation function $\phi_a$ or the $7$-dimensional control node vector input to the pressure function or the QoI \eqref{eqn:qoi_airfoil} itself. We use all four variants of the FNM architectures and a finite-dimensional NN to approximate these maps from data.

For each sample size $N$, five i.i.d. realizations of the models are trained on the relative loss~\eqref{eqn:fnm_loss} for $ 2000$ epochs in $L^2$ output space norm for functions and Euclidean norm for vectors.
All FNM models use $4$ hidden layers, $12$ modes per dimension, and a channel width of $128$. We compare these models to a standard fully-connected NN with $4$ layers and a hidden width of $128$. In the case of  FNM models, we observe that learning in the 1D setting consistently outperforms the 2D setting across all sizes of training data. Therefore, we only present results for the 1D setting. Moreover, this set of architectural hyperparameters with a large channel width of $128$ in general outperforms other hyperparameter settings.

\begin{figure}[tb]
\begin{center}
  \includegraphics[width=0.5\textwidth]{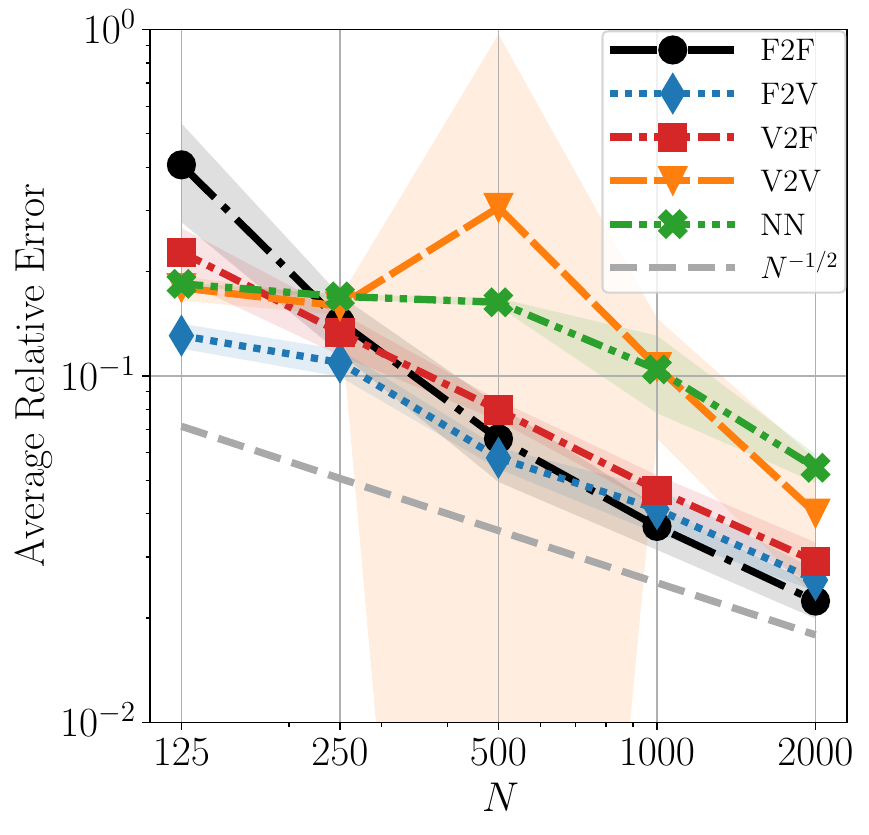}
  \caption{Flow over an airfoil. Comparative analysis of relative test error versus data size for the FNM and NN approaches. The shaded regions denote two standard deviations away from the mean of the test error over $5$ realizations of the batch indices during SGD and model parameter initializations.}\label{fig:NACA-data-error}
  \end{center}
\end{figure}

Figure~\ref{fig:NACA-data-error} contains the results and reveals several trends. As the data volume $N$ increases, all error curves decay at an algebraic rate that is slightly faster than $N^{-1/2}$. This may be due to the small sample sizes considered (under $2000$ data pairs) or, especially since the training data is noise-free, could be evidence of a data-driven ``superconvergence'' effect similar to that observed for QoI computations in adjoint methods for PDEs~\cite{giles2002adjoint}.
Overall, emulating PtO maps by training models with finite-dimensional vectors as both input and output (V2V and NN) is more challenging for this problem than adopting function space variants (F2F, F2V, V2F). The standard finite-dimensional NN performs similarly to V2V.

\subsection{Effective tensor for a multiscale elliptic equation}\label{sec:numerics_elliptic}
This example considers an equation that arises in elasticity in computational solid mechanics and relates the material properties on small scales to the effective property on a larger scale.
Formally, we consider the following linear multiscale elliptic equation on a bounded domain $\cD \subset \R^2$:
\begin{align}\label{eqn:ms_ellip}
\begin{alignedat}{2}
-\nabla \cdot\left(A^{\eps}\nabla u^{\eps}\right) &= g \qin && \cD\,,\\
u^\eps & = 0 \qon && \p\cD\,.  
\end{alignedat}
\end{align}
Here $A^{\eps}$ is given by $x\mapsto A^{\eps}(x) = A\left(\frac{x}{\eps}\right)$ for some $A\colon \T^2 \to \R^{2 \times 2}_{{\rm sym}, \succ 0}$ which is $1$-periodic and positive definite. The source term is $g$. This equation contains fine-scale dependence through $A^{\epsilon}$, which may be computationally expensive to evaluate without taking advantage of periodicity. The method of homogenization allows for elimination of the small scales in this manner and yields the homogenized equation
\begin{align}\label{eqn:homogenized}
\begin{alignedat}{2}
-\nabla \cdot\left(\overline{A}\nabla u \right) &= g \qin && \cD\,,\\
u & = 0 \qon && \p\cD\,, 
\end{alignedat}
\end{align}
where $\overline{A}$ is given by 
\begin{equation}
\label{eq:cell}
    \overline{A} = \int_{\T^2}\left(A(y) + A(y) \nabla \rchi(y)^\tp\right) \dd{y}
\end{equation}
and $\rchi\colon \T^2 \to \R^2$ solves the cell problem
\begin{align}\label{eqn:cellprob}
-\nabla \cdot\bigl( (\nabla \rchi) A\bigr) &= \nabla \cdot A \qin \T^2\,,\\
\int_{\T^2}\rchi(y)\dd{y} &= 0 \qa \text{$\rchi$ is $1$-periodic}\,.
\end{align}
For $0 < \eps \ll 1$, the solution $u^{\eps}$ of \eqref{eqn:ms_ellip} is approximated by the solution $u$ of \eqref{eqn:homogenized}. The error between the solutions converges to zero as $\epsilon \to 0$ \cite{bensoussan2011asymptotic,pavliotis2008multiscale}. 

\begin{figure}[tb]
\centering
  \begin{tikzpicture}[
    node distance=1.55cm,
    block/.style={rectangle, draw, minimum size=1.5cm}
  ]

    \node[block,label = above:Vector $z$] (figure1) {
      \begin{tabular}{l} \vspace{-0.15cm}
        \tiny{$25$ Parameters:} \\ \vspace{-0.15cm}
        \tiny{\hspace{0.5em}$\bullet$ $5$ Voronoi cell} \\ \vspace{-0.15cm}
        \tiny{\hspace{1em}centers} \\ \vspace{-0.15cm}
        \tiny{\hspace{0.5em}$\bullet$ $3$ $A$ DoF per} \\ \vspace{-0.15cm} \tiny{\hspace{1em}cell}
      \end{tabular}
    };

    \node[rectangle, minimum size=1.5cm, right=of figure1, label=above:Function $A$] (figure2) {\includegraphics[width=1.6cm]{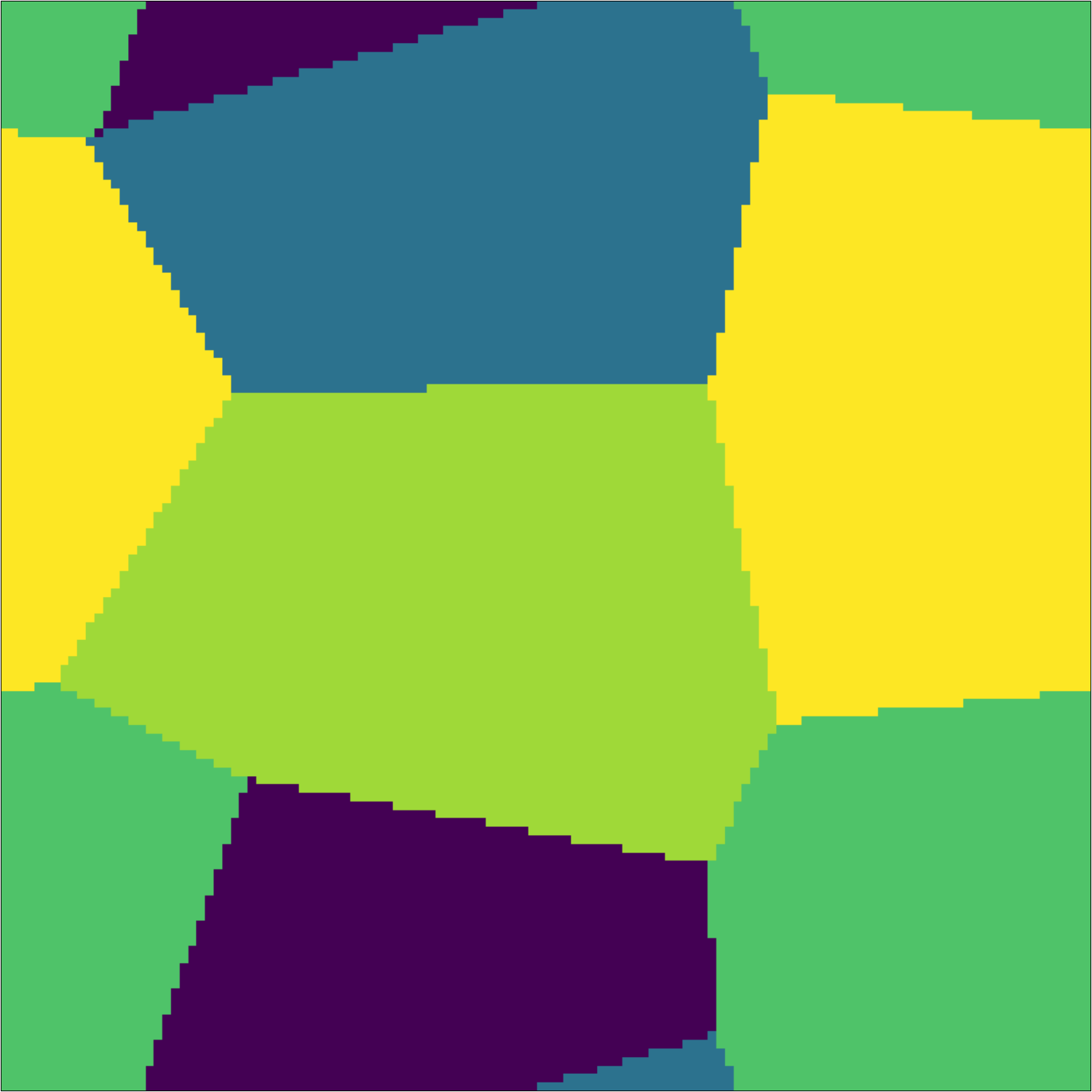}};

    \node[rectangle, minimum size=1.5cm, right=of figure2, label = above:Function $\rchi$] (figure3) {\includegraphics[width=1.6cm]{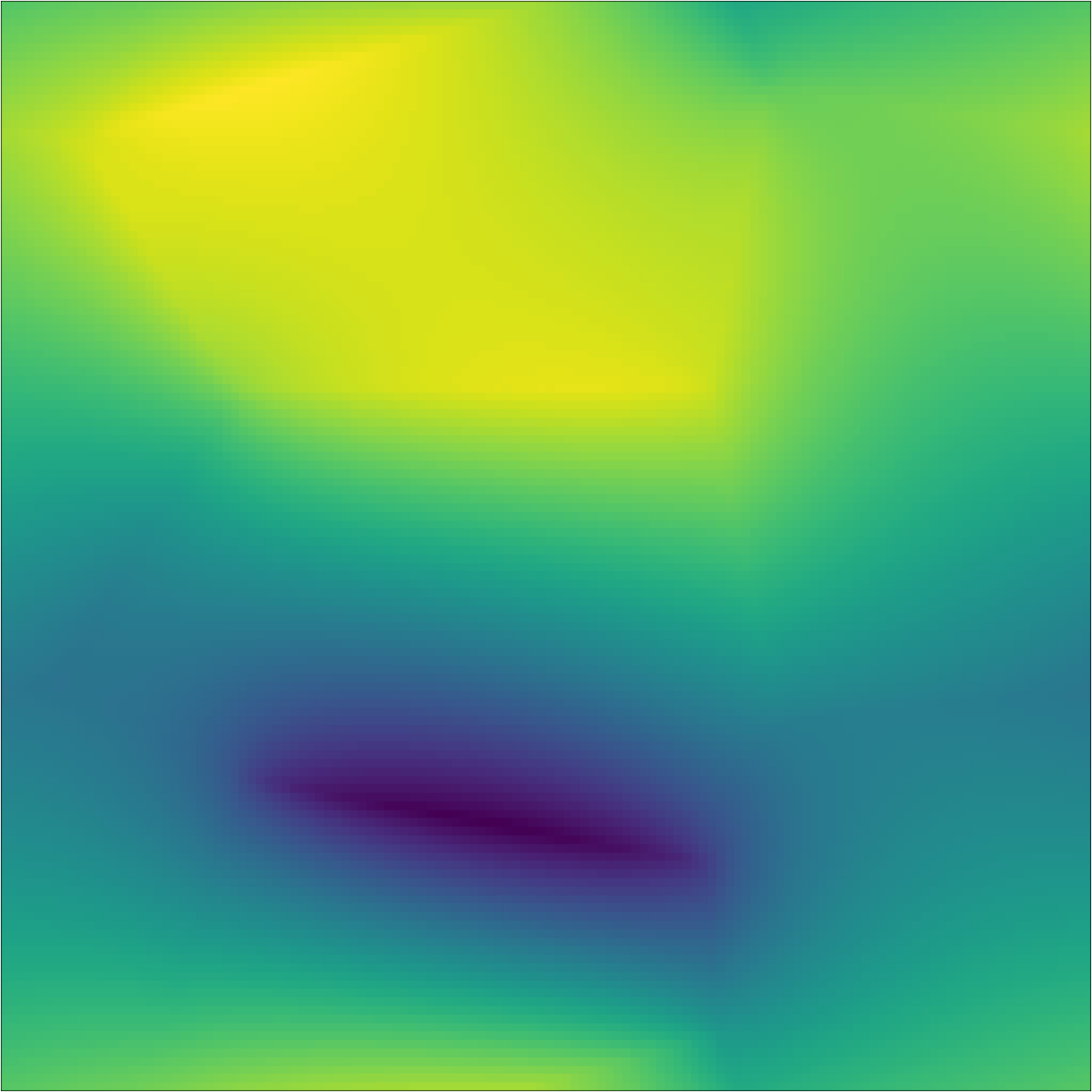}};

    \node[block, right=of figure3,label = above:Vector $\overline{A}$] (figure4) {\tiny{$\overline{A} \in \R^{2\times 2}_{{\rm sym}, \succ 0}$}};
    
    \draw[->, shorten <=5pt, shorten >=5pt, line width=2pt] (figure1.east) -- (figure2.west);
    \draw[->, shorten <=5pt, shorten >=5pt, line width=2pt] (figure2.east) -- node[midway, above=0.1cm] {\tiny{Cell Problem}}(figure3.west);
    \draw[->, shorten <=5pt, shorten >=5pt, line width=2pt] (figure3.east) -- node[midway, above=0.1cm] {\tiny{Eqn. \eqref{eq:cell}}}(figure4.west);

  \end{tikzpicture}
\caption{Diagram showing the homogenization experiment ground truth maps. The function $A$ is parametrized by a finite vector $z$. The quantity of interest $\overline{A}$ \eqref{eq:cell} is computed from both the material function $A$ and the solution $\rchi$ to the cell problem \eqref{eqn:cellprob}. Note that both $A$ and $\rchi$ are functions on the torus $\T^2$.} \label{fig:hom_map}
\end{figure}

The bottleneck step in obtaining the effective tensor $\overline{A}$, which is our QoI, is solving the cell problem \eqref{eqn:cellprob}. Learning the solution map $A \mapsto \rchi$ in \eqref{eqn:cellprob} corresponds to the F2F setting and is explored in detail in \cite{bhattacharya2023learning}. Alternately, one could learn the effective tensor $\overline{A}$ directly using the F2V-FNM architecture to approximate $A \mapsto \overline{A}$. Furthermore, though $A$ is a function from $\T^2$ to $\R^{2 \times 2}_{{\rm sym}, \succ 0}$, in certain cases it may have an exact finite vector parametrization. One example of this case is finite piecewise-constant Voronoi tessellations; $A$ takes constant values on a fixed number of cells, and the cell centers uniquely determine the Voronoi geometry. Denoting these parameters as $z \in \R^{d_u}$ for appropriate $d_u \in \N$, one could also learn the V2F map $z \to \rchi$ or the V2V map $z \to \overline{A}$. In this experiment, we compare the error in the QoI $\overline{A}$ using all four methods. A visualization of the possible maps is shown in Figure \ref{fig:hom_map}. Since our example is in two spatial dimensions, the $5$ Voronoi cell centers have two components each. The symmetry of $A$ yields three degrees of freedom (DoF) on each Voronoi cell. Altogether, this yields $25$ parameters that comprise the finite vector input.

For training, we use the absolute squared loss in \eqref{eqn:fnm_sq_abs_loss} with the $H^1$ norm for function output and Frobenius norm for vector output. Test error is also evaluated using this metric. Data are generated with a finite element solver using the method described in \cite{bhattacharya2023learning}; both $A$ and $\rchi$ are interpolated to a $128\times 128$ grid, and the Voronoi geometry is randomly generated for each sample. The test set size is $500$. Each map uses hyperparameters obtained via a grid search. For F2F, F2V, V2F, and V2V, the number of modes are $18$, $12$, $12$, and $18$, and the channel widths are $64$, $96$, $96$, and $64$, respectively. The fully-connected NN used as a comparison has a channel width of $576$ and $2$ hidden layers. As a consequence, all methods have a fixed model size of modes times width equaling $1152$. 

\begin{figure}[tb]
\begin{center}
  \includegraphics[width=0.5\textwidth]{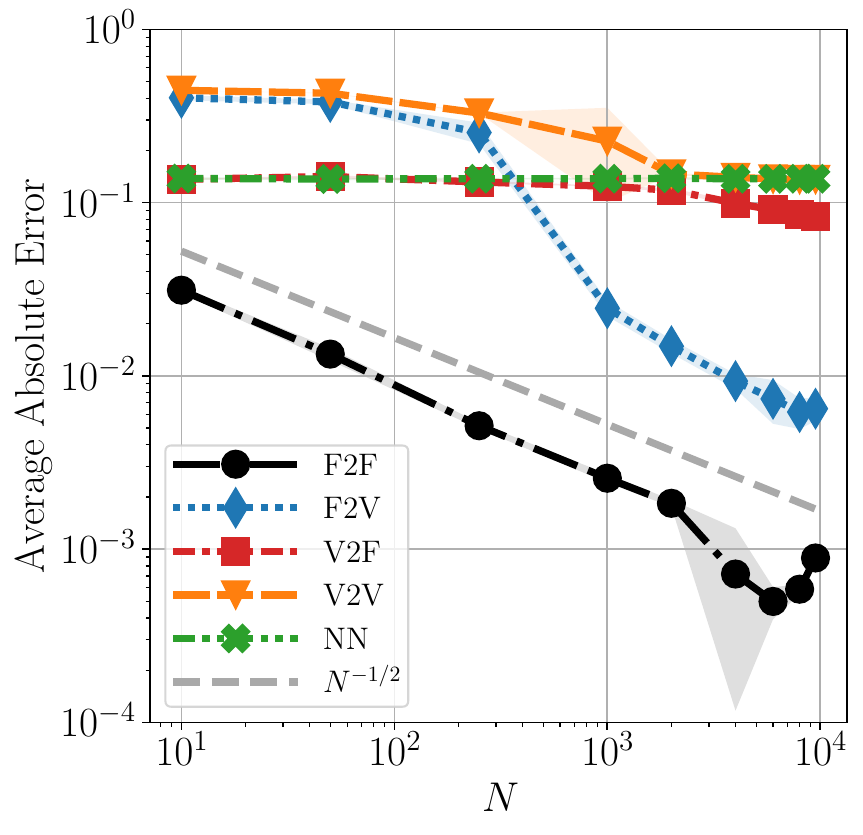}
  \caption{Elliptic homogenization problem. Absolute $\overline{A}$ error in the Frobenius norm versus data size for the FNM and NN architectures. The shaded regions denote two standard deviations away from the mean of the test error over $5$ realizations of batch indices during SGD and model parameter initializations.}\label{fig:homogenization_abs_err}
  \end{center}
\end{figure}

The results for the homogenization experiment in Figure~\ref{fig:homogenization_abs_err} reinforce the theoretical intuition from Section \ref{sec:linear} that learning with vector data results in higher error than learning with function data. Both the F2F and the F2V models approximately track the $N^{-1/2}$ rate, where $N$ is the number of training data. On the other hand, the V2V model and NN model fail to attain this rate and saturate at the same level of roughly $10\%$ error. The V2F map does achieve a slightly faster error decay rate than the V2V architecture for large enough sample sizes $N$, but it does not approach the $N^{-1/2}$ rate obtained by the F2F and F2V models. These rate differences occur when there is a difference in input dimension. On the other hand, for a difference in output dimension, while both the F2F and F2V models reach roughly the same convergence rate, the F2V error remains an order of magnitude higher than the F2F error.  We note that when measuring performance with relative test error instead, the qualitative behavior of Figure~\ref{fig:homogenization_abs_err} remains the same.

\section{Conclusion}\label{sec:conclusion}
This paper proposes the Fourier Neural Mappings (FNMs) framework as an operator learning method for approximating parameter-to-observable (PtO) maps with finite-dimensional vector inputs or outputs, or both. Universal approximation theorems demonstrate that FNMs are well-suited for this task.
Of central interest is the setting in which the PtO map factorizes into a vector-valued quantity of interest (QoI) map composed with a forward operator mapping between two function spaces. For this setting, the paper introduces the end-to-end \ref{item:ee} and full-field \ref{item:ff} learning approaches. The \ref{item:ee} approach directly estimates the PtO map from its own input--output pairs, while the \ref{item:ff} approach estimates the forward map first and then plugs this estimator into the QoI. The main theoretical results of the paper establish sample complexity bounds for Bayesian nonparametric regression of linear functionals with the \ref{item:ee} and \ref{item:ff} methods. The analysis reveals useful insights into how the smoothness of the QoI influences data efficiency. In particular, \ref{item:ff} is superior to \ref{item:ee} for smooth QoIs in this setting. The situation reverses for QoIs of low regularity. A simulation study verifies these findings. Finally, the paper implements the FNM architectures for three nonlinear problems arising from environmental science, aerodynamics, and materials modeling. The numerical results support the linear theory and extend beyond it by revealing the supremacy of function space representations of the input space over analogous finite-dimensional vector parametrizations.

Several avenues for future work remain open. One way to understand the data efficiency of the \ref{item:ee} and \ref{item:ff} learning approaches beyond the specific Bayesian linear estimators that this paper analyzes would involve the development of fundamental lower bounds. Besides a few recent works \cite{adcock2024optimal,jin2022minimax,kovachki2024data,lanthaler2023operator,lanthaler2023curse,li2023towards}, there has been little attention on minimax lower bounds and (statistical) optimality for operator learning. The statistical theory in the present paper fixes an infinite-dimensional input space and studies the influence of the output space (being either one-dimensional or infinite-dimensional). The derivation of similar insights to those in Subsection~\ref{sec:linear_compare} for PtO map learning with vector-to-function estimators could help explain the strong influence of the input space observed in the numerical experiments from Section~\ref{sec:numerics}. The scalar input and infinite-dimensional output case is partially addressed by \cite{reimherr2023optimal}. Furthermore, it remains to be seen whether the theory developed in the present paper for the linear functional setting can extend to certain classes of nonlinear maps and QoIs, perhaps by linearizing the maps in an appropriate manner. For a specific nonlinear map, relevant work in \cite{scarabosio2022deep} studies the approximation of nonsmooth QoIs.
On the practical side, it is of interest to further explore architectural improvements for the various FNMs and in particular whether the latent function space introduced by the vector-to-vector FNM~\ref{enum:V2V} can actually lead to improved performance over standard finite-dimensional neural networks.


\appendix

\section*{Data and code availability}
Links to datasets and all code used to produce the numerical results and figures in this paper are available at
\begin{center}
    \url{https://github.com/nickhnelsen/fourier-neural-mappings}\,.
\end{center}

\section{Additional variants of the sample complexity theorems}\label{appx:thm_full}
This appendix contains two additional theorems that are more general than their counterparts in Section~\ref{sec:linear} of the main text. While the main theorems as presented in Subsection~\ref{sec:linear_main} are sufficient to convey the primary message of that part of this paper, we present the extra theorems here for completeness, as the results may be of independent interest.

First, we state the general convergence theorem for the end-to-end posterior mean estimator $\bar{f}^{(N)}$ from~\eqref{eqn:posterior_ee}. The result is valid for any choice of prior covariance operator eigenvalue decay exponent $p>1/2$. The proof is in Appendix~\ref{appx:proofs_linear_ee}.

\begin{restatable}[end-to-end learning: general convergence rate]{theorem}{thmlineareemain}\label{thm:linear_ee_main}
    Let the input training data distribution $\nu$, the test data distribution $\nu'$, and the Gaussian prior $\normal(0,\Lambda)$ satisfy Assumptions~\ref{ass:data_end_to_end} and \ref{ass:data_input_kl_expand}. Let the ground truth linear functional $\fd\in\cH^s$ satisfy Assumption~\ref{ass:truth_regularity}. Let $\al$, $\al'$, and $p$ be as in \eqref{eqn:linear_ass_data_eig} and \eqref{eqn:linear_ass_prior_eig}.
    Then there exists $c\in (0,1/4)$ and $N_0\geq 1$ such that for any $N\geq N_0$, the mean $\bar{f}^{(N)}$ of the Gaussian posterior distribution~\eqref{eqn:posterior_ee} arising from the $N$ pairs of observed training data $(U,Y)$ in \eqref{eqn:model_data_qoi_cat_dagger} satisfies the error bound
    \begin{align}\label{eqn:linear_ee_main}
    \E^{Y\condbar U}\E^{u'\sim\nu'}\abs[\big]{\ip{\fd}{u'}-\ip{\bar{f}^{(N)}}{u'}}^2 \lesssim
        \bigl(1 + \norm{\fd}_{\cH^s}^2 \bigr)\ep_N^2
    \end{align}
    with probability at least $1-2\exp(-c N^{\min(1,\frac{\al+s-1}{\al+p})})$ over $U\sim\nu^{\otimes N}$, where
    \begin{align}\label{eqn:linear_ee_main_rate}
        \ep_N^2\defeq
        \begin{cases}
        N^{-\min\bigl(\frac{\al'+s}{\al+p},\, 1-\frac{\al+1/2-\al'}{\al+p}\bigr)}\, , & \text{if }\, \al'<\al+1/2\,,\\
        \max\bigl(N^{-\bigl(\frac{\al'+s}{\al+p}\bigr)},\, N^{-1}\log 2N\bigr)\, , &\text{if }\,\al'=\al+1/2\,,\\
        N^{-\min\bigl(\frac{\al'+s}{\al+p},\, 1\bigr)}\, , &\text{if }\,\al'>\al+1/2\,.
        \end{cases}
    \end{align}
    The constants $c$, $N_0$, and the implied constant in \eqref{eqn:linear_ee_main} do not depend on $N$ or $\fd$.
\end{restatable}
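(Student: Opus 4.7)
The plan is to prove Theorem~\ref{thm:linear_ee_main} via a standard bias–variance decomposition, combined with a high-probability concentration argument that reduces the problem to a diagonal one in the common eigenbasis $\{\varphi_j\}_{j\in\N}$. First, substitute the data model~\eqref{eqn:model_data_qoi_cat_dagger} into $\bar{f}^{(N)}=A_N Y$ and use the resolvent identity $\Id_H - A_N S_N = \Lambda^{(N)}\Lambda^{-1}$ (with $\Lambda^{-1}$ understood on the range of $\Lambda^{1/2}$) to write
\[
\fd - \bar{f}^{(N)} = \Lambda^{(N)}\Lambda^{-1}\fd - \gamma A_N \Xi.
\]
Conditioning on $U$ and integrating out the noise then splits~\eqref{eqn:linear_ee_main} into a bias term $\norm{(\Sigma')^{1/2}\Lambda^{(N)}\Lambda^{-1}\fd}^2$ and a variance term $N\gamma^{-2}\tr\bigl(\widehat{\Sigma}\Lambda^{(N)}\Sigma'\Lambda^{(N)}\bigr)$, both of which depend on $U$ only through the empirical covariance $\widehat\Sigma$ from~\eqref{eqn:sigmahat}.

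The next step is to replace $\widehat\Sigma$ by $\Sigma$ on a good event. I would truncate at an effective dimension $K_N\asymp N^{1/(2(\al+p))}$ and introduce the event
\[
    \cG_N \defeq \bigl\{\norm{P_{K_N}\Lambda^{1/2}(\widehat\Sigma-\Sigma)\Lambda^{1/2}P_{K_N}}_{\cL(H)} \leq \tfrac{1}{2}\tau_N\bigr\}, \qquad \tau_N \defeq \gamma^2/N,
\]
where $P_{K_N}$ projects onto $\Span\{\varphi_j\}_{j\leq K_N}$. Because Assumption~\ref{ass:data_input_kl_expand} provides independent strongly-subgaussian KL coordinates, a Hanson--Wright-type inequality yields $\P(\cG_N^\comp)\leq 2\exp(-cN^{\min(1,(\al+s-1)/(\al+p))})$ once $K_N$ is chosen; the tail $j>K_N$ is handled by a deterministic bound using $\fd\in\cH^s$ and the rapid decay of $\sigma_j\lambda_j$, which is exactly where the constraint $\al+s>1$ from Assumption~\ref{ass:truth_regularity} enters (ensuring the tail is summable with the claimed rate).

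On $\cG_N$, the simultaneous diagonalizability~\ref{item:ass_ee_diag} lets me sandwich $\Lambda^{(N)}$ between diagonal operators with eigenvalues $\asymp \tau_N\lambda_j/(\sigma_j\lambda_j+\tau_N)$ by an operator monotonicity argument. Both bias and variance then collapse to explicit scalar series over $j\in\N$: the bias becomes
\[
\sum_{j=1}^\infty \sigma_j'\,\frac{\tau_N^2}{(\sigma_j\lambda_j+\tau_N)^2}\,\abs{\fd_j}^2
\]
and the variance becomes $\tau_N\sum_{j=1}^\infty \sigma_j\sigma_j'\lambda_j^2/(\sigma_j\lambda_j+\tau_N)^2$. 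Splitting each series at the threshold $j^\star \asymp \tau_N^{-1/(2(\al+p))}$ and using the power laws from \ref{item:ass_ee_data}--\ref{item:ass_ee_prior} together with $\norm{\fd}_{\cH^s}^2\defeq \sum j^{2s}\abs{\fd_j}^2<\infty$ gives the bias rate $\tau_N^{(\al'+s)/(\al+p)}(1+\norm{\fd}_{\cH^s}^2)$ and the three-case variance rate appearing in~\eqref{eqn:linear_ee_main_rate}, whose cases correspond to whether the summand $j^{2\al-2\al'}$ is increasing, critical (producing the $\log 2N$ factor when $\al'=\al+1/2$), or summable.

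The main obstacle is Step~2: extracting the sharp probability exponent $\min\bigl(1,(\al+s-1)/(\al+p)\bigr)$ requires the truncation level $K_N$ to be tuned so that the Hanson--Wright deviation bound and the deterministic tail estimate from $\fd\in\cH^s$ balance simultaneously. This is where the independence of the KL coefficients really matters, since the Hanson--Wright bound needs the strong-subgaussian control~\eqref{eqn:data_input_subg_uniform_bound} at the level of bilinear forms, not just marginals. All other steps are essentially algebraic manipulation and series estimation; the final assertion follows by combining the bias and variance bounds with a union bound over the complement of $\cG_N$ and absorbing the low-probability contribution into the rate by choosing $N_0$ large enough.
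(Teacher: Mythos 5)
Your high-level plan — a bias–variance decomposition conditioned on $U$, followed by a high-probability reduction to the common eigenbasis — matches the paper's strategy (Lemma~\ref{lem:app_bias_var_decomp}, Propositions~\ref{prop:var}~and~\ref{prop:bias}). However, there are three genuine gaps that make the proposal unworkable as written.

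First, the good event $\cG_N$ is asking for far too much. You require $\norm{P_{K_N}\Lambda^{1/2}(\widehat\Sigma-\Sigma)\Lambda^{1/2}P_{K_N}}_{\cL(H)}\leq\tfrac12\tau_N$ with $\tau_N=\gamma^2/N$, but the entries $\lambda_j\sigma_j\frac1N\sum_n(z_{nj}^2-1)$ of $\Lambda^{1/2}(\widehat\Sigma-\Sigma)\Lambda^{1/2}$ concentrate at scale $\sigma_j\lambda_j/\sqrt N$, so even the $(1,1)$ entry is of order $\sigma_1\lambda_1 N^{-1/2}$, which dominates $\gamma^2 N^{-1}$ for large $N$. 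Thus $\P(\cG_N)\to0$, not $\to1$. The fix — as the paper implements in Lemma~\ref{lem:good_set} — is to bound the \emph{normalized} error $\norm{\cC_\mu^{-1/2}(\widehat\cC-\cC)\cC_\mu^{-1/2}}_{\cL(H)}\leq1/2$; the preconditioning by $\cC_\mu^{-1/2}$ inflates each mode by a factor up to $\mu^{-1/2}$, rescaling the fluctuations to $O(1)$ in operator norm and yielding the desired sandwich $\widehat\cC_\mu^{-1}\leq2\cC_\mu^{-1}$ without ever needing the raw empirical covariance to be $O(1/N)$-close to $\Sigma$.

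Second, the bias does not collapse to the diagonal series you claim. Even granting an operator sandwich $D_1\psdle\Lambda^{(N)}\psdle D_2$ for diagonal $D_1,D_2$, the bias $\norm{(\Sigma')^{1/2}\Lambda^{(N)}\Lambda^{-1}\fd}^2=\ip{\Lambda^{-1}\fd}{\Lambda^{(N)}\Sigma'\Lambda^{(N)}\Lambda^{-1}\fd}$ is quadratic in $\Lambda^{(N)}$, and the map $A\mapsto A\Sigma'A$ is not operator monotone — so the sandwich does not pass through. Because $\widehat\cC$ is not diagonal, $\Lambda^{(N)}$ has off-diagonal mass that the Cauchy--Schwarz–plus–monotonicity route cannot control. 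The paper's resolution (Lemma~\ref{lem:bias_series} and the decomposition in~\eqref{eqn:bias_split}) is to expand $\widehat M=(\Id-\widehat T)^{-1}=\Id+\widehat M\widehat T$ inside the bias, obtain the familiar diagonal term $\overline{B}_N$, and control the residual $\widehat B_N$ (which captures the off-diagonal effects) by a \emph{separate} subexponential concentration argument for the random series $\cI_N=\sum_j s_j\norm{\widehat T\varphi_j}$ (Lemmas~\ref{lem:app_subexp_emp_sum}--\ref{lem:app_series_uniform_bound}).

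Third, the probability exponent $\min(1,\frac{\al+s-1}{\al+p})$ cannot arise from a Hanson--Wright bound on the empirical covariance. That event is $\fd$-independent and concentrates at rate $1-e^{-cN}$ (see Lemma~\ref{lem:good_set}), so its exponent carries no $s$-dependence. The $s$-dependent exponent in the theorem comes entirely from the bias-residual concentration: the weights $s_j$ in $\cI_N$ are proportional to $\abs{\fd_j}$, so the $\ell^1$-mass $\norm{w^{(N)}}_{\ell^1}$ governing the tail bound depends on $\norm{\fd}_{\cH^s}$, and the constraint $\al+s>1$ (Assumption~\ref{ass:truth_regularity}) is precisely what makes $\E\cI_N$ decay and the tail exponent positive (Lemma~\ref{lem:app_series_uniform_bound}). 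Conflating the two events also hides why the final probability is $1-2\exp(\cdots)$ rather than $1-\exp(\cdots)$: it is a union bound over the normalized-covariance event and the bias-residual event, not a single Hanson--Wright deviation.
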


The second extra theorem develops convergence rates for the full-field learning plug-in estimator under a Sobolev regularity condition on the underlying QoI.

\begin{restatable}[full-field learning: convergence rate for Sobolev QoI]{theorem}{thmlinearffmain}\label{thm:linear_ff_main}
    Let the input training data distribution $\nu$, the test data distribution $\nu'$, the true forward map $\Ld$, and the QoI $\qd$ satisfy Assumption~\ref{ass:ff}, but instead of \ref{item:ass_ff_qoi}, suppose that $\{\qd(\varphi_j)\}_{j\in\N} \in\cH^r$ for some $r\in\R$. Let $\al$ and $\al'$ be as in \eqref{eqn:linear_ass_data_eig} and $\beta$ be as in \ref{item:ass_ff_L}. If $\min(\alpha, \alpha' + r) + \beta > 0$, then there exist constants $c>0$ and $C>0$ such that for all sufficiently large $N$, the plug-in estimator $\qd\circ \bar{L}^{(N)}$ in \eqref{eqn:linear_ff_estimator} based on the Gaussian posterior distribution~\eqref{eqn:posterior_ff} arising from the $N$ pairs of observed full-field training data $(U,\Upsilon)$ in \eqref{eqn:model_data_ff} satisfies the error bound
    \begin{equation}\label{eqn:thmerror}
        \E^{\Upsilon\condbar U}\E^{u'\sim{\nu'}}\abs[\big]{\qd(\Ld u') - \qd(\bar{L}^{(N)}u')}^2 \lesssim \ep_N^2
    \end{equation}
    with probability at least $1-Ce^{-cN}$ over $U\sim\nu^{\otimes N}$, where
    \begin{equation}\label{eqn:thmcases}
        \ep_N^2\defeq 
        \begin{cases}
            N^{-\bigl(\frac{2\al'+2\beta + 2r}{1+2\al + 2\beta}\bigr)}\, , & \textit{if }\, \al'+r<\al+1/2\,,\\
            N^{-1}\log N\, , & \textit{if }\, \al'+r=\al+1/2\,,\\
            N^{-1}\, , & \textit{if }\, \al'+r>\al+1/2\,.
        \end{cases}
    \end{equation}
    The constants $c$, $C$, and the implied constant in \eqref{eqn:thmerror} do not depend on $N$.
\end{restatable}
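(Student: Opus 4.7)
The plan is to reduce Theorem~\ref{thm:linear_ff_main} to the full-field operator learning error bound already established in \cite[Theorem 3.9]{de2023convergence} by absorbing the QoI $\qd$ into the test covariance. Because the true forward operator $\Ld$ in \eqref{eqn:linear_ff_Ldiag}, its posterior mean estimator $\bar{L}^{(N)}$, and the test covariance $\Cov(\nu')=\Sigma'$ all share the common eigenbasis $\{\varphi_j\}_{j\in\N}$, a mode-wise computation gives the exact identity
\begin{equation*}
\E^{u'\sim\nu'}\abs[\big]{\qd(\Ld u')-\qd(\bar{L}^{(N)}u')}^2 = \sum_{j=1}^{\infty}\tilde\sigma_j'\,\bigl(\ld_j-\bar{l}_j^{(N)}\bigr)^2\,,\qw \tilde\sigma_j'\defeq \sigma_j'\,\abs{\qd(\varphi_j)}^2\,.
\end{equation*}
This recasts the quantity of interest as a weighted mean-squared eigenvalue error for $\bar{L}^{(N)}$ with respect to the ``virtual'' test spectrum $\{\tilde\sigma_j'\}$. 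I would then convert the Sobolev assumption $\{\qd(\varphi_j)\}\in\cH^r$ into the pointwise bound $\abs{\qd(\varphi_j)}^2\leq \norm{\qd}_{\cH^r}^2 j^{-2r}$, which combined with $\sigma_j'\lesssim j^{-2\al'}$ from Assumption~\ref{item:ass_ee_data} yields $\tilde\sigma_j'\lesssim j^{-2(\al'+r)}$ asymptotically.

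The analysis therefore reduces to the full-field setting of \cite{de2023convergence} with the test exponent $\al'$ replaced by $\al'+r$. Crucially, as the footnote attached to Assumption~\ref{item:ass_ff_data} records, the proof of \cite[Theorem 3.9]{de2023convergence} only uses $\sigma_j'$ through an asymptotic \emph{upper} bound; consequently the reduction works for possibly negative effective exponents (which is relevant here since $r$ may be negative), provided only that the well-posedness condition $\min(\al,\al'+r)+\beta>0$ holds, and this is exactly the hypothesis of the theorem. After this reduction I would invoke the mode-wise bias--variance decomposition
\begin{equation*}
\E^{\Upsilon\condbar U}(\bar{l}_j^{(N)}-\ld_j)^2 = \frac{\mu_j^{-2}\ld_j^2}{(N\hat\sigma_j+\mu_j^{-1})^2} + \frac{N\hat\sigma_j}{(N\hat\sigma_j+\mu_j^{-1})^2}\,,
\end{equation*}
where $\hat\sigma_j=N^{-1}\sum_n\ip{u_n}{\varphi_j}^2$, and replace $\hat\sigma_j$ by $\sigma_j$ uniformly up to the transition frequency $j^\star\asymp N^{1/(1+2\al+2\beta)}$ using the Hanson--Wright-type two-sided concentration inequality that Assumption~\ref{ass:data_input_kl_expand} supports; this step contributes the probability deficit $Ce^{-cN}$. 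The bias contribution is then handled by $\ld\in\cH^\beta$, while the variance contribution, weighted by $\tilde\sigma_j'$, is separated into the three regimes delineated in \eqref{eqn:thmcases}.

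The main obstacle is that every step of the argument of \cite[Section 3.4]{de2023convergence} must be reverified with the additional weighting factor $\abs{\qd(\varphi_j)}^2$. This is essentially free for the bias estimate and for the variance in the fast ($\al'+r>\al+1/2$) and critical ($\al'+r=\al+1/2$) regimes, where the weighted series $\sum_j j^{-2(\al'+r)}(N\sigma_j)^{-1}\mathbf{1}_{j\leq j^\star}$ is uniformly summable or at worst logarithmically divergent. The slow regime $\al'+r<\al+1/2$ is the delicate case: the optimal $N^{-(2\al'+2\beta+2r)/(1+2\al+2\beta)}$ rate is obtained by exactly balancing the weighted bias and variance at $j^\star$. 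The reason the exponent in \eqref{eqn:thmcases} differs from its power-law counterpart \eqref{eqn:thmcases_power} by a shift $r\mapsto r-1/2$ is traceable precisely to the loss of a factor $j^{-1}$ incurred when passing from the pointwise power-law decay $\abs{\qd(\varphi_j)}^2\lesssim j^{-2r-1}$ to the envelope bound $\abs{\qd(\varphi_j)}^2\lesssim j^{-2r}$ derivable from $\{\qd(\varphi_j)\}\in\cH^r$, so the proof is expected to match the asserted rates term-for-term.
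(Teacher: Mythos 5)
Your proposal is correct and follows essentially the same route as the paper's proof: you derive the mode-wise identity (the paper's Lemma~\ref{lem:error_qoi_def}), extract the envelope bound $\abs{\qd(\varphi_j)}^2\lesssim\norm{\qd}_{\cH^r}^2 j^{-2r}$ from the $\cH^r$ membership, fold it into an effective test spectrum with exponent $\al'+r$, and then invoke \cite[Theorem~3.9]{de2023convergence} with the footnoted observation that only an asymptotic upper bound on $\sigma_j'$ is required. The only difference is that you additionally sketch the internal bias--variance split of the cited theorem, which the paper leaves as a black box.
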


The proof is found in Appendix~\ref{appx:proofs_linear_ff}. The convergence rate~\eqref{eqn:thmcases} should be compared to the rate \eqref{eqn:thmcases_power} from Theorem~\ref{thm:linear_ff_main_power}. Theorem~\ref{thm:linear_ff_main} covers a larger class of QoIs than does Theorem~\ref{thm:linear_ff_main_power} due to the generality of the Sobolev regularity condition. Nonetheless, Theorem~\ref{thm:linear_ff_main} is not sharp when $\qd$ decays asymptotically like a power law, as discussed in Subsection~\ref{sec:linear_main_diss}. It is straightforward to derive comparison results like Corollary~\ref{cor:linear_compare} for the Sobolev QoI setting here, both for in-distribution and out-of-distribution test errors. We refrain from doing so for the sake of brevity.

\section{Proofs for Section~\ref{sec:ua}:~\nameref*{sec:ua}}\label{appx:proofs_ua}
This appendix begins with some universal approximation results for neural operators before establishing similar universal approximation results for neural mappings (i.e., neural functionals and decoders).

\subsection{Supporting approximation results for neural operators}\label{appx:proofs_ua_fno}
We need the following two lemmas that are simple generalizations of the universal approximation theorem for FNOs \cite[Theorem 9, p. 9]{kovachki2021universal} to the setting where only one of the input or output domain is the torus. These results may be extracted from the proof of \cite[Theorem 9, p. 9]{kovachki2021universal}.

\begin{lemma}[universal approximation for FNO: periodic output domain]\label{lem:ua_fno_periodic_output} 
    Let Assumption~\ref{ass:activation} hold. Let $s\geq 0$ and $s'\geq 0$, $\cD\subset \R^d$ be an open Lipschitz domain such that $\overline{\cD}\subset (0,1)^d$, and $\cU=H^s(\cD;\R^{d_u})$. Let $\cY=H^{s'}(\T^d;\R^{d_y})$ and $\cG\colon \cU\to \cY$ be a continuous operator. There exists a continuous linear extension operator $E\colon \cU\to H^s(\T^d;\R^{d_u})$ such that $(Eu)|_{\cD}=u$ for all $u\in\cU$. Moreover, let $K\subset \cU$ be compact in $\cU$. For any $\ep>0$, there exists a Fourier Neural Operator $\Psi\colon H^s(\T^d;\R^{d_u}) \to\cY$ of the form \eqref{eqn:fno} (with $\cE=\Id$, $\cR=\Id$, and items~\ref{item:channel_const_dim}~and~\ref{item:lift_project_affine} both holding true) such that
    \begin{align}
    \sup_{u\in K}\norm{\cG(u)-\Psi(Eu)}_{\cY}<\ep\,.
    \end{align}
\end{lemma}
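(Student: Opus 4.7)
The plan is to reduce the statement to the torus-to-torus universal approximation theorem for the FNO (cited as Theorem 9 of \cite{kovachki2021universal}) by first constructing a suitable continuous linear extension operator $E$ that takes functions on $\cD$ to periodic functions on $\T^d$, and then approximating the composed operator on the image of $E$.

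First I would build $E$. Since $\overline{\cD}\subset (0,1)^d$ is Lipschitz, a standard Stein (or Calder\'on--Stein) extension theorem yields a bounded linear operator $E_0\colon H^s(\cD;\R^{d_u})\to H^s(\R^d;\R^{d_u})$ with $(E_0u)|_\cD=u$. Choose a smooth cutoff $\chi\in C_c^\infty((0,1)^d;\R)$ satisfying $\chi\equiv 1$ on an open neighborhood of $\overline{\cD}$, and define $Eu\defeq \chi\cdot E_0 u$. Because $\chi$ is compactly supported in $(0,1)^d$, the product extends by zero to a function on all of $\R^d$ that descends, via periodic identification, to an element of $H^s(\T^d;\R^{d_u})$. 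Pointwise multiplication by a $C^\infty_c$ function is bounded on $H^s$ for all $s\geq 0$, so $E$ is continuous and linear, and by construction $(Eu)|_\cD=u$.

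Next, set $\tilde K\defeq E(K)\subset H^s(\T^d;\R^{d_u})$; this set is compact by continuity of $E$ applied to the compact $K$. Define the auxiliary operator $\tilde\cG\colon \tilde K\to \cY$ by $\tilde\cG(\tilde u)\defeq \cG(\tilde u|_\cD)$. This is continuous on $\tilde K$ since it is the composition of the bounded linear restriction $H^s(\T^d;\R^{d_u})\to \cU$ with the given continuous operator $\cG$. Now apply \cite[Theorem 9]{kovachki2021universal} to $\tilde\cG$ on $\tilde K$: under Assumption~\ref{ass:activation}, there exists a Fourier Neural Operator $\Psi\colon H^s(\T^d;\R^{d_u})\to \cY$ of the form \eqref{eqn:fno} with $\cE=\Id$, $\cR=\Id$, and constant channel width as required by items \ref{item:channel_const_dim}--\ref{item:lift_project_affine}, satisfying
\begin{equation*}
\sup_{\tilde u\in\tilde K}\norm[\big]{\tilde\cG(\tilde u)-\Psi(\tilde u)}_\cY<\ep.
\end{equation*}
For any $u\in K$ we have $Eu\in \tilde K$ and $(Eu)|_\cD=u$, so $\tilde\cG(Eu)=\cG(u)$, and the desired bound $\sup_{u\in K}\norm{\cG(u)-\Psi(Eu)}_\cY<\ep$ follows.

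The main obstacle is primarily bookkeeping rather than substance: namely, verifying that the chosen extension $E$ really lands in $H^s(\T^d;\R^{d_u})$ and is continuous for the full range $s\geq 0$ (including non-integer Sobolev scales), which requires invoking the Stein extension theorem together with the continuity of multiplication by $C_c^\infty$ cutoffs on $H^s$. Once $E$ is available, compactness of $\tilde K$, continuity of $\tilde\cG$, and a direct appeal to the existing FNO universal approximation theorem close the argument.
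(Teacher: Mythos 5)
Your proof is correct and takes the route the paper intends: construct the extension operator $E$ via a Stein extension composed with a smooth cutoff compactly supported in $(0,1)^d$, then reduce to the torus-to-torus universal approximation theorem for FNOs by noting that $\tilde{\cG}=\cG\circ R$ (with $R$ the restriction to $\cD$) is a continuous operator on $H^s(\T^d;\R^{d_u})$ and applying the cited theorem on the compact image $\tilde{K}=E(K)$. The paper itself states this lemma without proof, referring the reader to the proof of the cited torus-domain FNO universal approximation theorem, so your argument supplies exactly the missing construction of $E$ and the black-box reduction step.
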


The next lemma is analogous to the previous one and deals with periodic input domains.
\begin{lemma}[universal approximation for FNO: periodic input domain]\label{lem:ua_fno_periodic_input}
    Let Assumption~\ref{ass:activation} hold. Let $s\geq 0$ and $s'\geq 0$, $\cD\subset \R^d$ be an open Lipschitz domain such that $\overline{\cD}\subset (0,1)^d$, and $\cU=H^s(\T^d;\R^{d_u})$. Let $\cY=H^{s'}(\cD;\R^{d_y})$ and $\cG\colon \cU\to \cY$ be a continuous operator. Denote by $R\in\cL(H^{s'}(\T^d;\R^{d_y});\cY)$ the restriction operator $y\mapsto y|_{\cD}$. Let $K\subset \cU$ be compact in $\cU$. For any $\ep>0$, there exists a Fourier Neural Operator $\Psi\colon \cU \to H^{s'}(\T^d;\R^{d_y})$ of the form \eqref{eqn:fno} (with $\cE=\Id$, $\cR=\Id$, and items~\ref{item:channel_const_dim}~and~\ref{item:lift_project_affine} both holding true) such that
    \begin{align}
    \sup_{u\in K}\norm{\cG(u)-R\Psi(u)}_{\cY}<\ep\,.
    \end{align}
\end{lemma}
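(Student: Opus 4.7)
\medskip

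\noindent\textbf{Proof proposal.} The plan is to reduce this statement to the standard FNO universal approximation theorem on the torus (essentially the case of Lemma~\ref{lem:ua_fno_periodic_output} when $\cD=\T^d$), by constructing a continuous extension of the target operator $\cG$ so that its outputs are functions on all of $\T^d$ rather than only on $\cD$.

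First, I would build a bounded linear extension operator $E\colon H^{s'}(\cD;\R^{d_y})\to H^{s'}(\T^d;\R^{d_y})$ satisfying $R\circ E=\Id_{\cY}$. Since $\overline{\cD}\subset(0,1)^d$ is Lipschitz, this can be done in three standard moves: (i) apply a Stein-type Sobolev extension from $\cD$ to $\R^d$; (ii) multiply by a fixed cutoff $\chi\in C_c^\infty((0,1)^d;\R)$ with $\chi\equiv 1$ on a neighborhood of $\overline{\cD}$; and (iii) identify the resulting compactly supported function on $(0,1)^d$ with a function on $\T^d$ by zero extension. Each step is bounded in $H^{s'}$-norm, and by construction $(Eh)|_\cD=h$ for every $h\in\cY$. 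An analogous extension operator on the input side was already used in Lemma~\ref{lem:ua_fno_periodic_output}.

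Next, define the continuous map $\tilde\cG\defeq E\circ\cG\colon\cU\to H^{s'}(\T^d;\R^{d_y})$. Since $K\subset\cU$ is compact and both input and output spaces of $\tilde\cG$ are now Sobolev spaces on $\T^d$, I would invoke the standard FNO universal approximation theorem (\cite[Theorem 9]{kovachki2021universal}) in the fully periodic setting to obtain, for any $\ep'>0$, a Fourier Neural Operator $\Psi\colon\cU\to H^{s'}(\T^d;\R^{d_y})$ of the form \eqref{eqn:fno} with $\cE=\Id$, $\cR=\Id$ and under reductions \ref{item:channel_const_dim}--\ref{item:lift_project_affine} such that
\begin{equation*}
\sup_{u\in K}\norm[\big]{\tilde\cG(u)-\Psi(u)}_{H^{s'}(\T^d;\R^{d_y})}<\ep'.
\end{equation*}
Finally, applying the bounded linear restriction $R$ with operator norm $C_R\geq 1$ and using $R\tilde\cG=R\circ E\circ\cG=\cG$ yields
\begin{equation*}
\sup_{u\in K}\norm[\big]{\cG(u)-R\Psi(u)}_{\cY}=\sup_{u\in K}\norm[\big]{R(\tilde\cG(u)-\Psi(u))}_{\cY}\leq C_R\,\ep'.
\end{equation*}
Choosing $\ep'=\ep/C_R$ completes the argument.

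The only nontrivial step is the construction of the extension operator $E$; once $R\circ E=\Id_\cY$ is secured, the FNO approximation of $\tilde\cG$ and the final application of $R$ are immediate. I expect no additional obstacle beyond classical Sobolev extension theory for Lipschitz domains, which is precisely the same ingredient already invoked in Lemma~\ref{lem:ua_fno_periodic_output}.
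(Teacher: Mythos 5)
Your proof is correct and follows the same extension--approximate--restrict strategy that the paper implicitly invokes by pointing to the proof of \cite[Theorem 9]{kovachki2021universal}: the paper offers no separate argument for this lemma, and the bounded Sobolev extension $E$ with $R\circ E=\Id_{\cY}$ followed by torus-to-torus FNO universality and the boundedness of $R$ is precisely the mechanism at work there.
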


\subsection{Universal approximation proofs}\label{appx:proofs_ua_fnm}
The remainder of this appendix provides proofs of the main universal approximation theorems found in Section~\ref{sec:ua} for the proposed FNM family of architectures. We begin with the function-to-vector Fourier Neural Functionals (FNF) architecture.

\thmuafnf*
\begin{proof}
Let $\cY\defeq L^2(\T^d;\R^{d_y})$ and $\onebm\colon x\mapsto 1$ be the constant function on $\T^d$. We first convert the function-to-vector mapping $\Psi^\dagger$ to the function-to-function operator $ \cG^\dagger\colon \cU\to\cY$ defined by $u\mapsto \Psi^\dagger(u)\onebm $. We then establish the existence of a FNO that approximates $\cG^\dagger$. Finally, from this FNO we construct a FNF that approximates $\Psi^\dagger$.  To this end, fix $\ep'>0$. By the continuity of $\Psi^\dagger$, there exists ${\delta}>0$ such that $\norm{u_1-u_2}_\cU<{\delta}$ implies $\norm{\Psi^\dagger(u_1)-\Psi^\dagger(u_2)}_{\R^{d_y}}<\ep'$. Then
\begin{align*}
    \norm{\cG^\dagger(u_1)-\cG^\dagger(u_2)}_{\cY}^2&=\int_{\Td}\norm{\Psi^\dagger(u_1)\onebm(x) - \Psi^\dagger(u_2)\onebm(x)}^2_{\R^{d_y}}\dd{x}\\
    &=\abs{\T^d} \norm{\Psi^\dagger(u_1) - \Psi^\dagger(u_2)}^2_{\R^{d_y}}\\
    &<(\ep')^2\,.
\end{align*}
We used the fact that $\abs{\T^d}=1$ for the identification $\T^d\equiv (0,1)^d_{\mathrm{per}}$. This shows the continuity of $\cG^\dagger\colon \cU\to\cY$.
By the universal approximation theorem for FNOs (Lemma~\ref{lem:ua_fno_periodic_output}, applied with $s=s$, $s'=0$, $d_y=d_y$, and $\cG=\cG^\dagger$), there exists a continuous linear operator $E\colon \cU\to H^s(\T^d;\R^{d_u})$ and a FNO $\cG\colon H^s(\T^d;\R^{d_u})\to\cY$ of the form \eqref{eqn:fno} (with $\cR=\Id$, $\cE=\Id$, and items~\ref{item:channel_const_dim}~and~\ref{item:lift_project_affine} both holding true) such that 
\begin{align*}
    \sup_{u\in K}\norm{{\cG}^\dagger(u)-\cG(Eu)}_{\cY}<\ep\,.
\end{align*}
To complete the proof, we construct a FNF by appending a specific linear layer to the output of $\cG\circ E$. To this end, let $\overline{P}\colon \cY\to\R^{d_y}$ be the averaging operator
\begin{align*}
    u\mapsto \overline{P}u\defeq \int_{\T^d}u(x)\dd{x}\,.
\end{align*}
Clearly $\overline{P}$ is linear. It is continuous on $\cY$ because
\begin{align*}
    \norm{\overline{P}u}_{\R^{d_y}}\leq \int_{\T^d}\norm{u(x)}_{\R^{d_y}}\onebm(x)\dd{x}\leq \norm{u}_{\cY}
\end{align*}
by the triangle and Cauchy--Schwarz inequalities. Now define $\Psi\defeq (\overline{P}\circ\cG\circ E) \colon \cU\to\R^{d_y}$. This map has the representation 
\begin{align*}
    \Psi= \overline{P}\circ \Tilde{\cQ}\circ F\circ \Tilde{\cS} \circ E
\end{align*}
for some local linear operators $\tilde{\cQ}$ (identified with $\tilde{Q}\in\R^{d_y\times d_v}$ for channel dimension $d_v$) and $\tilde{\cS}$ (identified with $\tilde{S}\in\R^{d_v\times d_u}$), and where $F$ denotes the repeated composition of all nonlinear FNO layers of the form $\sL_t$ as in \eqref{eqn:fno_layer_nonlinear}. We claim that $\Psi$ belongs to the FNF class, i.e., \eqref{eqn:FNM} with modification \ref{enum:F2V}. To see this, choose $Q=I_{\R^{d_y}}\in\R^{d_y\times d_y}$ and $S=I_{\R^{d_u}}\in\R^{d_u\times d_u} $ (which we identify with $\Id_{\cU}\in\cL(\cU)$). Let $\cE\defeq (\tilde{\cS}\circ E) \colon \cU\to H^s(\T^d;\R^{d_v}) $. Define the linear functional layer $\sG\defeq (\overline{P}\circ\tilde{\cQ})\colon L^2(\T^d;\R^{d_v})\to\R^{d_y}$ which has the kernel linear functional representation
\[
u\mapsto \sG u = \int_{\T^d}\kappa(x)u(x)\dd{x}\,, \qw x\mapsto \kappa(x)\defeq \onebm(x) \tilde{Q}\in\R^{d_y \times d_v}
\]
as in \eqref{eqn:sG-sD}. Thus,
\begin{align*}
    \Psi&= \overline{P}\circ \Tilde{\cQ}\circ F \circ \Tilde{\cS} \circ E\\
    &= I_{\R^{d_y}}\circ (\overline{P}\circ \Tilde{\cQ})\circ F \circ (\Tilde{\cS} \circ E)\circ \Id_{\cU}
    \\& = Q\circ \sG \circ F \circ \cE\circ S
\end{align*}
as claimed. Finally, using the fact that $\overline{P}(z\onebm) = z$ for any $z\in\R^{d_y}$, it holds that
\begin{align*}
    \sup_{u\in K}\norm{\Psi^\dagger(u)-\Psi(u)}_{\R^{d_y}}=\sup_{u\in K}\norm{\overline{P}\cG^\dagger(u)-\overline{P}\cG(Eu)}_{\R^{d_y}}
    \leq \sup_{u\in K}\norm{\cG^\dagger(u)-\cG(Eu)}_{\cY}\,.
\end{align*}
The rightmost expression is less than $\ep$ and hence \eqref{eqn:ua_fnf_ep} holds.
\end{proof}

The universality proof for the vector-to-function Fourier Neural Decoder (FND) architecture follows similar arguments.
\thmuafnd*
\begin{proof}
    Let $\cU\defeq L^2(\T^d;\R^{d_u})$ and $\onebm\colon\Td\to\R$ be the constant function $x \mapsto 1$. Define the map $\mathsf{L}\colon \R^{d_u}\to\cU$ by  $z\mapsto z\onebm$. Clearly $\mathsf{L}$ is linear. To see that it is continuous, we compute
    \begin{align}\label{eqn:constant_func_op_norm}
    \norm{\mathsf{L}z}_{\cU}^2=\int_{\T^d}\norm{z\onebm(x)}^2_{\R^{d_u}}\dd{x}
        =\abs{\T^d}\norm{z}_{\R^{d_u}}^2 = \norm{z}_{\R^{d_u}}^2\,.
    \end{align}
    Thus, $\sfL$ is injective with 
 $\norm{\mathsf{L}}_{\cL(\R^{d_u};\cU)}= 1$. Choose $K\defeq \mathsf{L}\mathcal{Z}=\{\mathsf{L}z\colon z\in\mathcal{Z}\} \subset \cU$, which is compact in $\cU$ because continuous functions map compact sets to compact sets. Define $\cG^\dagger\colon K\to\cY$ by $\mathsf{L}z\mapsto \Psi^\dagger(z)$. First, we show that $\cG^\dagger$ is continuous. Fix $\ep'>0$. By the continuity of $\Psi^\dagger$, there exists ${\delta}>0$ such that if $\norm{\mathsf{L}z_1-\mathsf{L}z_2}_{\cU}=\norm{z_1-z_2}_{\R^{d_u}}<{\delta}$, then $\norm{\Psi^\dagger(z_1)-\Psi^\dagger(z_2)}_{\cY}<\ep'$.
    Thus for any $u_1=\mathsf{L}z_1\in K$ and $u_2=\mathsf{L}z_2\in K$ with $\norm{u_1-u_2}_{\cU}<\delta$, we have
    \begin{align*}
        \norm{\cG^\dagger(u_1)-\cG^\dagger(u_2)}_{\cY}=\norm{\Psi^\dagger(z_1)-\Psi^\dagger(z_2)}_{\cY}<\ep'\,.
    \end{align*}
    It follows that $\cG^\dagger\colon K\to\cY$ is continuous. By the Dugundji extension theorem~\cite{dugundji1951extension}, there exists a continuous operator $\tilde{\cG}^\dagger\colon \cU\to\cY$ such that $\tilde{\cG}^\dagger(u)={\cG}^\dagger(u)$ for every $u\in K$.
    By the universal approximation theorem for FNOs (Lemma~\ref{lem:ua_fno_periodic_input}, applied with $s=0$, $s'=t$, $d_u=d_u$, and $\cG=\tilde{\cG}^\dagger$), there exists a FNO $\cG\colon \cU\to H^t(\T^d;\R^{d_y})$ of the form \eqref{eqn:fno} (with $\cR=\Id$, $\cE=\Id$, and items~\ref{item:channel_const_dim}~and~\ref{item:lift_project_affine} both holding true) such that 
    \begin{align*}
        \sup_{u\in K}\norm{\tilde{\cG}^\dagger(u)-R\cG(u)}_{\cY}=\sup_{u\in K}\norm{{\cG}^\dagger(u)-R\cG(u)}_{\cY}<\ep\,.
    \end{align*}
    In the preceding display, $R\in\cL(H^{t}(\T^d;\R^{d_y});\cY)$ denotes the restriction operator $y\mapsto y|_{D}$.
    Now define the map $\Psi \defeq (R\circ \cG\circ \sfL) \colon\R^{d_u}\to\cY$. This map has the representation
    \begin{align*}
        \Psi= R\circ \Tilde{\cQ}\circ F \circ \Tilde{\cS} \circ \sfL
    \end{align*}
    for some local linear operators $\tilde{\cQ}$ (identified with $\tilde{Q}\in\R^{d_y\times d_v}$ for channel dimension $d_v$) and $\tilde{\cS}$ (identified with $\tilde{S}\in\R^{d_v\times d_u}$), and where $F$ denotes the repeated composition of all nonlinear FNO layers of the form $\sL_t$ as in \eqref{eqn:fno_layer_nonlinear}. We claim that $\Psi$ is of the FND form, i.e., \eqref{eqn:FNM} with modification \ref{enum:V2F}. To see this, choose $Q=I_{\R^{d_y}}\in\R^{d_y\times d_y}$ (which we identify with $\Id_{\cY}\in\cL(\cY)$) and $S=I_{\R^{d_u}}\in\R^{d_u\times d_u} $. Let $\cR\defeq (R\circ \tilde{\cQ})\colon H^t(\T^d;\R^{d_v}) \to \cY $. Define the linear decoder layer $\sD\defeq (\tilde{\cS}\circ\sfL) \colon \R^{d_u}\to L^2(\T^d;\R^{d_v})$ which has the kernel function product representation
    \[
    z\mapsto \sD z = \kappa(\cdot) z\,, \qw x\mapsto \kappa(x)\defeq \onebm(x) \tilde{S}\in\R^{d_v\times d_u}
    \]
    as in \eqref{eqn:sG-sD}. Thus,
    \begin{align*}
        \psi&= R\circ \Tilde{\cQ}\circ F\circ \Tilde{\cS} \circ \sfL\\
        &= \Id_{\cY}\circ (R\circ \Tilde{\cQ})\circ F \circ (\Tilde{\cS} \circ \sfL)\circ I_{\R^{d_u}}
        \\& = Q\circ \cR \circ F\circ \sD \circ S
    \end{align*}   
    as claimed. Finally, by the injectivity of $\mathsf{L}$ implied by \eqref{eqn:constant_func_op_norm}, any $u'\in K$ has the representation $u'=\mathsf{L}z'$ for some unique $z'\in\mathcal{Z}\subset\R^{d_u}$. It follows that
    \begin{align*}
        \sup_{u\in K}\norm{{\cG}^\dagger(u)-R\cG(u)}_{\cY}\geq \norm{{\cG}^\dagger(u')-R\cG(u')}_{\cY}=\norm{{\Psi}^\dagger(z')-\Psi(z')}_{\cY}\,.
    \end{align*}
    This implies the asserted result \eqref{eqn:ua_fnd_ep}.
\end{proof}

\section{Proofs for Section~\ref{sec:linear}:~\nameref*{sec:linear}}\label{appx:proofs_linear}
This appendix contains the lengthy arguments that underlie the statistical learning theory for regression of linear functionals from Section~\ref{sec:linear}.
We begin by recalling convenient properties of subgaussian and subexponential probability distributions in Appendix~\ref{appx:proofs_linear_sub}. Appendix~\ref{appx:proofs_linear_ee} contains proofs of the main \ref{item:ee} results from Section~\ref{sec:linear_main}. In particular, it develops a new bias--variance analysis of the linear functional regression problem in a Bayesian nonparametric setting that may be of independent interest. Proofs for the full-field learning approach (Subsection~\ref{sec:linear_ff}) to factorized linear functionals are provided in Appendix~\ref{appx:proofs_linear_ff}. The sample complexity comparison corollary is proved in Appendix~\ref{appx:proofs_linear_compare}. Technical lemmas used throughout the analysis are collected in Appendix~\ref{appx:proofs_linear_extra}.

\subsection{Subgaussian and subexponential distributions}\label{appx:proofs_linear_sub}
This appendix reviews the concept of subgaussian and subexponential random variables. These play a central role in the analysis leading to the high probability error bounds in Section~\ref{sec:linear_main}.

\begin{definition}[subgaussian]\label{def:app_subg}
    A real-valued random variable $X$ is said to be \emph{subgaussian} \cite[Section 2.5]{vershynin2018high} if for some $\sigma>0$ it satisfies the moment generating function bound 
    \begin{align}\label{eqn:mgf_subg}
        \E e^{\lambda(X-\E X)}\leq e^{\lambda^2\sigma^2/2} \qfa \lambda\in\R\,.
    \end{align}
\end{definition}
We write $X\in\SG{\sigma^2}$ when \eqref{eqn:mgf_subg} holds and define the subgaussian norm of $X$ by
\begin{align}\label{eqn:norm_subg}
    \norm{X}_{\psi_2}\defeq \sup_{p\in[1,\infty)}\frac{\bigl(\E\abs{X}^p\bigr)^{1/p}}{\sqrt{p}}\,.
\end{align}
It is known that $X$ is subgaussian if and only if $\norm{X}_{\psi_2}<\infty$. However, we often require random variables with heavier tails.

\begin{definition}[subexponential]\label{def:app_subexp}
    A real-valued random variable $Z$ is said to be \emph{subexponential} \cite[Section 2.7]{vershynin2018high} if for some $v>0$ and $b>0$ it satisfies the moment generating function bound
    \begin{align}\label{eqn:mgf_subexp}
        \E e^{\lambda(Z-\E Z)}\leq e^{\lambda^2v^2/2} \qfa \abs{\lambda}\leq \frac{1}{b}\,.
    \end{align}
\end{definition}
In contrast to the subgaussian case, the moment generating function of a subexponential random variable need only exist in a neighborhood of the origin instead of everywhere on the real line.
We write $Z\in\SE{v^2}{\al}$ when \eqref{eqn:mgf_subexp} holds and define subexponential norm by
\begin{align}\label{eqn:norm_subexp}
    \norm{Z}_{\psi_1}\defeq \sup_{p\in[1,\infty)}\frac{\bigl(\E\abs{Z}^p\bigr)^{1/p}}{p}\,.
\end{align}
It is known that $X$ is subgaussian if and only if $Z=X^2$ is subexponential. In fact, we have the following estimate relating the two norms.
\begin{lemma}[squared subgaussian]\label{lem:squared_subg_bound}
    Let $X$ be a real-valued random variable. Then
    \begin{align}
        \norm[\big]{X}_{\psi_2}^2\leq \norm[\big]{X^2}_{\psi_1}\leq 2\norm[\big]{X}_{\psi_2}^2\,.
    \end{align}
\end{lemma}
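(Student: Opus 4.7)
The plan is to manipulate the two norm definitions directly through the power substitution $q = 2p$ and then compare suprema. Writing out the subexponential norm of $X^2$, we have $\norm{X^2}_{\psi_1} = \sup_{p\geq 1} (\E|X|^{2p})^{1/p}/p$. Setting $q = 2p$ turns this into $\norm{X^2}_{\psi_1} = 2 \sup_{q\geq 2} (\E|X|^q)^{2/q}/q$, while the squared subgaussian norm is simply $\norm{X}_{\psi_2}^2 = \sup_{q\geq 1} (\E|X|^q)^{2/q}/q$. The entire lemma thus reduces to comparing these two suprema of the same sequence $a_q \defeq (\E|X|^q)^{2/q}/q$ over the ranges $q \geq 2$ and $q \geq 1$, up to the factor of $2$.

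For the upper bound $\norm{X^2}_{\psi_1} \leq 2 \norm{X}_{\psi_2}^2$, I would just note that enlarging the index set can only increase the supremum, so $\sup_{q\geq 2} a_q \leq \sup_{q\geq 1} a_q$, and multiplying by $2$ gives the result. This direction is immediate from the reformulation.

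For the lower bound $\norm{X}_{\psi_2}^2 \leq \norm{X^2}_{\psi_1}$, I would use the standard monotonicity of $L^p$-norms (a consequence of Jensen's inequality applied to the convex function $t \mapsto t^2$, or equivalently Lyapunov's inequality): for every $q \geq 1$, $(\E|X|^q)^{1/q} \leq (\E|X|^{2q})^{1/(2q)}$, hence $(\E|X|^q)^{2/q} \leq (\E|X|^{2q})^{1/q}$. Dividing both sides by $q$ and recognizing the right side as $a_q^{(2)} \defeq (\E|X^2|^q)^{1/q}/q$, one obtains $a_q \leq a_q^{(2)}$ for each $q \geq 1$. Taking the supremum over $q \geq 1$ yields $\norm{X}_{\psi_2}^2 \leq \sup_{q\geq 1}(\E|X^2|^q)^{1/q}/q = \norm{X^2}_{\psi_1}$.

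There is no real obstacle here; the only bookkeeping point is making sure the index ranges match when converting between $\psi_1$ and $\psi_2$ suprema, which is where the factor of $2$ in the upper bound enters. The whole argument is a few lines and relies only on the definitions \eqref{eqn:norm_subg} and \eqref{eqn:norm_subexp} together with Lyapunov's inequality.
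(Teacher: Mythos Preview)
Your proof is correct and essentially identical to the paper's: both arguments rewrite $\norm{X^2}_{\psi_1}$ via the substitution $q=2p$ and then extend the supremum range from $q\geq 2$ to $q\geq 1$ for the upper bound, and both invoke the Jensen/Lyapunov inequality $(\E|X|^{2p})^{1/2}\geq \E|X|^p$ for the lower bound. Your presentation is slightly cleaner in that you name the common sequence $a_q$ explicitly, but the logical content is the same.
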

\begin{proof}
    We compute
    \begin{align*}
    \norm[\big]{X^2}_{\psi_1}=\sup_{p\in[1,\infty)}\frac{\bigl(\E\abs{X}^{2p}\bigr)^{1/p}}{p}
    &=2\sup_{p\in[1,\infty)}\frac{\bigl(\E\abs{X}^{2p}\bigr)^{1/{2p}}\bigl(\E\abs{X}^{2p}\bigr)^{1/{2p}}}{\sqrt{2p}\sqrt{2p}}\\
    & =2\left(\sup_{p\in[1,\infty)}\frac{\bigl(\E\abs{X}^{2p}\bigr)^{1/{2p}}}{\sqrt{2p}}\right)^2\\
    &\leq 2\Biggl(\sup_{2p\in {[1,\infty)}}\frac{\bigl(\E\abs{X}^{2p}\bigr)^{1/{2p}}}{\sqrt{2p}}\Biggr)^2.
    \end{align*}
    The final term inside parentheses on the right-hand side equals the subgaussian norm (upon replacing $2p$ with $p$). This is the asserted upper bound. The lower bound follows from the inequality $(\E\abs{X}^{2p})^{1/2}\geq \E\abs{X}^p$.
\end{proof}

\subsection{Proofs for end-to-end learning of general linear functionals}\label{appx:proofs_linear_ee}
The goal of this appendix is to prove Theorem~\ref{thm:linear_ee_main} (which implies Theorem~\ref{thm:linear_ee_opt}). This theorem provides a high probability convergence rate in terms of the sample size $N$ for the \emph{out-of-distribution test error}
\begin{equation}\label{eqn:app_test_error_weighted}
    \E^{u'\sim\nu'}\abs[\big]{\ip{\fd}{u'}-\ip{\bar{f}^{(N)}}{u'}}^2 = \norm[\big]{(\Sigma')^{1/2}(\fd-\bar{f}^{(N)})}^2
\end{equation}
conditioned on the covariates $U$. The equality in \eqref{eqn:app_test_error_weighted} is due to linearity and the fact that $\abs{\ip{\fd}{u'}-\ip{\bar{f}^{(N)}}{u'}}^2 = \ip{\fd - \bar{f}^{(N)}}{(u'\otimes u')(\fd - \bar{f}^{(N)})}$. For notational convenience in the proofs, we write
\begin{equation}\label{eqn:test_error_conditional}
    \sR_N \defeq \E^{Y\condbar U} \norm[\big]{(\Sigma')^{1/2}(\fd-\bar{f}^{(N)})}^2 = \E\Bigl[\norm[\big]{(\Sigma')^{1/2}(\fd-\bar{f}^{(N)})}^2 \lcondbar u_1,\ldots,u_N \Bigr]\,.
\end{equation}
The argument behind the proof of Theorem~\ref{thm:linear_ee_main} follows a classical bias--variance decomposition. We now state this decomposition in the following lemma.

\begin{lemma}[bias--variance decomposition]\label{lem:app_bias_var_decomp}
Instate the setting of Subsection \ref{sec:linear_ee_setup}. The test error~\eqref{eqn:test_error_conditional} satisfies the decomposition $\sR_N= B_N + V_N$,
where 
\begin{subequations}
    \begin{align}
    B_N &\defeq \norm[\big]{(\Sigma')^{1/2}(\Id_H - A_NS_N)\fd}^2 \qa \label{eqn:bias}\\
    V_N &\defeq \gamma^2 \E\left[\norm[\big]{(\Sigma')^{1/2}A_N\Xi}^2\lcondbar U\right]\, .\label{eqn:variance}
    \end{align}
\end{subequations}
\end{lemma}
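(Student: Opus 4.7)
The plan is a direct bias--variance calculation. The starting point is the data model identity $Y = S_N \fd + \gamma \Xi$ from \eqref{eqn:model_data_qoi_cat_dagger} combined with the posterior mean formula $\bar{f}^{(N)} = A_N Y$ from \eqref{eqn:posterior_ee}, which together yield
\[
\fd - \bar{f}^{(N)} = (\Id_H - A_N S_N)\fd - \gamma A_N \Xi.
\]
Applying $(\Sigma')^{1/2}$ and expanding the squared $H$-norm by polarization produces three contributions: a pure bias term, a quadratic noise term, and a cross term that is linear in $\Xi$.

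Next I would take the conditional expectation with respect to $Y$ given $U$. Since $Y$ is an affine function of $\Xi$ with $U$-measurable coefficients, this expectation reduces to integrating against the law of $\Xi$ with $A_N$ and $S_N$ held fixed as $U$-measurable quantities; here I use the independence of $U$ and $\Xi$ stipulated in Subsection~\ref{sec:linear_ee_setup}. The bias term is $U$-measurable and passes through the conditional expectation unchanged, contributing $B_N$. The cross term vanishes because it is linear in $\Xi$ and $\E\Xi = 0$. The quadratic noise term contributes exactly $V_N$ after pulling the constant $\gamma^2$ and the $U$-measurable operator $(\Sigma')^{1/2}A_N$ out of the conditional expectation.

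I do not anticipate any substantive obstacle: this is a textbook bias--variance decomposition once the data model is substituted into the error. The only verifications required are that $A_N$ and $S_N$ are $U$-measurable---which is immediate from their explicit definitions in \eqref{eqn:AN_and_postcov} and \eqref{eqn:sigmahat} in terms of $\widehat{\Sigma}$ and $U$---and that the cross term is integrable so that Fubini applies, which follows from Cauchy--Schwarz together with the fact that $\Xi$ has finite second moments under the standard Gaussian likelihood.
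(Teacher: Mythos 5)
Your proposal is correct and follows essentially the same route as the paper: both expand the error around the conditional mean of the estimator, kill the cross term using that $\Xi$ is centered and independent of $U$, and identify the remaining two terms as $B_N$ and $V_N$. The paper introduces the intermediate quantity $m_N \defeq \E^{Y\condbar U}[\bar f^{(N)}]$ and only afterward shows $m_N = A_N S_N \fd$, whereas you substitute the data model $Y = S_N\fd + \gamma\Xi$ first and then expand — a cosmetic reordering of the same calculation.
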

\begin{proof}
    Denote $\norm{\slot}_{\nu'}\defeq \norm{(\Sigma')^{1/2}\slot}$. Let $m_N\defeq \E^{Y\condbar U}[\bar{f}^{(N)}]$. Expanding $\E^{Y\condbar U}\norm{\fd-\bar{f}^{(N)}}^2_{\nu'} = \E^{Y\condbar U}\norm{(\fd- m_N) + (m_N - \bar{f}^{(N)})}^2_{\nu'}$ shows that $\sR_N$ equals
    \begin{align*}
        \norm{\fd-m_N}^2_{\nu'} + 2\E^{Y\condbar U}\ip[\big]{\fd - m_N}{\Sigma'(m_N-\bar{f}^{(N)})}
        + \E^{Y\condbar U}\norm{m_N-\bar{f}^{(N)}}_{\nu'}^2\,.
    \end{align*}
    By linearity, the middle term equals zero. Recalling that $\bar{f}^{(N)}=A_NY$ from \eqref{eqn:posterior_ee} and $Y=S_N\fd +\gamma\Xi$ from \eqref{eqn:model_data_qoi_cat_dagger}, the fact $\Xi$ is centered implies that $m_N=A_NS_N\fd$. Thus, we recover $B_N$~\eqref{eqn:bias} as the first term in the preceding display. Similarly, $m_N-\bar{f}^{(N)} = -A_N(\gamma\Xi)$ so that $V_N$~\eqref{eqn:variance} equals the last term.
\end{proof}

The main proof novelty lies in the bound for the bias $B_N$, which relies on a careful conditioning argument and clever matrix identities. The analysis begins by estimating the variance $V_N$ in Appendix~\ref{appx:proofs_linear_ee_var}. The bias is studied in Appendix~\ref{appx:proofs_linear_ee_bias}. Finally, the bounds are combined to prove Theorem~\ref{thm:linear_ee_main} in Appendix~\ref{appx:proofs_linear_ee_main}.

\subsubsection{Bounding the variance}\label{appx:proofs_linear_ee_var}
We focus on controlling the variance term~\eqref{eqn:variance} first because it is easier to estimate than the bias. Our goal is to prove the following.

\begin{proposition}[variance upper bound]\label{prop:var}
Under Assumptions~\ref{ass:data_end_to_end} and \ref{ass:data_input_kl_expand}, there exists $c\in (0,1)$ and $N_0\geq 1$ (depending only on $\nu$, $ \nu' $, $\Lambda$, and $ \gamma^2 $) such that for all $N\geq N_0$, it holds that the variance term $V_N$ in \eqref{eqn:variance} satisfies the estimate
\begin{equation}\label{eqn:variance_upper_bound}
    V_N\leq 
    2\sum_{j=1}^\infty\frac{\sigma_j'\lambda_j}{1 + N\gamma^{-2}\sigma_j\lambda_j}
    \lesssim
    \begin{cases}
    N^{-\left(1-\frac{\al+1/2-\al'}{\al+p}\right)}\, , & \text{if }\, \al'<\al+1/2\,,\\
    N^{-1}\log 2N\, , &\text{if }\,\al'=\al+1/2\,,\\
    N^{-1}\, , &\text{if }\,\al'>\al+1/2
    \end{cases}
\end{equation}
with probability at least $1-e^{-cN}$.
\end{proposition}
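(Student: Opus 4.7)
The plan is to combine the explicit Gaussian noise calculation with the operator-theoretic representation of $A_N$ from \eqref{eqn:AN_and_postcov} and then reduce to a scalar series controlled by Assumption~\ref{ass:data_end_to_end}. Since $\Xi$ has isotropic covariance, the conditional expectation in \eqref{eqn:variance} collapses to $V_N = \gamma^2\tr{(\Sigma')^{1/2} A_N A_N^*(\Sigma')^{1/2}}$. Substituting $A_N = \gamma^{-2}\Lambda^{(N)} S_N^*$, using $S_N^* S_N = N\widehat{\Sigma}$, and invoking the cyclic property of trace yields
\[
V_N = \frac{\gamma^2}{N}\tr{\Sigma'\Lambda^{1/2}(K_N+\tau \Id_H)^{-1} K_N (K_N+\tau\Id_H)^{-1}\Lambda^{1/2}}\,,
\]
where $K_N \defeq \Lambda^{1/2}\widehat{\Sigma}\Lambda^{1/2}$ and $\tau\defeq \gamma^2/N$. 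The elementary operator inequality $K_N\psdle K_N + \tau\Id_H$ gives $(K_N+\tau\Id_H)^{-1} K_N (K_N+\tau\Id_H)^{-1}\psdle (K_N+\tau\Id_H)^{-1}$, so
\[
V_N \leq \frac{\gamma^2}{N}\tr{\Sigma'\Lambda(K_N+\tau\Id_H)^{-1}}\,,
\]
in which only a single resolvent of the random operator $K_N$ remains.

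The main step, and the principal obstacle, is to replace $K_N$ by its population counterpart $K\defeq \Lambda^{1/2}\Sigma\Lambda^{1/2}$ at the cost of a universal constant. Concretely, the plan is to show that on an event $\mathcal{E}_N$ whose complement has probability at most $e^{-cN}$, the resolvent comparison $(K_N+\tau\Id_H)^{-1} \psdle 2(K+\tau\Id_H)^{-1}$ holds. This is a dimension-free operator concentration statement in infinite dimensions, and it is here that Assumption~\ref{ass:data_input_kl_expand} enters: the strong subgaussianity~\eqref{eqn:data_input_subg_uniform_bound} together with the independence of the Karhunen--Lo\`eve coordinates of $u\sim\nu$ enables a Hanson--Wright-style bound on the whitened perturbation $(K+\tau\Id_H)^{-1/2}(K_N-K)(K+\tau\Id_H)^{-1/2}$. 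The requisite uniform spectral control is precisely what Lemma~\ref{lem:good_set} is designed to supply, and the $e^{-cN}$ probability is dictated by the effective dimension at regularization scale $\tau$, which remains finite here because $\Lambda$ is trace class.

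On the event $\mathcal{E}_N$, assumption~\ref{item:ass_ee_diag} makes $\Sigma'$, $\Lambda$, and $K$ simultaneously diagonal in $\{\varphi_j\}$, so the trace collapses to a scalar series and the explicit bound
\[
V_N \leq \frac{2\gamma^2}{N}\sum_{j=1}^{\infty}\frac{\sigma_j'\lambda_j}{\sigma_j\lambda_j + \tau} = 2\sum_{j=1}^{\infty}\frac{\sigma_j'\lambda_j}{1+N\gamma^{-2}\sigma_j\lambda_j}
\]
follows at once, establishing the explicit series estimate in \eqref{eqn:variance_upper_bound}.

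The remainder is a calculus exercise with the power laws from \ref{item:ass_ee_data} and \ref{item:ass_ee_prior}. The plan is to split the sum at the critical index $j_*\asymp N^{1/(2(\alpha+p))}$, where $N\gamma^{-2}\sigma_j\lambda_j$ transitions through one. For $j\lesssim j_*$ the summand behaves like $N^{-1}\gamma^2\sigma_j'/\sigma_j\asymp N^{-1}j^{2(\alpha-\alpha')}$, and for $j\gtrsim j_*$ like $\sigma_j'\lambda_j\asymp j^{-2(\alpha'+p)}$. The three regimes in \eqref{eqn:variance_upper_bound} correspond to whether the head sum is divergent, logarithmic, or summable: when $\alpha' < \alpha+1/2$ the head dominates and yields $N^{-(1-(\alpha+1/2-\alpha')/(\alpha+p))}$; the borderline $\alpha'=\alpha+1/2$ produces $N^{-1}\log(2N)$; and when $\alpha' > \alpha+1/2$ both halves are $\lesssim N^{-1}$, giving the parametric rate. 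Taking $N_0$ large enough ensures that the probability bound is nontrivial and that the asymptotic equivalences in \eqref{eqn:linear_ass_data_eig} and \eqref{eqn:linear_ass_prior_eig} may be applied uniformly to conclude the asserted estimate.
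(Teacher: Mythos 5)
Your argument is correct and follows essentially the same route as the paper: the von Neumann / trace reduction to $\mu\tr{\cC'\widehat{\cC}_\mu^{-1}}$, the resolvent comparison on the high-probability event from Lemma~\ref{lem:good_set} (your semidefinite formulation $(K_N+\tau\Id_H)^{-1}\psdle 2(K+\tau\Id_H)^{-1}$ is equivalent to the paper's Neumann-series bound in Lemma~\ref{lem:good_set_inverse_bound}), the collapse to a scalar series under simultaneous diagonalization, and the split at $j_*\asymp N^{1/(2(\alpha+p))}$ recovering Lemma~\ref{lem:app_series_rate2}. One small inaccuracy: in the regime $\alpha'<\alpha+1/2$ the head and tail sums in fact contribute at the same rate, so "the head dominates" is an overstatement, but the resulting exponent is unchanged.
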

The expression for the variance upper bound in Proposition~\ref{prop:var} is in precisely the same form as that found in \cite[Equation (A.1b), p. 24]{de2023convergence}. To prove the proposition, we require some preparatory results.
First, notice that
\[
\E\left[\norm[\big]{(\Sigma')^{1/2}A_N\Xi}^2\lcondbar U\right] = \E^{g\sim\normal(0,\Id_{{\R^N}})}\norm[\big]{(\Sigma')^{1/2}A_Ng}^2=\tr[\big]{(\Sigma')^{1/2}A_N^{\phantom{*}}A_N^*(\Sigma')^{1/2}}\,.
\]
Now using the fact that $\widehat{\Sigma}=S_N^*S_N^{\phantom{*}}/N$ from \eqref{eqn:sigmahat} and defining
\begin{align}\label{eqn:app_defn_C_hat}
    \widehat{\cC}\defeq \Lambda^{1/2}\widehat{\Sigma}\Lambda^{1/2}\,,
\end{align}
we see that
\[
A_N^{\phantom{*}}A_N^* = \frac{1}{N}\biggl[\Lambda^{1/2}\biggl(\widehat{\cC} + \frac{\gamma^2}{N}\Id_H\biggr)^{-1}\widehat{\cC}\biggl(\widehat{\cC} + \frac{\gamma^2}{N}\Id_H\biggr)^{-1}\Lambda^{1/2}\biggr]\,.
\]
Next, define
\begin{align}\label{eqn:app_defn_mu_C_hat_reg}
    \mu\defeq \gamma^2/N>0\qa \widehat{\cC}_\mu\defeq \widehat{\cC} + \mu\Id_{H}\,.
\end{align}
By the cyclic property of the trace,
\begin{align*}
    V_N&=\mu \tr[\big]{\Lambda^{1/2}\Sigma'\Lambda^{1/2}\widehat{\cC}_\mu^{-1}\widehat{\cC}\widehat{\cC}_\mu^{-1}}\\
    &= \mu \tr[\big]{[\widehat{\cC}_\mu^{-1/2}\Lambda^{1/2}\Sigma'\Lambda^{1/2}\widehat{\cC}_\mu^{-1/2}][\widehat{\cC}_\mu^{-1/2}\widehat{\cC}\widehat{\cC}_\mu^{-1/2}]}\\
    &\leq \mu \tr[\big]{\widehat{\cC}_\mu^{-1/2}\Lambda^{1/2}\Sigma'\Lambda^{1/2}\widehat{\cC}_\mu^{-1/2}}\norm[\big]{\widehat{\cC}_\mu^{-1/2}\widehat{\cC}\widehat{\cC}_\mu^{-1/2}}_{\cL(H)}\\
    &\leq \mu \tr[\big]{\widehat{\cC}_\mu^{-1/2}\Lambda^{1/2}\Sigma'\Lambda^{1/2}\widehat{\cC}_\mu^{-1/2}}\,.
\end{align*}
The first inequality is due to $\tr{AB}\leq\tr{A}\norm{B}_{\cL(H)}$, which holds for any symmetric positive-semidefinite trace-class $A$ and any bounded $B$; this follows from the von Neumann trace inequality. The second inequality follows from the simultaneous diagonalizability of the factors in the triple product inside the operator norm and the fact that $\lambda/(\lambda+\mu)\leq 1$ for any eigenvalue $\lambda$ of $\widehat{\cC}$.
Now define
\begin{align}\label{eqn:app_defn_Ct_C_Creg}
    \cC'\defeq \Lambda^{1/2}\Sigma'\Lambda^{1/2}\,,\quad \cC\defeq\Lambda^{1/2}\Sigma\Lambda^{1/2}\,,\qa \cC_\mu\defeq \cC+\mu\Id_H\,.
\end{align}
Lemma~\ref{lem:inverse_reg_neumann_identity} (with $A=\widehat{\cC}$, $B=\cC$, and $\lambda=\mu$) shows that $V_N$ is bounded above by
\begin{align}\label{eqn:var_bound_temp_op}
    \begin{split}
        \mu\tr[\big]{\cC'\widehat{\cC}_\mu^{-1}}&=\mu\tr[\Big]{\cC'\cC_\mu^{-1/2}\bigl(\Id_H - \cC_\mu^{-1/2}(\cC-\widehat{\cC})\cC_\mu^{-1/2}\bigr)^{-1}\cC_\mu^{-1/2}}\\
        &=\mu\tr[\Big]{\bigl[\cC_\mu^{-1/2}\cC'\cC_\mu^{-1/2}\bigr]\bigl(\Id_H - \cC_\mu^{-1/2}(\cC-\widehat{\cC})\cC_\mu^{-1/2}\bigr)^{-1}}\\
        &\leq \mu\tr[\big]{\cC_\mu^{-1/2}\cC'\cC_\mu^{-1/2}}\norm[\big]{\bigl(\Id_H - \cC_\mu^{-1/2}(\cC-\widehat{\cC})\cC_\mu^{-1/2}\bigr)^{-1}}_{\cL(H)}.
    \end{split}
\end{align}
The final inequality is again due to von Neumann's trace inequality. To bound the operator norm in the preceding display, we apply a Neumann series argument.

\begin{lemma}[Neumann series bound]\label{lem:good_set_inverse_bound}
	Let $\mu=\gamma^2/N$. There exists $ c\in(0,1) $ and $ N_0\geq 1 $ such that for any $N\geq N_0$, it holds that
	\begin{align}\label{eqn:good_set_inverse_bound}
		\norm[\Big]{\bigl(\Id_H - \cC_\mu^{-1/2}(\cC-\widehat{\cC})\cC_\mu^{-1/2}\bigr)^{-1}}_{\cL(H)}\leq 2
	\end{align}
	with probability at least $ 1-e^{-cN} $.
\end{lemma}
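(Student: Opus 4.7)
The plan is to reduce the bound on the inverse operator norm to a concentration inequality for the normalized empirical covariance error $\cC_\mu^{-1/2}(\cC-\widehat{\cC})\cC_\mu^{-1/2}$ via a Neumann series argument. The classical fact that $\norm{A}_{\cL(H)}\leq 1/2$ implies $\norm{(\Id_H-A)^{-1}}_{\cL(H)}\leq \sum_{k=0}^\infty \norm{A}_{\cL(H)}^k \leq 2$ reduces \eqref{eqn:good_set_inverse_bound} to showing
\begin{equation*}
    \norm[\big]{\cC_\mu^{-1/2}(\cC-\widehat{\cC})\cC_\mu^{-1/2}}_{\cL(H)}\leq \tfrac{1}{2}
\end{equation*}
on an event of probability at least $1-e^{-cN}$, for all $N\geq N_0$.

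To carry this out, I would diagonalize in the shared eigenbasis $\{\varphi_j\}$ from Assumption~\ref{item:ass_ee_diag} and introduce the whitened random vectors $v_n\defeq \cC_\mu^{-1/2}\Lambda^{1/2}u_n$. Using the KL expansion \eqref{eqn:data_input_kl_expand} of $\nu$, one obtains $v_n = \sum_j \sqrt{w_j}\, z_{n,j}\varphi_j$, where the weights $w_j\defeq \lambda_j\sigma_j/(\lambda_j\sigma_j+\mu)\in[0,1]$ satisfy $\sup_j w_j\leq 1$. Then
\begin{equation*}
    \cC_\mu^{-1/2}(\cC-\widehat{\cC})\cC_\mu^{-1/2} = T - \widehat{T}\,,\qw T\defeq \sum_j w_j\varphi_j\otimes\varphi_j\,, \quad \widehat{T}\defeq \frac{1}{N}\sum_{n=1}^N v_n\otimes v_n\,.
\end{equation*}
Thus the task becomes a sample covariance concentration statement for a strongly-subgaussian vector (Assumption~\ref{ass:data_input_kl_expand}) with population covariance $T$.

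Next, I would invoke a subgaussian operator Bernstein/Koltchinskii--Lounici-type inequality — this is precisely what Lemma~\ref{lem:good_set} in the appendix will supply — to obtain, with probability at least $1-e^{-cN}$,
\begin{equation*}
    \norm[\big]{T-\widehat{T}}_{\cL(H)} \lesssim m^2\norm{T}_{\cL(H)}\left(\sqrt{\tfrac{r(T)}{N}}\,\vee\,\tfrac{r(T)}{N}\right)\,,
\end{equation*}
where $m$ is the subgaussian parameter in \eqref{eqn:data_input_subg_uniform_bound} and $r(T)\defeq \tr(T)/\norm{T}_{\cL(H)}$ is the effective rank. The independence of the KL coefficients $\{z_{n,j}\}_{j\in\N}$ is essential here, both for the rank-one decomposition of $v_n\otimes v_n$ to interact cleanly with the subgaussian tail bounds and for the tensorization step underlying the Bernstein inequality.

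The final step is a bookkeeping calculation on the effective rank. Since $\lambda_j\sigma_j\asymp j^{-2(\al+p)}$, a standard integral comparison gives
\begin{equation*}
    \tr(T) = \sum_{j=1}^\infty\frac{1}{1+\mu/(\lambda_j\sigma_j)} \lesssim \sum_{j=1}^\infty\frac{1}{1+c\mu j^{2(\al+p)}} \lesssim \mu^{-\frac{1}{2(\al+p)}} \simeq N^{\frac{1}{2(\al+p)}}\,,
\end{equation*}
which is $o(N)$ because $\al+p>1$. Combined with $\norm{T}_{\cL(H)}\leq 1$, this makes the right-hand side of the concentration bound $o(1)$, so we may pick $N_0\in\N$ large enough that the bound does not exceed $1/2$ for all $N\geq N_0$. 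On the corresponding high-probability event, the Neumann series yields \eqref{eqn:good_set_inverse_bound}. The main technical obstacle is the concentration step: verifying that the strongly-subgaussian condition \eqref{eqn:data_input_subg_uniform_bound}, together with independence of the KL coordinates, is strong enough to yield the operator-norm deviation inequality at effective-rank resolution — this is exactly the condition flagged in the discussion following Assumption~\ref{ass:data_input_kl_expand}.
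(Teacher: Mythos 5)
Your proposal is correct and matches the paper's approach: the paper's proof of Lemma~\ref{lem:good_set_inverse_bound} likewise reduces \eqref{eqn:good_set_inverse_bound} to the event $\norm{\cC_\mu^{-1/2}(\cC-\widehat{\cC})\cC_\mu^{-1/2}}_{\cL(H)}\leq 1/2$ via a Neumann series and then cites Lemma~\ref{lem:good_set} for the probability bound. Your extra discussion of whitening, strong subgaussianity, and the effective-rank estimate $\tr(\cC_\mu^{-1}\cC)\simeq N^{1/(2(\al+p))} = o(N)$ is precisely what the proof of Lemma~\ref{lem:good_set} carries out (via an external operator-Bernstein-type concentration result), so you have correctly identified where the real work lives.
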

\begin{proof}
    By Lemma~\ref{lem:good_set}, the event \eqref{eqn:normalized_operator_norm} holds with probability at least $1-e^{-cN}$ for any $N\geq N_0$. On this event, we can invoke the Neumann series expansion
	\begin{equation*}
		\Bigl(\Id_H -  \cC_{\mu}^{-1/2} (\cC-\widehat{\cC}) \cC_{\mu}^{-1/2}\Bigr)^{-1} = \sum_{k=0}^\infty \Bigl(\cC_\mu^{-1/2}(\cC-\widehat{\cC})\cC_\mu^{-1/2}\Bigr)^k\,.
	\end{equation*}
	This delivers the operator norm bound
	\begin{align*}
		\norm[\Big]{\bigl(\Id_H - \cC_\mu^{-1/2}(\cC-\widehat{\cC})\cC_\mu^{-1/2}\bigr)^{-1}}_{\cL(H)} &\leq \sum_{k=0}^\infty \norm[\Big]{\bigl(\cC_\mu^{-1/2}(\widehat{\cC}-\cC)\cC_\mu^{-1/2}\bigr)^k}_{\cL(H)}\\
		&\leq \sum_{k=0}^\infty \norm[\big]{\cC_\mu^{-1/2}(\widehat{\cC}-\cC)\cC_\mu^{-1/2}}_{\cL(H)}^k\\
		&\leq \sum_{k=0}^\infty \left(\frac{1}{2}\right)^k
	\end{align*}
	by \eqref{eqn:normalized_operator_norm}. The fact that $ \sum_{k=0}^\infty (1/2)^k = (1-1/2)^{-1} = 2 $ completes the proof.
\end{proof}
We may now prove Proposition~\ref{prop:var}.
\begin{proof}[Proof of Proposition~\ref{prop:var}]
Combining \eqref{eqn:var_bound_temp_op} and Lemma~\ref{lem:good_set_inverse_bound} (with $c$ and $N_0$ as in the hypotheses there) shows that
\begin{equation}\label{eqn:variance_series}
    V_N\leq 2\mu \tr[\big]{\cC_\mu^{-1/2}\cC'\cC_\mu^{-1/2}} =2\sum_{j=1}^\infty\frac{\mu\sigma_j'\lambda_j}{\mu + \sigma_j\lambda_j}=2\sum_{j=1}^\infty\frac{\sigma_j'\lambda_j}{1 + N\gamma^{-2}\sigma_j\lambda_j}
\end{equation}
with probability at least $1-e^{-cN}$ if $N\geq N_0$. We used the assumed simultaneous diagonalizability of the prior and data covariance operators. Since $\sigma_j'\lesssim j^{-2\al'}$, $\sigma_j\asymp j^{-2\al} $, and $\lambda_j\asymp j^{-2p}$, all as $j\to\infty$, under Assumption~\ref{ass:data_end_to_end}, there exists $j_0\in\N$ (independent of $N$) such that the rightmost expression in \eqref{eqn:variance_series} is bounded above by
\begin{align*}
    \sum_{j\leq j_0}\frac{\sigma_j'\lambda_j}{1 + N\gamma^{-2}\sigma_j\lambda_j} + \sum_{j>j_0}\frac{j^{-2(\al'+p)}}{1 + Nj^{-2(\al+p)}}
    &\lesssim \frac{\gamma^2}{N}\sum_{j\leq j_0} \frac{\sigma_j'}{\sigma_j}+\sum_{j=1}^\infty \frac{j^{-2(\al'+p)}}{1 + Nj^{-2(\al+p)}}\\
    &\lesssim \sum_{j=1}^\infty \frac{j^{-2(\al'+p)}}{1+Nj^{-2(\al+p)}}\,.
\end{align*}
The last inequality in the preceding display follows from the fact that $1+N\leq 2N$ and an argument similar to the one used in the proof of Lemma~\ref{lem:app_effective_dim_trace}. The proof is complete after an application of Lemma~\ref{lem:app_series_rate2} (with $t=2(\al'+p)>1$, $u=2(\al+p)>0$, and $v=1$) yields the rightmost expression in~\eqref{eqn:variance_upper_bound}.
\end{proof}

The proof of Proposition~\ref{prop:var} also justifies the claim made in Remark~\ref{rmk:contraction}. Indeed, the second term on the right-hand side of the equality~\eqref{eqn:error_posterior_full} is the posterior spread with respect to the weighted norm~\eqref{eqn:test_error_weighted}. Equation~\eqref{eqn:var_bound_temp_op} upper bounds the posterior spread, which in turn upper bounds the variance~\eqref{eqn:variance} in the bias--variance decomposition of \eqref{eqn:test_error_weighted}. Thus, the variance and the posterior spread have the same upper bound.

\subsubsection{Bounding the bias}\label{appx:proofs_linear_ee_bias}
Recall from \eqref{eqn:bias} that the bias term is given by
\[
B_N=\norm[\big]{(\Sigma')^{1/2}(\Id_H - A_NS_N)\fd}^2\,.
\]
In this appendix, we establish the following upper bound on $B_N$.

\begin{proposition}[bias upper bound]\label{prop:bias}
    Let Assumptions~\ref{ass:data_end_to_end}, \ref{ass:data_input_kl_expand}, and \ref{ass:truth_regularity} hold. Let the bias $B_N$ be as in \eqref{eqn:bias}. There exists $c_0>8$, $c\in (0,1/4)$, and $N_0\geq 1$ (all independent of $N$ and $\fd$) such that for any $N\geq N_0$, it holds that
    \begin{align}\label{eqn:bias_upper_bound}
        B_N\leq 2\sum_{j=1}^\infty \frac{\sigma_j'\abs{\fd_j}^2}{\bigl(1+N\gamma^{-2}\sigma_j\lambda_j\bigr)^2} +
        c_0\norm[\big]{\fd}_{\cH^s}^2 \sum_{j=1}^\infty\frac{\sigma_j'\lambda_j}{1+N\gamma^{-2}\sigma_j\lambda_j}
    \end{align}
    with probability at least $1-2\exp(-c N^{\min(1,\frac{\al+s-1}{\al+p})})$. On the same event, the variance bound \eqref{eqn:variance_upper_bound} also holds true.
\end{proposition}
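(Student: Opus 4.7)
The approach is to decompose $B_N$ into a deterministic ``idealized'' bias plus a random correction driven by $\widehat{\Sigma}-\Sigma$, and control each on the same good event from Lemma~\ref{lem:good_set} that underlies the variance estimate in Proposition~\ref{prop:var}. The key algebraic step is the identity
\begin{equation*}
(\Id_H - A_N S_N)\fd \;=\; \mu\cC_\mu^{-1}\fd \;-\; \mu\Lambda^{1/2}\widehat{\cC}_\mu^{-1}\Lambda^{1/2}(\widehat{\Sigma}-\Sigma)\cC_\mu^{-1}\fd,
\end{equation*}
with $\mu=\gamma^2/N$, which I would derive by starting from $A_N S_N = \Lambda^{1/2}\widehat{\cC}_\mu^{-1}\Lambda^{1/2}\widehat{\Sigma}$, adding and subtracting $\Sigma$, and inverting the defining equations $\widehat{\cC}_\mu\widehat{L}=\Lambda^{1/2}\widehat{\Sigma}$ and $\cC_\mu L_\infty=\Lambda^{1/2}\Sigma$ to express $\widehat{L}-L_\infty$ in terms of $\widehat{\Sigma}-\Sigma$. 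The deterministic piece $\mu\cC_\mu^{-1}\fd$ arises because the ``infinite-data'' map $\Lambda^{1/2}\cC_\mu^{-1}\Lambda^{1/2}\Sigma$ is diagonal in the shared eigenbasis $\{\varphi_j\}_{j\in\N}$, making $\Id_H - \Lambda^{1/2}\cC_\mu^{-1}\Lambda^{1/2}\Sigma = \mu\cC_\mu^{-1}$ an exact equality.

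Applying $(a+b)^2\le 2a^2+2b^2$ to this identity gives
\begin{equation*}
B_N\;\le\;2\mu^2\norm[\big]{(\Sigma')^{1/2}\cC_\mu^{-1}\fd}^2 \;+\; 2\mu^2\norm[\big]{(\Sigma')^{1/2}\Lambda^{1/2}\widehat{\cC}_\mu^{-1}\Lambda^{1/2}(\widehat{\Sigma}-\Sigma)\cC_\mu^{-1}\fd}^2.
\end{equation*}
Evaluating the first term in the eigenbasis $\{\varphi_j\}$ where $\cC_\mu^{-1}$, $\Sigma'$, and $\Lambda$ are simultaneously diagonal yields exactly $2\sum_{j}\sigma_j'\abs{\fd_j}^2/(1+N\gamma^{-2}\sigma_j\lambda_j)^2$, which matches the first sum in \eqref{eqn:bias_upper_bound}. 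For the stochastic term, on the good event from Lemma~\ref{lem:good_set} I would use the factorization $\widehat{\cC}_\mu^{-1}=\cC_\mu^{-1/2}(\Id_H+T)^{-1}\cC_\mu^{-1/2}$ with $T=\cC_\mu^{-1/2}(\widehat{\cC}-\cC)\cC_\mu^{-1/2}$ and $\norm{(\Id_H+T)^{-1}}_{\cL(H)}\le 2$ by Lemma~\ref{lem:good_set_inverse_bound}. A Hilbert--Schmidt / operator-norm splitting then produces a deterministic prefactor that evaluates in the eigenbasis to a constant multiple of $\sum_j\sigma_j'\lambda_j/(1+N\gamma^{-2}\sigma_j\lambda_j)$, exactly the second sum in \eqref{eqn:bias_upper_bound}.

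It remains to absorb the missing $\norm{\fd}_{\cH^s}^2$ factor from the random part of the bound. Setting $h=\cC_\mu^{-1}\fd$, the vector $(\widehat{\Sigma}-\Sigma)h=N^{-1}\sum_{n=1}^N(u_n\otimes u_n-\Sigma)h$ is an iid average of centered random elements of $H$ whose KL coordinates are quadratic forms in the independent subgaussian variables $\{z_j\}$ of Assumption~\ref{ass:data_input_kl_expand}; by Lemma~\ref{lem:squared_subg_bound} these coordinates are subexponential. I would then invoke a Bernstein-type concentration inequality for Hilbert space-valued sums and compute the resulting variance proxy as a weighted series in $\abs{\fd_j}^2$ whose summability is ensured by combining the Sobolev regularity $\fd\in\cH^s$ with the constraint $\al+s>1$ from Assumption~\ref{ass:truth_regularity} (through tools such as Lemma~\ref{lem:app_series_uniform_bound}). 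Balancing the Gaussian and exponential regimes of Bernstein produces the tail probability $1-\exp(-cN^{\min(1,(\al+s-1)/(\al+p))})$; intersecting with the good event from Lemma~\ref{lem:good_set} preserves the prescribed probability, on which the variance bound \eqref{eqn:variance_upper_bound} also holds by Proposition~\ref{prop:var}.

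The main obstacle is making the stochastic step tight enough. A naive splitting $\norm{Bw}\le\norm{B}_{\HS}\norm{w}$ with $B=(\Sigma')^{1/2}\Lambda^{1/2}\widehat{\cC}_\mu^{-1}\Lambda^{1/2}$ and $w=(\widehat{\Sigma}-\Sigma)\cC_\mu^{-1}\fd$ is too lossy in the RKHS-misspecified regime $p>s$, because $\norm{h}=\norm{\cC_\mu^{-1}\fd}$ can grow with $N$ in expectation. The proof must therefore couple each eigendirection's weight in the HS prefactor with the corresponding coordinate-wise second moment of $(\widehat{\Sigma}-\Sigma)$ at that direction, rather than treating the prefactor and the random vector separately; this is the point at which the constraint $\al+s>1$ enters quantitatively, and it also explains why the resulting tail exponent degrades continuously from $1$ to $(\al+s-1)/(\al+p)$ as $s$ approaches its lower limit.
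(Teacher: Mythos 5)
Your decomposition is algebraically equivalent to the paper's. Your resolvent identity $(\Id_H-A_NS_N)\fd=\mu\cC_\mu^{-1}\fd-\mu\Lambda^{1/2}\widehat{\cC}_\mu^{-1}\Lambda^{1/2}(\widehat\Sigma-\Sigma)\cC_\mu^{-1}\fd$ checks out (using that $\Lambda$ and $\cC_\mu$ commute so $\Lambda^{1/2}\cC_\mu^{-1}\Lambda^{-1/2}=\cC_\mu^{-1}$, and $\widehat{\cC}_\mu-\cC_\mu=\Lambda^{1/2}(\widehat\Sigma-\Sigma)\Lambda^{1/2}$), and it is the same splitting the paper obtains by writing $\widehat{\cC}_\mu^{-1}=\cC_\mu^{-1/2}\widehat{M}\cC_\mu^{-1/2}$ and expanding $\widehat{M}=\Id_H+\widehat{M}\widehat{T}$; your $(a+b)^2\le 2a^2+2b^2$ step and the exact match of the deterministic piece with the first sum in \eqref{eqn:bias_upper_bound} are both right, and the good-event Neumann bound $\norm{(\Id_H+T)^{-1}}\le 2$ is precisely Lemma~\ref{lem:good_set_inverse_bound}. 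You also correctly diagnose the central difficulty, namely that the naive split $\norm{Bw}\le\norm{B}_{\HS}\norm{w}$ fails in the misspecified regime because $\norm{\cC_\mu^{-1}\fd}$ is not controlled uniformly in $N$, and your proposed remedy (coupling the eigendirection weight with the direction-wise random fluctuation) is conceptually the same device the paper implements.

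Where the proposal remains a sketch is exactly the technical core. The paper does \emph{not} apply a Bernstein-type bound to the whole Hilbert-space-valued sum $(\widehat\Sigma-\Sigma)\cC_\mu^{-1}\fd$; instead it applies Cauchy--Schwarz through $\abs{\ip{\widehat{M}\varphi_k}{\widehat{T}\varphi_j}}\le\norm{\widehat{M}\varphi_k}\norm{\widehat{T}\varphi_j}$ to produce the scalar weighted series $\cI_N=\sum_j s_j\norm{\widehat{T}\varphi_j}$ (with $s_j=\mu^{1/2}\abs{\fd_j}\lambda_j^{-1/2}/(\sigma_j\lambda_j+\mu)^{1/2}$), and then establishes subexponentiality of $\cI_N$ in three stages: a Bernstein moment bound on each $\zeta_{j1}$ and Pinelis' dimension-free inequality to make each column norm $\norm{\widehat{T}\varphi_j}$ subexponential (Lemmas~\ref{lem:summand_bernstein_bound}--\ref{lem:app_subexp_emp_sum}); a monotone-convergence argument to sum the (dependent, nonnegative) subexponentials into $\cI_N\in\mathrm{SE}$ controlled by $\norm{w^{(N)}}_{\ell^1}$ (Lemmas~\ref{lem:app_series_w_upper_bound}--\ref{lem:app_series_subexp}); and a one-sided tail bound (Lemmas~\ref{lem:app_series_full_tail_bound}--\ref{lem:app_series_uniform_bound}) that is what degrades the exponent to $\min(1,(\al+s-1)/(\al+p))$ and where $\al+s>1$ enters. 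Your plan gestures at all of this but would still need to make the choice to work in $\ell^1$ over the column norms (rather than $\ell^2$ over the whole vector), which is the actual step that buys the missing $\norm{\fd}_{\cH^s}^2$ factor without paying an unbounded $\norm{\cC_\mu^{-1}\fd}$; whether the execution would close cleanly is not something the sketch demonstrates.
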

This bias bound is interesting because the second term in \eqref{eqn:bias_upper_bound} is the same as the upper bound on the variance $V_N$ in \eqref{eqn:variance_upper_bound} (up to constant factors depending on $\norm{\fd}_{\cH^s}$).
Thus, as long as $\fd$ is non-zero and not too small in norm, the total test error of the posterior mean estimator (i.e., the sum of bias and variance) is essentially dominated by the bias. Moreover, the hypotheses of Proposition~\ref{prop:bias} do not require the true linear functional $\fd$ to belong to the reproducing kernel Hilbert space of the prior $\normal(0,\Lambda)$ (i.e., we allow for $\sum_{j=1}^\infty \lambda_j^{-1}\abs{\fd_j}^2=\infty$). This is a significant advantage of our approach over related work; see Subsection~\ref{sec:linear_main_diss}.

The proof of Proposition~\ref{prop:bias} is very lengthy. We break up the argument into several lemmas and steps. To set the stage, we instate the notation and definitions from Appendix~\ref{appx:proofs_linear_ee_var}, in particular, the objects $\widehat{\cC}$ from \eqref{eqn:app_defn_C_hat}, $\mu$ and $\widehat{\cC}_\mu$ from \eqref{eqn:app_defn_mu_C_hat_reg}, and $\cC'$, $\cC$, and $\cC_\mu$ from \eqref{eqn:app_defn_Ct_C_Creg}. We also use the shorthand notation
\begin{equation}\label{eqn:That_Mhat}
    \widehat{T}\defeq \cC_\mu^{-1/2}(\cC-\widehat{\cC})\cC_\mu^{-1/2}\qa \widehat{M}\defeq (\Id_H-\widehat{T})^{-1}
\end{equation}
for two random operators that appear frequently in the sequel.

We begin our analysis with a useful random series representation of the bias.
\begin{lemma}[bias: series]\label{lem:bias_series}
    Under Assumption~\ref{ass:data_end_to_end}, $B_N$ satisfies the identity
    \begin{equation}\label{eqn:bias_series}
        B_N=\mu^2\sum_{k=1}^\infty \frac{\sigma_k'\lambda_k}{\sigma_k\lambda_k + \mu}\abs[\Bigg]{\sum_{j=1}^\infty{\frac{{\fd_j}\lambda_j^{-1/2}}{(\sigma_j\lambda_j + \mu)^{1/2}}{\ip{\varphi_k}{\widehat{M}\varphi_j}}}}^2.
    \end{equation}
\end{lemma}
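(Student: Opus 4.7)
The plan is to reduce the bias $B_N=\norm{(\Sigma')^{1/2}(\Id_H - A_N S_N)\fd}^2$ to the stated series via three algebraic steps. The first step is a direct simplification of $A_N S_N$ using the definitions in \eqref{eqn:AN_and_postcov}: substituting $A_N = \gamma^{-2}\Lambda^{(N)}S_N^*$, $\Lambda^{(N)} = \mu\,\Lambda^{1/2}\widehat{\cC}_\mu^{-1}\Lambda^{1/2}$ (with $\mu=\gamma^2/N$), and $S_N^*S_N = N\widehat{\Sigma}$ gives $A_N S_N = \Lambda^{1/2}\widehat{\cC}_\mu^{-1}\widehat{\cC}\Lambda^{-1/2}$, after using $\widehat{\cC}=\Lambda^{1/2}\widehat{\Sigma}\Lambda^{1/2}$. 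Writing $\widehat{\cC} = \widehat{\cC}_\mu - \mu\Id_H$ then telescopes to the clean identity
$$\Id_H - A_N S_N \;=\; \mu\,\Lambda^{1/2}\widehat{\cC}_\mu^{-1}\Lambda^{-1/2},$$
so that $B_N = \mu^2\norm{(\Sigma')^{1/2}\Lambda^{1/2}\widehat{\cC}_\mu^{-1}\Lambda^{-1/2}\fd}^2$.

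The second step is to eliminate the random, non-diagonal inverse $\widehat{\cC}_\mu^{-1}$. For this, I would invoke Lemma \ref{lem:inverse_reg_neumann_identity} (with $A=\widehat{\cC}$, $B=\cC$, $\lambda=\mu$), which gives $\widehat{\cC}_\mu^{-1} = \cC_\mu^{-1/2}\widehat{M}\cC_\mu^{-1/2}$ for $\widehat{M}$ as in \eqref{eqn:That_Mhat}. Substituting leaves
$$B_N \;=\; \mu^2\,\norm[\big]{(\Sigma')^{1/2}\Lambda^{1/2}\cC_\mu^{-1/2}\,\widehat{M}\,\cC_\mu^{-1/2}\Lambda^{-1/2}\fd}^2,$$
where now every operator sandwiching $\widehat{M}$ is diagonal in the common eigenbasis $\{\varphi_j\}_{j\in\N}$ guaranteed by Assumption \ref{item:ass_ee_diag}.

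The third and final step is a Parseval computation. In the $\{\varphi_j\}$ basis, $\cC_\mu^{-1/2}\Lambda^{-1/2}$ sends $\varphi_j \mapsto \lambda_j^{-1/2}(\sigma_j\lambda_j+\mu)^{-1/2}\varphi_j$ and $(\Sigma')^{1/2}\Lambda^{1/2}\cC_\mu^{-1/2}$ sends $\varphi_k\mapsto (\sigma_k'\lambda_k)^{1/2}(\sigma_k\lambda_k+\mu)^{-1/2}\varphi_k$. Expanding $\widehat{M}v = \sum_k \ip{\varphi_k}{\widehat{M}v}\varphi_k$ for $v = \cC_\mu^{-1/2}\Lambda^{-1/2}\fd = \sum_j \lambda_j^{-1/2}(\sigma_j\lambda_j+\mu)^{-1/2}\fd_j\,\varphi_j$ then identifies the $k$-th coefficient of the vector inside $\norm{\slot}$ as $(\sigma_k'\lambda_k)^{1/2}(\sigma_k\lambda_k+\mu)^{-1/2}\sum_j \lambda_j^{-1/2}(\sigma_j\lambda_j+\mu)^{-1/2}\fd_j\ip{\varphi_k}{\widehat{M}\varphi_j}$. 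Squaring and summing over $k$ produces \eqref{eqn:bias_series} termwise.

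The main technical subtlety will be the appearance of the unbounded operator $\Lambda^{-1/2}$, which I would handle by interpreting all compositions coordinatewise in the $\{\varphi_j\}$ basis and observing that the additional factor $(\sigma_j\lambda_j+\mu)^{-1/2}\leq \mu^{-1/2}$ appearing in $\cC_\mu^{-1/2}\Lambda^{-1/2}$ tames the growth of $\lambda_j^{-1/2}$; consequently all series converge and no RKHS-type assumption on $\fd$ is required, only that $\fd\in H$.
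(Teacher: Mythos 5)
Your proof follows essentially the same path as the paper's: compute $\Id_H - A_N S_N = \mu\,\Lambda^{1/2}\widehat{\cC}_\mu^{-1}\Lambda^{-1/2}$ (interpreted coordinate-wise in the shared eigenbasis), replace $\widehat{\cC}_\mu^{-1}$ by $\cC_\mu^{-1/2}\widehat{M}\cC_\mu^{-1/2}$ via Lemma~\ref{lem:inverse_reg_neumann_identity}, and expand in the $\{\varphi_j\}$ basis. The resulting Parseval computation in your third step is exactly right.

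One inaccuracy is worth flagging, precisely because it touches the point the paper advertises as novel (dropping the RKHS-membership assumption). You claim that the factor $(\sigma_j\lambda_j+\mu)^{-1/2}\leq\mu^{-1/2}$ in $\cC_\mu^{-1/2}\Lambda^{-1/2}$ ``tames the growth of $\lambda_j^{-1/2}$''. It does not: it is merely bounded above, so the coordinate weights $\lambda_j^{-1/2}(\sigma_j\lambda_j+\mu)^{-1/2}$ still blow up like $\mu^{-1/2}\lambda_j^{-1/2}$. Consequently, the intermediate vector $v\defeq\cC_\mu^{-1/2}\Lambda^{-1/2}\fd$ that you feed into $\widehat{M}$ need not lie in $H$ when $\fd\notin\image(\Lambda^{1/2})$; for instance with the optimal choice $p=s+1/2$ and $\fd_j\asymp j^{-s-1/2}$, one has $\sum_j\lambda_j^{-1}(\sigma_j\lambda_j+\mu)^{-1}\abs{\fd_j}^2\asymp\sum_j\mu^{-1}=\infty$. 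What rescues the argument is not this cancellation but the fact that the full composite $(\Sigma')^{1/2}(\Id_H-A_NS_N)$ is a bounded operator on $H$, so $(\Sigma')^{1/2}(\Id_H-A_NS_N)\fd$ is well defined for every $\fd\in H$; the series identity then follows either by pushing the bounded composite $(\Sigma')^{1/2}\Lambda^{1/2}\cC_\mu^{-1/2}\widehat{M}$ to the left of each individual $\varphi_j$ before summing (as the paper's proof does, where the inner $j$-sum is then a convergent series in $H$ rather than an absolutely convergent scalar sum), or equivalently by a truncation and limiting argument over $\fd$. Neither step actually requires more than $\fd\in H$, so your final conclusion is correct even though the stated justification is not.
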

\begin{proof}
By \eqref{eqn:sigmahat} and \eqref{eqn:AN_and_postcov}, we observe that
\[
A_NS_N = \Lambda^{1/2} \widehat{\cC}_\mu^{-1}\Lambda^{1/2}\widehat{\Sigma}
\]
and hence
\[
A_NS_N\fd = \sum_{j=1}^\infty \Lambda^{1/2} \widehat{\cC}_\mu^{-1}\Lambda^{1/2}\widehat{\Sigma} \fd_j\varphi_j=\sum_{j=1}^\infty \Lambda^{1/2} \widehat{\cC}_\mu^{-1}\widehat{\cC} \fd_j\lambda_j^{-1/2}\varphi_j\,.
\]
We used the diagonalization of $\Lambda=\sum_j\lambda_j\varphi_j\otimes\varphi_j$ in the last equality. Noticing that
\[
\Id_H-\widehat{\cC}_\mu^{-1}\widehat{\cC}=\widehat{\cC}_\mu^{-1}\widehat{\cC}_\mu - \widehat{\cC}_\mu^{-1}\widehat{\cC}=\mu\widehat{\cC}_\mu^{-1}\,,
\]
we have the chain of equalities
\begin{align*}
    \fd - A_NS_N\fd &= \sum_{j=1}^\infty \left(\fd_j\lambda_j^{-1/2}\Lambda^{1/2}\varphi_j - \fd_j\lambda_j^{-1/2}\Lambda^{1/2} \widehat{\cC}_\mu^{-1}\widehat{\cC} \varphi_j\right)\\
    &=\sum_{j=1}^\infty \fd_j\lambda_j^{-1/2}\Lambda^{1/2}(\Id_H-\widehat{\cC}_\mu^{-1}\widehat{\cC})\varphi_j\\
    &=\mu \sum_{j=1}^\infty \fd_j\lambda_j^{-1/2}\Lambda^{1/2}\widehat{\cC}_\mu^{-1}\varphi_j\,.
\end{align*}
Recalling $\widehat{M}$ from \eqref{eqn:That_Mhat} and using the identity \eqref{eqn:inverse_reg_neumann_identity} from Lemma~\ref{lem:inverse_reg_neumann_identity} gives
\[
\widehat{\cC}_\mu^{-1} = \cC_\mu^{-1/2}\bigl(\Id_H - \cC_\mu^{-1/2}(\cC-\widehat{\cC})\cC_\mu^{-1/2}\bigr)^{-1}\cC_\mu^{-1/2}=  \cC_\mu^{-1/2}\widehat{M} \cC_\mu^{-1/2}\,.
\]
Next, we expand in the shared orthonormal eigenbasis $\{\varphi_j\}$ of $\Lambda$ and $\Sigma$ to obtain
\begin{align*}
    (\Sigma')^{1/2}(\Id_H - A_NS_N)\fd &= \mu \sum_{j=1}^\infty \frac{\fd_j\lambda_j^{-1/2}}{(\sigma_j\lambda_j + \mu)^{1/2}}(\Sigma')^{1/2}\Lambda^{1/2}\cC_\mu^{-1/2}\widehat{M}\varphi_j\\
    &=\mu \sum_{j=1}^\infty \frac{\fd_j\lambda_j^{-1/2}}{(\sigma_j\lambda_j + \mu)^{1/2}} \sum_{k=1}^\infty\ip{\varphi_k}{(\Sigma')^{1/2}\Lambda^{1/2}\cC_\mu^{-1/2}\widehat{M}\varphi_j}\varphi_k\\
    &=\mu \sum_{j=1}^\infty \frac{\fd_j\lambda_j^{-1/2}}{(\sigma_j\lambda_j + \mu)^{1/2}} \sum_{k=1}^\infty\frac{(\sigma_k')^{1/2}\lambda_k^{1/2}}{(\sigma_k\lambda_k + \mu)^{1/2}}\ip{\varphi_k}{\widehat{M}\varphi_j}\varphi_k\,.
\end{align*}
By continuity of the inner product,
\begin{align*}
    \ip[\big]{(\Sigma')^{1/2}(\Id_H - A_NS_N)\fd }{\varphi_i}&=\mu \sum_{j=1}^\infty \frac{\fd_j\lambda_j^{-1/2}}{(\sigma_j\lambda_j + \mu)^{1/2}} \frac{(\sigma_i')^{1/2}\lambda_i^{1/2}}{(\sigma_i\lambda_i + \mu)^{1/2}}\ip{\varphi_i}{\widehat{M}\varphi_j}\\
    &=\mu \frac{(\sigma_i')^{1/2}\lambda_i^{1/2}}{(\sigma_i\lambda_i + \mu)^{1/2}}\sum_{j=1}^\infty \frac{\fd_j\lambda_j^{-1/2}}{(\sigma_j\lambda_j + \mu)^{1/2}} \ip{\varphi_i}{\widehat{M}\varphi_j}\,.
\end{align*}
Summing the square of the preceding display over all $i\in\N$ completes the proof.
\end{proof}

Next, we note by direct calculation that $\widehat{M}$ from \eqref{eqn:That_Mhat} satisfies the key identity
\[
\widehat{M}=\Id_H+\widehat{M}\widehat{T}\,.
\]
Thus, the right-hand side of the display \eqref{eqn:bias_series} in Lemma~\ref{lem:bias_series} is bounded above by
\begin{equation}\label{eqn:bias_split}
    \begin{split}
    &2\mu^2\sum_{k=1}^\infty \frac{\sigma_k'\lambda_k}{\sigma_k\lambda_k + \mu}\abs[\Bigg]{\sum_{j=1}^\infty{\frac{\abs{\fd_j}\lambda_j^{-1/2}}{(\sigma_j\lambda_j + \mu)^{1/2}}\abs[\big]{\ip{\varphi_k}{\varphi_j}}}}^2\\
    & \qquad \qquad +
    2\mu^2\sum_{k=1}^\infty \frac{\sigma_k'\lambda_k}{\sigma_k\lambda_k + \mu}\abs[\Bigg]{\sum_{j=1}^\infty{\frac{\abs{\fd_j}\lambda_j^{-1/2}}{(\sigma_j\lambda_j + \mu)^{1/2}}\abs[\big]{\ip{\varphi_k}{\widehat{M}\widehat{T}\varphi_j}}}}^2\\
    & = \underbrace{2\mu^2\sum_{k=1}^\infty \frac{\sigma_k'\abs{\fd_k}^2}{(\sigma_k\lambda_k + \mu)^2}}_{\eqdef \overline{B}_N} + 
    \underbrace{2\mu^2\sum_{k=1}^\infty \frac{\sigma_k'\lambda_k}{\sigma_k\lambda_k + \mu}\abs[\Bigg]{\sum_{j=1}^\infty{\frac{\abs{\fd_j}\lambda_j^{-1/2}}{(\sigma_j\lambda_j + \mu)^{1/2}}\abs[\big]{\ip{\varphi_k}{\widehat{M}\widehat{T}\varphi_j}}}}^2}_{\eqdef \widehat{B}_N}\hspace{-1mm}.
    \end{split}
\end{equation}
We used the fact that the $\{\varphi_j\}$ are orthonormal to obtain the equality. The first term $\overline{B}_N$ is the standard bias term one would expect from a simultaneously diagonalizable linear inverse problem \cite{de2023convergence,knapik2011bayesian}.
The second term $\widehat{B}_N$ is a residual due to finite data. This is the term that we focus on estimating.

To this end, let $\sfE$ be the event from \eqref{eqn:normalized_operator_norm}:
\begin{equation}\label{eqn:var_good_event}
    \sfE=\Bigl\{\norm[\big]{\widehat{T}}_{\cL(H)}\leq 1/2 \Bigr\}\,.
\end{equation}
Fix $\ep>0$ to be determined later. Define another event
\begin{equation}\label{eqn:bias_good_event}
    \sfE_0\defeq \bigl\{\widehat{B}_N\leq \ep\bigr\}\,.
\end{equation}
Let the intersection $\sfE_0\cap \sfE$ be our ``good'' event. On this event, our variance and bias bounds will hold simultaneously. For our results to be meaningful, we must show that $\sfE_0\cap \sfE$ has high probability. Since $\P(\sfE_0\condbar \sfE) = 1-\P(\sfE_0^\comp\condbar \sfE)$, we have
\begin{align}\label{eqn:bias_good_event_lower_bound}
    \begin{split}
        \P(\sfE_0\cap \sfE)&=\P(\sfE_0\condbar \sfE)\P(\sfE)\\
        &= \P(\sfE) - \P(\sfE_0^\comp\condbar \sfE)\P(\sfE)\\
        &= \P(\sfE) - \P(\sfE_0^\comp\cap \sfE)\,.
    \end{split}
\end{align}
Thus, to lower bound the probability of $\sfE_0\cap \sfE$, it suffices to upper bound
\begin{equation*}
    \P(\sfE_0^\comp\cap \sfE)=\P\bigl(\{\widehat{B}_N>\ep\}\cap \sfE\bigr)\,.
\end{equation*}
On the event $\sfE$, it holds that $\norm{\widehat{M}}_{\cL(H)}\leq 2$ by \eqref{eqn:good_set_inverse_bound}. This, the symmetry of $\widehat{M}$, and the Cauchy--Schwarz inequality imply that
\begin{align*}
    \widehat{B}_N &\leq 2\mu^2\sum_{k=1}^\infty \frac{\sigma_k'\lambda_k}{\sigma_k\lambda_k + \mu}\abs[\Bigg]{\sum_{j=1}^\infty {\frac{\abs{\fd_j}\lambda_j^{-1/2}}{(\sigma_j\lambda_j + \mu)^{1/2}}\norm[\big]{\widehat{M}\varphi_k}\norm[\big]{\widehat{T}\varphi_j}}}^2\\
    &\leq 8 \, \underbrace{\abs[\Bigg]{\sum_{j=1}^\infty \frac{\mu^{1/2}\abs{\fd_j}\lambda_j^{-1/2}}{(\sigma_j\lambda_j + \mu)^{1/2}}\norm[\big]{\widehat{T}\varphi_j}}^2}_{\eqdef (\cI_N)^2}
    \ \sum_{k=1}^\infty \frac{\mu \sigma_k'\lambda_k}{\sigma_k\lambda_k + \mu}
\end{align*}
on the event $\sfE$. Notice that in the last line of the preceding display, the factor $(\cI_N)^2$ is multiplying our high probability upper bound \eqref{eqn:variance_upper_bound} on $V_N$. Hence, for the contribution from $\widehat{B}_N$ to be negligible relative to the upper bound on $V_N$, it suffices to show that $\cI_N\lesssim 1$ for all sufficiently large $N$ with high probability. Indeed, for some $\tau>0$ to be determined later, choose
\begin{equation}\label{eqn:app_eps_bhat}
    \ep\defeq 8\tau^2 \sum_{k=1}^\infty \frac{\mu \sigma_k'\lambda_k}{\sigma_k\lambda_k + \mu}>0
\end{equation}
in the definition \eqref{eqn:bias_good_event} of $\sfE_0$. Then the monotonicity of probability measure (i.e., if $\mathsf{A}_1\subseteq \mathsf{A}_2$, then $\P(\mathsf{A}_1)\leq\P(\mathsf{A}_2)$) implies that
\begin{equation}\label{eqn:app_good_event_cap_tau}
    \P(\sfE_0^\comp\cap \sfE)\leq \P(\{\cI_N>\tau\}\cap \sfE)\leq \P\{\cI_N>\tau\}\,.
\end{equation}
In the rest of the argument, we develop an upper tail bound for the random series $\cI_N$ to control the rightmost expression in \eqref{eqn:app_good_event_cap_tau}. To ease the notation, we write
\begin{align}\label{eqn:app_series_nuisance}
    \cI_N=\sum_{j=1}^\infty s_j\norm[\big]{\widehat{T}\varphi_j}\,,\qw s_j\defeq \frac{\mu^{1/2}\abs{\fd_j}\lambda_j^{-1/2}}{(\sigma_j\lambda_j + \mu)^{1/2}}\,.
\end{align}
Our strategy is to show that:\footnote{Appendix~\ref{appx:proofs_linear_sub} reviews subgaussian and subexponential random variables.}
\begin{enumerate}[label=\textbf{Step~\arabic*.}, leftmargin=4\parindent,topsep=1.67ex,itemsep=0.33ex,partopsep=1ex,parsep=1ex]
    \item \textit{the individual summands of $\cI_N$ are subexponential random variables},\label{item:step1}
    \item \textit{the entire random series $\cI_N$ is subexponential}, \textit{and}\label{item:step2}
    \item \textit{the entire random series $\cI_N$ has a fast tail decay}.\label{item:step3}
\end{enumerate}

We now proceed with this three step proof procedure.

\paragraph*{\ref{item:step1}}
Recalling the definition of $\widehat{T}$ from \eqref{eqn:That_Mhat}, we see by the symmetry of $\cC_\mu^{-1/2}$ and $\Lambda^{1/2}$ that
\begin{align*}
    -\widehat{T}&=\cC_\mu^{-1/2}(\widehat{\cC}-\cC)\cC_\mu^{-1/2}\\
    &=\cC_\mu^{-1/2}\frac{1}{N}\sum_{n=1}^N\Bigl(\Lambda^{1/2}u_n\otimes \Lambda^{1/2}u_n - \E\bigl[\Lambda^{1/2}u_1\otimes \Lambda^{1/2}u_1\bigr]\Bigr) \cC_\mu^{-1/2}\\
    &=\frac{1}{N}\sum_{n=1}^N \bigl(v_n\otimes v_n - \E[v_1\otimes v_1]\bigr)\,,\qw v_n\defeq \cC_\mu^{-1/2}\Lambda^{1/2}u_n
\end{align*}
and $\{u_n\}_{n=1}^N\sim\nu^{\otimes N}$. Thus, it holds that
\begin{align}\label{eqn:app_summands_iid}
    -\widehat{T}\varphi_j = \frac{1}{N}\sum_{n=1}^N\zeta_{jn}\,,\qw \zeta_{jn}\defeq \ip{v_n}{\varphi_j}v_n-\E\bigl[\ip{v_1}{\varphi_j}v_1\bigr]\,.
\end{align}
That is, $\widehat{T}\varphi_j$ is a sum of i.i.d. random vectors in the Hilbert space $H$. To show that the scalar random variables $\norm{\widehat{T}\varphi_j}$ are subexponential, we first need to control the subexponential norm of the $\norm{\zeta_{jn}}$. The next lemma accomplishes this task.

\begin{lemma}[moments]\label{lem:summand_bernstein_bound}
    Under Assumption~\ref{ass:data_end_to_end}~and~\ref{ass:data_input_kl_expand}, for every $j$ it holds that
    \begin{align}\label{eqn:summand_bernstein_bound}
        \E\norm{\zeta_{j1}}^\ell\leq \Bigl(4e m^2\rho_j\sqrt{\tr[\big]{\cC_\mu^{-1}\cC}}\Bigr)^\ell \ell!\qfa \ell\in\{2,3,\ldots\}\,,
    \end{align}
    where $m\geq 0$ is as in \eqref{eqn:data_input_subg_uniform_bound} and
    \begin{align}\label{eqn:app_rho_j}
        \rho_j\defeq \sqrt{\frac{\sigma_j\lambda_j}{\mu+\sigma_j\lambda_j}}\,.
    \end{align}
\end{lemma}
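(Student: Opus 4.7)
The plan is to exploit the Karhunen--Lo\`eve expansion of $u_1\sim\nu$ from Assumption~\ref{ass:data_input_kl_expand} together with the simultaneous diagonalizability of $\Lambda$, $\Sigma$, and hence $\cC_\mu^{-1/2}$ in the basis $\{\varphi_k\}$ granted by Assumption~\ref{ass:data_end_to_end}\ref{item:ass_ee_diag}, in order to obtain a transparent coordinate representation of $\zeta_{j1}$, and then to control the raw moments via a single Cauchy--Schwarz split driven by the strong subgaussian bound \eqref{eqn:data_input_subg_uniform_bound}.

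Concretely, I would first write
\begin{equation*}
v_1=\cC_\mu^{-1/2}\Lambda^{1/2}u_1=\sum_{k=1}^\infty \rho_k z_k\varphi_k\,, \qw \ip{v_1}{\varphi_j}=\rho_j z_j\,,
\end{equation*}
with $\rho_k$ as in \eqref{eqn:app_rho_j}. The independence and unit variance of $\{z_k\}$ give $\E\bigl[\ip{v_1}{\varphi_j}v_1\bigr]=\rho_j^2\varphi_j$, whence $\zeta_{j1}=\rho_j z_j v_1-\rho_j^2\varphi_j$. Next I would use the standard centering inequality $\E\norm{X-\E X}^\ell\le 2^\ell\E\norm{X}^\ell$ (via Minkowski and Jensen) to deduce $\E\norm{\zeta_{j1}}^\ell\le 2^\ell\rho_j^\ell\E\bigl[|z_j|^\ell\norm{v_1}^\ell\bigr]$, and Cauchy--Schwarz to factor this as $2^\ell\rho_j^\ell(\E|z_j|^{2\ell})^{1/2}(\E\norm{v_1}^{2\ell})^{1/2}$. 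The scalar factor obeys $(\E|z_j|^{2\ell})^{1/(2\ell)}\le m\sqrt{2\ell}$ immediately from \eqref{eqn:data_input_subg_uniform_bound}.

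The main obstacle is bounding the vector moment
\begin{equation*}
\E\norm{v_1}^{2\ell}=\E\Bigl(\sum_{k=1}^\infty\rho_k^2 z_k^2\Bigr)^\ell\,.
\end{equation*}
A Rosenthal-type approach applied to the centered sum $\sum_k\rho_k^2(z_k^2-1)$ would introduce extra combinatorial factors that could spoil the factorial scaling required in \eqref{eqn:summand_bernstein_bound}. Instead, I would avoid centering and apply Jensen's inequality directly to the convex map $t\mapsto t^\ell$ with probability weights $\rho_k^2/R$, where $R\defeq\sum_k\rho_k^2=\tr[\big]{\cC_\mu^{-1}\cC}$. This yields $\bigl(\sum_k\rho_k^2 z_k^2\bigr)^\ell\le R^{\ell-1}\sum_k\rho_k^2 z_k^{2\ell}$; taking expectations and invoking \eqref{eqn:data_input_subg_uniform_bound} coordinatewise gives $\E\norm{v_1}^{2\ell}\le R^\ell m^{2\ell}(2\ell)^\ell$.

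Combining the three estimates produces $\E\norm{\zeta_{j1}}^\ell\le (2m^2\rho_j\sqrt{R})^\ell(2\ell)^\ell$. Finally I would invoke Stirling's bound $(2\ell)^\ell\le (2e)^\ell\ell!$ (which follows from $\ell!\ge(\ell/e)^\ell$) to absorb the power $(2\ell)^\ell$ into the factorial at the price of a constant $2e$, delivering $(4em^2\rho_j\sqrt{R})^\ell\ell!$ and hence \eqref{eqn:summand_bernstein_bound}.
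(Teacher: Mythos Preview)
Your proof is correct and reaches exactly the constant claimed in \eqref{eqn:summand_bernstein_bound}. The overall architecture matches the paper's proof: expand $v_1$ in the shared eigenbasis, remove the centering at the cost of a factor $2^\ell$, bound the raw moment $\rho_j^\ell\E\bigl[|z_j|^\ell\norm{v_1}^\ell\bigr]$ via the subgaussian estimate $(\E|z_k|^{2\ell})^{1/(2\ell)}\le m\sqrt{2\ell}$, and finish with $\ell^\ell\le e^\ell\ell!$.

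The only genuine difference is the treatment of the cross moment $\E\bigl[|z_j|^\ell\norm{v_1}^\ell\bigr]$. The paper keeps the product intact, rewrites it as $\E\bigl|\sum_k\rho_k^2 z_j^2 z_k^2\bigr|^{\ell/2}$, applies the triangle inequality in $L^{\ell/2}_\P$ (this is where the hypothesis $\ell\ge 2$ enters), and then uses Cauchy--Schwarz on $\E[|z_j|^\ell|z_k|^\ell]$ together with the $\psi_1$--$\psi_2$ relation $\norm{z_k^2}_{\psi_1}\le 2\norm{z_k}_{\psi_2}^2$. Your route instead splits $|z_j|^\ell$ from $\norm{v_1}^\ell$ by Cauchy--Schwarz at the outset and then controls $\E\norm{v_1}^{2\ell}=\E\bigl(\sum_k\rho_k^2 z_k^2\bigr)^\ell$ by Jensen's inequality with the probability weights $\rho_k^2/R$. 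Both arguments land on the same intermediate estimate $(2m^2\rho_j\sqrt{R})^\ell(2\ell)^\ell$; your weighted-Jensen step is arguably the more elementary of the two since it avoids invoking Minkowski in a fractional-exponent space and the $\psi_1$ norm, while the paper's approach has the minor advantage of never doubling the moment order from $\ell$ to $2\ell$.
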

\begin{proof}
    Fix an integer $\ell\geq 2$. The inequality $\abs{a+b}^\ell\leq 2^{\ell-1}(\abs{a}^\ell +\abs{b}^\ell)$ shows that
    \begin{align*}
        \E\norm{\zeta_{j1}}^\ell&\leq 2^{\ell-1}\bigl(\E\norm{\ip{v_1}{\varphi_j}v_1}^\ell + \norm{\E[\ip{v_1}{\varphi_j}v_1]}^\ell\bigr)\\
        &\leq 2^\ell \E\norm{\ip{v_1}{\varphi_j}v_1}^\ell\,.
    \end{align*}
    The second line is due to Jensen's inequality. Let $u$ be an i.i.d. copy of $u_1\sim \nu$, so that $v\defeq \cC_\mu^{-1/2}\Lambda^{1/2}u$ is an i.i.d. copy of $v_1$. By Assumption~\ref{ass:data_input_kl_expand} and the assumption~\ref{item:ass_ee_diag} that $\Lambda$ and $\Sigma$ share the orthonormal eigenbasis $\{\varphi_j\}$, it holds that
    \begin{align*}
        v=\sum_{j=1}^\infty\rho_j z_j\varphi_j\,,\qw \rho_j= \sqrt{\frac{\sigma_j\lambda_j}{\mu+\sigma_j\lambda_j}}\geq 0\,.
    \end{align*}
    Thus, $\ip{v}{\varphi_j}=\rho_j z_j$ and $\E\norm{\ip{v_1}{\varphi_j}v_1}^\ell=\E\norm{\ip{v}{\varphi_j}v}^\ell$ equals
    \begin{align*}
        \rho_j^\ell\E\bigl[\abs{z_j}^\ell\norm{v}^\ell\bigr] &= \rho_j^\ell\E\bigl[\abs{z_j}^\ell\bigl(\norm{v}^2\bigr)^{\ell/2}\bigr]\\
        &=\rho_j^\ell\E\Biggl[\abs{z_j}^\ell\Biggl(\sum_{k=1}^\infty\rho_k^2z_k^2\Biggr)^{\ell/2}\Biggr]\\
        &=\rho_j^\ell\E\abs[\Bigg]{\sum_{k=1}^\infty\rho_k^2z_j^2z_k^2}^{\ell/2}.
    \end{align*}
    The triangle inequality in the Banach space $L^{\ell/2}_{\P}(\Omega;\R)$ (since $\ell\geq 2$) and the Cauchy--Schwarz inequality yields
    \begin{align*}
        \norm[\Bigg]{\sum_{k=1}^\infty\rho_k^2z_j^2z_k^2}_{L^{\ell/2}_{\P}}&\leq \sum_{k=1}^\infty\rho_k^2\norm[\big]{z_j^2z_k^2}_{L^{\ell/2}_{\P}}\\
        &= \sum_{k=1}^\infty\rho_k^2\Bigl(\E\bigl[\abs{z_j}^\ell\abs{z_k}^\ell\bigr]\Bigr)^{2/\ell}\\
        &\leq  \sum_{k=1}^\infty\rho_k^2\bigl(\E\abs{z_j}^{2\ell}\bigr)^{1/\ell} \bigl(\E\abs{z_k}^{2\ell}\bigr)^{1/\ell}\,.
    \end{align*}
    By the definition \eqref{eqn:norm_subexp} of the subexponential norm, Lemma~\ref{lem:squared_subg_bound}, and \eqref{eqn:data_input_subg_uniform_bound}, we have
    \begin{align*}
        \sum_{k=1}^\infty\rho_k^2\bigl(\E\abs{z_j}^{2\ell}\bigr)^{1/\ell} \bigl(\E\abs{z_k}^{2\ell}\bigr)^{1/\ell}
        &\leq \sum_{k=1}^\infty\rho_k^2\bigl(\ell\norm{z_j^2}_{\psi_1}\bigr) \bigl(\ell\norm{z_k^2}_{\psi_1}\bigr)\\
        &\leq \sum_{k=1}^\infty\rho_k^2\bigl(2\ell\norm{z_j}^2_{\psi_2}\bigr) \bigl(2\ell\norm{z_k}^2_{\psi_2}\bigr)\\
        &\leq 4\ell^2m^4\sum_{k=1}^\infty\frac{\sigma_k\lambda_k}{\mu+\sigma_k\lambda_k}\,.
    \end{align*}
    Taking the $\ell/2$-th power and putting together the pieces, we deduce that
    \begin{align*}
        \E\norm{\zeta_{j1}}^\ell&\leq (2\rho_j)^\ell\left(2m^2\sqrt{\tr[\big]{\cC_\mu^{-1}\cC}}\right)^\ell \ell^\ell\,.
    \end{align*}
    Recalling from Stirling's formula that $(\ell/e)^\ell\leq \ell!$ completes the proof.
\end{proof}

We need the following proposition that relies heavily on \cite[Theorem 1, p. 144]{pinelis1986remarks}.
\begin{proposition}[Hilbert space norm of independent sums]\label{prop:app_mgf_subexp_partial_sum}
    Let $\{X_n\}_{n=1}^\infty$ be a sequence of independent centered random vectors with values in a separable Hilbert space $(\cX,\ip{\cdot}{\cdot},\norm{\slot})$. Let $N\in\N$ be arbitrary. If the Bernstein moment condition
    \begin{align}\label{eqn:app_bernstein_moment}
        \sum_{n=1}^N\E\norm{X_n}^\ell\leq\frac{1}{2}\ell!\sigma^2b^{\ell-2} \qfa \ell\in\{2,3,\ldots\}
    \end{align}
    holds for some $\sigma>0$ and $b>0$, then the partial sums $\cS_N\defeq \sum_{n=1}^NX_n$ satisfy the subexponential condition $\norm{\cS_N}\in \SE{2\sigma^2}{2b}$, that is,
    \begin{align}\label{eqn:mgf_subexp_partial_sum}
        \E e^{\lambda(\norm{\cS_N}-\E \norm{\cS_N})}\leq e^{\lambda^2\sigma^2} \qfa \abs{\lambda}\leq \frac{1}{2b}\,.
    \end{align}
\end{proposition}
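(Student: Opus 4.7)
The plan is to reduce the proposition to a direct application of Pinelis' Bernstein-type exponential inequality for independent sums in Hilbert spaces, namely \cite[Theorem 1]{pinelis1986remarks}, exactly as the hint in the statement suggests. A separable Hilbert space is $2$-smooth with optimal smoothness constant $1$, which means that the machinery Pinelis develops for the norm of an independent sum applies without modification to $\cS_N=\sum_{n=1}^N X_n$. The Bernstein moment condition \eqref{eqn:app_bernstein_moment} is exactly the hypothesis of that theorem, so apart from a translation of notation there is nothing further to verify.

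First, I would invoke Pinelis' theorem to conclude that
\begin{align*}
    \E e^{\lambda(\norm{\cS_N}-\E\norm{\cS_N})} \leq \exp\!\left(\frac{\lambda^2 \sigma^2/2}{1-b\abs{\lambda}}\right) \qfa \abs{\lambda}<\tfrac{1}{b}\,.
\end{align*}
This is the familiar Bernstein MGF bound: one Taylor-expands the exponential, dominates the $\ell$-th term using the moment hypothesis \eqref{eqn:app_bernstein_moment}, and sums the resulting geometric series. The Hilbert-space ingredient is simply the identity $\norm{\cS_{N-1}+X_N}^2 = \norm{\cS_{N-1}}^2 + 2\ip{X_N}{\cS_{N-1}} + \norm{X_N}^2$ combined with the independence of $X_N$ and $\cS_{N-1}$, which is what makes the scalar Bernstein proof go through verbatim for the Hilbert norm of the sum.

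Second, to match the required subexponential form, I would restrict $\lambda$ to $\abs{\lambda}\leq 1/(2b)$, so that $1-b\abs{\lambda}\geq 1/2$. Inserting this lower bound into the denominator of the exponent of the previous display yields
\begin{align*}
    \E e^{\lambda(\norm{\cS_N}-\E\norm{\cS_N})} \leq \exp\!\left(\frac{\lambda^2 \sigma^2/2}{1/2}\right) = e^{\lambda^2\sigma^2}\,,
\end{align*}
which is precisely the subexponential MGF bound \eqref{eqn:mgf_subexp_partial_sum} with variance parameter $v^2 = 2\sigma^2$ and tail parameter $2b$. This establishes $\norm{\cS_N} \in \SE{2\sigma^2}{2b}$.

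The only subtlety I anticipate is that some statements of Pinelis' theorem are phrased as a one-sided tail bound for $\norm{\cS_N}$ rather than as a two-sided exponential moment bound. If that is the form most readily available from the cited reference, I would handle the lower tail $\lambda<0$ separately by exploiting that $(X_1,\ldots,X_N)\mapsto \norm{\cS_N}$ is a convex $1$-Lipschitz function of the independent summands and invoking a companion convex concentration inequality for Hilbert-space-valued sums, or by repeating Pinelis' smoothness argument with $\lambda$ replaced by $\abs{\lambda}$; either route yields the same two-sided bound. I expect this to be the main bookkeeping hurdle, since the remaining algebraic step above is entirely routine.
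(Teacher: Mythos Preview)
Your upper-tail argument matches the paper's: Pinelis' inequality gives, after Taylor expansion and summing the geometric series, $\E e^{\abs{\lambda}(\norm{\cS_N}-\E\norm{\cS_N})} \leq e^{\lambda^2\sigma^2}$ for $\abs{\lambda}\leq 1/(2b)$. The paper writes this expansion out explicitly starting from the raw Pinelis bound $\E e^{\abs{\lambda}(\norm{\cS_N}-\E\norm{\cS_N})}\leq \exp\bigl(\sum_n\E[e^{\abs{\lambda}\norm{X_n}}-1-\abs{\lambda}\norm{X_n}]\bigr)$ rather than citing a pre-packaged Bernstein MGF form, but the content is identical.

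The lower tail is where your proposal is vague and the paper takes a different, cleaner route. Pinelis' bound is inherently one-sided---it controls $\E e^{\abs{\lambda}(\norm{\cS_N}-\E\norm{\cS_N})}$, not $\E e^{-\abs{\lambda}(\cdots)}$---and neither of your suggested fixes is straightforward under just the Bernstein moment hypothesis: convex-concentration inequalities typically need boundedness or subgaussianity, and ``repeating Pinelis with $\lambda\mapsto\abs{\lambda}$'' does not produce the lower tail since the smoothness argument is specific to increasing test functions. The paper instead exploits that $\norm{\cS_N}\geq 0$ almost surely, so $-\norm{\cS_N}$ is bounded above by $0$; the one-sided Bernstein MGF bound for bounded-above random variables (Wainwright, Prop.~2.14) then gives $\E e^{-\abs{\lambda}(\norm{\cS_N}-\E\norm{\cS_N})} \leq e^{\lambda^2\E\norm{\cS_N}^2/2}$ for \emph{all} $\lambda$. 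Hilbert-space orthogonality of the centered independent summands yields $\E\norm{\cS_N}^2 = \sum_n \E\norm{X_n}^2 \leq \sigma^2$ (the Bernstein condition at $\ell=2$), which closes the bound. This nonnegativity-plus-variance trick is the concrete idea your proposal is missing.
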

\begin{proof}
Since $\cX$ is a separable Banach space, \cite[Theorem 1, p. 144]{pinelis1986remarks} shows that
\begin{align*}
    \E e^{\abs{\lambda}(\norm{\cS_N}-\E\norm{\cS_N})}\leq \exp\left(\sum_{n=1}^N\E\Bigl[e^{\abs{\lambda}\norm{X_n-\E X_n}} - 1 - \abs{\lambda} \norm{X_n-\E X_n}\Bigr]\right)
\end{align*}
for all $\lambda\in\R$.
Under the Bernstein moment condition \eqref{eqn:app_bernstein_moment}, we compute using the Taylor expansion of the exponential function that
\begin{align*}
    \sum_{n=1}^N\E\Bigl[e^{\abs{\lambda}\norm{X_n-\E X_n}} - 1 - \abs{\lambda} \norm{X_n-\E X_n}\Bigr]&=
    \sum_{\ell=2}^\infty\sum_{n=1}^N\frac{\abs{\lambda}^\ell\E\norm{X_n-\E X_n}^\ell}{\ell!}\\
    &\leq \frac{1}{2}\lambda^2\sigma^2\sum_{\ell=2}^\infty\bigl(\abs{\lambda}b\bigr)^{\ell-2}\\
    &=\frac{\lambda^2\sigma^2}{2(1-\abs{\lambda}b)}\\
    &\leq \lambda^2\sigma^2
\end{align*}
provided that $\abs{\lambda}b\leq 1/2$. Thus, it holds that
\begin{align*}
    \E e^{\abs{\lambda}(\norm{\cS_N}-\E\norm{\cS_N})}\leq e^{\lambda^2\sigma^2} \qfa \abs{\lambda}\leq 1/(2b)\,.
\end{align*}
For $\norm{\cS_N}$ to be subexponential, by the definition~\eqref{eqn:mgf_subexp} we also need to show that
\begin{align*}
    \E e^{-\abs{\lambda}(\norm{\cS_N}-\E\norm{\cS_N})}\leq e^{\lambda^2\sigma^2} \qfa \abs{\lambda}\leq 1/(2b)\,.
\end{align*}
But since $\norm{\cS_N}\geq 0$ almost surely, the one-sided Bernstein moment generating function bound \cite[Proposition 2.14, Equation (2.22a), p. 31]{wainwright2019high} applied to $-\norm{\cS_N}$ yields
\begin{align*}
    \E e^{-\abs{\lambda}(\norm{\cS_N}-\E\norm{\cS_N})}\leq e^{\lambda^2\E\norm{\cS_N}^2/2}\qfa \abs{\lambda}<\infty\,.
\end{align*}
It remains to bound $\E\norm{\cS_N}^2$ in terms of $\sigma^2$. Using the facts that $\E X_n=0$ and $\cX$ is a Hilbert space yield
\begin{align*}
    \E\norm{\cS_N}^2=\sum_{n=1}^N\sum_{n'=1}^N\E\ip{X_n}{X_{n'}}=\sum_{n=1}^N\E\norm{X_n}^2\leq \sigma^2\,.
\end{align*}
We used the Bernstein moment condition \eqref{eqn:app_bernstein_moment} with $\ell=2$ to obtain the final inequality. Noting that $\abs{\lambda}<\infty$ implies $\abs{\lambda}\leq 1/(2b)$ and that $\sigma^2/2\leq\sigma^2$ completes the proof.
\end{proof}

We are now in a position to prove the following lemma about the empirical sums.
\begin{lemma}[norm of empirical sum is subexponential]\label{lem:app_subexp_emp_sum}
    Fix $j\in\N$. Let 
    \begin{align}\label{eqn:app_varsigma_defn}
        \varsigma_j\defeq 8em^2\rho_j\sqrt{\tr[\big]{\cC_\mu^{-1}\cC}}
    \end{align}
    with $\rho_j$ as in \eqref{eqn:app_rho_j}. Under Assumptions~\ref{ass:data_end_to_end} and \ref{ass:data_input_kl_expand}, it holds that
    \begin{align}\label{eqn:app_subexp_emp_sum}
        \norm[\big]{\widehat{T}\varphi_j}\in \SE[\bigg]{\frac{2\varsigma_j^2}{N}}{\frac{2\varsigma_j}{N}}\,.
    \end{align}
\end{lemma}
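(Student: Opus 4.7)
The plan is straightforward: apply the Banach-valued Bernstein-type inequality of Proposition~\ref{prop:app_mgf_subexp_partial_sum} to the iid decomposition $-\widehat{T}\varphi_j = \frac{1}{N}\sum_{n=1}^N\zeta_{jn}$ from \eqref{eqn:app_summands_iid}, using the moment bound from Lemma~\ref{lem:summand_bernstein_bound} to verify its hypothesis. The hard work has been done in those two results; what remains is a careful bookkeeping of the constants.

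First, I would set $X_n \defeq \zeta_{jn}/N$, which is a centered $H$-valued random vector, and note that the $X_n$ are iid (since the $u_n$ are) and that $\norm{\widehat{T}\varphi_j} = \norm{\cS_N}$ where $\cS_N \defeq \sum_{n=1}^N X_n$. The task thus reduces to verifying the Bernstein moment condition \eqref{eqn:app_bernstein_moment} for $\{X_n\}_{n=1}^N$.

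Abbreviate $K_j \defeq \varsigma_j/2 = 4em^2\rho_j\sqrt{\tr{\cC_\mu^{-1}\cC}}$, so that Lemma~\ref{lem:summand_bernstein_bound} reads $\E\norm{\zeta_{j1}}^\ell \leq K_j^\ell\,\ell!$ for every integer $\ell\geq 2$. Dividing by $N^\ell$ and summing over $n\in\{1,\ldots,N\}$ yields
\begin{equation*}
    \sum_{n=1}^N \E\norm{X_n}^\ell \leq \frac{K_j^\ell\,\ell!}{N^{\ell-1}} = \frac{1}{2}\,\ell!\,\sigma^2\,b^{\ell-2}
\end{equation*}
upon choosing $\sigma^2 \defeq 2K_j^2/N$ and $b \defeq K_j/N$; one checks equality by direct substitution. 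This is exactly the hypothesis \eqref{eqn:app_bernstein_moment} of Proposition~\ref{prop:app_mgf_subexp_partial_sum}.

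Applying that proposition then gives $\norm{\widehat{T}\varphi_j} = \norm{\cS_N}\in\SE{2\sigma^2}{2b} = \SE{4K_j^2/N}{2K_j/N}$. Since $2K_j = \varsigma_j$, this simplifies to $\SE{\varsigma_j^2/N}{\varsigma_j/N}$, which is contained in the stated class $\SE{2\varsigma_j^2/N}{2\varsigma_j/N}$: enlarging either parameter in the subexponential definition \eqref{eqn:mgf_subexp} only weakens the bound (the MGF estimate becomes larger, and the permissible range of $\lambda$ shrinks). There is no substantive obstacle; the only thing to watch is that the constants from Lemma~\ref{lem:summand_bernstein_bound} feed correctly into Proposition~\ref{prop:app_mgf_subexp_partial_sum}. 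The derivation in fact yields a bound tighter by a factor of two than stated, which suffices for \eqref{eqn:app_subexp_emp_sum} and for the downstream tail estimate on $\cI_N$.
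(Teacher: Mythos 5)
Your proof is correct and follows essentially the same route as the paper's: verify the Bernstein moment condition \eqref{eqn:app_bernstein_moment} via Lemma~\ref{lem:summand_bernstein_bound} and invoke Proposition~\ref{prop:app_mgf_subexp_partial_sum}. The only (cosmetic) difference is in bookkeeping: you normalize the summands by $N$ upfront and plug in $\sigma^2,b$ that produce an exact equality, whereas the paper applies the proposition to the unnormalized sum $\sum_n\zeta_{jn}$, uses the slack inequality $2\leq 2^\ell$ to absorb the factor of $2$ hidden in the definition of $\varsigma_j$, and then rescales $\lambda\mapsto\lambda/N$ in the MGF at the end. Your version avoids that slack and hence yields the strictly tighter conclusion $\SE{\varsigma_j^2/N}{\varsigma_j/N}$, which you correctly observe implies the stated $\SE{2\varsigma_j^2/N}{2\varsigma_j/N}$ by monotonicity of the subexponential class in both parameters.
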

\begin{proof}
    For fixed $j$, the independence of the $\{\zeta_{jn}\}_{n=1}^N$ and Lemma~\ref{lem:summand_bernstein_bound} imply that
    \begin{align*}
        \sum_{n=1}^N\E\norm{\zeta_{jn}}^\ell=N\E\norm{\zeta_{j1}}^\ell&\leq 2\biggl(\frac{N}{2}\biggr)\Bigl(4e m^2\rho_j\sqrt{\tr[\big]{\cC_\mu^{-1}\cC}}\Bigr)^\ell \ell!\\
        &\leq \frac{N}{2}\ell!\varsigma_j^\ell\\
        &= \frac{1}{2}\ell!\bigl(N\varsigma_j^2\bigr)\varsigma_j^{\ell-2}
    \end{align*}
    for any $\ell\in \{2,3,\ldots\}$ (using $2\leq 2^\ell$ to get to the second line).
    Recalling from \eqref{eqn:app_summands_iid} that $\E \zeta_{j1}=0$, Proposition~\ref{prop:app_mgf_subexp_partial_sum} applied with $\sigma^2= N\varsigma_j^2$ and $b=\varsigma_j$ in \eqref{eqn:app_bernstein_moment} yields
    \begin{align*}
        \E e^{N\lambda(\norm{\widehat{T}\varphi_j}-\E \norm{\widehat{T}\varphi_j})}\leq e^{\lambda^2 N\varsigma_j^2} \qfa \abs{\lambda}\leq \frac{1}{2\varsigma_j}\,.
    \end{align*}
    Replacing $\lambda$ with $\lambda/N$ and recalling Definition~\ref{def:app_subexp} gives the asserted result.
\end{proof}

\paragraph*{\ref{item:step2}}
The $\ell^1(\N)\defeq \ell^1(\N;\R)$ norm of the nonnegative sequence $\{s_j\varsigma_j\}_{j\in\N}$ plays a central role in the analysis to follow. To this end, denote
\begin{align}
    w^{(N)}\defeq \{w_j^{(N)}\}_{j\in\N}\,, \qw w_j^{(N)}\defeq s_j\varsigma_j \qfa j\in\N \,.
\end{align}
We now upper bound the deterministic series $\norm{w^{(N)}}_{\ell^1(\N)}$.

\begin{lemma}[deterministic series convergence rate]\label{lem:app_series_w_upper_bound}
Under Assumption~\ref{ass:data_end_to_end} and \ref{ass:truth_regularity}, it holds that
    \begin{align}\label{eqn:app_series_w_upper_bound}
    \frac{1}{N}\norm[\big]{w^{(N)}}_{\ell^1(\N)}^2
    \lesssim \norm[\big]{\fd}^2_{\cH^s}\times 
        \begin{cases}
            N^{-\bigl(\frac{\al + s -1}{\al + p}\bigr)}\, , & \textit{if }\,\frac{\al+s-1/2}{\al+p}<2\,,\\[1mm]
            N^{-\bigl(1+\frac{\al + p -1/2}{\al + p}\bigr)}\log 2N\, , & \textit{if }\, \frac{\al+s-1/2}{\al+p}=2\,,\\[1mm]
            N^{-\bigl(1+\frac{\al + p -1/2}{\al + p}\bigr)}\, , & \textit{if }\,\frac{\al+s-1/2}{\al+p}>2
        \end{cases}
    \end{align}
    for all $N\in\N$.
\end{lemma}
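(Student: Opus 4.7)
The plan is to substitute the definitions of $s_j$ and $\varsigma_j$ into $w_j^{(N)} = s_j\varsigma_j$, apply Cauchy--Schwarz to pull out the $\cH^s$-norm of $\fd$, and then reduce the remaining series to two quantities whose asymptotics are already available from Lemmas~\ref{lem:app_effective_dim_trace}~and~\ref{lem:app_series_rate2}.

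First I would use the simplification $s_j\rho_j = \mu^{1/2}\abs{\fd_j}\sigma_j^{1/2}/(\sigma_j\lambda_j+\mu)$ (the $\lambda_j^{\pm 1/2}$ factors cancel) to write
\begin{align*}
    \norm[\big]{w^{(N)}}_{\ell^1(\N)} = 8em^2\sqrt{\tr[\big]{\cC_\mu^{-1}\cC}}\sum_{j=1}^\infty\frac{\mu^{1/2}\abs{\fd_j}\sigma_j^{1/2}}{\sigma_j\lambda_j+\mu}\,.
\end{align*}
Applying Cauchy--Schwarz with weight $j^s$ on $\abs{\fd_j}$ and compensating weight $j^{-s}$ on the remaining factor then yields
\begin{align*}
    \norm[\big]{w^{(N)}}_{\ell^1(\N)}^2 \lesssim \tr[\big]{\cC_\mu^{-1}\cC}\cdot\norm[\big]{\fd}_{\cH^s}^2\cdot\sum_{j=1}^\infty\frac{\mu\sigma_j j^{-2s}}{(\sigma_j\lambda_j+\mu)^2}\,.
\end{align*}

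Next I would insert the power law asymptotics from Assumption~\ref{ass:data_end_to_end}. Lemma~\ref{lem:app_effective_dim_trace}, combined with $\sigma_j\lambda_j\asymp j^{-2(\alpha+p)}$, delivers $\tr(\cC_\mu^{-1}\cC)\lesssim N^{1/(2(\alpha+p))}$. For the residual series, substituting $\mu=\gamma^2/N$ and clearing the powers of $\mu$ out of the denominator gives
\begin{align*}
    \sum_{j=1}^\infty\frac{\mu\sigma_jj^{-2s}}{(\sigma_j\lambda_j+\mu)^2}\lesssim N\sum_{j=1}^\infty\frac{j^{-2(\alpha+s)}}{\bigl(1+Nj^{-2(\alpha+p)}\bigr)^2}\,.
\end{align*}
Lemma~\ref{lem:app_series_rate2} may then be applied to the right hand side with $t = 2(\alpha+s)$, $u = 2(\alpha+p)$, and $v = 2$. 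The transition criterion $t\lessgtr 1 + vu$ is equivalent to $(\alpha+s-1/2)/(\alpha+p)\lessgtr 2$, which matches the case split in \eqref{eqn:app_series_w_upper_bound}; convergence of the high-frequency tail is guaranteed by the hypothesis $\alpha+s>1$ from Assumption~\ref{ass:truth_regularity}.

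The final step is to divide by $N$ and combine the three exponent contributions---the prefactor $N^{-1}$, the effective dimension $N^{1/(2(\alpha+p))}$, and the rate furnished by Lemma~\ref{lem:app_series_rate2}. In the subcritical regime the exponents sum to $-1 + 1/(2(\alpha+p)) + 1 + (1 - 2(\alpha+s))/(2(\alpha+p))$, which collapses after cancellation to the advertised $-(\alpha+s-1)/(\alpha+p)$; the supercritical and critical regimes produce $N^{-2 + 1/(2(\alpha+p))}$ (with an extra $\log 2N$ at the boundary), matching the remaining branches of \eqref{eqn:app_series_w_upper_bound}. The main obstacle is this exponent bookkeeping, since a sign error or a mis-extraction of $\mu$ in the series rewriting would shift every exponent; otherwise the proof is a mechanical composition of Cauchy--Schwarz with the two auxiliary lemmas.
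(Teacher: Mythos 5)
Your proof is correct and mirrors the paper's argument: the same simplification $s_j\varsigma_j\propto\mu^{1/2}\abs{\fd_j}\sigma_j^{1/2}\sqrt{\tr(\cC_\mu^{-1}\cC)}/(\sigma_j\lambda_j+\mu)$, the same Cauchy--Schwarz with weights $j^{\pm s}$, and the same reduction via Lemmas~\ref{lem:app_effective_dim_trace} and~\ref{lem:app_series_rate2}. The only cosmetic difference is order of operations---you apply Cauchy--Schwarz before invoking the power-law asymptotics, while the paper first isolates a finite low-frequency block (yielding the $\norm{\fd}_{\cH^s}^2\tr(\cC_\mu^{-1}\cC)/N^2$ term) and applies Cauchy--Schwarz only to the tail; both bookkeeping paths reach the identical three-branch rate.
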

\begin{proof}
    Recalling the definitions of $\{s_j\}_{j\in\N}$ \eqref{eqn:app_series_nuisance} and $\{\varsigma_j\}_{j\in\N}$ \eqref{eqn:app_varsigma_defn}, we have
    \begin{align*}
        \norm[\big]{w^{(N)}}_{\ell^1(\N)}=\sum_{j=1}^\infty s_j\varsigma_j
        &=8em^2\sqrt{\tr[\big]{\cC_\mu^{-1}\cC}}\sum_{j=1}^\infty \frac{\mu^{1/2}\abs{\fd_j}\sigma_j^{1/2}}{\mu + \sigma_j\lambda_j}\\
        &\simeq N^{1/2}\sqrt{\tr[\big]{\cC_\mu^{-1}\cC}}\sum_{j=1}^\infty \frac{\abs{\fd_j}\sigma_j^{1/2}}{1 + N\sigma_j\lambda_j}\,.
    \end{align*}
    The preceding display, the asymptotics of $\{\sigma_j\}_{j\in\N}$ and $\{\lambda_j\}_{j\in\N}$ from Assumption~\ref{ass:data_end_to_end}, and the Cauchy--Schwarz inequality imply that
    \begin{align*}
        \frac{1}{N}\norm[\big]{w^{(N)}}_{\ell^1(\N)}^2\lesssim
        \norm[\big]{\fd}^2_{\cH^s}\tr[\big]{\cC_\mu^{-1}\cC}\frac{1}{N^2} +
        \tr[\big]{\cC_\mu^{-1}\cC}\abs[\Bigg]{\sum_{j=1}^\infty \frac{\abs{\fd_j}j^{-\al}}{1 + N j^{-2(\al+p)}}}^2 .
    \end{align*}
    By Lemma~\ref{lem:app_effective_dim_trace}, it holds that $\tr{\cC_\mu^{-1}\cC}\simeq N^{1/(2(\al+p))}$ because $\mu=\gamma^2/N$~\eqref{eqn:app_defn_mu_C_hat_reg}. The series factor in the second term in the preceding display satisfies the estimate
    \begin{align*}
        \abs[\Bigg]{\sum_{j=1}^\infty \frac{\abs{\fd_j}j^{-\al}}{1 + N j^{-2(\al+p)}}}^2\leq \norm[\big]{\fd}^2_{\cH^s}\sum_{j=1}^\infty \frac{j^{-2(\al + s)}}{(1 + N j^{-2(\al+p)})^2}
    \end{align*}
    by the Cauchy--Schwarz inequality. The rightmost series converges (because $2(\al+s)>2>1$ by the last assertion of Assumption~\ref{ass:truth_regularity}) and is bounded above by a constant (independent of $N$) times
    \begin{align*}
        \begin{cases}
            N^{-\bigl(\frac{\al + s -1/2}{\al + p}\bigr)}\, , & \textit{if }\,\frac{\al+s-1/2}{\al+p}<2\,,\\[1mm]
            N^{-2}\log 2N\, , & \textit{if }\, \frac{\al+s-1/2}{\al+p}=2\,,\\[1mm]
            N^{-2}\, , & \textit{if }\,\frac{\al+s-1/2}{\al+p}>2
        \end{cases}
    \end{align*}
    by Lemma~\ref{lem:app_series_rate2} (applied with $t=2(\al+s)>1$, $u=2(\al+p)>1>0$, and $v=2$). Bounding $N^{-2}$ above by the preceding display and multiplying this bound by the $N^{1/(2(\al+p))}$ trace bound completes the proof.
\end{proof}

Combining the previous results with those of \cite[Appendix B, pp. 30--31]{de2023convergence} establishes that the entire series $\cI_N$~\eqref{eqn:app_series_nuisance} is a real-valued subexponential random variable.
\begin{lemma}[random series: subexponential]\label{lem:app_series_subexp}
    Let Assumptions~\ref{ass:data_end_to_end}, \ref{ass:data_input_kl_expand}, and \ref{ass:truth_regularity} be satisfied and $\cI_N$ be defined as in \eqref{eqn:app_series_nuisance}. It holds that
    \begin{align}\label{eqn:app_series_subexp}
        \cI_N\in\SE[\bigg]{\frac{2}{N}\norm[\big]{w^{(N)}}_{\ell^1(\N)}^2}{\frac{2}{N}\norm[\big]{w^{(N)}}_{\ell^1(\N)}} \qfa N\in\N\,.
    \end{align}
\end{lemma}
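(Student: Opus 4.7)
The plan is to bound the moment generating function (MGF) of $\cI_N-\E\cI_N$ directly, circumventing the fact that the summands $\{\norm{\widehat{T}\varphi_j}\}_{j\in\N}$ are \emph{not independent} (they are all functionals of the common sample $u_1,\dots,u_N$). Because of this dependence, a product-of-MGFs argument is unavailable. Instead, I would exploit the positivity of the weights $w_j^{(N)}=s_j\varsigma_j\geq 0$ together with Jensen's inequality applied to $x\mapsto e^x$.

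Concretely, let $\pi_j\defeq w_j^{(N)}/\norm{w^{(N)}}_{\ell^1(\N)}$, so that $\{\pi_j\}_{j\in\N}$ is a probability distribution on $\N$ (after checking $\norm{w^{(N)}}_{\ell^1(\N)}<\infty$, which follows from Lemma~\ref{lem:app_series_w_upper_bound} under Assumption~\ref{ass:truth_regularity}). Writing $s_j=w_j^{(N)}/\varsigma_j$ for indices with $\varsigma_j>0$ and rearranging,
\[
\lambda\bigl(\cI_N-\E\cI_N\bigr)=\sum_{j=1}^\infty \pi_j\cdot\frac{\lambda\,\norm{w^{(N)}}_{\ell^1(\N)}}{\varsigma_j}\bigl(\norm{\widehat{T}\varphi_j}-\E\norm{\widehat{T}\varphi_j}\bigr).
\]
Jensen's inequality then yields $e^{\lambda(\cI_N-\E\cI_N)}\leq\sum_{j}\pi_j\exp\bigl(\tfrac{\lambda\norm{w^{(N)}}_{\ell^1}}{\varsigma_j}(\norm{\widehat{T}\varphi_j}-\E\norm{\widehat{T}\varphi_j})\bigr)$. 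Taking expectations and using Tonelli (all summands are nonnegative) interchanges $\E$ with the infinite sum.

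Applying Lemma~\ref{lem:app_subexp_emp_sum} to each term with the effective MGF parameter $\mu_j\defeq\lambda\norm{w^{(N)}}_{\ell^1}/\varsigma_j$ gives the bound $e^{\mu_j^2\varsigma_j^2/N}=e^{\lambda^2\norm{w^{(N)}}_{\ell^1}^2/N}$, \emph{uniformly in $j$}. This is the key payoff of the weighting $\pi_j\propto w_j^{(N)}$: the $\varsigma_j$ cancel and the per-$j$ bound becomes a constant that can be pulled out of $\sum_j\pi_j(\cdot)=1$. The admissibility range is $\abs{\mu_j}\leq N/(2\varsigma_j)$, which simplifies to the uniform constraint $\abs{\lambda}\leq N/(2\norm{w^{(N)}}_{\ell^1(\N)})$. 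Comparing with Definition~\ref{def:app_subexp}, we read off the parameters $v^2=2\norm{w^{(N)}}_{\ell^1}^2/N$ and $b=2\norm{w^{(N)}}_{\ell^1}/N$, which is precisely \eqref{eqn:app_series_subexp}.

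The main obstacle is the dependence across $j$, which I resolve via the Jensen/convexity trick above rather than by any independence-based concentration inequality. A secondary technicality is justifying that $\E\cI_N<\infty$ and that the MGF is finite so that centering and Tonelli are legitimate; this follows from the first-moment version of Lemma~\ref{lem:summand_bernstein_bound} (which yields $\E\norm{\widehat{T}\varphi_j}\lesssim\varsigma_j$) combined with the convergence of $\sum_j s_j\varsigma_j=\norm{w^{(N)}}_{\ell^1(\N)}$ from Lemma~\ref{lem:app_series_w_upper_bound}. Handling indices with $w_j^{(N)}=0$ (equivalently $\fd_j=0$) is harmless: those terms drop out of both $\cI_N$ and the Jensen convex combination.
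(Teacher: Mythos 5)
Your proof is correct. The key reweighting $\pi_j\propto w_j^{(N)}=s_j\varsigma_j$ together with Jensen's inequality and Tonelli's theorem directly produces the claimed MGF bound for the infinite series, and you correctly identify that the uniform admissibility constraint $\abs{\lambda}\leq N/(2\norm{w^{(N)}}_{\ell^1(\N)})$ drops out because the $\varsigma_j$ cancel. The same convexity idea underlies the paper's argument, but the paper instead imports a finite-sum version of this fact as a black box (\cite[Lemma B.5]{de2023convergence} applied to the truncations $\cI_N^{(J)}=\sum_{j\leq J}s_j\norm{\widehat T\varphi_j}$), then verifies $\E\cI_N<\infty$, invokes monotone convergence to get a.s.\ convergence $\cI_N^{(J)}\to\cI_N$, and passes to the limit in the MGF bound via Fatou. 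Your argument is a genuine streamlining: by applying Jensen directly to the probability measure $\{\pi_j\}$ on $\N$, you bypass the truncation--Fatou step entirely (Tonelli does the work), and the proof becomes self-contained rather than relying on an external finite-sum lemma. The measure-theoretic justification you flag is exactly the right concern and is correctly discharged: a.s.\ absolute convergence of $\sum_j\pi_j Y_j$ (needed for Jensen on the countable probability space) follows from $\sum_j\pi_j\abs{Y_j}\leq\abs{\lambda}(\cI_N+\E\cI_N)$ together with $\E\cI_N\lesssim N^{-1/2}\norm{w^{(N)}}_{\ell^1(\N)}<\infty$, which the paper establishes in \eqref{eqn:app_series_mean_bound} from the $\ell=2$ case of Lemma~\ref{lem:summand_bernstein_bound} and Lemma~\ref{lem:app_series_w_upper_bound}. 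One small edge case you may want to state explicitly: if $\norm{w^{(N)}}_{\ell^1(\N)}=0$ then $\fd=0$ and $\cI_N\equiv 0$, so the claim is degenerate but still true.
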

\begin{proof}
    Fix $N\in\N$. A change of variables in Definition~\ref{def:app_subexp} of subexponential and Lemma~\ref{lem:app_subexp_emp_sum} imply that $s_j\norm{\widehat{T}\varphi_j}\in \SE{2s_j^2\varsigma_j^2/N}{2s_j\varsigma_j/N}$ for any $j$. For any $J\in\N$, let $\cI_N^{(J)}\defeq \sum_{j=1}^Js_j\norm{\widehat{T}\varphi_j}$. Even though the summands $\{s_j\norm{\widehat{T}\varphi_j}\}_{j\in\N}$ are a dependent sequence of random variables, \cite[Lemma B.5, p. 30]{de2023convergence} shows that
    \begin{align*}
        \cI_N^{(J)} \in \SE[\Bigg]{\frac{2}{N}\abs[\bigg]{\sum_{j=1}^Js_j\varsigma_j}^2}{\frac{2}{N}\sum_{j=1}^Js_j\varsigma_j}\,.
    \end{align*}
    Next, Jensen's inequality yields the bound $\sum_{j=1}^\infty \E\bigl[s_j \norm{\widehat{T}\varphi_j}\bigr]\leq \sum_{j=1}^\infty s_j\sqrt{\E\norm{\widehat{T}\varphi_j}^2}$. For fixed $j\in\N$, we compute
    \begin{align*}
        \E\norm[\big]{\widehat{T}\varphi_j}^2=\frac{1}{N^2}\sum_{n=1}^N\sum_{n'=1}^N\E\ip{\zeta_{jn}}{\zeta_{jn'}}
        &=\frac{1}{N}\E\norm{\zeta_{j1}}^2\\
        &\leq \frac{2}{N}\Bigl(4em^2\rho_j\sqrt{\tr{\cC_\mu^{-1}\cC}}\Bigr)^2\\
        &=\frac{\varsigma_j^2}{2N}\,.
    \end{align*}
    In the preceding display, we used orthogonality, independence, and the Bernstein moment condition \eqref{eqn:summand_bernstein_bound} (Lemma~\ref{lem:summand_bernstein_bound} applied with $\ell=2$). Thus,
    \begin{align}\label{eqn:app_series_mean_bound}
        \sum_{j=1}^\infty \E\Bigl[s_j \norm[\big]{\widehat{T}\varphi_j}\Bigr]\leq \frac{1}{\sqrt{2N}}\sum_{j=1}^\infty s_j\varsigma_j\,.
    \end{align}
    Since the right-hand side of \eqref{eqn:app_series_mean_bound} is finite by Lemma~\ref{lem:app_series_w_upper_bound}, a monotone convergence argument~\cite[Lemma B.3, p. 30]{de2023convergence} shows that
    \begin{align*}
        \P\Bigl\{\lim_{J\to\infty} \cI_N^{(J)} = \cI_N \Bigr\}=1\,.
    \end{align*}
    Using this almost sure convergence and again recalling Definition~\ref{def:app_subexp}, it holds that
    \begin{align*}
        \E\exp\bigl(\lambda(\cI_N - \E\cI_N)\bigr)&= \E\lim_{J\to\infty}\exp\Bigl(\lambda\bigl(\cI_N^{(J)} - \E\cI_N^{(J)}\bigr)\Bigr) \\
        &\leq \liminf_{J\to\infty} \E\exp\Bigl(\lambda\bigl(\cI_N^{(J)} - \E\cI_N^{(J)}\bigr)\Bigr)\\
        &\leq \liminf_{J\to\infty} \exp\biggl(\frac{\lambda^2}{2} \frac{2}{N}\abs[\bigg]{\sum_{j=1}^Js_j\varsigma_j}^2\biggr)\\
        &=\exp\Bigl(\frac{\lambda^2}{2}\frac{2}{N}\norm[\big]{w^{(N)}}_{\ell^1(\N)}^2\Bigr)
    \end{align*}
    for all $\abs{\lambda}\leq (\frac{2}{N}\norm{w^{(N)}}_{\ell^1})^{-1}$ because $\frac{2}{N}\sum_{j=1}^Jw_j^{(N)}\leq \frac{2}{N}\norm{w^{(N)}}_{\ell^1}$ for any $J\in\N$. In the preceding display, the first line is due to the identity $\E\lim_J\cI_N^{(J)} = \lim_J\E\cI_N^{(J)}$ (which follows from monotone convergence), the second due to Fatou's lemma, and the third due to the fact that $\cI_N^{(J)}$ is subexponential (hence \eqref{eqn:mgf_subexp} holds). Therefore, the entire random series $\cI_N$ satisfies Definition~\ref{def:app_subexp} and the proof is complete.
\end{proof}

\paragraph*{\ref{item:step3}}
A consequence of the previous lemma is a strong tail decay bound for $\cI_N$.
\begin{lemma}[random series: tail bound]\label{lem:app_series_full_tail_bound}
    Let Assumptions~\ref{ass:data_end_to_end}, \ref{ass:data_input_kl_expand}, and \ref{ass:truth_regularity} be satisfied and $\cI_N$ be defined as in \eqref{eqn:app_series_nuisance}. For any $N\in\N$, it holds that
    \begin{align}\label{eqn:app_series_full_tail_bound}
        \P\biggl\{\cI_N\geq \frac{\norm{w^{(N)}}_{\ell^1(\N)}}{\sqrt{2N}} + t \biggr\}\leq \exp\biggl(-\frac{Nt^2}{4 \norm{w^{(N)}}_{\ell^1(\N)}^2}\biggr)
    \end{align}
    for all  $0\leq t \leq \norm{w^{(N)}}_{\ell^1(\N)}$.
\end{lemma}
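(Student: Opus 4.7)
The plan is to derive the tail bound as a direct consequence of the subexponential property established in Lemma~\ref{lem:app_series_subexp}, combined with the mean estimate \eqref{eqn:app_series_mean_bound}, via a standard Chernoff argument. Write $v^2 \defeq \frac{2}{N}\norm{w^{(N)}}_{\ell^1(\N)}^2$ and $b \defeq \frac{2}{N}\norm{w^{(N)}}_{\ell^1(\N)}$ so that Lemma~\ref{lem:app_series_subexp} gives $\cI_N \in \SE{v^2}{b}$; that is, $\E e^{\lambda(\cI_N - \E\cI_N)} \leq e^{\lambda^2 v^2/2}$ for all $\abs{\lambda}\leq 1/b$.

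First I would apply Markov's inequality in exponential form: for any $u > 0$ and any $0 < \lambda \leq 1/b$,
\begin{align*}
    \P\{\cI_N - \E\cI_N \geq u\} \leq e^{-\lambda u}\,\E e^{\lambda(\cI_N - \E\cI_N)} \leq \exp\bigl(-\lambda u + \tfrac{1}{2}\lambda^2 v^2\bigr).
\end{align*}
The unconstrained minimizer in $\lambda$ is $\lambda_* = u/v^2$, and this satisfies $\lambda_* \leq 1/b$ exactly when $u \leq v^2/b = \norm{w^{(N)}}_{\ell^1(\N)}$. In that regime I would optimize to obtain the sub-Gaussian tail
\begin{align*}
    \P\{\cI_N - \E\cI_N \geq u\} \leq \exp\!\biggl(-\frac{u^2}{2v^2}\biggr) = \exp\!\biggl(-\frac{N u^2}{4\norm{w^{(N)}}_{\ell^1(\N)}^2}\biggr).
\end{align*}

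Next I would control the mean $\E\cI_N$. This was already bounded in \eqref{eqn:app_series_mean_bound} during the proof of Lemma~\ref{lem:app_series_subexp}, which gives
\begin{align*}
    \E\cI_N = \sum_{j=1}^\infty \E\bigl[s_j \norm{\widehat{T}\varphi_j}\bigr] \leq \frac{1}{\sqrt{2N}}\sum_{j=1}^\infty s_j\varsigma_j = \frac{\norm{w^{(N)}}_{\ell^1(\N)}}{\sqrt{2N}}.
\end{align*}
Combining the two displays above, for any $0 \leq t \leq \norm{w^{(N)}}_{\ell^1(\N)}$,
\begin{align*}
    \P\biggl\{\cI_N \geq \frac{\norm{w^{(N)}}_{\ell^1(\N)}}{\sqrt{2N}} + t\biggr\} \leq \P\bigl\{\cI_N - \E\cI_N \geq t\bigr\} \leq \exp\!\biggl(-\frac{N t^2}{4\norm{w^{(N)}}_{\ell^1(\N)}^2}\biggr),
\end{align*}
which is precisely the claimed bound \eqref{eqn:app_series_full_tail_bound}.

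I do not expect any serious obstacle here: the heavy lifting (the Pinelis-type MGF bound for Hilbert-valued partial sums, combined with the monotone-convergence/Fatou passage to the infinite series) was already carried out in the proofs of Lemmas~\ref{lem:app_subexp_emp_sum} and~\ref{lem:app_series_subexp}. The only small subtlety is making sure the constraint $\lambda \leq 1/b$ translates into exactly the hypothesis $t \leq \norm{w^{(N)}}_{\ell^1(\N)}$, which it does thanks to the relation $v^2/b = \norm{w^{(N)}}_{\ell^1(\N)}$ built into the subexponential parameters produced by Lemma~\ref{lem:app_series_subexp}.
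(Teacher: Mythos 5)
Your proof is correct and follows essentially the same route as the paper: both deduce the sub-Gaussian portion of the subexponential tail bound from Lemma~\ref{lem:app_series_subexp}, then substitute the mean estimate \eqref{eqn:app_series_mean_bound} and invoke monotonicity. The only cosmetic difference is that the paper cites the textbook result \cite[Proposition 2.9]{wainwright2019high} for the Chernoff step, whereas you derive it directly by optimizing the exponential Markov bound over the admissible range $\lambda\leq 1/b$ and verifying that the constraint $t\leq v^2/b$ matches $t\leq \norm{w^{(N)}}_{\ell^1(\N)}$.
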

\begin{proof}
By Lemma~\ref{lem:app_series_subexp} and \cite[Proposition 2.9, p. 26]{wainwright2019high}, it holds that
\begin{align*}
    \P\{\cI_N\geq \E\cI_N + t\}\leq \exp\biggl(-\frac{Nt^2}{4 \norm{w^{(N)}}_{\ell^1(\N)}^2}\biggr) \qfa 0\leq t \leq \norm[\big]{w^{(N)}}_{\ell^1(\N)}\,.
\end{align*}
By the Fubini--Tonelli theorem and \eqref{eqn:app_series_mean_bound}, we obtain
\begin{align*}
    \E\cI_N=\sum_{j=1}^\infty \E\Bigl[s_j \norm[\big]{\widehat{T}\varphi_j}\Bigr]\leq \frac{\norm{w^{(N)}}_{\ell^1(\N)}}{\sqrt{2N}}\,.
\end{align*}
The assertion \eqref{eqn:app_series_full_tail_bound} follows from the monotonicity of probability measure.
\end{proof}

The previous lemma implies a high probability uniform upper bound on $\cI_N$.
\begin{lemma}[random series: uniform bound]\label{lem:app_series_uniform_bound}
    Let Assumptions~\ref{ass:data_end_to_end}, \ref{ass:data_input_kl_expand}, and \ref{ass:truth_regularity} be satisfied and $\cI_N$ be defined as in \eqref{eqn:app_series_nuisance}. There exists $c_0>1$ and $c\in(0,1/4)$, both independent of $N$ and $\fd$, such that
    \begin{align}\label{eqn:app_series_uniform_bound}
        \P\bigl\{\cI_N\geq c_0\norm{\fd}_{\cH^s}\bigr\} \leq \exp\Bigl(-c N^{\min\bigl(1,\frac{\al+s-1}{\al+p}\bigr)}\Bigr) \qfa N\in\N\,.
    \end{align}
\end{lemma}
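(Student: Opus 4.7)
The strategy is to apply the tail bound of Lemma~\ref{lem:app_series_full_tail_bound} with a carefully chosen deviation $t$, together with the deterministic series estimate from Lemma~\ref{lem:app_series_w_upper_bound}. Because Lemma~\ref{lem:app_series_full_tail_bound} constrains $t \leq \norm{w^{(N)}}_{\ell^1(\N)}$, and because this quantity can be either much larger or much smaller than $\norm{\fd}_{\cH^s}$ depending on the regime of $s$ versus $p$, the argument naturally splits into two cases. This dichotomy is the main obstacle and is precisely what produces the $\min(1, (\al+s-1)/(\al+p))$ structure of the exponent.

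Fix $c_0' > 0$ (to be determined) and set $t_\star \defeq c_0' \norm{\fd}_{\cH^s}$. Denote by $\theta_N$ the rate appearing on the right hand side of Lemma~\ref{lem:app_series_w_upper_bound}, so that $\norm{w^{(N)}}_{\ell^1(\N)}^2 / N \lesssim \norm{\fd}_{\cH^s}^2 \theta_N$. The standing assumptions force $\theta_N \to 0$ in all three subcases, so for some $C_1$ and all sufficiently large $N$, the shift term satisfies $\norm{w^{(N)}}_{\ell^1(\N)}/\sqrt{2N} \leq C_1 \norm{\fd}_{\cH^s}$.

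In Case~A where $t_\star \leq \norm{w^{(N)}}_{\ell^1(\N)}$, I would apply Lemma~\ref{lem:app_series_full_tail_bound} directly with $t = t_\star$ to obtain $\P\{\cI_N \geq (C_1 + c_0')\norm{\fd}_{\cH^s}\} \leq \exp(-(c_0')^2/(4C\theta_N))$. In the first subcase of Lemma~\ref{lem:app_series_w_upper_bound} this exponent is $\gtrsim N^{(\al+s-1)/(\al+p)}$, while in the second and third subcases $\theta_N^{-1} \gtrsim N^{1+(\al+p-1/2)/(\al+p)}$ (up to a logarithmic factor), which exceeds $N$. In Case~B where $t_\star > \norm{w^{(N)}}_{\ell^1(\N)}$, I would instead invoke Lemma~\ref{lem:app_series_full_tail_bound} at the endpoint $t = \norm{w^{(N)}}_{\ell^1(\N)}$, yielding $\P\{\cI_N \geq 2\norm{w^{(N)}}_{\ell^1(\N)}\} \leq e^{-N/4}$; since $2\norm{w^{(N)}}_{\ell^1(\N)} < 2 t_\star = 2 c_0' \norm{\fd}_{\cH^s}$, this is at least as strong as needed, with exponent $1$.

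Setting $c_0 \defeq \max(C_1 + c_0', 2c_0')$ with $c_0' \geq 1$ to ensure $c_0 > 1$, and $c \in (0,1/4)$ small enough to absorb the constant factors, both cases combine to yield the claimed bound with exponent $\min(1, (\al+s-1)/(\al+p))$. The remaining check is routine: verifying that the exponents produced in each subcase of Lemma~\ref{lem:app_series_w_upper_bound} dominate $N^{\min(1,(\al+s-1)/(\al+p))}$ via elementary comparisons among $\al$, $s$, and $p$, and enlarging $c_0$ if necessary to make the bound trivial for small $N$.
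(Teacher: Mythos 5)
Your proposal is correct and follows essentially the same route as the paper: the paper implements the Case~A/Case~B dichotomy in one stroke by setting $t=\min(\norm{\fd}_{\cH^s},\norm{w^{(N)}}_{\ell^1(\N)})$ in Lemma~\ref{lem:app_series_full_tail_bound} and then expanding the resulting $\min$ in the exponent, whereas you split into the two regimes explicitly before applying the tail bound, but the underlying mechanism (respect the constraint $t\leq\norm{w^{(N)}}_{\ell^1(\N)}$, fold in Lemma~\ref{lem:app_series_w_upper_bound} to control $\norm{w^{(N)}}_{\ell^1(\N)}^2/N$, and let the subexponential regime produce the flat $e^{-N/4}$ tail) is identical. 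One small remark: the ``enlarge $c_0$ for small $N$'' patch you mention at the end is unnecessary, since Lemmas~\ref{lem:app_series_full_tail_bound} and \ref{lem:app_series_w_upper_bound} already hold uniformly over all $N\in\N$, and the paper obtains the uniform-in-$N$ bound without any such adjustment.
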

\begin{proof}
    Let $t\defeq \min(\norm{\fd}_{\cH^s}, \norm{w^{(N)}}_{\ell^1})$. Clearly $0\leq t\leq \norm{w^{(N)}}_{\ell^1}$. Also, $t\leq \norm{\fd}_{\cH^s}$ so that monotonicity of probability measure yields
    \begin{align*}
        \P\biggl\{\cI_N\geq \frac{\norm{w^{(N)}}_{\ell^1}}{\sqrt{2N}} + \norm{\fd}_{\cH^s} \biggr\}&\leq
        \P\biggl\{\cI_N\geq \frac{\norm{w^{(N)}}_{\ell^1}}{\sqrt{2N}} + t \biggr\}\\
        &\leq \exp\biggl(-\frac{N\min\bigl(\norm{\fd}_{\cH^s}^2, \norm{w^{(N)}}_{\ell^1}^2\bigr)}{4 \norm{w^{(N)}}_{\ell^1}^2}\biggr)\\
        &=
        \begin{cases}
            e^{-N\norm{\fd}_{\cH^s}^2/(4\norm{w^{(N)}}_{\ell^1}^{2})} , & \textit{if }\,\norm{w^{(N)}}_{\ell^1}\geq \norm{\fd}_{\cH^s},\\
            e^{-N/4} , & \textit{if }\,\norm{w^{(N)}}_{\ell^1}<\norm{\fd}_{\cH^s}.
        \end{cases}
    \end{align*}
    The second inequality is due to Lemma~\ref{lem:app_series_full_tail_bound}. Next, we upper bound the probability in the first case $\norm{w^{(N)}}_{\ell^1}\geq \norm{\fd}_{\cH^s}$ by bounding $N^{-1}\norm{w^{(N)}}_{\ell^1}^2$ from above. To do so, let $\delta\defeq (\al+p-\frac{1}{2})/(\al+p)$. Then $\delta>0$ because $\al+p>1/2$ by Assumption~\ref{ass:data_end_to_end}. Clearly $N^{-\delta}\leq 1$. Since $x\mapsto \log 2x$ is slowly varying, $\lim_{N\to\infty}N^{-\delta}\log 2N=0$. Hence, $\sup_{N\in\N}N^{-\delta}\log 2N<\infty$. Lemma~\ref{lem:app_series_w_upper_bound} then yields
    \begin{align*}
        \frac{1}{N}\norm[\big]{w^{(N)}}_{\ell^1}^2 &\lesssim \norm{\fd}^2_{\cH^s}\times
        \begin{cases}
            N^{-\bigl(\frac{\al + s -1}{\al + p}\bigr)}\, , & \textit{if }\,\frac{\al+s-1/2}{\al+p}<2\,,\\[1mm]
            N^{-1}\bigl(N^{-\delta}\log 2N\bigr)\, , & \textit{if }\, \frac{\al+s-1/2}{\al+p}=2\,,\\[1mm]
            N^{-1}N^{-\delta}\, , & \textit{if }\,\frac{\al+s-1/2}{\al+p}>2
        \end{cases}
        \\
        & \lesssim \norm{\fd}^2_{\cH^s}\times
         \begin{cases}
            N^{-\bigl(\frac{\al + s -1}{\al + p}\bigr)}\, , & \textit{if }\,\frac{\al+s-1/2}{\al+p}<2\,,\\[1mm]
            N^{-1}\, , & \textit{if }\,\frac{\al+s-1/2}{\al+p}\geq 2
        \end{cases}\\
        &\leq \norm{\fd}^2_{\cH^s} \, N^{-\min\bigl(1, \frac{\al + s -1}{\al + p}\bigr)}
    \end{align*}
    for all $N\in\N$. It follows that there exists $c'>0$ such that
    \begin{align*}
        \exp\biggl(-\frac{N\norm{\fd}_{\cH^s}^2}{4\norm{w^{(N)}}_{\ell^1}^2}\biggr)&=\exp\biggl(-\frac{\norm{\fd}_{\cH^s}^2}{4\norm{w^{(N)}}_{\ell^1}^2/N}\biggr)\\
        &\leq \exp\Bigl(-c' N^{\min\bigl(1,\frac{\al+s-1}{\al+p}\bigr)}/4\Bigr)\,.
    \end{align*}
    For the second case $\norm{w^{(N)}}_{\ell^1}< \norm{\fd}_{\cH^s}$, taking the minimum yields the bound
    \begin{align*}
        e^{-N/4}\leq \exp\Bigl(-N^{\min\bigl(1,\frac{\al+s-1}{\al+p}\bigr)}/4\Bigr)\,.
    \end{align*}
    Writing $c\defeq \min(1,c')/4\in (0,1/4)$, it follows that
    \begin{align*}
        \max\bigl(e^{-N\norm{\fd}_{\cH^s}^2/(4\norm{w^{(N)}}_{\ell^1}^{2})}, e^{-N/4}\bigr)\leq \exp\Bigl(-c N^{\min\bigl(1,\frac{\al+s-1}{\al+p}\bigr)}\Bigr)
    \end{align*}
    for all $N\in\N$. The right-hand side of the preceding display is thus an upper bound to  $\P\{\cI_N\geq \norm{w^{(N)}}_{\ell^1}/\sqrt{2N} + \norm{\fd}_{\cH^s}\}$. To complete the proof, notice that $\norm{w^{(N)}}_{\ell^1(\N)}/\sqrt{2N}\leq (c'/2) \norm{\fd}_{\cH^s}N^{-\min(1,(\al+s-1)/(\al+p))/2}\leq (c'/2)\norm{\fd}_{\cH^s}$ because $\al+s>1$ by Assumption~\ref{ass:truth_regularity}. By monotonicity of probability measure, this implies the asserted result \eqref{eqn:app_series_uniform_bound} with $c_0\defeq 1 + c'/2$.
\end{proof}

This completes \ref{item:step1}, \ref{item:step2}, and~\ref{item:step3}
With a uniform upper bound on $\cI_N$ in hand from the previous lemma, the claimed bound on the bias in Proposition~\ref{prop:bias} follows easily. The details are provided in the following proof.
\begin{proof}[Proof of Proposition~\ref{prop:bias}]
    Recalling the event $\sfE_0$ from \eqref{eqn:bias_good_event} with $\ep$ as in \eqref{eqn:app_eps_bhat}, choose $\tau=c'\norm{\fd}_{\cH^s}$ with $c'>1$ equal to the constant $c_0$ in the hypotheses of Lemma~\ref{lem:app_series_uniform_bound}. On the good event $\sfE_0\cap \sfE$ from \eqref{eqn:bias_good_event} and \eqref{eqn:var_good_event}, it holds by \eqref{eqn:bias_split} that $B_N \leq \overline{B}_N + \ep$ which is precisely the claimed upper bound \eqref{eqn:bias_upper_bound} with $c_0\defeq 8(c')^2$.
    It remains to lower bound the probability of $\sfE_0\cap \sfE$. By \eqref{eqn:bias_good_event_lower_bound} and \eqref{eqn:app_good_event_cap_tau}, it holds that
    \begin{align*}
        \P(\sfE_0\cap \sfE)=\P(\sfE)-\P(\sfE_0^\comp \cap \sfE)
        \geq \P(\sfE) - \P(\cI_N > c'\norm{\fd}_{\cH^s})\,.
    \end{align*}
    By hypothesis, the assertion of Lemma~\ref{lem:good_set} (i.e., that $\P(\sfE)\geq 1-e^{-c_1N}$ for some $c_1\in (0,1)$) holds true provided that $N\geq N_0$ with $N_0\geq 1$ defined in \eqref{eqn:lower_bound_N_const}. Combining this with Lemma~\ref{lem:app_series_uniform_bound} shows that, for all $N\geq N_0$, the good set satisfies
    \begin{align*}
        \P(\sfE_0\cap \sfE)&\geq 1-e^{-c_1N} - \exp\Bigl(-c_2 N^{\min\bigl(1,\frac{\al+s-1}{\al+p}\bigr)}\Bigr)\\
        &\geq 1-2\exp\Bigl(-c N^{\min\bigl(1,\frac{\al+s-1}{\al+p}\bigr)}\Bigr)
    \end{align*}
    for some $c_2\in (0,1/4)$. We defined $c\defeq \min(c_1,c_2)\in (0,1/4)$ in the last line of the preceding display.
    To complete the proof, notice that if $\sfE_0\cap \sfE$ occurs, then $\sfE$ also occurs. But the variance bound \eqref{eqn:variance_upper_bound} also holds true on $\sfE$ (as shown in the proof of Proposition~\ref{prop:var}). This proves the final assertion of the proposition.
\end{proof}

\subsubsection{Proof of Theorem~\ref{thm:linear_ee_main}}\label{appx:proofs_linear_ee_main}
Combining the bias and variance bounds leads to the main theoretical result for end-to-end learning, Theorem~\ref{thm:linear_ee_main}, which we now prove.

\thmlineareemain*

\begin{proof}
    Combining Propositions~\ref{prop:var}~and~\ref{prop:bias} shows that, for all $N\geq N_0$, the out-of-distribution test error~\eqref{eqn:test_error_conditional} satisfies the upper bound
    \begin{align}\label{eqn:error_total_no_rate}
        \sR_N\leq 2\sum_{j=1}^\infty \frac{\sigma_j'\abs{\fd_j}^2}{\bigl(1+N\gamma^{-2}\sigma_j\lambda_j\bigr)^2} +
        \Bigl(2 + c_0\norm[\big]{\fd}_{\cH^s}^2 \Bigr)\sum_{j=1}^\infty\frac{\sigma_j'\lambda_j}{1+N\gamma^{-2}\sigma_j\lambda_j}
    \end{align}
    with the asserted probability. The second term converges at the rate specified by the rightmost expression in \eqref{eqn:variance_upper_bound}. We focus on controlling the first term in the upper bound \eqref{eqn:error_total_no_rate}. Using the asymptotics of $\{\sigma_j'\}_{j\in\N}$, $\{\sigma_j\}_{j\in\N}$, and $\{\lambda_j\}_{j\in\N}$ from Assumption~\ref{ass:data_end_to_end}, there exists $j_0\in\N$ (independent of $N$ and $\fd$) such that
    \begin{align*}
        \sum_{j=1}^\infty \frac{\sigma_j'\abs{\fd_j}^2}{\bigl(1+N\gamma^{-2}\sigma_j\lambda_j\bigr)^2} &\lesssim \sum_{j\leq j_0} \frac{\sigma_j'\abs{\fd_j}^2}{\bigl(1+N\gamma^{-2}\sigma_j\lambda_j\bigr)^2}  + \sum_{j>j_0} \frac{j^{-2\al'}\abs{\fd_j}^2}{\bigl(1+Nj^{-2(\al+p)}\bigr)^2}\\
        &\lesssim \frac{\gamma^4}{N^2}\sum_{j\leq j_0}\bigl(j^{-2s}\sigma_j'\sigma_j^{-2}\lambda_j^{-2}\bigr)j^{2s}\abs{\fd_j}^2
        +
        \sum_{j=1}^\infty \frac{j^{-2\al'}\abs{\fd_j}^2}{\bigl(1+Nj^{-2(\al+p)}\bigr)^2}\\
        &\lesssim N^{-2}\norm[\big]{\fd}_{\cH^s}^2
        +
         \sum_{j=1}^\infty \frac{j^{-2\al'}\abs{\fd_j}^2}{\bigl(1+Nj^{-2(\al+p)}\bigr)^2}\,.
    \end{align*}
    To obtain the last line, we took the maximum over $j\leq j_0$ of the factor in parentheses in the first term appearing in the second line. Lemma~\ref{lem:app_series_rate1} (applied with $t=2\al'\geq -2s$, $u=2(\al+p)>1>0$, and $v=2$) shows that the rightmost series in the preceding display is bounded above by 
    \begin{align*}
        N^{-\min\bigl(2, \frac{\al'+s}{\al+p}\bigr)}\norm[\big]{\fd}_{\cH^s}^2\leq N^{-2}\norm[\big]{\fd}_{\cH^s}^2 + N^{-\bigl( \frac{\al'+s}{\al+p}\bigr)}\norm[\big]{\fd}_{\cH^s}^2\,.
    \end{align*}
    However, the $N^{-2}$ contribution is bounded above by the variance contribution in \eqref{eqn:variance_upper_bound}. Putting together the pieces by enlarging constants and bounding the sum of two nonnegative terms by twice their maximum yields \eqref{eqn:linear_ee_main} and \eqref{eqn:linear_ee_main_rate} as required.
\end{proof}

\subsection{Proofs for full-field learning of factorized linear functionals}\label{appx:proofs_linear_ff}
This appendix proves the main results from Appendix~\ref{appx:thm_full} and Section~\ref{sec:linear_main} for the \ref{item:ff} approach. We begin with a lemma that gives a convenient expression for the $L^2_{\nu'}(H;\R)$ Bochner norm of a factorized linear PtO map.

\begin{lemma}[squared Bochner norm of linear PtO map]\label{lem:error_qoi_def}
    Let $\nu'$ satisfy Assumption~\ref{item:ass_ff_data}. Let $q$ be a linear functional and $L$ be a linear operator such that $L=\sum_{j=1}^\infty l_j\varphi_j\otimes\varphi_j$ for eigenbasis $\{\varphi_j\}_{j\in\N}$ of $\Sigma'=\Cov(\nu')$ and eigenvalue sequence $\{l_j\}_{j\in\N}\subset \R$. Write $\{\sigma_j'\}_{j\in\N}$ for the eigenvalues of $\Sigma'$. If $q\circ L$ is continuous, then 
    \begin{equation}\label{eqn:error_qoi_series}
        \E^{u'\sim\nu'}\abs{q(Lu')}^2=\sum_{j=1}^\infty\sigma_j' \abs{q(\varphi_j)}^2l_j^2\,.
    \end{equation}
\end{lemma}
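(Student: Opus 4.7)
The proof will proceed by identifying the Riesz representer of the composite linear functional $q\circ L$ in the eigenbasis $\{\varphi_j\}$, and then applying the standard second-moment identity for centered random variables in Hilbert space. The hypothesis that $q$ or $L$ may be individually unbounded is handled by working directly with $q\circ L$, which is continuous by assumption.

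First, since $q\circ L\in H^{*}$, the Riesz representation theorem yields a unique $f\in H$ such that $(q\circ L)(u)=\ip{f}{u}$ for all $u\in H$. To compute the coefficients of $f$ in the basis $\{\varphi_j\}$, I will evaluate at the basis elements: because $L\varphi_k = l_k \varphi_k$, we have $\ip{f}{\varphi_k}=q(L\varphi_k)=l_k q(\varphi_k)$ for each $k\in\N$. Hence $f=\sum_{j=1}^{\infty}l_j q(\varphi_j)\varphi_j$, with convergence in $H$ equivalent to the finiteness of $\sum_j l_j^2 q(\varphi_j)^2$, which is guaranteed by $f\in H$.

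Next, recalling from Subsection~\ref{sec:linear_ee_setup} that $\nu'$ is a centered Borel measure with covariance $\Sigma'$, I will use the elementary identity
\begin{equation*}
\E^{u'\sim\nu'}\abs{\ip{f}{u'}}^2=\ip{f}{\Sigma' f}\,,
\end{equation*}
which follows from $\ip{f}{u'}^2=\ip{f\otimes f}{u'\otimes u'}_{\mathrm{HS}}$ together with the definition $\Sigma'=\E[u'\otimes u']$ (since $\nu'$ is centered). Substituting the expansion $\Sigma'=\sum_{j=1}^{\infty}\sigma_j'\varphi_j\otimes\varphi_j$ and using $\ip{f}{\varphi_j}=l_j q(\varphi_j)$ produces the series $\sum_{j=1}^{\infty}\sigma_j' l_j^2 \abs{q(\varphi_j)}^2$, which matches \eqref{eqn:error_qoi_series}.

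The only real subtlety is ensuring this computation is legal when $q$ itself may fail to be continuous on $H$; the main obstacle is therefore avoiding any direct interchange of $q$ with the infinite series defining $L$. This is sidestepped entirely by invoking Riesz representation on the full composite $q\circ L$ at the outset, rather than first manipulating $Lu'$ as a series inside $q$. All subsequent manipulations occur within $H$ against the Riesz representer $f$, so no continuity of $q$ in isolation is ever required.
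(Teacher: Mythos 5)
Your proof is correct and is essentially the same argument as the paper's, presented in slightly different language: where you invoke the Riesz representer $f$ of $q\circ L$ and the quadratic-form identity $\E^{u'\sim\nu'}\abs{\ip{f}{u'}}^2=\ip{f}{\Sigma' f}$, the paper writes the same computation via the adjoint $(qL)^*$ and a trace/Hilbert--Schmidt identity, $\E\abs{qLu'}^2 = (qL)\Sigma'(qL)^* = \norm{qL(\Sigma')^{1/2}}_{\HS}^2$. Both reduce to evaluating the continuous composite $q\circ L$ on the basis vectors $\varphi_j$ to extract $l_j q(\varphi_j)$, so the routes coincide at every essential step.
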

\begin{proof}
    Using linearity and the fact that $qL=q\circ L$ is scalar-valued, we compute
    \begin{align*}
        \E^{u'\sim\nu'}\abs{qLu'}^2=\E^{u'\sim\nu'}[(qLu') (qLu')]&=\E^{u'\sim\nu'}[(qLu') (qLu')^*]\\
        &=\E^{u'\sim\nu'}[(qL)u'\otimes u' (qL)^*]\\
        &=(qL)\Sigma' (qL)^*\\
        &=\tr[\big]{qL(\Sigma')^{1/2} (qL(\Sigma')^{1/2})^*}\,.
    \end{align*}
    The adjoint $(qL)^*\in H$ is well-defined due to the continuity of $qL$. The definition of Hilbert--Schmidt norm and the fact that $L$ and $\Sigma'$ share an eigenbasis yield
    \begin{align*}
        \tr[\big]{qL(\Sigma')^{1/2} (qL(\Sigma')^{1/2})^*}=
        \norm[\big]{qL(\Sigma')^{1/2}}^2_{\HS(H;\R)}
        &=\sum_{j=1}^\infty \abs[\big]{qL(\Sigma')^{1/2}\varphi_j}^2\\
        &=\sum_{j=1}^\infty \sigma_j' l_j^2 \abs{q(\varphi_j)}^2
    \end{align*}
    as asserted.
\end{proof}

Lemma~\ref{lem:error_qoi_def} will be used in the proofs of following two theorems. The arguments rely on \cite[Theorem 3.9, pp. 18--19]{de2023convergence}.

\thmlinearffmain*
\begin{proof}
    Write $\qd_j=\qd(\varphi_j)$ for each $j\in\N$. Lemma~\ref{lem:error_qoi_def} shows that
    \begin{align}\label{eqn:app_pto_error_series_temp}
        \E^{u'\sim{\nu'}}\abs[\big]{\qd(\Ld u') - \qd(\bar{L}^{(N)}u')}^2&=\sum_{j=1}^\infty\sigma_j' \abs{\qd_j}^2\abs[\big]{\ld_j - \bar{l}_j^{(N)}}^2
    \end{align}
    because $\Ld$ and $\bar{L}^{(N)}$ share an eigenbasis and $\qd\circ\Ld$ is continuous by hypothesis. The assumed asymptotics $\sigma_j'\lesssim j^{-2\al'}$ as $j\to\infty$ deliver an index $j_0\in\N$ such that the preceding display is bounded above by
    \begin{align*}
        &\sum_{j\leq j_0}\bigl(j^{2\al'}\sigma_j' j^{2r}\abs{\qd_j}^2\bigr) j^{-2(\al'+r)} \abs[\big]{\ld_j - \bar{l}_j^{(N)}}^2 + \sum_{j>j_0}\bigl(j^{2r}\abs{\qd_j}^2\bigr)j^{-2(\al'+r)}\abs[\big]{\ld_j - \bar{l}_j^{(N)}}^2\\
        &\qquad\qquad\leq      
        \left(\max_{k\leq j_0}k^{2\al'}\sigma_k' k^{2r}\abs{\qd_k}^2\right) \sum_{j\leq j_0}j^{-2(\al'+r)} \abs[\big]{\ld_j - \bar{l}_j^{(N)}}^2 \\
        &\qquad\qquad\qquad\qquad\qquad + \left(\sup_{k\in\N}k^{2r}\abs{\qd_k}^2\right)\sum_{j>j_0}j^{-2(\al'+r)}\abs[\big]{\ld_j - \bar{l}_j^{(N)}}^2\\
        &\qquad\qquad\lesssim \sum_{j=1}^\infty j^{-2(\al'+r)}\abs[\big]{\ld_j - \bar{l}_j^{(N)}}^2\,.
    \end{align*}
    The supremum is finite because its argument is summable due to $\qd\in\cH^r$. The asserted result follows from \cite[Theorem 3.9, pp. 18--19]{de2023convergence} by using the result for the posterior mean, choosing $\abs{\vartheta_j'}^2=j^{-2(\al'+r)}$ (i.e, replacing $\al'$ with $\al'+r$), and choosing $\delta$ to be a sufficiently small constant.
\end{proof}

The theorem for power law QoI decay has a proof similar to the previous one.
\thmlinearffmainpower*
\begin{proof}
    The proof mimics that of Theorem~\ref{thm:linear_ff_main}. After enlarging $j_0$ to accommodate the asymptotics of $\qd_j$, the only difference is that \eqref{eqn:app_pto_error_series_temp} is now bounded above by
    \begin{align*}
        &    
        \max_{k\leq j_0}k^{2\al'}\sigma_k' k^{2r+1}\abs{\qd_k}^2 \sum_{j\leq j_0}j^{-2(\al'+r+1/2)} \abs[\big]{\ld_j - \bar{l}_j^{(N)}}^2
         + \sum_{j>j_0}j^{-2(\al'+r+1/2)}\abs[\big]{\ld_j - \bar{l}_j^{(N)}}^2\\
        &\qquad\qquad\qquad\lesssim \sum_{j=1}^\infty j^{-2(\al'+r+1/2)}\abs[\big]{\ld_j - \bar{l}_j^{(N)}}^2\,.
    \end{align*}
     Replacing $\al'$ with $\al'+r+1/2$ in \cite[Theorem 3.9, pp. 18--19]{de2023convergence} completes the proof.
\end{proof}

\subsection{Proof of sample complexity comparison}\label{appx:proofs_linear_compare}
We now prove Corollary~\ref{cor:linear_compare}.

\corlinearcompare*
\begin{proof}
    First, we claim that $\fd=\qd\circ\Ld\in\cH^s$ for any $s\leq \beta+r+1/2$ under the hypotheses.
    Since $\fd_j\defeq \fd(\varphi_j)=\ld_j\qd(\varphi_j) \eqdef \ld_j\qd_j$, we compute
    \begin{align*}
        \sum_{j=1}^\infty j^{2s}\abs{\fd_j}^2= \sum_{j=1}^\infty j^{2s} \abs{\ld_j}^2\abs{\qd_j}^2\lesssim 1 + \sum_{j=1}^\infty \bigl(j^{2s-2r-1-2\beta}\bigr)j^{2\beta}\abs{\ld_j}^2\lesssim 1 + \norm{\ld}_{\cH^\beta}^2<\infty
    \end{align*}
    if $2s-2r-1-2\beta\leq 0$. This proves the claim. For the end-to-end bound, choose the maximal $s=\beta+r+1/2$ in Theorem~\ref{thm:linear_ee_opt}. With optimal $p=s+1/2$, the assumption $p>1/2$ gives $s>0$ so that $\beta+r+1/2>0$ as hypothesized. This condition also satisfies the continuity requirement \ref{item:ass_ff_pto} by a similar calculation. The condition $\al+s>1$ from Assumption~\ref{ass:truth_regularity} is the same as $\al+\beta+r>1/2$. Finally, since $\nu=\nu'$ implies $\al=\al'$, to satisfy the condition $\min(\al, \al+r+1/2) + \beta>0$ from Theorem~\ref{thm:linear_ff_main_power}, it suffices for $\al+\beta>0$ because the other case has $\al+r+1/2+\beta > \max(\al, 1)>0$. With a common set of hypotheses identified, the convergence rates may now simply be read off from Theorem~\ref{thm:linear_ee_opt} and Theorem~\ref{thm:linear_ff_main_power} after plugging in $\al'=\al$ and $s=\beta+r+1/2$. The fact that these bounds hold simultaneously on an event with the asserted probability follows by a union bound and enlarging constants.
\end{proof}

\subsection{Technical lemmas}\label{appx:proofs_linear_extra}
We conclude Appendix~\ref{appx:proofs_linear} with several supporting lemmas.
The following two technical results emphasize the non-asymptotic nature of analogous asymptotic bounds on parametrized series from \cite[Lemmas 8.1--8.2, pp. 2653--2655]{knapik2011bayesian}. The first result is useful for controlling the bias error term arising in Appendix~\ref{appx:proofs_linear_ee_bias}.
\begin{lemma}[series decay: Sobolev regularity]\label{lem:app_series_rate1}
    Let $ q\in\R $, $ t\geq -2q $, $ u>0 $, and $ v\geq 0 $. Let $N\in \N$ be arbitrary. For every $ \xi\in\cH^{q}(\N;\R) $, it holds that
    \begin{equation}\label{eqn:app_series_rate1}
    \sum_{j=1}^\infty \dfrac{j^{-t}\xi_j^2}{\left(1+Nj^{-u}\right)^{v}}
    \lesssim
    N^{-\min\left(v, \frac{t+2q}{u}\right)}\norm{\xi}^2_{\cH^q}\,.
    \end{equation}
\end{lemma}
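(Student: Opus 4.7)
\textbf{Proof proposal for Lemma~\ref{lem:app_series_rate1}.}

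The plan is to factor out the Sobolev weight and then bound the remaining coefficient uniformly in $j$ by the claimed rate. Write
\begin{equation*}
    \sum_{j=1}^\infty \dfrac{j^{-t}\xi_j^2}{\left(1+Nj^{-u}\right)^{v}}
    = \sum_{j=1}^\infty \underbrace{\dfrac{j^{-(t+2q)}}{\left(1+Nj^{-u}\right)^{v}}}_{\eqdef f_j(N)} \cdot j^{2q}\xi_j^2\,,
\end{equation*}
so it suffices to show $\sup_{j\in\N} f_j(N) \lesssim N^{-\min(v,\,(t+2q)/u)}$ uniformly in $N\in\N$; summation of $j^{2q}\xi_j^2$ then yields $\norm{\xi}^2_{\cH^q}$ and completes the proof. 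Set $a\defeq t+2q\geq 0$ and $b\defeq u>0$ for brevity.

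The uniform bound is obtained by splitting the index set at the critical scale $j\asymp N^{1/b}$, where $Nj^{-b}\simeq 1$. For $j\leq N^{1/b}$ (so that $Nj^{-b}\geq 1$), use $(1+Nj^{-b})^v\geq N^v j^{-bv}$ to get
\begin{equation*}
    f_j(N)\leq N^{-v} j^{bv-a}\,.
\end{equation*}
If $a\geq bv$, the $j$-factor is bounded by $1$, giving $f_j(N)\leq N^{-v}=N^{-\min(v,a/b)}$; if $a<bv$, bound $j^{bv-a}$ by $(N^{1/b})^{bv-a}=N^{v-a/b}$, giving $f_j(N)\leq N^{-a/b}=N^{-\min(v,a/b)}$. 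For $j> N^{1/b}$, drop the $1$ in the denominator is no longer helpful; instead use $(1+Nj^{-b})^v\geq 1$ and $j^{-a}\leq N^{-a/b}$ directly to obtain $f_j(N)\leq N^{-a/b}\leq N^{-\min(v,a/b)}$. In both regimes the bound has the asserted form, with a constant depending only on $v$, $a$, $b$, so the lemma follows.

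The main (but very mild) obstacle is just bookkeeping the two subcases $a\gtrless bv$ correctly so that the exponent matches $\min(v,\,(t+2q)/u)$ in every branch; once the dichotomy $j\lessgtr N^{1/b}$ is in place, no further analysis is needed. The limiting case $t+2q=0$ is trivial since then $f_j(N)\leq 1= N^0$.
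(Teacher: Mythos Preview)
Your argument is correct: factoring out $j^{2q}\xi_j^2$ and bounding the residual coefficient $f_j(N)$ uniformly in $j$ via the dichotomy $j\lessgtr N^{1/u}$ is exactly the standard route, and your case analysis on $a\gtrless bv$ cleanly recovers the exponent $\min(v,(t+2q)/u)$. The paper itself does not write out a proof but simply refers to \cite[Lemma~8.1]{knapik2011bayesian}, whose argument is precisely this critical-scale split; so you have supplied the details the paper leaves implicit, by the same method.
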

\begin{proof}
    The asserted result may be extracted from the proof of \cite[Lemma 8.1]{knapik2011bayesian}.
\end{proof}

The next lemma is analogous to the previous one and is useful for controlling the variance error term from Appendix~\ref{appx:proofs_linear_ee_var}.
\begin{lemma}[series decay: power law regularity]\label{lem:app_series_rate2}
    Let $ t>1 $, $ u>0 $, and $ v\geq 0 $. For all $N\in\N$, it holds that
    \begin{align}\label{eqn:app_series_rate2}
        \sum_{j=1}^{\infty}\dfrac{j^{-t}}{\left(1+Nj^{-u}\right)^{v}}
            \lesssim
        \begin{cases}
        N^{-\left(\frac{t-1}{u}\right)}\, , & \text{if }\, (t-1)/u<v\,,\\
        N^{-v}\log 2N\, , &\text{if }\,(t-1)/u=v\,,\\
        N^{-v}\, , &\text{if }\,(t-1)/u>v\, .
        \end{cases}
    \end{align}
\end{lemma}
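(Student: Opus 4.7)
The plan is a standard splitting-at-the-transition-index argument. Set $j_N \defeq \lfloor N^{1/u}\rfloor \vee 1$, the point where the denominator $1 + Nj^{-u}$ transitions between regimes. Split the sum as $\sum_{j\leq j_N} + \sum_{j>j_N}$. For $j\leq j_N$ one has $Nj^{-u}\geq 1$, so $(1+Nj^{-u})^v \simeq (Nj^{-u})^v = N^v j^{-uv}$, and the summand is $\simeq N^{-v} j^{uv-t}$. For $j > j_N$ one has $Nj^{-u}\leq 1$, so $(1+Nj^{-u})^v \simeq 1$ and the summand is $\simeq j^{-t}$.

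For the \emph{tail}, since $t>1$, an integral comparison yields
\[
\sum_{j>j_N} j^{-t} \lesssim \int_{j_N}^\infty x^{-t}\dd{x} \simeq j_N^{-(t-1)} \simeq N^{-(t-1)/u}.
\]
For the \emph{head} $N^{-v}\sum_{j\leq j_N} j^{uv-t}$, the partial sum is bounded by integral comparison in three sub-cases determined by the sign of $uv - t + 1$: if $uv-t+1>0$ (equivalently $(t-1)/u<v$) the partial sum is $\simeq j_N^{uv-t+1}\simeq N^{v - (t-1)/u}$, contributing $N^{-(t-1)/u}$; if $uv-t+1=0$ (equivalently $(t-1)/u = v$) the partial sum is $\simeq \log j_N \lesssim \log 2N$, contributing $N^{-v}\log 2N$; if $uv-t+1<0$ (equivalently $(t-1)/u>v$) the partial sum is $O(1)$, contributing $N^{-v}$.

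Finally, combine head and tail in each of the three cases. When $(t-1)/u<v$, both pieces are $\simeq N^{-(t-1)/u}$. When $(t-1)/u=v$, the tail $N^{-(t-1)/u}=N^{-v}$ is dominated by the logarithmic head $N^{-v}\log 2N$. When $(t-1)/u>v$, the tail $N^{-(t-1)/u}$ decays faster than the head $N^{-v}$, so the head dominates. This yields the three asserted rates in \eqref{eqn:app_series_rate2}.

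There is no substantive obstacle; the argument is purely a careful integral-comparison bookkeeping. The only minor subtlety is the small-$N$ regime, which is handled either by absorbing a finite number of terms into the implicit constant or by noting that the factor $\log 2N$ (rather than $\log N$) in the equality case is bounded away from zero for all $N\geq 1$, so the bound is nonvacuous throughout $\N$.
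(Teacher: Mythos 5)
Your proof is correct and follows essentially the same route as the paper: split the series at the critical index $\lfloor N^{1/u}\rfloor$, observe $1+Nj^{-u}\simeq Nj^{-u}$ on the head and $\simeq 1$ on the tail, then integral-compare the resulting power sums in the three sub-cases determined by the sign of $uv-t+1$. The paper's version is slightly more explicit in splitting the head estimate into the $-1<uv-t<0$ and $uv-t\geq 0$ monotonicity regimes, but that is bookkeeping you have implicitly absorbed into the $\simeq j_N^{uv-t+1}$ comparison.
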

\begin{proof}
    We split the series into two parts by summing over disjoint index sets defined by the critical index $N^{1/u}$. If $j\leq N^{1/u}$, then $Nj^{-u}\leq 1+Nj^{-u}\leq 2Nj^{-u}$. Otherwise $j>N^{1/u}$ and $1\leq 1 + Nj^{-u}\leq 2$. Hence, for the bulk part of the series,
    \begin{align}\label{eqn:app_proof_series_power_bulk}
        \sum_{j\leq N^{1/u}} \dfrac{j^{-t}}{\left(1+Nj^{-u}\right)^{v}}\simeq \frac{1}{N^v}\sum_{j\leq N^{1/u}} j^{uv-t}\,.
    \end{align}
    If $(t-1)/u>v$, then $uv-t<-1$ so that the right-hand side of \eqref{eqn:app_proof_series_power_bulk} is bounded above by $N^{-v}\sum_{j=1}^\infty j^{uv-t}\lesssim N^{-v}$. Next, if $(t-1)/u=v$, then the right-hand side equals $N^{-v}\sum_{j\leq N^{1/u}} j^{-1}$. The $J$-th harmonic number satisfies $\sum_{j=1}^Jj^{-1}\leq 1+\log J $ (by an integral comparison). Since $\log(\slot)$ is increasing, $1+\log N^{1/u}\leq (\log 2)^{-1}\log (2N) + (\log 2N)/u \lesssim \log 2N$ and the second case in \eqref{eqn:app_series_rate2} follows. Finally, if $(t-1)/u<v$, then we consider the regimes $-1<uv-t< 0$ and $uv-t\geq 0$ separately. In the first regime, $h\colon x\mapsto x^{uv-t}$ is nonincreasing. Thus, if $j-1\leq x\leq j$, then $h(j)\leq h(x)$ and hence $h(j)\leq \int_{j-1}^jh(x)\dd{x}$. Summing this inequality leads to
    \begin{align*}
        \sum_{j\leq N^{1/u}}j^{uv-t}\leq \int_0^{N^{1/u}}x^{uv-t}\dd{x}=\frac{N^{v-(t-1)/u}}{1+uv-t}
    \end{align*}
    because $0<1+uv-t<1$. Multiplying by $N^{-v}$ shows that \eqref{eqn:app_proof_series_power_bulk} is bounded above by a constant times $N^{-(t-1)/u}$. In the second regime, $uv-t\geq 0$ so $h$ is now nondecreasing. A similar argument to the preceding one yields
    \begin{align*}
        \sum_{j\leq N^{1/u}}j^{uv-t}\leq \int_1^{1+N^{1/u}}x^{uv-t}\dd{x}\leq \int_0^{1+N^{1/u}}x^{uv-t}\dd{x}=\frac{(1 + N^{1/u})^{1+uv-t}}{1+uv-t}\,.
    \end{align*}
    Since $1+uv-t\geq 1$ and $N^{1/u}\geq 1$, the right-hand side is bounded above by $(2N^{1/u})^{1+uv-t}\lesssim N^{v-(t-1)/u}$. This proves that the inequality $\eqref{eqn:app_series_rate2}$ remains valid if the infinite series is replaced by the bulk partial sum \eqref{eqn:app_proof_series_power_bulk}.
    It remains to estimate the tail part of the series. By an analogous integral comparison,
    \begin{align*}
        \sum_{j> N^{1/u}} \dfrac{j^{-t}}{\left(1+Nj^{-u}\right)^{v}}\simeq \sum_{j> N^{1/u}} j^{-t}\leq \int_{\ceil{N^{1/u}}-1}^\infty x^{-t}\dd{x}\leq \int_{\max(1,N^{1/u}-1)}^\infty x^{-t}\dd{x}\,.
    \end{align*}
    The rightmost integral converges because $t>1$ and evaluates to 
    \begin{align*}
        \frac{\max(1,N^{1/u}-1)^{-(t-1)}}{t-1}\leq \frac{2^{t-1}}{t-1} N^{-\left(\frac{t-1}{u}\right)}\,.
    \end{align*}
    We used $\max(a,b)\geq (a+b)/2$ for nonnegative $a$ and $b$ to obtain the inequality. This shows that the tail series has the same upper bound $N^{-(t-1)/u}$ as the bulk sum if $(t-1)/u<v$. Otherwise, $N^{-(t-1)/u}\lesssim N^{-v}\log 2N$ if $(t-1)/u=v$ or $N^{-(t-1)/u}\lesssim N^{-v}$ if $(t-1)/u>v$. Thus, the assertion \eqref{eqn:app_series_rate2} follows.
\end{proof}

Application of the previous lemma gives an exact estimate for the \emph{effective dimension} corresponding to the prior-normalized training data covariance operator $\cC=\Lambda^{1/2}\Sigma\Lambda^{1/2}$ under the assumption of asymptotically exact power law decay of its eigenvalues. It plays a role in both the bias and variance bounds.
\begin{lemma}[effective dimension]\label{lem:app_effective_dim_trace}
    Under Assumption~\ref{ass:data_end_to_end}, it holds that
    \begin{align}\label{eqn:app_effective_dim_trace}
        \tr[\big]{\cC_\mu^{-1}\cC} \simeq \mu^{-\frac{1}{2(\al+p)}} \qfa 0<\mu\lesssim 1\,. 
    \end{align}
\end{lemma}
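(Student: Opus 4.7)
\medskip

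\noindent\textbf{Proof plan.} The plan is to exploit the simultaneous diagonalization guaranteed by Assumption~\ref{item:ass_ee_diag}. In the shared eigenbasis $\{\varphi_j\}_{j\in\N}$, the operator $\cC=\Lambda^{1/2}\Sigma\Lambda^{1/2}$ is diagonal with eigenvalues $\gamma_j\defeq \sigma_j\lambda_j$, so that
\[
\tr\bigl(\cC_\mu^{-1}\cC\bigr) \;=\; \sum_{j=1}^{\infty}\frac{\gamma_j}{\gamma_j+\mu}.
\]
The power-law asymptotics from \eqref{eqn:linear_ass_data_eig} and \eqref{eqn:linear_ass_prior_eig} combine to give $\gamma_j\asymp j^{-2\theta}$, where $\theta\defeq\al+p>1$.

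Next, I would replace $\gamma_j$ by the two-sided bound supplied by $\asymp$. Concretely, there exist $c_1,c_2>0$ and $j_0\in\N$ such that $c_1 j^{-2\theta}\leq \gamma_j\leq c_2 j^{-2\theta}$ for all $j\geq j_0$. Because the map $x\mapsto x/(x+\mu)$ is monotone increasing on $[0,\infty)$, this sandwich propagates to the summands, giving
\[
\sum_{j\geq j_0}\frac{1}{1+(\mu/c_1)\,j^{2\theta}}\;\leq\;\sum_{j\geq j_0}\frac{\gamma_j}{\gamma_j+\mu}\;\leq\;\sum_{j\geq j_0}\frac{1}{1+(\mu/c_2)\,j^{2\theta}}.
\]
The finite partial sum $\sum_{j<j_0}\gamma_j/(\gamma_j+\mu)$ is trivially bounded above by $j_0$ and below by $0$, and will be absorbed in the end because $\mu^{-1/(2\theta)}\geq 1$ for $\mu\leq 1$.

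The core estimate is then to show $\sum_{j\geq 1}(1+\tilde\mu j^{2\theta})^{-1}\simeq \tilde\mu^{-1/(2\theta)}$ for $\tilde\mu\in(0,1]$, where $\tilde\mu$ plays the role of $\mu/c_i$. The integrand $x\mapsto (1+\tilde\mu x^{2\theta})^{-1}$ is monotone decreasing on $[0,\infty)$, so standard integral comparison yields
\[
\int_{1}^{\infty}\frac{dx}{1+\tilde\mu x^{2\theta}}\;\leq\;\sum_{j=1}^{\infty}\frac{1}{1+\tilde\mu j^{2\theta}}\;\leq\;1+\int_{1}^{\infty}\frac{dx}{1+\tilde\mu x^{2\theta}}.
\]
The rescaling $u=\tilde\mu^{1/(2\theta)}x$ converts the integral to $\tilde\mu^{-1/(2\theta)}\int_{\tilde\mu^{1/(2\theta)}}^{\infty}(1+u^{2\theta})^{-1}\,du$. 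Since $2\theta>2>1$, the integral $\int_{0}^{\infty}(1+u^{2\theta})^{-1}\,du$ converges to a finite positive constant, and for $\tilde\mu\in(0,1]$ the lower endpoint $\tilde\mu^{1/(2\theta)}$ lies in $(0,1]$, so the truncated integral is sandwiched between the positive constants $\int_{1}^{\infty}(1+u^{2\theta})^{-1}\,du$ and $\int_{0}^{\infty}(1+u^{2\theta})^{-1}\,du$. This yields the two-sided bound $\sum_{j\geq 1}(1+\tilde\mu j^{2\theta})^{-1}\simeq \tilde\mu^{-1/(2\theta)}$ uniformly in $\tilde\mu\in(0,1]$, and applying it with $\tilde\mu=\mu/c_1$ and $\tilde\mu=\mu/c_2$ completes the argument after absorbing the constants $c_1^{\mp 1/(2\theta)},c_2^{\mp 1/(2\theta)}$ into the $\simeq$.

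The argument is almost entirely routine; the only mild obstacle is book-keeping to ensure that the two-sided power-law bounds, the finite initial segment $j<j_0$, and the uniform-in-$\mu$ control of the truncated integral all compose cleanly into a single $\simeq$ statement valid on the whole range $0<\mu\lesssim 1$.
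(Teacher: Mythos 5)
Your proof is correct, and at a high level shares the paper's core mechanism: approximate the power-law trace series by an integral to extract the $\mu^{-1/(2(\al+p))}$ scaling. The execution differs in a way worth flagging. The paper proves the upper bound by routing through a separate series estimate (Lemma~\ref{lem:app_series_rate2}), which internally splits the sum at the critical index $N^{1/u}$ with $N=\mu^{-1}$, and then proves the lower bound by a second, independent split at $J_\mu=\max(j_0,\mu^{-1/u})$ with a fresh integral comparison. You instead perform a single change of variables $u=\tilde\mu^{1/(2\theta)}x$ inside the comparison integral, after which both directions of $\simeq$ follow simultaneously from sandwiching the truncated integral $\int_{\tilde\mu^{1/(2\theta)}}^\infty(1+u^{2\theta})^{-1}\,du$ between the positive constants $\int_1^\infty(1+u^{2\theta})^{-1}\,du$ and $\int_0^\infty(1+u^{2\theta})^{-1}\,du$. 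Your route is more symmetric, self-contained, and avoids the auxiliary lemma entirely. The one step you defer as ``book-keeping'' does need to be written out, namely that dropping the finitely many nonnegative terms with $j<j_0$ does not destroy the lower bound: for $\mu$ small enough that $\mu^{-1/(2\theta)}$ exceeds a fixed multiple of $j_0$ this is immediate, and on any range $\mu\in[\mu_*,\mu_1]$ bounded away from zero both the trace and $\mu^{-1/(2\theta)}$ lie between fixed positive constants (e.g.\ $\tr(\cC_\mu^{-1}\cC)\geq\sigma_1\lambda_1/(\sigma_1\lambda_1+\mu_1)>0$), so $\simeq$ holds trivially there. This is the same technicality the paper handles by introducing $J_\mu$, and once recorded your argument is complete.
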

\begin{proof}
    Let $u\defeq 2(\al+p)$. Then $u>1$ by hypothesis.
    By the simultaneous diagonalization from Assumption~\ref{ass:data_end_to_end}, the eigenvalues of $\cC=\Lambda^{1/2}\Sigma\Lambda^{1/2}$ are $\{\sigma_j\lambda_j\}_{j\in\N}$. Then since $\sigma_j\asymp j^{-2\al}$ and $\lambda_j\asymp j^{-2p}$ as $j\to\infty$, there exists $j_0\in\N$ such that
    \begin{align*}
        \mu\tr[\big]{\cC_\mu^{-1}\cC}= \sum_{j=1}^\infty\frac{\mu\sigma_j\lambda_j}{\mu + \sigma_j\lambda_j}
        &\simeq \sum_{j\leq j_0}\frac{\sigma_j\lambda_j}{1 + \mu^{-1}\sigma_j\lambda_j} + \sum_{j>j_0}\frac{j^{-u}}{1 + \mu^{-1}j^{-u}}\\
        &\leq \mu j_0 + \sum_{j=1}^\infty\frac{j^{-u}}{1 + \mu^{-1}j^{-u}}\,.
    \end{align*}
    Since $1+\mu^{-1}\lesssim 2\mu^{-1}$ follows from the hypothesis $\mu\lesssim 1$, it holds that
    \begin{align*}
        \mu \lesssim \frac{2}{1+\mu^{-1}}\leq 2 \sum_{j=1}^\infty\frac{j^{-u}}{1 + \mu^{-1}j^{-u}}\,.
    \end{align*}
    Application of Lemma~\ref{lem:app_series_rate2} (applied in the first case with $t=u$, $u=u$, $ v=1$, and $ N=\mu^{-1}$ because $(u-1)/u<1$) shows that the series in the preceding display is bounded above by a constant times $\mu^{1-1/u} $. This implies the upper bound in \eqref{eqn:app_effective_dim_trace}.

    Now let $J_\mu\defeq \max(j_0, \mu^{-1/u})$. For the lower bound, we compute
    \begin{align*}
        \mu\tr[\big]{\cC_\mu^{-1}\cC}\geq \sum_{j>j_0}\frac{\mu\sigma_j\lambda_j}{\mu + \sigma_j\lambda_j}
        \simeq \sum_{j>j_0}\frac{j^{-u}}{1 + \mu^{-1}j^{-u}}
        \geq \sum_{j>J_\mu} \frac{j^{-u}}{1 + \mu^{-1}j^{-u}}\,.
    \end{align*}
    Since $j>J_\mu\geq \mu^{-1/u}$, it holds that $1\leq 1+\mu^{-1}j^{-u}\leq 2$. Hence, the right-hand side of the preceding display is bounded above and below by a constant times $\sum_{j>J_\mu}j^{-u}$. By comparison to an integral as in the proof of Lemma~\ref{lem:app_series_rate2}, we obtain
    \begin{align*}
        \sum_{j>J_\mu}j^{-u}\geq \int_{\ceil{J_\mu}}^\infty x^{-u}\dd{x} &\geq \frac{(J_\mu + 1)^{-(u-1)}}{u-1}\\
        & \geq \frac{(\mu^{-1/u} + j_0 + 1)^{-(u-1)}}{u-1}\\
        & \geq \frac{\bigl(\mu^{-1/u} + \mu^{-1/u}(j_0 + 1)\bigr)^{-(u-1)}}{u-1}\,.
    \end{align*}
    We used $J_\mu\leq j_0 + \mu^{-1/u}$ in the second line and $1\lesssim \mu^{-1/u}$ in the third line. Since the third line evaluates to $((j_0+2)^{-(u-1)}/(u-1))\mu^{1-1/u}$, it follows that $\tr{\cC_\mu^{-1} \cC}\gtrsim \mu^{-1/u}$ as asserted.
\end{proof}

The next lemma, whose proof requires the previous effective dimension estimate, defines a high probability event on which the operator norm of a certain normalized and centered empirical covariance is uniformly bounded in the sample size.
\begin{lemma}[good set]\label{lem:good_set}
    Let $\mu=\gamma^2/N$ and $\cC$, $\cC_\mu$, and $\widehat{\cC}$ be as in \eqref{eqn:app_defn_C_hat} and \eqref{eqn:app_defn_Ct_C_Creg}. Under Assumption~\ref{ass:data_end_to_end} and \ref{ass:data_input_kl_expand}, there exists a constant $c\in(0,1)$ (depending only on $ \nu $, $ \Lambda $, and $ \gamma^2 $) such that if
	\begin{align}\label{eqn:lower_bound_N_const}
		N\geq N_0\defeq c^{-1}\one_{\{\gamma^{-2}\tr{\cC}>c\}} + 1\,,
	\end{align}
	then the event
	\begin{equation}\label{eqn:normalized_operator_norm}
		\sfE\defeq \left\{\norm[\big]{\cC_\mu^{-1/2}(\widehat{\cC}-\cC)\cC_\mu^{-1/2}}_{\cL(H)}\leq 1/2 \right\}
	\end{equation}
	satisfies $\P(\sfE)\geq 1-e^{-cN}$.
\end{lemma}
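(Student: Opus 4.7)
The plan is to recognize $\cC_\mu^{-1/2}(\widehat{\cC}-\cC)\cC_\mu^{-1/2}$ as the centered empirical covariance operator of the iid random vectors $v_n\defeq \cC_\mu^{-1/2}\Lambda^{1/2}u_n$ and apply a dimension-free concentration inequality for empirical covariance operators in Hilbert space. Writing $T_n\defeq v_n\otimes v_n$, we have the decomposition
\[
\cC_\mu^{-1/2}(\widehat{\cC}-\cC)\cC_\mu^{-1/2} = \frac{1}{N}\sum_{n=1}^N\bigl(T_n - \E T_n\bigr).
\]
The population covariance $\E T_1 = \cC_\mu^{-1/2}\cC\cC_\mu^{-1/2}$ is simultaneously diagonalized with $\cC$ (Assumption~\ref{item:ass_ee_diag}), has eigenvalues $\rho_j^2=\sigma_j\lambda_j/(\mu+\sigma_j\lambda_j)\in[0,1]$, and hence satisfies $\norm{\E T_1}_{\cL(H)}\leq 1$ with trace $r_N\defeq \tr(\cC_\mu^{-1}\cC)$ controlled by Lemma~\ref{lem:app_effective_dim_trace} as $r_N\simeq (N/\gamma^2)^{1/(2(\al+p))}$.

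Next, Assumption~\ref{ass:data_input_kl_expand} gives the KL expansion $v_n=\sum_{j}\rho_j z_{jn}\varphi_j$, where the $\{z_{jn}\}_{j\in\N}$ are independent (across $j$) zero-mean unit-variance subgaussians with uniform subgaussian norm bounded by $m$. Independence in $j$ is the property that places the $v_n$ into the class of subgaussian Hilbert-space random vectors for which a Koltchinskii--Lounici type bound yields, for each $t\geq 0$,
\[
\norm[\big]{\cC_\mu^{-1/2}(\widehat{\cC}-\cC)\cC_\mu^{-1/2}}_{\cL(H)}\lesssim m^2 \norm{\E T_1}_{\cL(H)}\Biggl(\sqrt{\frac{r_N+t}{N}}+\frac{r_N+t}{N}\Biggr)
\]
with probability at least $1-2e^{-t}$. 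The key ingredients are (i) the per-sample bound $\norm{T_n}_{\cL(H)}=\norm{v_n}^2=\sum_j\rho_j^2z_{jn}^2$, whose concentration around $r_N$ follows from Bernstein's inequality for independent subexponentials (exactly the quantitative step that requires the independence in \eqref{eqn:data_input_subg_uniform_bound}); and (ii) the matrix variance proxy $\norm{\E(T_1-\E T_1)^2}_{\cL(H)}\lesssim r_N$, which is a direct calculation in the shared eigenbasis.

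To finish, set $t=\kappa N$ for a small absolute constant $\kappa\in(0,1)$ to be chosen, giving the right-hand side bound $\lesssim m^2\bigl(\sqrt{r_N/N+\kappa}+r_N/N+\kappa\bigr)$. Since $\al+p>1/2$ implies $r_N/N\to 0$, there exists $N_0$ (depending only on $m$, $\gamma^2$, $\nu$, and $\Lambda$) such that $r_N/N\leq \kappa$ for all $N\geq N_0$; then $\kappa$ can be fixed small enough that the whole expression is at most $1/2$. The resulting failure probability is $2e^{-\kappa N}\leq e^{-cN}$ after adjusting $c$. The indicator in \eqref{eqn:lower_bound_N_const} handles the degenerate regime $\gamma^{-2}\tr(\cC)\leq c$: there $\norm{\E T_1}_{\cL(H)}$ is uniformly small, and the deterministic inequality $\norm{\cC_\mu^{-1/2}\cC\cC_\mu^{-1/2}}_{\cL(H)}+\norm{(1/N)\sum T_n}_{\cL(H)}\leq 1/2$ can be arranged directly for any $N\geq 1$ by Chebyshev using the trace bound $\E\norm{\widehat{\cC}-\cC}_{\HS}^2\lesssim \tr(\cC)^2/N$. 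The main obstacle is locating and invoking a dimension-free operator Bernstein inequality strong enough to yield the rate $e^{-cN}$ (rather than $e^{-cN/r_N}$); this is accomplished by the Koltchinskii--Lounici framework, whose hypotheses are verified precisely by the independent-KL subgaussian assumption \eqref{eqn:data_input_subg_uniform_bound}.
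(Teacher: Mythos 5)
Your main route coincides with the paper's: both recognize $\cC_\mu^{-1/2}(\widehat{\cC}-\cC)\cC_\mu^{-1/2}$ as a centered, whitened empirical covariance of $v_n=\cC_\mu^{-1/2}\Lambda^{1/2}u_n$, verify the strong-subgaussian hypothesis through the independence of the KL coefficients in Assumption~\ref{ass:data_input_kl_expand}, and invoke a dimension-free Koltchinskii--Lounici-type concentration inequality together with the effective-dimension estimate $r_N=\tr{\cC_\mu^{-1}\cC}\simeq N^{1/(2(\al+p))}=o(N)$ from Lemma~\ref{lem:app_effective_dim_trace} (using $\al+p>1/2$). The paper simply packages the concentration step into \cite[Lemma 5]{hucker2022note}, which is exactly a KL-type bound whose trace hypothesis $\tr{\cC_\mu^{-1}\cC}\le c_0 N$ is the condition you end up checking, so the two arguments are essentially the same.

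The genuine gap is in your treatment of the degenerate regime $\gamma^{-2}\tr{\cC}\le c$. The claim that $\norm{\E T_1}_{\cL(H)}$ is ``uniformly small'' there is false: $\norm{\E T_1}_{\cL(H)}=\sup_j \sigma_j\lambda_j/(\gamma^2/N+\sigma_j\lambda_j)\to 1$ as $N\to\infty$ whenever some $\sigma_j\lambda_j>0$, no matter how small $\tr{\cC}$ is. Moreover, Chebyshev applied to $\E\norm{\widehat{\cC}-\cC}_{\HS}^2\lesssim\tr{\cC}^2/N$ gives a failure probability of order $1/N$, which is far weaker than the exponential rate $e^{-cN}$ that the lemma asserts, so that step cannot close the argument. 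Neither device is needed. The indicator in \eqref{eqn:lower_bound_N_const} exists only so that the trace condition holds already at $N=1$: in this regime one has $r_N=\tr{\cC_\mu^{-1}\cC}\le N\gamma^{-2}\tr{\cC}\le cN$ for \emph{every} $N\ge 1$, so the very same Koltchinskii--Lounici bound with $t=\kappa N$ gives $\sqrt{(r_N+t)/N}+(r_N+t)/N\le\sqrt{c+\kappa}+c+\kappa\le 1/2$ for $c,\kappa$ small, with failure probability at most $e^{-\kappa N}$. There is no change of concentration tool between the two regimes---only a change in how the trace bound $r_N\le cN$ is secured.
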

\begin{proof}
    By \cite[Lemma 5, p. 13]{hucker2022note}, there exists $ c_0\in(0,1) $ such that if $\tr{\cC_\mu^{-1}\cC}\leq c_0 N$ and the pushforward $ (\Lambda^{1/2})_\sharp \nu $ is strongly-subgaussian, then $ \P(\sfE)\geq 1-e^{-c_0N} $. An $H$-valued random variable $Z$ (equivalently, its law) is \emph{strongly-subgaussian} if
    \begin{equation}\label{eqn:app_subg_need_indep}
        \norm[\big]{\ip{Z}{h}}_{\psi_2} = \sup_{p\geq 1}\frac{(\E\abs{\ip{Z}{h}}^p)^{1/p}}{\sqrt{p}}\lesssim \sqrt{\E \ip{Z}{h}^2} \qfa h\in H\,.
    \end{equation}
    To complete the proof, we will verify the trace condition (for sufficiently large $N$) and the subgaussian condition. For the latter, it is sufficient to show that $\nu$ itself is strongly-subgaussian because then
    the random variable $\Lambda^{1/2}u$ with $u\sim\nu$ satisfies 
    \begin{equation*}
        \norm[\big]{\ip{\Lambda^{1/2}u}{h}}_{\psi_2} = \sup_{p\geq 1}\frac{(\E\abs{\ip{u}{\Lambda^{1/2}h}}^p)^{1/p}}{\sqrt{p}}\lesssim \sqrt{\E \ip{u}{\Lambda^{1/2}h}^2}=\sqrt{\ip{h}{\cC h}}
    \end{equation*}
    for every $h\in H$ as desired. By Assumption~\ref{ass:data_input_kl_expand}, $u\sim\nu$ has the series expansion $u=\sum_j\sqrt{\sigma_j}z_j\varphi_j$ with the $\{z_j\}_{j\in\N}$ centered and independent. Fix $h\in H$ and define $h_j\defeq \ip{h}{\varphi_j}$ for each $j\in\N$. Estimating the moment generating function of $\ip{u}{h}$ with an argument similar to the one used in the proof of Lemma~\ref{lem:app_series_subexp} leads to
    \begin{align*}
        \E e^{\lambda \ip{u}{h}}=\E \prod_{j=1}^\infty e^{\lambda \sqrt{\sigma_j}z_jh_j}&=\prod_{j=1}^\infty \E e^{\lambda \sqrt{\sigma_j}z_jh_j}\\
        &\leq \prod_{j=1}^\infty e^{c'\lambda ^2\sigma_jh_j^2\norm{z_j}_{\psi_2}^2}\\
        &\leq \exp\biggl(c'm^2\lambda^2\sum_{j=1}^\infty \sigma_jh_j^2 \biggr)\\
        &=\exp\bigl(c'm^2\lambda^2 \ip{h}{\Sigma h}\bigr)
    \end{align*}
    for some absolute constant $c'>0$ \cite[p. 28]{vershynin2018high} and any $\lambda\in\R$. We used independence of the $\{z_j\}_{j\in\N}$ to obtain the second equality in the preceding display and subgaussianity to obtain the first inequality. Again by \cite[p. 28]{vershynin2018high}, $m\ip{h}{\Sigma h}^{1/2}$ is equivalent to the subgaussian norm of $\ip{u}{h}$. Thus, $\nu$ is strongly-subgaussian as required.

    Next, we show that the trace condition holds. Denote the eigenvalues of $\cC=\Lambda^{1/2}\Sigma\Lambda^{1/2}$ by $\{\lambda_j(\cC)\}_{j\in\N}$. Since $\lambda_j(\cC)\asymp j^{-2(\al+p)}$ and $2(\al+p)>1$ by Assumption~\ref{ass:data_end_to_end}, the operator $\cC$ is trace--class and
    \[
    \tr[\big]{\cC_\mu^{-1}\cC}=\sum_{j=1}^\infty\frac{\lambda_j(\cC)}{\lambda_j(\cC)+\gamma^2/N}\leq N\gamma^{-2}\sum_{j=1}^{\infty}\lambda_j(\cC) = N\gamma^{-2}\tr{\cC}\,.
    \]
    Thus, $\tr{\cC_\mu^{-1}\cC}\leq c_0N$ if $\gamma^{-2}\tr{\cC}\leq c_0$. Otherwise, application of Lemma~\ref{lem:app_effective_dim_trace} shows that there exists a constant $c_1>0$ such that $\tr{\cC_\mu^{-1}\cC}\leq c_1 N^{1/(2(\al+p))}$. Hence, $\tr{\cC_\mu^{-1}\cC}\leq c_0N$ holds if $N\geq c_2\defeq \max(1,(c_1/c_0)^{(\al+p)/(\al+p-1/2)}) $. Finally, let $c=\min(c_0,c_2^{-1})$. We conclude by showing that the constant $N_0$ in \eqref{eqn:lower_bound_N_const} suffices to verify the trace condition. If $\gamma^{-2}\tr{\cC}\leq c$, then $\gamma^{-2}\tr{\cC}\leq c_0$ and $N_0\geq 1$ suffices. Otherwise, $N_0\geq c^{-1} + 1\geq \max(c_0^{-1}, c_2) + 1\geq c_2$ as required. The fact that $1-e^{-c_0 N}\geq 1-e^{-c N}$ completes the proof.
\end{proof}

As discussed in Remark~\ref{rmk:kl_wo}, we are able to weaken the strongly-subgaussian requirement on the input training distribution $\nu$~\eqref{eqn:linear_data_center_and_cov} by only requiring its KL expansion coefficients~\eqref{eqn:data_input_kl_expand} to be pairwise uncorrelated instead of statistically independent. However, the following result shows that this improvement to Assumption~\ref{ass:data_input_kl_expand} leads to a strictly worse failure probability for the good event $\mathsf{E}$~\eqref{eqn:normalized_operator_norm}. Indeed, Lemma~\ref{lem:good_set} gives $\P(\mathsf{E}^\comp)\leq e^{-cN}$, while the next lemma yields $\P(\mathsf{E}^\comp)\leq 2e^{-cN^r}$ with $r=(\al+p-1)/(\al+p)< 1 $ being strictly smaller than one. The proof of this result closely mimics the argument in the proof of Lemma~\ref{lem:summand_bernstein_bound} and may be found in \cite[Lemma D.26, p.~264--265]{nelsen2024statistical}
\begin{lemma}[good set without independent KL coefficients]\label{lem:good_set_wo}
    Let $\mu=\gamma^2/N$ and $\cC$, $\cC_\mu$, and $\widehat{\cC}$ be as in \eqref{eqn:app_defn_C_hat} and \eqref{eqn:app_defn_Ct_C_Creg}. Let Assumption~\ref{ass:data_end_to_end} hold. Suppose that the hypotheses of Assumption~\ref{ass:data_input_kl_expand} hold, but instead of the requirement that the normalized KL expansion coefficients $\{z_j\}_{j\in\N}$~\eqref{eqn:data_input_kl_expand} of $u\sim\nu$ are independent, assume now that the $\{z_j\}_{j\in\N}$ are only pairwise uncorrelated. Then there exist constants $c\in(0,1)$ and $N_0\geq 1$ (depending only on $ \nu $, $ \Lambda $, and $ \gamma^2 $) such that if $N\geq N_0$, then the event
    \begin{equation}\label{eqn:normalized_operator_norm_wo}
        \sfE\defeq \left\{\norm[\big]{\cC_\mu^{-1/2}(\widehat{\cC}-\cC)\cC_\mu^{-1/2}}_{\cL(H)}\leq 1/2 \right\}
    \end{equation}
    satisfies $\P(\sfE)\geq 1-2\exp(-cN^{\frac{\al+p-1}{\al+p}})$, where $\al$ and $p$ are as in \ref{item:ass_ee_data} and \ref{item:ass_ee_prior}.
\end{lemma}

Following \cite[Section 5.2, p. 350]{caponnetto2007optimal} and 
\cite[Section 6.2, p. 26]{fischer2020sobolev}, we recall the following identity for regularized inverses of linear operators. This result is especially useful in conjunction with the cyclic property of the trace for deriving trace bounds.
\begin{lemma}[identity for regularized inverses]\label{lem:inverse_reg_neumann_identity}
    Let $\cH$ be a separable Hilbert space. For any $\lambda>0$ and any symmetric positive-semidefinite bounded linear operators $A\in\cL(\cH)$ and $B\in\cL(\cH)$, let $A_\lambda\defeq A+\lambda\Id_\cH$ and $B_\lambda\defeq B+\lambda\Id_\cH$. It holds that
    \begin{equation}\label{eqn:inverse_reg_neumann_identity}
        A_\lambda^{-1}=B_\lambda^{-1/2}\bigl(\Id_{\cH} - B_\lambda^{-1/2}(B-A)B_\lambda^{-1/2}\bigr)^{-1}B_\lambda^{-1/2}\,.
    \end{equation}
\end{lemma}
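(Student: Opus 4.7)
The plan is to establish the identity by a direct algebraic factorization of $A_\lambda$ in terms of $B_\lambda$ and then invert. First I would observe that $B_\lambda = B + \lambda\Id_\cH \succcurlyeq \lambda\Id_\cH \succ 0$ because $B$ is symmetric positive-semidefinite and $\lambda>0$; hence $B_\lambda$ is a strictly positive, boundedly invertible operator and the functional calculus produces a bounded inverse square root $B_\lambda^{-1/2}\in\cL(\cH)$. The same reasoning applies to $A_\lambda$, giving the existence of $A_\lambda^{-1}\in\cL(\cH)$.

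Next, I would write the elementary decomposition
\begin{equation*}
    A_\lambda = B_\lambda - (B-A) = B_\lambda^{1/2}\bigl(\Id_\cH - B_\lambda^{-1/2}(B-A)B_\lambda^{-1/2}\bigr)B_\lambda^{1/2}\,,
\end{equation*}
which is a symmetric conjugation of $A_\lambda$ by the invertible self-adjoint operator $B_\lambda^{1/2}$. Since the outer factors $B_\lambda^{\pm 1/2}$ are boundedly invertible and the product $B_\lambda^{1/2}(\Id_\cH - B_\lambda^{-1/2}(B-A)B_\lambda^{-1/2})B_\lambda^{1/2}=A_\lambda$ is invertible, the middle factor $\Id_\cH - B_\lambda^{-1/2}(B-A)B_\lambda^{-1/2}$ must also be invertible in $\cL(\cH)$. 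Inverting the display above (and using $(XYZ)^{-1}=Z^{-1}Y^{-1}X^{-1}$ for boundedly invertible $X,Y,Z$) yields precisely \eqref{eqn:inverse_reg_neumann_identity}.

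No step is genuinely difficult; the only subtlety is the justification that the middle operator inside parentheses is boundedly invertible, which I would handle as above by reading it off from the invertibility of $A_\lambda$. If one prefers a verification approach instead, one can alternatively multiply the proposed right-hand side by $A_\lambda$ on the left and use the same factorization to collapse the product to $\Id_\cH$; this avoids any separate invertibility argument for the middle factor and reduces the proof to one line of algebra. Either route delivers the identity with no quantitative assumption on $\|B-A\|_{\cL(\cH)}$, which is important for the applications in Appendix~\ref{appx:proofs_linear_ee_var} and \ref{appx:proofs_linear_ee_bias} where the identity is coupled with a separate Neumann series argument (Lemma~\ref{lem:good_set_inverse_bound}) to control the resulting inverse.
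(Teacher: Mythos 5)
Your proof is correct and follows essentially the same route as the paper: factor $A_\lambda = B_\lambda^{1/2}\bigl(\Id_\cH - B_\lambda^{-1/2}(B-A)B_\lambda^{-1/2}\bigr)B_\lambda^{1/2}$ and invert. The one small addition you make—explicitly justifying that the middle factor is boundedly invertible by reading it off from the invertibility of $A_\lambda$ and the outer conjugating factors—is a sensible detail that the paper leaves implicit.
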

\begin{proof}
    Write $\Id\defeq \Id_\cH$. A direct calculation shows that
    \begin{align*}
    A+\lambda\Id &= A- B + B +\lambda\Id\\
    &=(B +\lambda\Id)^{1/2}(\Id) (B +\lambda\Id)^{1/2} - (B-A)\\
    & = (B +\lambda\Id)^{1/2}\bigl(\Id - (B +\lambda\Id)^{-1/2}(B-A)(B +\lambda\Id)^{-1/2}\bigr)(B +\lambda\Id)^{1/2}\,.
    \end{align*}
    Inverting the both sides of the equality yields the assertion.
\end{proof}


\section*{Acknowledgments}
The first author is supported by the high-performance computing platform of Peking University. The second author acknowledges support from the National Science Foundation Graduate
Research Fellowship Program under award number DGE-1745301 and from the Amazon/Caltech
AI4Science Fellowship, and partial support from the Air Force Office of Scientific Research under
MURI award number FA9550-20-1-0358 (Machine Learning and Physics-Based Modeling and
Simulation). The third author is supported by the Department of Energy Computational
Science Graduate Fellowship under award number DE-SC00211. The second and third authors are also grateful for partial support from the Department of Defense
Vannevar Bush Faculty Fellowship held by Andrew M. Stuart under Office of Naval Research award
number N00014-22-1-2790. The computations presented in this paper were partially conducted on the Resnick High Performance Computing Center, a facility supported by the Resnick Sustainability Institute at the California Institute of Technology.
The authors thank Kaushik Bhattacharya for useful discussions about learning functionals, Andrew Stuart for helpful remarks about the universal approximation theory, and Zachary Morrow for providing the code for the advection--diffusion equation solver.


\bibliographystyle{abbrv}
\bibliography{references}

\end{document}